\documentclass{article}
\usepackage{PRIMEarxiv}

\usepackage[utf8]{inputenc} 
\usepackage[T1]{fontenc}    
\usepackage{hyperref}       
\usepackage{nicefrac}       
\usepackage{microtype}      
\usepackage{lipsum}
\usepackage{fancyhdr}

\usepackage{graphicx}       
\graphicspath{{media/}}     

\usepackage{times}  
\usepackage{helvet}  
\usepackage{courier}  
\usepackage{natbib}  
\usepackage{caption} 
\usepackage{algorithm}
\usepackage[noend]{algpseudocode}
\usepackage{float}

\floatstyle{ruled}
\restylefloat{algorithm}

\usepackage{newfloat}
\usepackage{listings}
\DeclareCaptionStyle{ruled}{labelfont=normalfont,labelsep=colon,strut=off} 
\floatstyle{ruled}
\newfloat{listing}{tb}{lst}{}
\floatname{listing}{Listing}
\usepackage{booktabs}       
\usepackage{amsfonts}       
\usepackage{nicefrac}       
\usepackage{microtype}      
\usepackage{rotating}
\usepackage{tabularx}
\usepackage{pdflscape}
\usepackage{amsmath,amsthm,amssymb, bm}
\usepackage[capitalize,noabbrev]{cleveref}
\usepackage{blkarray}  

\usepackage{amsmath}

\usepackage{subcaption} 
\usepackage{dsfont}
\usepackage{float}
\usepackage{cancel}
\usepackage{enumerate, cases}
\usepackage{thmtools,thm-restate}
\usepackage{mathtools}
\usepackage[textsize=tiny]{todonotes}
\usepackage[none]{hyphenat}
\usepackage{multirow}
\usepackage{makecell}

\usepackage{dblfloatfix}

\usepackage{array}
\usepackage{enumitem}
\usepackage{tikz}
\usetikzlibrary{decorations.pathreplacing}
\usepackage{pgfkeys}
\usetikzlibrary{intersections}
\usepackage[textsize=tiny]{todonotes}
\usepackage{etoolbox} 
\usepackage{pdfpages}

\DeclareMathOperator*{\argmax}{arg\,max}
\DeclarePairedDelimiter\abs{\lvert}{\rvert}%
\makeatletter
\let\oldabs\abs
\def\abs{\@ifstar{\oldabs}{\oldabs*}}

\usepackage{pifont}

\theoremstyle{plain}
\newtheorem{thm}{Theorem}
\newtheorem{lem}{Lemma}

\newtheorem{cor}{Corollary}

\newtheorem{rem}{Remark}
\newtheorem{assu}{Assumption}

\newtheorem{techlem}{Lemma}

\usepackage{amsmath,amssymb,amsthm}

\makeatletter
\@ifundefined{definition}{}{%

}
\makeatother
\theoremstyle{definition}
\newtheorem{defi}{Definition}

\usepackage{newfloat}
\usepackage{listings}
\DeclareCaptionStyle{ruled}{labelfont=normalfont,labelsep=colon,strut=off} 
\floatstyle{ruled}
\newfloat{listing}{tb}{lst}{}
\floatname{listing}{Listing}
\newcommand{\Algo}{{Robust Multi-Agent Exploration with $\phi$-Divergence Uncertainty Set (RoMEX-$\phi$)}}
\newcommand{\Algoname}{RoMEX-$\phi$}

\newcommand{\AlgonameTV}{RoMEX-\text{TV}}

\title{Interactive Distributionally Robust Multi-Agent Learning with General Function Approximation}

\author{Debamita Ghosh$^1$, George K. Atia$^{1,2}$, Yue Wang$^{1,2}$ \\
$^1$ Department of Electrical and Computer Engineering, University of Central Florida, Orlando, FL 32816, USA \\
$^2$ Department of Computer Science, University of Central Florida, Orlando, FL 32816, USA 
}

\begin{document}

\maketitle

\begin{abstract}
Model misspecification poses a fundamental challenge in multi-agent reinforcement learning, where transition uncertainty can be amplified by strategic interactions among agents. Distributionally robust Markov games (DRMGs) offer a principled approach to addressing such uncertainty, yet existing methods often rely on restrictive assumptions or scale poorly to large state and joint action spaces. In this work, we study online multi-agent reinforcement learning in general-sum DRMGs with general function approximation and $\phi$-divergence uncertainty sets. We propose {\Algoname}, a model-free framework that integrates equilibrium-based exploration with dual fitted learning. By developing a functional dual representation of the robust multi-agent Bellman operator, {\Algoname} enables tractable worst-case value estimation from nominal interaction data through a unified objective based on a centered empirical robust discrepancy. To characterize the learning complexity, we introduce the \emph{robust Multi-Agent Decoupling Coefficient} (robust MADC), which captures the intrinsic exploration complexity arising from strategic interactions and adversarial transition uncertainty. Using this complexity measure, we establish sublinear robust regret guarantees for {\Algoname} that depend explicitly on the robust MADC rather than the state-space and joint action-space sizes, thereby replacing unfavorable tabular dependence with intrinsic function-class complexity and improving scalability to large-scale multi-agent systems. We complement our theoretical results with numerical experiments on a scalable general-sum DRMG under total variation uncertainty, where {\Algoname} exhibits substantially greater resilience to transition shifts than its non-robust counterpart while remaining competitive with an exact tabular robust baseline. Together, these results provide a scalable framework for distributionally robust multi-agent reinforcement learning with general function approximation.

\end{abstract}

\section{Introduction}
\label{sec:Introduction}


Multi-agent reinforcement learning (MARL) \citep{albrecht2024multi} provides a fundamental framework for modeling sequential decision-making with multiple interacting agents. It has enabled significant advances in domains such as strategic games \citep{GO_DavidSilver, Vinyals2019GrandmasterLI}, autonomous systems \citep{shalev2016safe, hua2024multiagentreinforcementlearningconnected}, and multi-robot coordination \citep{lowe2017multi, matignon2012independent}. Despite this progress, a key obstacle to real-world deployment is the \emph{sim-to-real gap} \citep{zhao2020sim, peng2018sim}: policies trained under nominal or simulated dynamics often degrade significantly under even small model mismatches due to unmodeled effects, sensor noise, or environmental variations \citep{padakandla2020reinforcement, rajeswaran2016epopt}. This challenge is significantly exacerbated in multi-agent systems due to the \emph{intrinsic coupling of agent behaviors}. A small perturbation affecting one agent can propagate through strategic interactions, inducing cascading adaptations across agents and resulting in highly non-stationary and unstable dynamics \citep{papoudakis2019dealing, canese2021multi, wong2023deep}. Consequently, even minor model inaccuracies can destabilize the entire system, making robustness a fundamental requirement for MARL.

To address such uncertainty, Distributionally Robust Markov Games (DRMGs) provide a principled framework \citep{zhang2020robust, kardecs2011discounted,NeuRIPS2023_DoublePessimismDROfflineRL_Blanchet}. DRMGs model transition uncertainty via some divergence-based ambiguity set of transition probabilities, and adopt the Principle of Pessimism to consider the worst-case expected returns over the sets. This pessimistic formulation ensures baseline performance guarantees under model misspecification and acts as an implicit regularizer, promoting policies that generalize better across uncertain environments \citep{vinitsky2020robust, abdullah2019wasserstein, liu2025distributionally}.  Although promising, existing learning algorithms for DRMGs are still falling short in practical deployments due to two limitations. Firstly, most methods are limited to idealized data generation settings, assuming either access to a generative model \citep{Arxiv2024_SampleEfficientMARL_Shi,jiao_minimax-optimal_2024} or comprehensive offline datasets \citep{NeuRIPS2023_DoublePessimismDROfflineRL_Blanchet, li2025sample}. In many real-world applications—such as autonomous systems \citep{demontis2022surveyreinforcementlearningsecurity} and healthcare \citep{alaa2023drlhealthcare, lu2020deepreinforcementlearningready}—assumptions of high-fidelity simulator or high-quality dataset can be unrealistic, and agents must instead learn online via directly interactions. However, the online DRMG setting introduces a fundamentally harder problem: agents face an off-dynamic \citep{holla2021off,Arxiv2020_OffDynRL_Eysenbach} setting: they interact and collect data from the nominal/training environment, whereas their goal is to find equilibria under the worst case. Therefore, they must balance exploration and exploitation under worst-case dynamics, which can be significantly more challenging that the previous settings.

The second limitation of existing studies is their poor scalability. In DRMGs, it is recently shown that online learning algorithms inherently suffer from the \textit{curse of multi-agency} \citep{farhat2026sampleefficient,Arxiv2024_SampleEfficientMARL_Shi}, i.e., the sample complexity depends on the joint action space size, which becomes exponentially large in large-scale systems. To break this curse, either additional uncertainty set structures \citep{shi2024breaking,jiao_minimax-optimal_2024} or a worst performance estimation oracle \citep{ma2023decentralized} are required. A more recent work \citep{zheng2026distributionally} attempts to improve the scalability of robust MARL by utilizing the linear function approximation technique. However, due to the restrictive linear function class, the DRMG is assumed to be linear, which can be restrictive and hard to verify in practice. On the other hand, general function approximation aims to approximate the value function through some general function classes (like neural networks), and is well understood in standard RL through intrinsic complexity measures such as Bellman rank and Bellman–Eluder dimension \citep{NeurIPS2021_BellmanEluderDim_Jin,jin2022power}. However, these measures rely on a single Bellman operator, which breaks down in general-sum robust multi-agent systems. Therefore, it is still unclear how to design provable complexity measurements and sample-efficient algorithms for robust MARL. Taken together, these limitations expose a fundamental research gap: 

\textbf{\textit{Can we design a robust MARL algorithm for DRMGs without restrictive assumptions and with a scalable sample complexity that breaks the multi-agency curse?}}

In this paper, we provide an affirmative answer to the above question by developing a unified framework for online robust MARL with general function approximation, together with scalable sample-efficiency guarantees. Our main contributions are summarized as follows.

\noindent
\textbf{Scalable dual robust fitted learning for online robust MARL with general function approximation.}
We propose {\Algoname}, the first algorithm for \emph{purely online} learning in general-sum DRMGs under general function approximation (instead of restrictive linear functions) and $\phi$-divergence uncertainty sets. Our approach integrates robustness and exploration \emph{via a dual reformulation of the robust multi-agent Bellman operator}, enabling policy evaluation directly from nominal data. Specifically, the algorithm computes a functional dual minimizer for each candidate joint policy and plugs it into a fitted robust Bellman recursion, yielding a \emph{dual robust fitted-learning procedure} that converts intractable worst-case updates into tractable empirical objectives. This formulation both approximates the worst-case Bellman operator and enables a \emph{global, function-class-level quantification of uncertainty} via the induced robust Bellman residual. Unlike tabular methods relying on per-state–action bonuses, {\Algoname} employs a global regularization mechanism—through centered empirical robust discrepancy and regularized payoffs—that penalizes poor Bellman consistency and guides coordinated exploration across agents, while remaining computationally efficient in large-scale settings.

\noindent
\textbf{Robust MADC: intrinsic complexity for robust multi-agent learning.}
We introduce the \emph{robust Multi-Agent Decoupling Coefficient (robust MADC)}, $d_{\mathrm{MADC}}^{\mathrm{rob},\max}$, as the intrinsic complexity measure governing exploration and regret in online DRMGs with general function approximation. It quantifies how empirical robust Bellman errors—computed from nominal data—control value prediction under worst-case dynamics across interacting agents and policy-induced distributions. Unlike BE-dimension-based measures \citep{ghosh2025scaling}, which are defined for a single Bellman operator, robust MADC captures the strategic coupling and equilibrium structure in general-sum DRMGs while accounting for the statistical mismatch between nominal learning and adversarial evaluation. This yields a unified characterization of function complexity, extending intrinsic-complexity theory to the robust multi-agent setting.

\noindent
\textbf{Sharp regret guarantees via robust intrinsic complexity.}
We then establish the first sublinear regret guarantee for online general-sum DRMGs with general function approximation, showing that the robust regret of {\Algoname} scales as
\(
\mathcal{O}\!\big(
d_{\mathrm{MADC}}^{\mathrm{rob},\max}\sqrt{K}
+ H d_{\mathrm{MADC}}^{\mathrm{rob},\max}
+ H(B_\phi^{\max})^2
\sqrt{K\log(|\mathcal F||\mathcal G|KHm/\delta)}
+ \varepsilon_{\mathrm{dual}}^{\mathrm{approx}}
\big).
\) Our bound reveals two fundamental sources of difficulty: the \emph{intrinsic exploration complexity}, characterized by the robust MADC $d_{\mathrm{MADC}}^{\mathrm{rob},\max}$, and the \emph{robust Bellman estimation cost} incurred when learning the worst-case Bellman operator from nominal interaction data, captured by $B_\phi^{\max}$ and the dual approximation error $\varepsilon_{\mathrm{dual}}^{\mathrm{approx}}$. Importantly, the regret has no explicit dependence on the state-space size $|\mathcal S|$ or the joint action-space size $|\mathcal A|$, thereby replacing the tabular dependence with intrinsic function-class complexity while explicitly accounting for the additional statistical cost of distributional robustness.

\noindent
\textbf{Numerical validation.}
We complement our theory with experiments on a scalable general-sum DRMG under total variation (TV) uncertainty (Section~\ref{app:Experiments}). {\Algoname} exhibits substantially greater resilience to transition shift than its non-robust counterpart and remains competitive with an exact tabular robust baseline, demonstrating the practical effectiveness of robust learning with function approximation.

\section{Related Work}
\label{app:Related_Work}

We discuss the relevant literature on DRMGs and its connections to function approximation below.

\textbf{Tabular Distributionally Robust Markov Games.}
A substantial body of work studies distributionally robust Markov games (DRMGs) in the tabular setting under different data-access models. Early works \citep{kardecs2011discounted, zhang2020robust,roch2025distributionally} address foundational questions such as equilibrium existence and convergence under uncertainty. In the offline setting, \citep{NeuRIPS2023_DoublePessimismDROfflineRL_Blanchet} proposes a unified framework (P$^2$M$^2$PO) with sample complexity $\mathcal O(H^5 |\mathcal S|^2 |\mathcal A|^2/\varepsilon)$, while subsequent works \citep{Arxiv2024_SampleEfficientMARL_Shi,jiao_minimax-optimal_2024} extend these results to the generative model setting, achieving minimax-optimal guarantees under strong assumptions.

In the more realistic online setting, where agents learn purely from interaction, only a limited number of works are available. In particular, \citep{ma2023decentralized} studies decentralized robust learning but requires the uncertainty level to satisfy $\sigma_i \le \max\{\varepsilon/(|\mathcal S|H^2), p_{\min}/H\}$, which can be restrictive. More recently, \citep{farhat2026sampleefficient} establishes regret guarantees under divergence-based uncertainty sets, highlighting challenges such as support mismatch and dependence on the joint action space. Despite these advances, all existing approaches rely on explicit state–action enumeration and per-state exploration, and thus do not scale to large or continuous domains. In contrast, our work considers \emph{online DRMGs under general function approximation}, where both robustness and exploration must be addressed at the level of function classes, and the robust Bellman operator cannot be directly estimated from data.

\textbf{Single-agent distributionally robust RL with function approximation.}
A large body of work studies sample-efficient single-agent RL with function approximation under nominal dynamics, where learnability is characterized via intrinsic complexity measures such as the Eluder dimension \citep{russo2013eluder}, Bellman rank \citep{jiang2017contextual}, and Bellman–Eluder (BE) dimension \citep{NeurIPS2021_BellmanEluderDim_Jin,jin2022power}. These frameworks show that efficient exploration is possible without explicit dependence on $|\mathcal S|$ and $|\mathcal A|$, provided Bellman prediction errors generalize across policies and state distributions. Subsequent works extend these ideas to structured settings, including linear and bilinear models \citep{sun2019model, foster2021statistical, du2021bilinear}.

Extending these techniques to the robust setting is substantially more challenging. The robust Bellman operator involves worst-case expectations that are not directly observable from nominal data and may violate standard realizability assumptions, making Bellman-error-based generalization fundamentally more delicate due to the mismatch between nominal data and worst-case evaluation. As a result, existing work is largely restricted to structured settings, particularly linear function approximation \citep{Arxiv2022_OfflineDRRLLinearFunctionApprox_Ma, liu2024distributionally, Arxiv2024_UpperLowerDRRL_Liu,NeurIPS2021_OnlineRobustRLModelUncertainty_Wang}. In the offline (or off-dynamics) setting, recent works \citep{liu2024distributionally,Arxiv2024_UpperLowerDRRL_Liu} develop robust least-squares value iteration methods under coverage assumptions, achieving regret (or sample complexity) guarantees of order 
\(
\widetilde{\mathcal O}\!\big(d H \min\{H,1/\sigma\}\sqrt{K}\big)
\),
together with nearly matching lower bounds up to a factor of $\sqrt{H}$. These results highlight how uncertainty-dependent contraction effects (e.g., $\min\{H,1/\sigma\}$ under TV divergence) can be exploited under strong coverage conditions. In contrast, \citep{he2023nearly} studies online learning in linear MDPs under the standard (non-robust) setting, and does not address distributional robustness. Taken together, these works illustrate a fundamental gap: existing robust methods either rely on offline coverage assumptions or are restricted to non-robust online settings, whereas our work addresses fully online learning in general-sum DRMGs under distributional robustness and general function approximation.

Beyond linear models, existing approaches either assume strong coverage (offline or hybrid regimes) \citep{NeurIPS2022_RobustRLOffline_Panaganti, Arxiv2024_ModelFreeRobustRL_Panaganti} or rely on additional structure. Recent work \citep{ghosh2025scaling} extends intrinsic-complexity ideas to the robust setting via a dual reformulation, enabling sample-efficient learning under general function approximation. However, these approaches rely either on structural assumptions or single-agent formulations, and do not extend to multi-agent settings with strategic coupling.

\textbf{Non-robust MARL with function approximation.}
A growing literature studies sample-efficient MARL with function approximation, broadly divided into \emph{zero-sum} and \emph{general-sum} settings. In zero-sum games, intrinsic-complexity frameworks such as BE dimension extend naturally, yielding efficient algorithms under low complexity \citep{huang2021towards, jin2022power, NeurIPS2021_BellmanEluderDim_Jin, du_bilinear_2021, zhong2022posterior}. These results rely on the minimax structure and a single Nash value function, and therefore do not generalize to general-sum games. Beyond zero-sum settings, a line of work studies \emph{online linear Markov games}, including centralized approaches \citep{xie2020learning, pmlr-v167-chen22d, cisneros2023finite} and decentralized approaches \citep{cui2023breaking, wang2023breaking, dai2024refined}. While decentralized methods mitigate the curse of multi-agency via independent learning, they rely on stronger structural assumptions and more complex algorithmic design. Moreover, these works are largely restricted to linear function classes and do not provide a unified intrinsic-complexity characterization that governs exploration under strategic interaction. In \emph{general-sum settings}, each agent has its own value function and equilibrium behavior must be defined via NE, CCE, or CE, with no unified Bellman fixed point. Recent works address this via model-based DEC frameworks \citep{chen2022unified, foster2023complexity}, model-free CCE/CE methods \citep{zhan2022decentralized, wang2023breaking, cui2023breaking}, and hybrid structured approaches \citep{ni2022representation}.

The most relevant development for our work is the MAMEX framework \citep{Xiong2023SampleEfficientMR}, which introduces the \emph{Multi-Agent Decoupling Coefficient (MADC)} as an intrinsic complexity measure. MADC quantifies how in-sample Bellman error translates into out-of-sample value prediction under strategic interaction, explicitly capturing multi-agent coupling. Its low-complexity regime includes extensions of BE-dimension models \citep{NeurIPS2021_BellmanEluderDim_Jin,jin2022power,huang2021towards}, bilinear classes \citep{du2021bilinear}, and witness-rank models \citep{sun2019model, huang2021towards}, making it a natural foundation for scalable general-sum MARL.

\textbf{Online robust MARL with linear function approximation.}
A recent work by \citep{zheng2026distributionally} studies online distributionally robust general-sum Markov games under \emph{linear} function approximation. They propose a least-squares value-iteration-type algorithm for learning approximate robust CCE and establish a sample-complexity bound of order
\(
\widetilde{\mathcal O}\!\left(
H^3 d_{\mathrm{lin}}^2
\big(\min\{H,1/\sigma\}\big)^2
\varepsilon^{-2}
\right)
\)
under TV-divergence uncertainty, together with a nearly matching lower bound of order
\(
\widetilde{\Omega}\!\left(
H^2 d_{\mathrm{lin}}^2
\big(\min\{H,1/\sigma\}\big)^2
\varepsilon^{-2}
\right).
\) 
Their result provides an important step beyond tabular robust MARL, achieving sharp dependence on the feature dimension by exploiting linear structure. However, the analysis is tied to linear realizability and TV-type uncertainty, and does not yield an intrinsic-complexity characterization for broader function classes. In contrast, our framework applies to general function approximation and multiple $\phi$-divergence uncertainty sets, with sample complexity governed by the robust MADC, which captures how Bellman errors under nominal data translate into value prediction under worst-case dynamics.

\textbf{Challenges in extending to robust multi-agent learning.}
Crucially, the guarantees and techniques developed for non-robust MARL with function approximation do \emph{not} extend directly to the robust setting due to the intractability and non-observability of the robust Bellman operator. In DRMGs, the standard Bellman operator is replaced by a \emph{robust multi-agent Bellman operator} involving worst-case expectations over uncertainty sets, which introduces several fundamental challenges. 
(i) \emph{Non-observability of the robust operator:} the worst-case Bellman backup cannot be evaluated from nominal trajectories, as data is collected under the nominal kernel while performance is defined under adversarial dynamics, creating a fundamental statistical mismatch. 
(ii) \emph{Breakdown of standard realizability and concentration:} unlike the nominal setting, where Bellman errors can be controlled via standard generalization arguments, the robust operator is inherently non-linear and may violate realizability, making error propagation significantly more delicate. 
(iii) \emph{Coupled equilibrium structure:} in general-sum games, each agent’s value depends on others’ strategies, and learning requires controlling deviations across agents under worst-case dynamics, which is not captured by single-agent complexity measures or zero-sum analyses. 

These challenges are further amplified in the \emph{model-free} setting considered in this work, where agents must learn solely from interaction without access to a simulator or an offline dataset. In this regime, exploration must simultaneously account for statistical uncertainty and adversarial model shift, while operating at the level of function classes rather than individual state–action pairs. As a result, existing intrinsic-complexity frameworks—whether BE dimension in single-agent RL or MADC in nominal MARL—cannot be directly applied. Taken together, these limitations expose a fundamental gap: \emph{existing works either address robustness in single-agent settings or multi-agent learning under nominal dynamics, but no prior framework simultaneously captures robustness, equilibrium learning, and general function approximation in a purely online, model-free setting.}  Our work addresses this gap by developing a unified framework that combines dual-based robust Bellman estimation, equilibrium-based policy learning, and a robust extension of intrinsic complexity (robust MADC), leading to the first sample-efficient, model-free algorithm for online general-sum DRMGs with general function approximation.

\section{Preliminaries and Problem Formulation}
\label{sec:Problem_Setup}
Robust MARL is generally formulated as a distributionally robust Markov game (DRMG). 
A finite-horizon $n$-player general-sum DRMG is specified as $\mathcal{MG}_{\text{rob}} = \big\{\mathcal{M}, \mathcal{S}, \mathcal{A}, H, P^\star, \{\mathcal{P}_i\}_{i\in\mathcal{M}}, {\bm r}, \rho_0 \big\}$,
where $\mathcal{M}=\{1,...,m\}$ is the set of  $m$ agents, $\mathcal{S}=\{1,2,\dots,S\}$ denotes the finite state space, $\mathcal{A}$ denotes the joint action space for all agents as $\mathcal{A}= \mathcal{A}_1 \times \cdots \times \mathcal{A}_m$, where $\mathcal{A}_i = \{1,2,\dots,A_i\}$ is the action space of agent $i$, $H$ denotes the horizon length, $P^\star$ is the nominal transition kernel, and ${\bm r}= \{r_{i}\}_{1 \leq i \leq m}$ is the reward function where $r_{i} : \mathcal{S} \times \mathcal{A} \mapsto [0, 1]$. 
Agents in a DRMG maintain their own uncertainty sets of transition kernels $\mathcal{P}_i$, to capture the potential environment uncertainties from their perspective. At each step, the environment transitions follow an arbitrary kernel from the uncertainty set.  We follow the standard \textit{agent-wise $(s,\bm{a})$-rectangular uncertainty structure} \citep{Arxiv2024_SampleEfficientMARL_Shi,shi2024breaking,zhang2020robust,farhat2026sampleefficient} due to its computational tractability \citep{INFORM2005_RobusDP_Iyengar, INFORMS2013_RobustMDP_wiesemann, PMLR2021_DROTabularRL_Zhou, NeuRIPS2023_CuriousPriceDRRLGenerativeMdel_Shi}\footnote{\textcolor{black}{Robust MDPs without the rectangularity assumption can be NP-hard to solve \citep{INFORMS2013_RobustMDP_wiesemann}.}}. Specifically, for each agent $i$, $\mathcal{P}_i$ is defined independently across all horizons, states, and joint actions as $\mathcal{P}_i= \bigotimes_{(h, s, {\bf a})\in [H]\times \mathcal{S}\times\mathcal{A}} \mathcal{P}_{i,h,\phi}^{\sigma_i}(s,{\bf a}),$ where $\otimes$ denotes the Cartesian product.
We consider the  $\phi$-divergence uncertainty set standard in literature \citep{Arxiv2023_TowardsMinimaxOptimalityRobustRL_Clavier,NeuRIPS2023_CuriousPriceDRRLGenerativeMdel_Shi,NeurIPS2022_RobustRLOffline_Panaganti,AnnalsStat2022_TheoreticalUnderstandingRMDP_Yang,NeuRIPS2024_UnifiedPessimismOfflineRL_Yue,zhang2025modelfree}, that each local uncertainty set $\mathcal{P}^{\sigma_i}_{i,h, \phi}(s,\mathbf{a})$ is centered around a \textit{nominal transition kernel} $P^\star$, through some $\phi$-divergence, $\phi\big(P,\ P_h^\star(\cdot|s,\bm{a})\big) = \sum\limits_{s' \in \mathcal{S}} \phi\Big(P(s')\big/P_h^\star(s'|s,\bm{a})\Big) P_h^\star(s'|s,\bm{a})$:
\begin{defi}[$\phi$-Divergence Uncertainty Set]
\label{def:f_divergence_uncertainty}
   The \(\phi\)-divergence uncertainty set is defined as $\mathcal{P}^{\sigma_i}_{i,h,\phi}(s,{\bf a}) \triangleq \left\{ P \in \Delta(\mathcal{S}): \phi\Big(P,P_h^\star(\cdot|s,{\bf a})\Big)\leq \sigma_i, \text{Supp}(P)\subseteq \text{Supp}(P_h^\star(\cdot|s,{\bf a})) \right\}$.
\end{defi}
Support inclusion is essential for online robust RL because states outside the nominal support cannot be observed through online interaction, creating an information bottleneck that can lead to exponential regret \citep{Arxiv2024_DRORLwithInteractiveData_Lu,ghosh2025scaling,farhat2026sampleefficient}. We consider TV, $\chi^2$, and KL uncertainty sets, for which support inclusion holds automatically under $\chi^2$ and KL, and under TV given Assumption~\ref{ass:vanisin_minimal}.

\textbf{Robust Value Functions.} Considering a DRMG, we aim for each agent to maximize its performance against the worst-case transition kernels within its respective uncertainty set. Each agent follows its own policy defined as 
\(
\pi_i = \{\pi_{i,h} : \mathcal{S} \to \Delta(\mathcal{A}_i)\}_{h=1}^{H}.
\)
The robust Q function for each agent $Q_{i,h}^{\pi,\sigma_i}$ for an initial state $s$ and joint action ${\bm a}$ is defined as $Q_{i,h}^{\pi,\sigma_i}(s,\bm{a}) \triangleq \inf_{\tilde{P} \in \mathcal{P}_i} \mathbb{E}_{\pi,\mathcal{\tilde{P}}}\big[\sum_{t=h}^{H}r_{i,t}(s_t,\bm{a}_t) \big| s_h = s, \bm{a}_h = \bm{a}\big]$, where the expectation is taken over the randomness of the
joint policy \(
\pi = \{\pi_h:\mathcal{S} \to \Delta(\mathcal{A})\big\}_{h=1}^{H}\), and the kernel $\tilde{P}$ over its own uncertainty set, and one can similarly define the corresponding robust value function $V_{i,h}^{\pi,\sigma_i}(s)$ for an initial state $s$. For any joint policy $\pi$, the robust value functions to satisfy the robust Bellman equations \citep{INFORM2005_RobusDP_Iyengar}
\begin{align}
    Q_{i,h}^{\pi, \sigma_i} (s,\bm{a}) = r_{i,h}(s,\bm{a}) + \mathbb{E}_{\mathcal{P}_{i,h,\phi}^{\sigma_i}}[V_{i,h+1}^{\pi,\sigma_i}]\quad \text{and} \quad  V_{i,h}^{\pi, \sigma_i} (s) = \mathbb{E}_{\bm{a} \sim \pi}[Q_{i,h}^{\pi,\sigma_i}(s,\bm{a})], \label{eq:bellman_Q_V}
\end{align}
where $\mathbb{E}_{\mathcal{P}_{i,h,\phi}^{\sigma_i}(s, {\bm a})}[V_{i,h+1}^{\pi,\sigma_i}]  \triangleq \inf_{P_h \in \mathcal P_i}\mathbb E_{s^{\prime} \sim P_h}\Big[V_{i,h+1}^{\pi,\sigma_i}(s')\Big]$ is the worst-case expectation.

\textbf{Robust best-response policy.} For any given joint policy $\pi$, let $\pi_{-i}$ denote the collection of marginal policies of all agents except agent $i$. The best-response policy of agent $i$ to $\pi_{-i}$, denoted by $\pi^{\dagger,\sigma_i}_i(\pi_{-i})$, is defined as the policy that maximizes its own robust value function at a given step $h$ and state $s$, i.e.,
\(
\pi^{\dagger, \sigma_i}_{i}(\pi_{-i}) \triangleq  \underset{\pi_{i}^{\prime} \in \Delta(\mathcal{A}_i)}\argmax V_{i,h}^{\pi_{-i} \times \pi_{i}^{\prime}, \sigma_i}(s),
\)
and we denote the corresponding robust value function as    $ V_{i,h}^{\dagger, \pi_{-i}, \sigma_i}(s) \triangleq \underset{\pi_{i}^{\prime} \in \Delta(\mathcal{A}_i)}\max V_{i,h}^{\pi_{i}^{\prime} \times \pi_{-i}, \sigma_i} (s).$

\textbf{DRMG Equilibria.} Due to potentially conflicting objectives among agents, the goal in a DRMG is to find an appropriate equilibrium policy under which no agent can improve its robust value through unilateral deviation \citep{fudenberg1991game}. We consider three standard equilibrium notions adapted to the robust setting: {\it robust Nash Equilibrium (NE)}, {\it robust Coarse Correlated Equilibrium (CCE)}, and {\it robust Correlated Equilibrium (CE)}, whose existence has been established in prior work \citep{NeuRIPS2023_DoublePessimismDROfflineRL_Blanchet}. We present the definition of robust NE, robust CCE and CE below . Since computing a Nash equilibrium is generally computationally intractable \citep{daskalakisnash09}, we also consider the more computationally tractable CCE and CE notions.

\begin{defi} [$\epsilon$-Robust NE]
\label{def:gap_NE}
Let $\pi \in \Delta(\mathcal{A}_1) \times \dots \times \Delta(\mathcal{A}_m)$ be a product policy. It is an \textit{$\epsilon$-robust-NE} if $\mathrm{Gap}^{\mathrm{NE}}(\pi, s) \triangleq \max_{i \in \mathcal{M}} \big\{ V_{i,1}^{\dagger, \pi_{-i}, \sigma_i}(s_1) - V_{i,1}^{\pi, \sigma_i}(s_1) \big\} \le \epsilon.$
\end{defi}

Following the formulation of Robust NE, one can relax the independent policy assumption as shown in \citep{pmlr-v139-liu21z} by considering correlated joint policies and define robust CCE as follows:
\begin{defi} [$\epsilon$-Robust CCE]
\label{def:gap_CCE}
Let $\pi \in \Delta(\mathcal{A})$ be a correlated joint policy, and it is an \textit{$\epsilon$-robust-CCE} if $\mathrm{Gap}^{\mathrm{CCE}}(\pi, s) \triangleq \max_{i \in \mathcal{M}} \big\{ V_{i,1}^{\dagger, \pi_{-i}, \sigma_i}(s) - V_{i,1}^{\pi, \sigma_i} \big\}(s) \le \epsilon, \forall s\in \mathcal{S}.$
\end{defi}
We define a strategy modification $\psi_{h,s} \colon \mathcal{A}_i \to \mathcal{A}_i$ where $\psi \in \Psi_i$, the set of all possible strategy modifications for agent $i$. Applying the strategy modification $\psi$ to a joint policy $\pi$ induces a new joint policy $\psi \diamond \pi$, which replaces agent $i$'s action $a_i$ with $\psi_{h,s}(a_i)$.
\begin{defi} [$\epsilon$-Robust CE]
\label{def:gap_CE}
Let $\pi \in \Delta(\mathcal{A})$ be a correlated joint policy. We say $\pi$ is an \textit{$\epsilon$-robust-CE} if for any $s \in \mathcal{S}$, $\mathrm{Gap}^{\mathrm{CE}}(\pi, s) \triangleq \max_{i \in \mathcal{M}} \big\{ \max_{\psi \in \Psi_i}V_{i,1}^{\psi \diamond \pi, \sigma_i}(s) - V_{i,1}^{\pi, \sigma_i} \big\}(s) \le \epsilon.$
\end{defi}

In the online setting, agents need to learn from direct interaction with the nominal environment $P^{\star}$. The interaction will be deployed for $K$ episodes, each starting from $s_1^k$, execute the policy $\pi^k$ to collect new samples, and update the policy. To evaluate learning performance, we utilize the robust regret \citep{farhat2026sampleefficient}, which measures the accumulated equilibrium gap.
\begin{defi} [Robust Regret]
\label{def:robust_regret}
    Let $\pi^{k}$ be a policy executed at episode $k$. The robust regret over $K$ episodes is defined as
        $\mathrm{Reg}^{\{\mathrm{NE, CE, CCE}\}} (K) \triangleq \sum_{k=1}^{K} \mathrm{Gap}^{\{\mathrm{NE, CE, CCE}\}}(\pi^k, s_1^k).$
\end{defi}

\section{Robust MARL with Function
Approximation}
In this subsection, we discuss the structural and computational challenges of online DRMG with general function approximation, and present our approach to overcoming them.

\textbf{Model-Free General Function Approximation.}
In DRMGs, large state–action spaces make tabular MARL methods computationally intractable \citep{farhat2026sampleefficient,ma2023decentralized}. To address this, we adopt a model-free function approximation framework, where each agent’s robust action-value function is represented using a structured hypothesis class  $\mathcal{F} = \bigotimes_{i \in \mathcal{M}} \mathcal{F}_{i}, \quad \text{where}\quad \mathcal F_{i}=\bigotimes_{h=1}^H \mathcal F_{i,h},$

where for each agent \(i\) and stage \(h\), \(\mathcal{F}_{i,h} = \{f_{i,h} : \mathcal{S} \times \mathcal{A} \to \mathbb{R}\}\) consists of candidate functions (e.g., low-dimensional parametric models like neural networks) used to approximate the  robust value functions. For any fixed agent \(i\), we denote a candidate function as \(f_i = \{f_{i,h}\}_{h=1}^H\), and denote the one-step robust Bellman backup under policy \(\pi\) as
\begin{align}
    \Big[\mathcal{T}_{i,h}^{\pi, \phi,\sigma_i} f_{i,h+1}\Big] (s,\bm{a}) \triangleq r_{i,h}(s,\bm{a}) + \mathbb{E}_{\mathcal{P}_{i,h,\phi}^{\sigma_i}(s, {\bm a})}\Big[V^{\pi,f}_{i,h+1}\Big],\label{eq:robust_bellman_op}
\end{align}
where the policy-averaged value function is given by $V_{i,h}^{\pi,f_i}(s')\triangleq\mathbb{E}_{\bm a' \sim \pi_{h}(\cdot|s')}[f_{i,h}(s',{\bm a}')]$.

To enable statistically and computationally efficient learning with function approximation, prior work imposes \emph{representation conditions} that characterize how the function class F interacts with the underlying (robust) MDP dynamics \citep{russo2013eluder,jiang2017contextual,sun2019model,wang_reinforcement_2020,NeurIPS2021_BellmanEluderDim_Jin,NeurIPS2022_RobustRLOffline_Panaganti}. n standard RL, these include \emph{realizability} and \emph{completeness}. However, in the robust setting, realizability of the optimal robust value function is often overly restrictive due to the presence of worst-case expectations. For instance, in the linear function case \citep{zheng2026distributionally}, the underlying DRMG is restrictively assumed to be linear to ensure realizability. Instead, following recent advances in robust RL \citep{Arxiv2022_OfflineDRRLLinearFunctionApprox_Ma,NeurIPS2024_MinimaxOptimalOfflineRL_Liu,Arxiv2024_UpperLowerDRRL_Liu,Arxiv2024_SampleComplexityOfflineLinearDRMDP_Wang}, realizability is effectively relaxed and can be implicitly ensured through appropriate assumptions on the dual function class used to represent the robust Bellman operator.

Therefore, in our setting, we adopt the standard \emph{completeness} condition to guarantee that the function class remains closed under the robust Bellman backup:
\begin{assu}[Robust Completeness]
\label{ass:completeness}
    Let $\pi$ be any joint policy. For any $i \in \mathcal{M}$ and $f_{i,h+1} \in \mathcal{F}_{i,h+1}$, we have 
    \(
    \mathcal{T}_{i,h}^{\pi, \phi,\sigma_i} f_{i,h+1} \in \mathcal{F}_{i,h}.
    \)
\end{assu}
This assumption ensures closure under the robust Bellman operator and directly implies robust realizability, eliminating the need for a separate realizability assumption \citep{ghosh2025scaling}.

\textbf{Dual Formulation of robust Bellman operator and functional optimization.} Using standard duality \citep{yangdual22}, the worst-case expectation in the robust Bellman operator is gven by
\begin{equation}
\mathbb{E}_{\mathcal{P}_{i,h,\phi}^{\sigma_i}(s,\bm a)}[V^{\pi,f}_{i,h+1}]
=
\inf_{\lambda>0,\nu\in\mathbb{R}}
\Big\{
\lambda\sigma_i-\nu
+
\lambda\,\mathbb{E}_{s'\sim P_h^\star}
\big[
\phi^\star\!\big((\nu - V^{\pi,f}_{i,h+1}(s'))/\lambda\big)
\big]
\Big\},
\label{eq:general_phi_dual}
\end{equation}
which yields the dual form of the robust Bellman operator
\begin{equation}
\left[\mathcal{T}^{\pi,\phi,\sigma_i}_{i,h} f_{i,h+1}\right](s,\bm a)
=
r_{i,h}(s,\bm a)
-
\inf_{\lambda,\nu}
\mathbb{E}_{s'\sim P_h^\star}
\big[
\mathrm{Loss}_{i,h,\phi}(f_i; s,\bm a,s';\lambda,\nu)
\big],
\label{eq:dual_rob_bellman_operator}
\end{equation}
where $\mathrm{Loss}_{i,h,\phi}$ denotes the pointwise dual integrand for a single next-state $s^\prime$ and joint action ${\bm a}$ for agent $i \in \mathcal M$: $\mathrm{Loss}_{i,h,\phi}(f_{i};s,\bm{a},s^\prime;\lambda,\nu) \triangleq \lambda \sigma_i  - \nu + \lambda \phi^\star \Big(-\big(V^{\pi,f}_{i, h+1}(s^\prime) - \nu\big)\big/\lambda\Big).$
While the dual reformulation removes the need to optimize over transition kernels, evaluating the resulting operator still involves solving an inner optimization over $(\lambda,\nu)$ for each state–action pair $(s,\bm a)$. Consequently, the computational cost scales with the size of the state space, rendering direct evaluation of the robust Bellman operator impractical in large-scale settings. To address this, we adopt a \emph{functional optimization} perspective \citep{NeurIPS2022_RobustRLOffline_Panaganti,ghosh2025scaling}, replacing pointwise optimization with a single global problem over the state–action space. Let $\mu^\pi_h$ denote the state–joint action visitation distribution induced by policy $\pi$ under the nominal kernel. We define the dual loss
\begin{equation}
\mathrm{LossDual}_{i,h,\phi}(g; f_i)
=
\mathbb{E}_{(s,\bm a)\sim \mu^\pi_h}
\Big[
\mathbb{E}_{s'\sim P_h^\star}
\mathrm{Loss}_{i,h,\phi}(f_i; s,\bm a,s'; g(s,\bm a))
\Big],
\label{eq:dual_loss}
\end{equation}
where $g(s,\bm a) = (g_i^\lambda, g_i^\nu)$ denotes a dual function.\footnote{Here, $g$ ranges over an appropriate function space (e.g., $\mathcal{L}^1(\mu^\pi)$), and $\mathrm{Loss}_{i,h,\phi}$ is the pointwise dual integrand as above. This formulation is equivalent to the original robust expectation under mild conditions; see \citep{NeurIPS2022_RobustRLOffline_Panaganti,ghosh2025scaling} for details.} 
We can further show that minimizing the dual loss in \eqref{eq:dual_loss} over $\mathcal{L}^1(\mu,\mathbb{R}^2)$ exactly recovers the robust Bellman operator in expectation under $\mu_h^\pi$ (refer to Lemma \ref{lem:equiv_loss_dual} in Appendix~\ref{app:dual} for details). Thus, instead of solving an inner optimization for each $(s,\bm a)$, we can optimize for a single dual function that captures worst-case transitions globally.

This observation allows us to replace the intractable pointwise optimization with a single functional optimization of the empirical dual loss of \eqref{eq:dual_loss},  $\widehat{\mathrm{LossDual}_{i,h,\phi}}(g; f_i)$. We further introduce a structured function class $\mathcal{G}=\bigotimes_{i,h}\mathcal{G}_{i,h}$, where each \(
\mathcal G_{i,h} = \{g_i=(g^\lambda_{i},g^\nu_{i}) :\mathcal{S}\times\mathcal{A}\to\Theta_{\phi} \subseteq \mathbb R^2\}
\), to replace $\mathcal{L}^1(\mu,\mathbb{R}^2)$ and to  approximate the dual solution. We adopt the following assumption to quantify the approximation error \citep{NeurIPS2022_RobustRLOffline_Panaganti,Arxiv2024_ModelFreeRobustRL_Panaganti,ghosh2025scaling}. 

\begin{assu}
\label{ass:approx_dual_realizability}
There exists a constant $\varepsilon^{\mathrm{approx}}_{\mathrm{dual}}$, such that for any agent $i \in \mathcal M$, horizon $h \in [H]$, function $f_i \in \mathcal{F}$, and policy $\pi$, 
\(
\inf_{g\in \mathcal{G}_i} \mathrm{LossDual}_{i,h,\phi}(g; f_i)
-
\inf_{g\in \mathcal{L}^1_i(\mu^\pi;\mathbb R^2)} \mathrm{LossDual}_{i,h,\phi}(g; f_i)
\;\leq\; \varepsilon^{\mathrm{approx}}_{\mathrm{dual}}.
\)
\end{assu}

\textbf{Empirical dual minimization and operator approximation.} Based on the discussion above, we can approximate the robust Bellman operator by first minimizing the empirical dual loss over $\mathcal{G}$:
\begin{equation}
\widehat{g_i}^f
=
\arg\min_{g\in\mathcal{G}}
\widehat{\mathrm{LossDual}}_{i,h,\phi}(g; f_i),
\label{eq:dual_minimizer}
\end{equation}
and defining the approximate Bellman update by plugging $\widehat{g_i}_i^f$ into the dual formulation:
\begin{equation}
\big[\widehat{\mathcal{T}}^{\pi,\phi,\sigma_i}_{i,h} f_i\big](s,\bm a)
=
r_{i,h}(s,\bm a)
-
\mathbb{E}_{s'\sim P_h^\star}
\big[
\mathrm{Loss}_{i,h,\phi}(f_i; s,\bm a,s'; \widehat{g_i}_i^f(s,\bm a))
\big].
\label{eq:approx_operator}
\end{equation}

To characterize the approximation of the true robust Bellman operator, we adopt the following assumption, which holds for all considered uncertainty sets \citep{ghosh2025scaling}.

\begin{assu}
\label{ass:multiplier-range-TV}
We assume the existence of a compact set $\Theta_{\phi} \subseteq \mathbb{R}_+ \times \mathbb{R}$ and a finite constant $B_{\phi}(\sigma_i)$ such that the dual loss remains uniformly bounded. In particular, for all $h \in [H]$, $i \in \mathcal{M}$, any $f_{i,h+1} \in \mathcal{F}_{i,h+1}$, all $(\lambda, \nu) \in \Theta_{\phi}$, and every $(s, \bm{a}, s') \in \mathcal{S} \times \mathcal{A} \times \mathcal{S}$, we have
\(
\big|\mathrm{Loss}_{i,h,\phi}(f_i; s, \bm{a}, s'; \lambda, \nu)\big| \le B_{\phi}(\sigma_i).
\)
Additionally, we assume that the dual function class $\mathcal{G}$ is restricted to take values within the set $\Theta_{\phi}$.
\end{assu}
We then develop the following error quantification, whose proof is deferred to Appendix~\ref{subsec:Lemma2_proof}.

\begin{lem}[Uniform dual-operator approximation under adaptive data]
\label{lem:phi_operator_approx}
Fix an agent $i\in\mathcal M$ and a stage $h\in[H]$.
Let $\{\pi^t\}_{t=1}^n$ be a predictable sequence of behavior policies, and let $\mathcal D_h$ be the stage-$h$
dataset collected by executing $\{\pi^t\}_{t=1}^n$ over $n$ consecutive episodes. Define the random historical occupancy mixture
\(
\bar\mu_h^{\,n}
:=\sum_{t=1}^n\mu_h^{\pi^t}/n.
\)
Assume Assumptions \ref{ass:completeness}--\ref{ass:multiplier-range-TV} hold and let the
function classes $\mathcal F_{i}$ and $\mathcal G_i$ be finite. For each
$\pi\in\Pi^{\rm pur}$ and $f_{i,h+1}\in \mathcal F_{i,h+1}$, let
$\widehat{g}_i^{\,f,\pi}$ be an empirical minimizer of the dual loss on
$\mathcal D_h$ as defined in \eqref{eq:dual_minimizer}. Then, for any
$\delta\in(0,1)$, with probability at least $1-\delta$:
\begin{align*}
&\sup_{\substack{f_{i,h+1}\in\mathcal F_{i,h+1}, \pi\in\Pi^{\rm pur}}}
\left\|
\mathcal T_{i,h}^{\pi,\phi,\sigma_i}f_{i,h+1}
-
\mathcal T_{\widehat g_i^{\,f,\pi}}^{\pi,\phi,\sigma_i}f_{i,h+1}
\right\|_{1,\bar\mu_h^{\,n}} \nonumber\\
&\le
C B_\phi(\sigma_i)
\sqrt{\log\!\big(
|\mathcal F_{i,h+1}|
|\mathcal G_{i,h}|
|\Pi^{\rm pur}|/\delta
\big)\big/n
} +
\varepsilon_{\rm dual}^{\rm approx}.
\end{align*}
\end{lem}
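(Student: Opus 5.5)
The plan is to reduce the $L^1$ operator error to an excess dual-risk gap and then control that gap by a uniform martingale concentration argument. Fix $i,h$, a pair $(f_{i,h+1},\pi)$, and write $\ell_g(s,\bm a,s') := \mathrm{Loss}_{i,h,\phi}(f_i;s,\bm a,s';g(s,\bm a))$ together with its conditional mean $\bar\ell_g(s,\bm a) := \mathbb E_{s'\sim P_h^\star}[\ell_g]$. By \eqref{eq:dual_rob_bellman_operator} and \eqref{eq:approx_operator},
\[
\mathcal T_{\widehat g}f - \mathcal T f = \inf_{\lambda,\nu}\mathbb E_{s'}[\mathrm{Loss}] - \bar\ell_{\widehat g} \le 0
\]
holds pointwise, because the infimum lies below the value at any particular multiplier. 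Hence the integrand has a constant sign, and
\[
\|\mathcal T f - \mathcal T_{\widehat g} f\|_{1,\bar\mu_h^n} = \mathbb E_{\bar\mu_h^n}[\bar\ell_{\widehat g}] - \mathbb E_{\bar\mu_h^n}\big[\inf_{\lambda,\nu}\mathbb E_{s'}\mathrm{Loss}\big].
\]
By Lemma~\ref{lem:equiv_loss_dual}, applied with $\mu=\bar\mu_h^n$, the second term equals $\inf_{g\in\mathcal L^1}\mathrm{LossDual}(g;f_i)$ evaluated under $\bar\mu_h^n$. The quantity to bound is therefore exactly the excess population dual risk of $\widehat g$ under the mixture $\bar\mu_h^n$.

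Next, decompose the excess risk in the standard ERM way. Let $g^\star\in\arg\min_{g\in\mathcal G_{i,h}} L(g)$, where $L(g) := \mathbb E_{\bar\mu_h^n}[\bar\ell_g]$, and let $\widehat L$ be the empirical average over $\mathcal D_h$. Then
\[
L(\widehat g) - \inf_{\mathcal L^1} L \le \big[L(\widehat g) - \widehat L(\widehat g)\big] + \big[\widehat L(\widehat g) - \widehat L(g^\star)\big] + \big[\widehat L(g^\star) - L(g^\star)\big] + \big[L(g^\star) - \inf_{\mathcal L^1} L\big].
\]
The second bracket is nonpositive because $\widehat g$ minimizes $\widehat L$. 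The fourth bracket is at most $\varepsilon^{\rm approx}_{\rm dual}$ by Assumption~\ref{ass:approx_dual_realizability}, read with the mixture occupancy in place of $\mu^\pi$; I will note that the assumption is meant to hold for any such data distribution. The remaining two brackets are each bounded by $\sup_{g\in\mathcal G_{i,h}}|L(g)-\widehat L(g)|$.

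The main step is concentration under adaptive data. For fixed $(f,\pi,g)$, set
\[
X_t = \ell_g(s_h^t,\bm a_h^t,s_{h+1}^t) - \mathbb E_{(s,\bm a)\sim\mu_h^{\pi^t}}\big[\bar\ell_g(s,\bm a)\big].
\]
Since $\pi^t$ is predictable with respect to the filtration generated by episodes $1,\dots,t-1$, the sequence $(X_t)$ is a martingale difference sequence. Assumption~\ref{ass:multiplier-range-TV} bounds each term by $|X_t|\le 2B_\phi(\sigma_i)$, because $\mathcal G$ takes values in $\Theta_\phi$. Azuma--Hoeffding then gives
\[
\Big|\tfrac1n\textstyle\sum_t X_t\Big| \le 2B_\phi(\sigma_i)\sqrt{2\log(2/\delta')/n},
\]
and $\frac1n\sum_t X_t$ is precisely $\widehat L(g)-L(g)$ under $\bar\mu_h^n$. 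I then take a union bound over $\mathcal F_{i,h+1}\times\mathcal G_{i,h}\times\Pi^{\rm pur}$ with $\delta'=\delta/(|\mathcal F_{i,h+1}||\mathcal G_{i,h}||\Pi^{\rm pur}|)$. The union must include $\pi$ because the loss depends on $\pi$ through $V^{\pi,f}_{i,h+1}$. This makes the bound uniform over all triples, so it still applies to the data-dependent choice $\widehat g=\widehat g_i^{\,f,\pi}$.

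Combining the pieces gives the stated bound with $C$ absorbing the factor $2\cdot 2\sqrt2$ and the $\log 2$. I expect the main obstacle to be the subtlety that $\bar\mu_h^n$ is itself random, being determined by the history. The martingale formulation is what resolves this: it compares empirical sums to the sum of conditional expectations, which is exactly $n\,L(g)$ under $\bar\mu_h^n$, so no independence assumption is needed. A secondary point is justifying the use of Lemma~\ref{lem:equiv_loss_dual} and Assumption~\ref{ass:approx_dual_realizability} under $\bar\mu_h^n$. I will argue this by noting that both hold for an arbitrary occupancy measure, and that a mixture of the $\mu_h^{\pi^t}$ is again a valid visitation distribution, for instance that of the uniformly mixed policy.
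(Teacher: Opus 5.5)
Your proposal is correct and follows the paper's proof essentially step for step: the pointwise sign of the operator gap, Lemma~\ref{lem:equiv_loss_dual} applied under the mixture $\bar\mu_h^{\,n}$, the split into ERM excess risk plus an approximation term bounded via Assumption~\ref{ass:approx_dual_realizability} read at $\bar\mu_h^{\,n}$, and martingale concentration with a union bound over $\mathcal F_{i,h+1}\times\mathcal G_{i,h}\times\Pi^{\rm pur}$ so the bound also covers the data-dependent $\widehat g_i^{\,f,\pi}$. The only difference is that you use Azuma--Hoeffding where the paper uses Freedman's inequality; this gives the $\sqrt{\log/n}$ rate directly and avoids the paper's case split on whether $\log(1/\delta')\le n$.
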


Prior single-agent results \citep{NeurIPS2022_RobustRLOffline_Panaganti,Arxiv2024_ModelFreeRobustRL_Panaganti} establish similar guarantees under a fixed offline distribution, whereas our result holds uniformly over all policies and their induced state--joint action distributions in the multi-agent setting. Lemma~\ref{lem:phi_operator_approx} shows that the empirical dual optimization uniformly approximates the robust Bellman operator under the $\mathcal{L}^1_i(\mu^\pi;\mathbb{R}^2)$ norm, with the approximation error controlled globally over the distribution $\mu^\pi$ rather than pointwise over individual $(s,\bm a)$. This global control enables the construction of robust confidence sets and provides a unified error term underlying both our algorithm design and theoretical analysis.

\section{Interactive Robust Multi-Agent Reinforcement Learning}
\label{sec:algorithms}

We then present our \emph{{\Algo}} (\Cref{alg:robust_mamex}). In each episode, we will learn value function approximations via fitted methods, where the fitting error is defined through our empirical robust operator. 
We describe our algorithm in this section, and defer our discussions on algorithmic novelties to Appendix~\ref{app:algorithm_novelties}.

\begin{algorithm}[!htb]
\caption{{\Algo}}
\label{alg:robust_mamex}
\begin{algorithmic}[1]
\State \textbf{Input:} Number of episodes $K$, function classes
$\mathcal F=\otimes_{i=1}^M \mathcal F_i$ and $\mathcal G=\otimes_{i=1}^M \mathcal G_i$, regularization parameter $\beta>0$,
equilibrium oracle $\mathrm{EqOracle}(\cdot;\mathrm{NE/CCE/CE})$
\State \textbf{Initialize: } $\mathcal D_h^0 \gets \emptyset$ for all $h\in[H]$
\For{$k=1,2,\ldots,K$}
    \For{each pure joint policy $\pi \in \Pi^{\mathrm{pur}}$}
            \State Set $f_{i,H+1}^{k,\pi} \equiv 0$ for each agent $i\in \mathcal M$.
            \For{$h=H,H-1,\ldots,1$}
                \State Compute empirical dual minimizer $ \widehat{g}_{i,h}^{k,\pi}$ as in \eqref{eq:emp_dual_min} for each agent $i\in \mathcal M$.

                \State Compute plug-in robust Bellman fit $ f_{i,h}^{k,\pi}$ (by Step 2) for each agent $i\in \mathcal M$.
            \EndFor           
            \State Compute regularised robust payoff $\widehat U_i^k(\pi)$ as given in \eqref{eq:reg_robust_payoff} for $i\in \mathcal M$.
    \EndFor
    \State Construct the normal-form game
    \(
    \widehat{\bm \Gamma}^k := \Big(\{\Pi_i^{\mathrm{pur}}\}_{i\in \mathcal M}, \{\widehat U_i^k\}_{i\in \mathcal M}\Big)
    \) and compute equilibrium policy
    \(
    \pi^k \gets \mathrm{EqOracle}(\widehat{\bm \Gamma}^k;\mathrm{NE/CCE/CE})
    \).
    \State Execute $\pi^k$ in the nominal environment $P^\star$ for one episode and collect
    \(
    \tau^k=\{(s_h^k,{\bm a}_h^k,r_h^k,s_{h+1}^k)\}_{h=1}^H
    \) and update dataset $\mathcal D_h^k \gets \mathcal D_h^{k-1}\cup \{(s_h^k,{\bm a}_h^k,s_{h+1}^k)\}$ for $h \in [H]$.
\EndFor
\State \textbf{Return} $\pi^{\mathrm{out}}\coloneqq \{\pi^k\}_{k=1}^K$ or $\frac{1}{K}\sum_{k=1}^K \pi^k$.
\end{algorithmic}
\end{algorithm}

\textbf{Fitted estimation.}
Let $\Pi_i^{\mathrm{pur}}$ denote the class of pure policies for agent $i$, and let
$\Pi^{\mathrm{pur}} := \Pi_1^{\mathrm{pur}} \times \cdots \times \Pi_m^{\mathrm{pur}}$ be the set of pure joint policies. Fix an episode $k$, a candidate joint policy $\pi \in \Pi^{\mathrm{pur}}$, and an agent $i \in [M]$. The algorithm computes a fitted robust value sequence $\{f_{i,h}^{k,\pi}\}_{h=1}^{H+1}$ via backward dynamic programming on the collected dataset, with terminal condition $f_{i,H+1}^{k,\pi} \equiv 0$ (Line 5). For each stage $h=H,\ldots,1$, two coupled steps are performed.

\emph{Step 1: Empirical dual minimization (Line 7).} Using the stage-$h$ dataset $\mathcal D_h^{k-1}$ collected up to episode $k-1$, we compute the
empirical dual minimizer
\begin{align}
\label{eq:emp_dual_min}
\widehat{g}_{i,h}^{k,\pi}
\in
\arg\min_{g \in \mathcal G_{i,h}}
\widehat{\mathrm{LossDual}}_{i,h,\phi}^{k-1}(g; f_{i,h+1}^{k,\pi}, \pi),
\end{align}
where $\widehat{\mathrm{LossDual}}_{i,h,\phi}^{k-1}$ is the empirical version of the dual loss
defined in \eqref{eq:dual_loss} using $\mathcal D_h^{k-1}$.

\emph{Step 2: Robust Bellman fitting with plug-in residual (Line 8).} Once the dual minimizer $\widehat{g}_{i,h}^{k,\pi}$ has been obtained, the algorithm uses it to define a plug-in empirical approximation of the robust Bellman operator. It then computes the next-stage $h$ fitted value function by minimizing the empirical plug-in robust Bellman residual: $f_{i,h}^{k,\pi}
\in
\arg\min_{f \in \mathcal F_{i,h}}
\sum_{(s,{\bm a},s') \in \mathcal D_h^{k-1}}
\widehat \delta_{i,h}(f_i; \widehat{g}_{i,h}^{k,\pi}; \pi; s,{\bm a},s')^2,$
where $\widehat \delta_{i,h}$ is the plug-in empirical discrepancy defined as
\begin{align}
\label{eq:plug-in_robust_Bellman_res}
  \widehat{\delta}_{i,h}(f_i; g_i; \pi; s,{\bm a},s')
:= f_i(s,{\bm a})-r_{i,h}(s,{\bm a})
+\mathrm{Loss}_{i,h,\phi}(f_i;s,{\bm a},s';g_i,\pi).  
\end{align}

Note that $\widehat \delta_{i,h}$ penalizes hypotheses that do not match the robust Bellman equation well on the historical data. The dual minimizer enters precisely through
this term: the Bellman discrepancy is no longer computed using the nominal transition expectation, but through the dual-induced robust integrand. Intuitively, the dual minimizer $\widehat{g}_{i,h}^{k,\pi}$ converts the
robust Bellman backup into a tractable empirical target, and the fitted value function
$f_{i,h}^{k,\pi}$ best fits this target on the observed data.

\textbf{Regularized robust payoff (Line 10).}
For each episode $k\in[K]$, agent $i\in\mathcal M$, and candidate pure joint policy
$\pi\in\Pi^{\mathrm{pur}}$, we define the revised empirical payoff by
\begin{equation}
\widehat U_i^k(\pi)
:=
f_{i,1}^{k,\pi}(s_1^k)
-
\beta \widehat {\mathcal K}_i^{k-1}\!\Big(f_i^{k,\pi};\widehat{g}_i^{k,\pi};\pi\Big),
\label{eq:reg_robust_payoff}
\end{equation}
where $\beta>0$ is the regularization parameter, $\widehat{g}_{i,h}^{k,\pi}$ is the empirical dual minimizer in \eqref{eq:dual_minimizer} for $f_{i,h+1}^{k,\pi}$ and $\pi$, and $\widehat{\mathcal K}_i^{k-1}(f_i;g_i;\pi)$ denotes the cumulative centered empirical robust discrepancy:
\begin{align}
\sum_{h=1}^H
\Big[
&\sum_{j=1}^{k-1}
\widehat\delta_{i,h}(f_i;g_{i,h};\pi;s_h^j,\bm a_h^j,s_{h+1}^j)^2 -
\inf_{f'_i\in\mathcal F_i}
\sum_{j=1}^{k-1}
\widehat\delta_{i,h}(f'_i;g_{i,h};\pi;s_h^j,\bm a_h^j,s_{h+1}^j)^2
\Big],
\label{eq:robust_empirical_discrepancy_centered}
\end{align}
with $\widehat{\mathcal K}_i^{0}(f_i;g_i;\pi)=0$.
The first term in \eqref{eq:reg_robust_payoff} favors policies with large estimated robust value, while the second penalizes large empirical robust Bellman discrepancy. The resulting objective acts as a conservative surrogate value, trading off value estimation and Bellman consistency rather than introducing an explicit optimistic bonus. Similar to MAMEX~\citep{Xiong2023SampleEfficientMR}, this discrepancy term serves as a global regularizer, and in our setting, it controls the gap between empirical robust Bellman error and robust value prediction via the robust MADC.

\textbf{Policy selection and data collection (Line 12-15).} We then define a
normal-form game
\(
\widehat {\bm \Gamma}^k
:=
\Big(\{\Pi_i^{\mathrm{pur}}\}_{i\in[M]}, \{\widehat U_i^k\}_{i\in[M]}\Big),
\)
whose pure-strategy profile is $\pi \in \Pi^{\mathrm{pur}}$ and whose payoff to agent $i$ at profile $\pi$ is $\widehat U_i^k(\pi)$.
Once the normal-form game $\widehat {\bm \Gamma}^k$ is formed, the algorithm computes the desired
equilibrium through some oracle: $\pi^k \leftarrow \mathrm{EqOracle}(\widehat {\bm \Gamma}^k; \mathrm{NE/CCE/CE}).$ As we mainly focus on statistical complexity, we assume such oracles. Specifically, the NE oracle is generally NP-hard \citep{daskalakis2013complexity}, while CE and CCE can be computed with polynominal complexity \citep{zheng2026distributionally}. 
The policy $\pi^k$ is then executed for one episode in the nominal environment $P^\star$,
producing the trajectory
\(
\tau^k = \{(s_h^k,{\bm a}_h^k,r_h^k,s_{h+1}^k)\}_{h=1}^H.
\)
For each stage $h$, the transition tuple $(s_h^k,{\bm a}_h^k,s_{h+1}^k)$ is appended to the
dataset $\mathcal D_h^{k-1}$ to form $\mathcal D_h^k$, for the following algorithm executions. The final output is either the sequence
$\{\pi^k\}_{k=1}^K$ or the uniform mixture
$\bar\pi = \frac{1}{K}\sum_{k=1}^K \pi^k$, depending on the equilibrium notion under
consideration. 

\section{Theoretical Guarantee}
\label{sec:Theoretical_ORBIT_MAIL}
We now develop our regret analysis. Inspired by the non-robust studies \citep{Xiong2023SampleEfficientMR}, we first introduce a fundamental complexity measure, named \emph{robust multi-agent decoupling coefficient} (Robust MADC), to quantify the inherent difficulty of exploration in DRMGs. This notion will enable us to develop our analysis from an optimization perspective. 

\begin{defi}[Robust Multi-Agent Decoupling Coefficient]
\label{def:robust_MADC}
    For each agent $i \in \mathcal M$, policy $\pi \in \Pi^{\mathrm{pur}}$, and stage $h \in [H]$, let $\mu^\pi_h$ denote the stage-$h$ state-joint action visitation distribution induced by policy $\pi$ under the nominal kernel $P^\star$. Then, for any $i \in \mathcal{M}$, $\mu, c > 0$, $f_i^k \in \mathcal{F}_i$, and joint policy $\{\pi^k\}_{k=1}^{K}$, the robust MADC for agent $i \in \mathcal M$ is the smallest constant $1 \leq d_{i,\mathrm{MADC}}^{\mathrm{rob}} <\infty $ such that:
    \begin{equation}
        \sum_{k=1}^{K}(V_{i,1}^{\pi^k,f^k_i}(s_1^k) - V_{i,1}^{\pi^k,\sigma_i}(s_1^k)) \le \frac{1}{\mu}\sum_{k=1}^K\sum_{s=1}^{k-1} e_{i,1}^s(f_i^k,\pi^k) + \mu\, d_{i,\mathrm{MADC}}^{\mathrm{rob}} + cH d_{i,\mathrm{MADC}}^{\mathrm{rob}},\label{eq:robust_madc_main_corrected}
    \end{equation}
\end{defi}
where $e_{i,1}^{s}(f,\pi)$ is the discrepancy function defined as the mean-squared robust Bellman error:
\begin{equation*}
   e_{i,1}^s(f,\pi) \triangleq \sum_{h=1}^H \mathbb E_{(s_h,{\bm a}_h)\sim \mu_{h}^{\pi^s}}\bigl[\bigl(f_{i,h}(s_h,{\bm a}_h)- [\mathcal T_{i,h}^{\pi,\phi,\sigma_i}f_{i,h+1}](s_h,{\bm a}_h) \bigr)^2\bigr],  \forall s \in [K], \forall f \in \mathcal{F}_i.
\end{equation*}

\textbf{Interpretation of robust MADC.}
Our robust MADC extends the decoupling coefficient in
\citep{dann2021provably,Xiong2023SampleEfficientMR} to robust settings, capturing the joint difficulty of strategic interactions and transition uncertainty. At a high level, $d^{\mathrm{rob}}_{i,\mathrm{MADC}}$ quantifies the decoupling between (i) the \emph{training error}, given by the cumulative empirical robust Bellman residual, and (ii) the \emph{prediction error}, defined as the gap between the learned value and the true robust value under worst-case dynamics. Definition~\ref{def:robust_MADC} ensures that the cumulative prediction error is controlled by the cumulative training error up to a term governed by $d^{\mathrm{rob}}_{i,\mathrm{MADC}}$. Unlike the non-robust setting \citep{Xiong2023SampleEfficientMR}, the prediction error is evaluated under adversarial dynamics while the training error is computed from nominal interaction data. Thus, $d^{\mathrm{rob}}_{i,\mathrm{MADC}}$ characterizes how effectively nominal interaction data controls value prediction under worst-case dynamics, capturing the intrinsic exploration complexity of DRMGs.

\textbf{Structural perspective.}
Beyond the decoupling interpretation, robust MADC also provides a structural view: a smaller $d^{\mathrm{rob}}_{i,\mathrm{MADC}}$ indicates that small empirical robust Bellman error translates into small value prediction error, reflecting better alignment of the function class with the robust Bellman operator. This parallels Bellman-error-based complexity measures such as the BE dimension, while additionally capturing the mismatch between nominal data and adversarial evaluation. This perspective extends naturally to structured function classes, such as bilinear models \citep{du2021bilinear}, where low-dimensional representations enable accurate estimation of the robust Bellman operator from nominal data, thereby controlling the discrepancy terms, yielding a bounded robust MADC, and improving generalization under worst-case dynamics.

In previous RL studies, the (multi-agent) Bellman-Eluder (MABE) dimension is another widely used complexity measure in RL \citep{NeurIPS2021_BellmanEluderDim_Jin,ghosh2025scaling}. In the non-robust setting, a low BE dimension implies a low MADC \citep{Xiong2023SampleEfficientMR}, making MADC a more general complexity measure. In the robust setting, the robust BE dimension was introduced for single-agent RL in \citep{ghosh2025scaling} and can be extended to multi-agent settings. Our robust MADC instead directly quantifies how robust Bellman errors estimated from nominal interactions control value prediction under uncertain transition kernels. Moreover, Theorem~\ref{thm:robust_thm54} shows that a low robust MABE, extending the robust BE dimension of \citep{ghosh2025scaling}, implies a low robust MADC, establishing robust MADC as a more general complexity measure.

Extending results from the non-robust setting \citep{NeurIPS2021_BellmanEluderDim_Jin,jin2022power}, the inherent robust MABE can be substantially smaller than the tabular complexity $S\prod_{i=1}^m A_i$ for structured function classes. By Theorem~\ref{thm:robust_thm54}, the robust MADC can therefore also be much smaller than $S\prod_{i=1}^m A_i$ under function approximation. Since our sample-complexity guarantees are characterized by the robust MADC, this enables improved scalability over tabular approaches.

We now present the theoretical guarantees of {\Algoname}; full details are deferred to Appendix~\ref{app:regret_bound_proof}.

\begin{thm}
\label{thm:main-robust-regret}
Assume Assumptions~\ref{ass:completeness}--\ref{ass:multiplier-range-TV} and the function classes $\mathcal F_i$ and dual function class $\mathcal G_i$ are finite\footnote{This assumption is made for technical simplicity. While extensions to infinite function classes may be possible using covering arguments (e.g., $\epsilon$-nets \citep{NeurIPS2022_RobustRLOffline_Panaganti}), 
establishing such results rigorously in the robust multi-agent setting remains 
non-trivial and is left for future work.}. Let $\{\pi^k\}_{k=1}^K$ be policies produced by \Cref{alg:robust_mamex} with different oracles, and denote $d_{\mathrm{MADC}}^{\mathrm{rob},\max}=\max_{i \in \mathcal M}d_{i,\mathrm{MADC}}^{\mathrm{rob}}$,  $B^{\max}_{\phi}=\max_{i \in \mathcal M}B_{\phi}(\sigma_i)$ and $\iota_{\mathcal F, \mathcal G}^\delta:= \log\!\big(\frac{|\mathcal F||\mathcal G||\Pi^{\mathrm{pur}}|KHm}{\delta}\big)$. Then, with probability at least $1-\delta$, 
the corresponding robust regret satisfies
\begin{align*}
\mathrm{Reg}(K) \le \mathcal O\!\left(
d_{\mathrm{MADC}}^{\mathrm{rob},\max}\sqrt K
+
H d_{\mathrm{MADC}}^{\mathrm{rob},\max}
+
H(B_\phi^{\max})^2\sqrt{K\iota_{\mathcal F, \mathcal G}^\delta}
+
\varepsilon_{\mathrm{dual}}^{\mathrm{approx}}
\right).
\end{align*}
\end{thm}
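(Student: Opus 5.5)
We follow the MAMEX template of \citet{Xiong2023SampleEfficientMR}. The robust statistical error is isolated through Lemma~\ref{lem:phi_operator_approx} and the regularized payoff \eqref{eq:reg_robust_payoff}.

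\emph{Step 1: reduce the gap to a prediction error.} Fix an agent $i$ and episode $k$. For the deviating policy (the best response $\pi_i^\dagger\times\pi^k_{-i}$ for CCE/NE, or $\psi\diamond\pi^k$ for CE), the relevant profile $\tilde\pi^k$ lies in, or is a mixture over, $\Pi^{\mathrm{pur}}$. By Assumption~\ref{ass:completeness}, the true robust $Q$-function of any pure $\pi$ belongs to $\mathcal F_i$. We would show that, with high probability, this true function nearly minimizes the centered discrepancy $\widehat{\mathcal K}_i^{k-1}$. Hence
\[
\widehat U_i^k(\tilde\pi^k)\ \ge\ V_{i,1}^{\tilde\pi^k,\sigma_i}(s_1^k)-\beta\,\mathrm{err}_k,
\qquad
\mathrm{err}_k\lesssim H(B^{\max}_\phi)^2\iota^\delta_{\mathcal F,\mathcal G}+H(k-1)\varepsilon_{\mathrm{dual}}^{\mathrm{approx}}.
\]
This bound comes from a Freedman-type martingale concentration on the squared plug-in residuals \eqref{eq:plug-in_robust_Bellman_res}, unioned over $\mathcal F,\mathcal G,\Pi^{\mathrm{pur}},h,k,i$. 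The dual-estimation bias is controlled by Lemma~\ref{lem:phi_operator_approx}.

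\emph{Step 2: apply the equilibrium property and decompose.} Because $\pi^k$ is an equilibrium of $\widehat{\bm\Gamma}^k$, we have $\widehat U_i^k(\tilde\pi^k)\le \widehat U_i^k(\pi^k)$. Combining this with Step 1 gives
\[
\mathrm{Gap}(\pi^k,s_1^k)\le \max_i\Big[f_{i,1}^{k,\pi^k}(s_1^k)-V_{i,1}^{\pi^k,\sigma_i}(s_1^k)-\beta\widehat{\mathcal K}_i^{k-1}(f_i^{k,\pi^k};\widehat g_i^{k,\pi^k};\pi^k)\Big]+\beta\,\mathrm{err}_k .
\]
We then sum over $k$ and apply Definition~\ref{def:robust_MADC} to the prediction term with $\mu=1/\beta'$. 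This bounds it by $\frac1\mu\sum_k\sum_{s<k}e^s_{i,1}(f^k_i,\pi^k)+\mu d^{\mathrm{rob}}+cHd^{\mathrm{rob}}$.

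\emph{Step 3: convert empirical discrepancy into population robust Bellman error.} This is the main obstacle. We need a lower bound of the form
\[
\widehat{\mathcal K}_i^{k-1}\ \ge\ \tfrac12\sum_{s<k}e^s_{i,1}-C\Big(H(B_\phi^{\max})^2\iota+\text{dual bias}\Big)
\]
holding uniformly. The difficulty is that $\widehat\delta$ uses the plug-in dual $\widehat g$ rather than the true operator. We plan to write the expected squared residual under $P^\star$ as a sum of three parts: the squared error $(f-\widehat{\mathcal T}f)^2$, a conditional-variance term that cancels against the centering infimum, and a cross term bounded by $|\mathcal T f-\widehat{\mathcal T}f|\cdot 2B_\phi$. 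The cross term is bounded in $\|\cdot\|_{1,\bar\mu_h}$ by Lemma~\ref{lem:phi_operator_approx}. The remaining gap between $e^s$ (which is defined with the true $\mathcal T$) and the plug-in error is then absorbed by $(a+b)^2\ge a^2/2-b^2$ together with the same lemma. This accounts for the $(B_\phi^{\max})^2$ factor and the additive $\varepsilon^{\mathrm{approx}}_{\mathrm{dual}}$; the latter must be tracked carefully so that it is not multiplied by $K$.

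\emph{Step 4: choose the parameters.} Choosing $\beta$ so that the $\beta\widehat{\mathcal K}$ term dominates $\frac1\mu\sum e$ leaves a total of order $\mu d+Hd+\beta K\cdot H(B_\phi^{\max})^2\iota$. Setting $\mu\asymp\sqrt K$ and $\beta\asymp 1/\sqrt{K}$ (rescaled by $\iota$) balances the terms. This yields $d^{\mathrm{rob},\max}_{\mathrm{MADC}}\sqrt K+Hd^{\mathrm{rob},\max}_{\mathrm{MADC}}+H(B^{\max}_\phi)^2\sqrt{K\iota}+\varepsilon^{\mathrm{approx}}_{\mathrm{dual}}$ after taking the maximum over agents. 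For CE, the same argument is applied with $\psi\diamond\pi^k$ in place of the best response.
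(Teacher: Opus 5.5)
Your overall architecture (equilibrium inequality for the middle term, robust MADC plus a discrepancy-to-Bellman-error conversion for the played term, $\beta\ge 2/\mu$) matches the paper's $A+B+C$ decomposition and Lemma~\ref{lem:robust_A1_corrected}. However, Step~1 does not go through for Algorithm~\ref{alg:robust_mamex} as written. Your ``hence'' is the MAMEX argument, which needs the payoff to be an optimistic supremum $\sup_{f\in\mathcal F_i}\{f_1(s_1^k)-\beta\widehat{\mathcal K}_i^{k-1}(f;\cdot;\pi)\}$; inserting the Bellman-consistent benchmark then gives $\widehat U_i^k(\pi)\ge V_{i,1}^{\pi,\sigma_i}(s_1^k)-\beta\widehat{\mathcal K}_i^{k-1}(f_i^\star)$. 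Here \eqref{eq:reg_robust_payoff} evaluates the payoff at the backward least-squares fit $f_i^{k,\pi}$, not at a maximizer over $\mathcal F_i$. Knowing that $f_i^\star$ nearly minimizes $\widehat{\mathcal K}$ therefore says nothing about $f_{i,1}^{k,\pi}(s_1^k)$ versus $V_{i,1}^{\pi,\sigma_i}(s_1^k)$ for a deviation policy that was never played, and there is no optimism to fall back on. The paper bridges this in three moves. Lemma~\ref{cor:robust-payoff-domination} telescopes $V_{i,1}^{\pi,\sigma_i}-f_{i,1}^{k,\pi}$ along the worst-case occupancy $\nu_{i,h}^\pi$. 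Lemma~\ref{lem:benchmark-surrogate} compares $\widehat{\mathcal K}_i^{k-1}(f_i^{k,\pi};\widehat g_i^{k,\pi};\pi)$ with $\widehat{\mathcal K}_i^{k-1}(f_i^\star;\widehat g_i^{f_i^\star,\pi};\pi)$ via empirical optimality and a dual-switch term. Finally, the $\nu_{i,h}^\pi$-residuals of the fitted sequence are transferred to the historical mixture $\bar\mu_h^{k-1}$, where Lemma~\ref{lem:phi_operator_approx} applies, using the robust-to-nominal concentrability constant $C_{\mathrm{cov}}$ of Definition~\ref{ass:robust-nominal-concentrability}. The proof uses this constant even though it is not among the theorem's hypotheses. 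You need either such a coverage ingredient or a change of the payoff to an optimistic supremum.

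Second, your bound $\mathrm{err}_k\lesssim H(B_\phi^{\max})^2\iota$ (with $\iota=\iota^\delta_{\mathcal F,\mathcal G}$) is the MAMEX fast rate, and the plug-in dual breaks it. Because $f_i^\star$ is a fixed point of the true operator while $\widehat\delta_{i,h}$ uses $\widehat g$, the conditional mean of $\widehat\delta_{i,h}(f_i^\star)$ is the bias $\Delta_h=\mathcal T_{i,h}^{\pi,\phi,\sigma_i}f_{i,h+1}^\star-\mathcal T_{\widehat g,i,h}^{\pi,\phi,\sigma_i}f_{i,h+1}^\star\ge0$. The population excess loss of $f_i^\star$ relative to the comparator $\mathcal T_{\widehat g,i,h}^{\pi,\phi,\sigma_i}f_{i,h+1}^\star$ is then $\sum_{j<k}\mathbb E_{\mu_h^{\pi^j}}[\Delta_h^2]=(k-1)\mathbb E_{\bar\mu_h^{k-1}}[\Delta_h^2]$. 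This is a bias, not a martingale fluctuation, so Freedman does not help. Lemma~\ref{lem:phi_operator_approx} controls $\Delta_h$ only in $L^1(\bar\mu_h^{k-1})$ at the slow rate $B_\phi\sqrt{\iota/(k-1)}$, so the best available bound is $HB_\phi^2\sqrt{k\iota}$ per episode. That is exactly the rate of Lemmas~\ref{lem:robust_A1_corrected} and~\ref{lem:robust_optimal_concentration}, and the same term enters your Step~3. Summed over $k$ it is $HB_\phi^2K^{3/2}\sqrt\iota$. Since cancelling $\frac{1}{\mu}\sum e$ forces $\beta\gtrsim1/\mu$, the robust-estimation contribution is of order $\mu^{-1}HB_\phi^2K^{3/2}\sqrt\iota$. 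This is linear in $K$ for $\mu\asymp\sqrt K$, and balancing it against $\mu d^{\mathrm{rob}}_{i,\mathrm{MADC}}$ gives only $K^{3/4}$.

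Mimicking the paper will not rescue this. When its proof sums the per-episode bounds of those two lemmas over $k$, it records $\mathcal O(HB_i^2K\sqrt\iota)$ instead of $\mathcal O(HB_i^2K\sqrt{K\iota})$. It also keeps $\varepsilon^{\mathrm{approx}}_{\mathrm{dual}}$ additive only by ``choosing'' $\bar\epsilon_{\mathrm{dual}}\le\varepsilon^{\mathrm{approx}}_{\mathrm{dual}}/(HKB_\phi)$, which is not a free parameter; your own $\mathrm{err}_k$ already carries $H(k-1)\varepsilon^{\mathrm{approx}}_{\mathrm{dual}}$. A genuine $\sqrt K$ rate would require a fast, $\iota/n$-type rate for the dual ERM, e.g.\ under a curvature or Bernstein condition on the dual loss.

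Finally, in Step~2 you cannot apply the agent-wise inequality of Definition~\ref{def:robust_MADC} to $\sum_k\max_i[\cdot]$. You must first use $\max_i\mathrm{Gap}_{i,k}\le\sum_i\mathrm{Gap}_{i,k}$ (each unilateral gap is nonnegative), which costs a factor $m$ as in MAMEX. The paper's reduction $\mathrm{Reg}(K)\le\max_i\sum_k\mathrm{Gap}_{i,k}$ runs in the wrong direction.
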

 
Theorem~\ref{thm:main-robust-regret} shows that {\Algoname} efficiently learns robust equilibria in general-sum DRMGs with general function approximation. The bound decomposes into four terms: 
(i) the \emph{exploration term}
$d_{\mathrm{MADC}}^{\mathrm{rob},\max}\sqrt K$, governed by the robust MADC, which—analogous to \citep{Xiong2023SampleEfficientMR}—links cumulative in-sample robust Bellman discrepancy to out-of-sample robust value prediction error and thus captures the
intrinsic difficulty of exploration. Optimizing the decoupling parameter yields the $\sqrt K$ rate. While small cumulative discrepancy ensures that the
exploration-induced component of regret is sublinear, the overall regret additionally depends on the robust estimation term arising from approximating the worst-case Bellman operator from nominal data;
(ii) a \emph{burn-in horizon-dependent term} \(H\,d_{\mathrm{MADC}}^{\mathrm{rob},\max}\), which reflects horizon-dependent initialization effects and does not
affect the asymptotic rate; 
(iii) a \emph{robust estimation term} $H\,(B_\phi^{\max})^2\sqrt{K\,\iota_{\mathcal F,\mathcal G}^\delta}$,
which is unique to the robust setting. This term captures the statistical
cost of estimating the robust Bellman operator from nominal data via the
dual formulation. Here, $B_\phi^{\max}$ controls the dual loss envelope,
$\iota_{\mathcal F,\mathcal G}^\delta$ reflects the complexity of the primal and dual function classes; and 
(iv) the \emph{dual approximation error} \(\varepsilon_{\mathrm{dual}}^{\mathrm{approx}}\), which captures the representational error of the dual function class and vanishes under realizability of $\mathcal G$. Overall, the bound reveals two core challenges: exploration under strategic interaction, captured by \(d_{\mathrm{MADC}}^{\mathrm{rob},\max}\)and robust Bellman estimation under distributional mismatch, reflected in
by $(B_\phi^{\max})^2\iota_{\mathcal F,\mathcal G}^\delta$. Refer to subsection \ref{app:algorithm_novelties} for a detailed discussion on the technical novelties of {\Algoname}.

As a consequence of our regret guarantees, applying a standard online-to-batch conversion \citep{NeuRIPS2001_OnlineLearningAlgo_Cesa} gives the first sample-complexity guarantees for learning an $\varepsilon$-optimal policy with {\Algoname}. As corollary, we obtain regret bound under total variation, $\chi^2$- and KL-divergence uncertainty sets, presented in Appendix~\ref{app:cor_sample_complexity_phi} and Appendix~\ref{app:TV_regret_bound}.  To our knowledge, these results constitute the first sample-complexity characterization for distributionally robust MARL with general function approximation. Crucially, the bounds are governed by the robust MADC and contain no explicit dependence on the state-space size $|\mathcal S|$ or the joint action-space size $|\mathcal A|$. Thus, compared with tabular robust MARL, our analysis replaces the unfavorable joint-action-space dependence with an intrinsic function-class complexity, improving scalability and mitigating the curse of multi-agency \citep{Arxiv2024_SampleEfficientMARL_Shi,farhat2026sampleefficient}.

To further interpret the sharpness and generality of our results, we specialize the general bound to representative settings. In particular, we consider the tabular case, robust MABE/BE-type characterizations, and linear function approximation. A more detailed and comprehensive comparison with prior work is
provided in subsection \ref{app:comaprison}.

\subsection{Novelties of {\Algoname}}
\label{app:algorithm_novelties}
In this section, we discuss both the algorithm and technical novelties of {\Algoname}.

\textbf{Algorithm novelties.} Our algorithm is conceptually motivated by two complementary lines of work: the MAMEX framework \citep{Xiong2023SampleEfficientMR}, which constructs payoffs using nominal Bellman errors but does not account for distributional uncertainty, and the RFL method \citep{ghosh2025scaling}, which uses
dual-based robust Bellman residuals to enable worst-case value estimation in single-agent robust RL. However, neither approach directly applies to online DRMGs with general function approximation: MAMEX lacks robustness, while RFL does not address strategic multi-agent interactions or equilibrium-based payoff construction. This creates a central algorithmic challenge: how to
construct payoffs that are simultaneously robust, statistically estimable from nominal data, and suitable for equilibrium-based exploration.

\textit{(1) Robust payoff design via dual fitted learning.}
The central novelty of {\Algoname} is the construction of a robust payoff
function that combines worst-case value estimation with exploration-aware
regularization. In nominal MARL, payoff construction can rely on Bellman
consistency under the data-generating dynamics. In contrast, in DRMGs, policy
values are defined through adversarial transition kernels that are not directly
observed from nominal interaction data. To overcome this mismatch, {\Algoname}
uses a dual-based robust fitted-learning procedure: for each candidate joint
policy, a functional dual minimizer is estimated from nominal data and then
plugged into a fitted robust Bellman recursion. This produces a tractable
empirical surrogate for the robust value. The resulting payoff is then
regularized by a centered empirical robust discrepancy, which penalizes the
excess robust Bellman residual relative to the best in-class fit. Thus, the
payoff does not merely subtract a raw residual; it subtracts a centered
function-class-level measure of robust Bellman inconsistency.

\textit{(2) Function-class-level uncertainty quantification under robustness.}
The centered empirical robust discrepancy provides a global uncertainty
quantification mechanism tailored to robust multi-agent learning. Since it is
defined as an excess empirical robust Bellman error relative to the best
hypothesis in the class, it measures how much additional robust Bellman
inconsistency is incurred by the fitted value function beyond the best
available in-class approximation. This is crucial under general function
approximation, where uncertainty cannot be captured by tabular state-action
bonuses. Instead, {\Algoname} uses this centered discrepancy as a global
regularizer in the payoff, and the regret analysis relates this empirical
robust discrepancy to robust value prediction error through the robust MADC.

\textit{(3) Coupling robust evaluation with equilibrium-based learning.} {\Algoname} integrates dual-based robust policy evaluation with equilibrium-based learning by using the regularized robust payoffs to define a surrogate normal-form game over candidate pure joint policies. The equilibrium oracle is then applied to this surrogate game, so the selected policy is guided
not only by estimated robust value but also by the centered robust Bellman discrepancy. This aligns equilibrium computation with the robust learning objective: agents are encouraged to choose policies whose robust values are large and whose fitted robust Bellman recursions are statistically reliable under the accumulated nominal data.

Overall, the novelty of {\Algoname} lies in resolving the incompatibility between MAMEX-style payoff-based exploration and dual-based robust Bellman evaluation. The key algorithmic contribution is a regularized robust payoff that combines dual fitted robust value estimation with a centered empirical
robust discrepancy, enabling purely online equilibrium learning in general-sum DRMGs under general function approximation.

\textbf{Technical novelties.}
The primary technical challenge in our setting arises from the fact that
the robust Bellman operator is defined through a worst-case transition
kernel and is therefore not directly observable from nominal interaction
data. This creates a fundamental mismatch between the data-generating
distribution and the target of optimization, a difficulty that does not
arise in non-robust MARL. To address this, our analysis develops a unified framework that simultaneously controls three sources of error:
(i) approximation error induced by plug-in dual-based robust Bellman
operators,
(ii) statistical generalization error under function approximation, and
(iii) strategic deviation error due to multi-agent interactions.
A key ingredient is the use of a \emph{centered empirical robust discrepancy}, which enables a telescoping control of estimation errors across episodes. This centering is critical: it eliminates systematic bias introduced by
the evolving plug-in operators and prevents linear accumulation of
estimation errors over time.

Building on this, we introduce a robust extension of the MADC, which quantifies how empirical robust Bellman error—computed from nominal data—controls value prediction under adversarial transition dynamics. This leads to a regret bound that explicitly captures the interplay
between function approximation, equilibrium computation, and distributional robustness in a fully online setting. To the best of our knowledge, this provides the first intrinsic-complexity characterization of exploration in general-sum DRMGs under
$\phi$-divergence uncertainty sets.

\subsection{Comparison with Prior Works}
\label{app:comaprison}

\begin{table}[!htb]
\centering
\caption{
Comparison of sample-complexity guarantees across general function approximation,
tabular models, BE/MABE-type characterizations, and linear function approximation
in multi-agent RL. We focus on online model-free methods whenever possible.
For compactness, let $\iota_{\mathcal F,\mathcal G}^{\delta}
:=
\log\!\Big(
|\mathcal F||\mathcal G||\Pi^{\mathrm{pur}}|KHm/\delta
\Big)$, $A_{\mathrm{joint}}:=\prod_{i=1}^m A_i$, and define the divergence-dependent factors
\(
\Psi_{\mathrm{TV}}(\sigma)
:=
\big(\min\{H,1/\sigma_{\min}\}\big)^4,\quad
\Psi_{\chi^2}(\sigma)
:=
(1+\sqrt{\sigma_{\max}})^4,\quad
\Psi_{\mathrm{KL}}(\sigma)
:=
\max\{1,\sigma_{\max}^4\}.
\) For notational simplicity, we further denote $d_{\text{MABE}}^{\text{rob},\max}:= d_{\text{MABE}}^{\text{rob},\max} (\mathcal{F}, \Pi,1/K)$ (robust MABE-dimension) and $d_{\text{MABE}}:= d_{\text{MABE}}(\mathcal{F}, \Pi,1/K)$ (non-robust MABE-dimension).
Our bounds keep the robust-to-nominal concentrability coefficient
\(C_{\mathrm{cov}}\) explicit.
}
\label{tab:comparison}
\small
\setlength{\tabcolsep}{2.2pt}
\renewcommand{\arraystretch}{1.12}
\begin{tabular}{c|c|c|c|p{0.45\textwidth}}
\toprule
\textbf{Setting} & \textbf{Method} & \textbf{Robust} & \textbf{Divergence} & \textbf{Sample complexity} \\
\midrule

\multirow{5}{*}{\textbf{General}}

& \makecell{Online  \citep{Xiong2023SampleEfficientMR}}
& No & --
& \(\widetilde{\mathcal O}\!\left(
m^2H^3d_{\mathrm{MADC}}^2\,\iota_{\mathcal F}^{\delta}\,
\varepsilon^{-2}
\right)\) \\

\cmidrule(l){2-5}

& \multirow{3}{*}{\makecell{\textbf{Online}\\\textbf{{\Algoname} (Ours)}}}
& \multirow{3}{*}{Yes}
& TV
& \(\widetilde{\mathcal O}\!\left(
\max\left\{
\frac{H^2(d_{\mathrm{MADC}}^{\mathrm{rob},\max})^2}{\varepsilon^2},
\frac{C_{\mathrm{cov}}^2H^7\Psi_{\mathrm{TV}}(\sigma)
\,\iota_{\mathcal F,\mathcal G}^{\delta}}{\varepsilon^2}
\right\}
\right)\) \\

& & & \(\chi^2\)
& \(\widetilde{\mathcal O}\!\left(
\max\left\{
\frac{H^2(d_{\mathrm{MADC}}^{\mathrm{rob},\max})^2}{\varepsilon^2},
\frac{H^7\Psi_{\chi^2}(\sigma)
\,\iota_{\mathcal F,\mathcal G}^{\delta}}{\varepsilon^2}
\right\}
\right)\) \\

& & & KL
& \(\widetilde{\mathcal O}\!\left(
\max\left\{
\frac{H^2(d_{\mathrm{MADC}}^{\mathrm{rob},\max})^2}{\varepsilon^2},
\frac{H^7\Psi_{\mathrm{KL}}(\sigma)
\,\iota_{\mathcal F,\mathcal G}^{\delta}}{\varepsilon^2}
\right\}
\right)\) \\

\cmidrule(l){2-5}
& Lower bound & -- & -- & N/A \\

\midrule

\multirow{7}{*}{\textbf{Tabular}}

& \makecell{Online  \citep{ma2023decentralized}}
& Yes & KL
& \(\widetilde{\mathcal O}\!\left(
\varepsilon^{-2}H^5S(\max_i A_i)^2
\right)\) with an oracle \\

\cmidrule(l){2-5}

& \multirow{2}{*}{\makecell{Online  \citep{farhat2026sampleefficient}}}
& \multirow{2}{*}{Yes}
& TV
& \(\widetilde{\mathcal O}\!\left(
\varepsilon^{-2}H^3S A_{\mathrm{joint}}\min\{\sigma_{\min}^{-1},H\}
\right)\) \\

& & & KL
& \(\widetilde{\mathcal O}\!\left(
\varepsilon^{-2}\sigma_{\min}^{-2}(P_{\min}^{\star})^{-1}
H^5e^{2H^2}S A_{\mathrm{joint}}
\right)\) \\

\cmidrule(l){2-5}

& \multirow{3}{*}{\makecell{\textbf{Online}\\\textbf{{\Algoname} (Ours)}}}
& \multirow{3}{*}{Yes}
& TV
& \(\widetilde{\mathcal O}\!\left(
\max\left\{
\frac{H^2S^2A_{\mathrm{joint}}^2}{\varepsilon^2},
\frac{C_{\mathrm{cov}}^2H^7\Psi_{\mathrm{TV}}(\sigma)
S A_{\mathrm{joint}}}{\varepsilon^2}
\right\}
\right)\) \\

& & & \(\chi^2\)
& \(\widetilde{\mathcal O}\!\left(
\max\left\{
\frac{H^2S^2A_{\mathrm{joint}}^2}{\varepsilon^2},
\frac{H^7\Psi_{\chi^2}(\sigma)
S A_{\mathrm{joint}}}{\varepsilon^2}
\right\}
\right)\) \\

& & & KL
& \(\widetilde{\mathcal O}\!\left(
\max\left\{
\frac{H^2S^2A_{\mathrm{joint}}^2}{\varepsilon^2},
\frac{H^7\Psi_{\mathrm{KL}}(\sigma)
S A_{\mathrm{joint}}}{\varepsilon^2}
\right\}
\right)\) \\

\cmidrule(l){2-5}

& \makecell{Lower bound (Generative)\\ \citep{Arxiv2024_SampleEfficientMARL_Shi}}
& Yes & TV
& \(\widetilde{\Omega}\!\left(
\varepsilon^{-2}H^3S(\max_i A_i)\min\{\sigma_{\min}^{-1},H\}
\right)\) \\

\midrule

\multirow{5}{*}{\makecell{\textbf{MABE}\\ \textbf{Dimension}}}

& \makecell{Online  \citep{Xiong2023SampleEfficientMR}}
& No & --
& \(\widetilde{\mathcal O}\!\left(
m^2H^3d_{\text{MABE}}^2\,\iota_{\mathcal F}^{\delta}
\,\varepsilon^{-2}
\right)\) \\

\cmidrule(l){2-5}
& \makecell{Online \citep{jin2022power}}
& No & --
& \(\widetilde{\mathcal O}\!\left(
H^3d_{\mathrm{MABE}}^2
\log(|\mathcal F|Hd_{\mathrm{BE}}/\delta)
\varepsilon^{-2}
\right)\) \\

\cmidrule(l){2-5}

& \multirow{3}{*}{\makecell{\textbf{Online}\\\textbf{{\Algoname} (Ours)}}}
& \multirow{3}{*}{Yes}
& TV
& \(\widetilde{\mathcal O}\!\left(
\max\left\{
\frac{H^4(d_{\text{MABE}}^{\text{rob},\max} )^2}{\varepsilon^2},
\frac{C_{\mathrm{cov}}^2H^7\Psi_{\mathrm{TV}}(\sigma)
\,\iota_{\mathcal F,\mathcal G}^{\delta}}{\varepsilon^2}
\right\}
\right)\) \\

& & & \(\chi^2\)
& \(\widetilde{\mathcal O}\!\left(
\max\left\{
\frac{H^4(d_{\text{MABE}}^{\text{rob},\max} )^2}{\varepsilon^2},
\frac{H^7\Psi_{\chi^2}(\sigma)
\,\iota_{\mathcal F,\mathcal G}^{\delta}}{\varepsilon^2}
\right\}
\right)\) \\

& & & KL
& \(\widetilde{\mathcal O}\!\left(
\max\left\{
\frac{H^4(d_{\text{MABE}}^{\text{rob},\max} )^2}{\varepsilon^2},
\frac{H^7\Psi_{\mathrm{KL}}(\sigma)
\,\iota_{\mathcal F,\mathcal G}^{\delta}}{\varepsilon^2}
\right\}
\right)\) \\

\midrule

\multirow{6}{*}{\textbf{Linear}}

& \makecell{Online \citep{Xiong2023SampleEfficientMR}}
& No & --
& \(\widetilde{\mathcal O}\!\left(
m^2H^3d_{\mathrm{lin}}^2\varepsilon^{-2}
\right)\) \\

\cmidrule(l){2-5}

& \makecell{Online  \citep{wang2023breaking}}
& No & --
& \(\widetilde{\mathcal O}\!\left(
m^2H^6d_{\mathrm{lin}}^4(\max_i A_i^5)\varepsilon^{-2}
\right)\) \\

\cmidrule(l){2-5}

& \makecell{Online  \citep{zheng2026distributionally}}
& Yes & TV
& \(\widetilde{\mathcal O}\!\left(
H^3d_{\mathrm{lin}}^2(\min\{H,1/\sigma_{\min}\})^2
\varepsilon^{-2}
\right)\) \\

\cmidrule(l){2-5}

& \multirow{3}{*}{\makecell{\textbf{Online}\\\textbf{{\Algoname} (Ours)}}}
& \multirow{3}{*}{Yes}
& TV
& \(\widetilde{\mathcal O}\!\left(
\max\left\{
\frac{\gamma^2C_{\mathrm{cov}}^2H^4d_{\mathrm{lin}}^2}{\varepsilon^2},
\frac{C_{\mathrm{cov}}^2H^7\Psi_{\mathrm{TV}}(\sigma)
\,\iota_{\mathrm{lin}}^{\delta}}{\varepsilon^2}
\right\}
\right)\) \\

& & & \(\chi^2\)
& \(\widetilde{\mathcal O}\!\left(
\max\left\{
\frac{\gamma^2H^4d_{\mathrm{lin}}^2}{\varepsilon^2},
\frac{H^7\Psi_{\chi^2}(\sigma)
\,\iota_{\mathrm{lin}}^{\delta}}{\varepsilon^2}
\right\}
\right)\) \\

& & & KL
& \(\widetilde{\mathcal O}\!\left(
\max\left\{
\frac{\gamma^2H^4d_{\mathrm{lin}}^2}{\varepsilon^2},
\frac{H^7\Psi_{\mathrm{KL}}(\sigma)
\,\iota_{\mathrm{lin}}^{\delta}}{\varepsilon^2}
\right\}
\right)\) \\

\cmidrule(l){2-5}

& \makecell{Lower bound  \citep{zheng2026distributionally}}
& Yes & TV
& \(\widetilde{\Omega}\!\left(
H^2d_{\mathrm{lin}}^2(\min\{H,1/\sigma_{\min}\})^2
\varepsilon^{-2}
\right)\) \\

\bottomrule
\end{tabular}
\end{table}

Table~\ref{tab:comparison} summarizes the sample-complexity guarantees of
{\Algoname} and positions them relative to existing results across four regimes:
general function approximation, tabular models, BE/MABE-type characterizations,
and linear structure. The comparison focuses on online model-free methods in
general-sum Markov games, while also including two-player zero-sum BE-dimension
benchmarks as reference points. The main message is that our guarantees extend
the intrinsic-complexity viewpoint of non-robust MARL to distributionally robust
MARL: exploration is governed by a robust decoupling coefficient, while
robustness contributes additional terms reflecting the geometry of the
uncertainty set and the mismatch between nominal data and worst-case evaluation.

\paragraph{General function approximation: robust extension of intrinsic complexity characterizations.}
In non-robust multi-agent RL with general function approximation, the sharpest
known guarantees are naturally phrased in terms of intrinsic complexity rather
than explicit dependence on \((S,A)\). In particular, the MAMEX framework shows
that sample complexity scales as
\(
\widetilde{\mathcal O}\!\left(
m^2H^3d_{\mathrm{MADC}}^2\,\iota_{\mathcal F}^{\delta}
\,\varepsilon^{-2}
\right),
\)
where \(d_{\mathrm{MADC}}\) controls how empirical Bellman errors translate into
value-prediction errors across agents \citep{Xiong2023SampleEfficientMR}. This
extends BE-style complexity ideas from single-agent and two-player zero-sum
settings to general-sum games, where there is no single Bellman operator and
agents' value functions are coupled through equilibrium constraints.

Our result extends this intrinsic characterization to the distributionally
robust setting. The revised sample-complexity bound has the generic form
\(
\widetilde{\mathcal O}\!\left(
\max\left\{
\frac{H^2(d_{\mathrm{MADC}}^{\mathrm{rob},\max})^2}{\varepsilon^2},
\frac{H^7\Psi_\phi(\sigma)
\,\iota_{\mathcal F,\mathcal G}^{\delta}}{\varepsilon^2}
\right\}
\right).
\)
The first term is the robust exploration term, governed by the robust MADC. It
is the analogue of the non-robust decoupling term, but now the prediction error
is measured against robust values induced by adversarial transition kernels.
The second term is unique to the robust setting: it is the cost of estimating
the plug-in robust Bellman operator from nominal interaction data. The factor
\(B_\phi(\sigma)\), equivalently summarized through \(\Psi_\phi(\sigma)\), comes
from the dual-loss envelope associated with the \(\phi\)-divergence uncertainty
set. Moreover, \(C_{\mathrm{cov}}\) captures the robust-to-nominal distribution
mismatch for TV-case. Thus, robustness does not simply multiply the MADC term; instead, it introduces a separate robust-estimation term in addition to the exploration complexity.

\paragraph{BE dimension, MABE, and robust MADC.}
A central conceptual distinction behind Table~\ref{tab:comparison} is the
difference between BE-dimension-based analyses and MADC-based analyses. In
two-player zero-sum Markov games, the analysis of \citep{jin2022power} relies on
a single Nash value function induced by a minimax Bellman operator. This unified
operator allows exploration to be controlled through one sequence of Bellman
errors, yielding sample complexity of order
\(
\widetilde{\mathcal O}\!\left(
H^3d_{\mathrm{MABE}}^2
\log(|\mathcal F|Hd_{\mathrm{MABE}}/\delta)\varepsilon^{-2}
\right).
\)
In general-sum games, however, this structure is absent: each agent has its own
value function, equilibrium behavior is defined through coupled unilateral
deviations, and there is no single Bellman fixed point that governs all agents.

This is precisely the role of MADC. It quantifies how multi-agent Bellman
prediction errors propagate into equilibrium-relevant value errors across
agents. Our robust extension preserves this structure but replaces the nominal
Bellman residual by the robust Bellman residual. The robust MADC therefore
captures the structural difficulty of using nominal Bellman training errors to
control robust value prediction under worst-case dynamics.

Theorem~\ref{thm:robust_thm54} further shows that low robust MABE implies low
robust MADC:
\(
d_{\mathrm{MADC}}^{\mathrm{rob},\max}
=
\widetilde{\mathcal O}\!\left(
H d_{\text{MABE}}^{\text{rob},\max} 
\right).
\)
Substituting this into the general theorem yields the BE/MABE rows in
Table~\ref{tab:comparison}. Importantly, the robust MABE-to-MADC implication is
a structural statement, where we introduce \(C_{\mathrm{cov}}\) for the TV case in the final sample complexity through the robust-estimation term in Theorem~\ref{thm:main-robust-regret-TV}.

\paragraph{Tabular setting: recovering uncertainty-dependent scaling.}
The tabular setting provides the clearest interpretation of how the uncertainty radius affects the sample complexity. Under full tabular function classes, the
robust MADC satisfies
\(
d_{\mathrm{MADC}}^{\mathrm{rob},\max}
=
\widetilde{\mathcal O}(S A_{\mathrm{joint}}),
\)
and the logarithmic complexity of the primal and dual classes scales as
\(
\log(|\mathcal F||\mathcal G|)
=
\widetilde{\mathcal O}(S A_{\mathrm{joint}}).
\)
Substituting these into the generic bound gives
\(
\widetilde{\mathcal O}\!\left(
\max\left\{
\frac{H^2S^2A_{\mathrm{joint}}^2}{\varepsilon^2},
\frac{H^7\Psi_\phi(\sigma)
S A_{\mathrm{joint}}}{\varepsilon^2}
\right\}
\right),
\)
which is reported in Table~\ref{tab:comparison}. For TV-case, we get the bound as \(
\widetilde{\mathcal O}\!\left(
\max\left\{
\frac{H^2S^2A_{\mathrm{joint}}^2}{\varepsilon^2},
\frac{C^2_{\mathrm{cov}}H^7\Psi_\phi(\sigma)
S A_{\mathrm{joint}}}{\varepsilon^2}
\right\}
\right).
\)

This specialization should be interpreted carefully. For the TV-case, the coefficient
\(C_{\mathrm{cov}}\) is a robust-to-nominal distribution-shift coefficient as mentioned in Definition \ref{ass:robust-nominal-concentrability}, not
a purely combinatorial quantity such as \(SA_{\mathrm{joint}}\). Therefore we
keep it explicit rather than replacing it by a tabular dimension. 
The key point is that the tabular dependence emerges from the same robust MADC framework used for general function approximation,
rather than from a separate tabular analysis. Moreover, the divergence factor becomes
\(\Psi_{\mathrm{TV}}(\sigma)=(\min\{H,1/\sigma_{\min}\})^4\), recovering the
same qualitative contraction dependence that appears in recent tabular robust
MDP and DRMG analyses. 

For \(\chi^2\) and KL uncertainty, the same framework
yields the factors \((1+\sqrt{\sigma_{\max}})^4\) and
\(\max\{1,\sigma_{\max}^4\}\), respectively. Our tabular bounds are not intended
to sharpen every parameter relative to specialized tabular algorithms; rather,
they demonstrate that the uncertainty-dependent behavior follows from a unified
Bellman-error-based robust MADC analysis.

The benefit of using robust MADC rather than the tabular quantity
\(S A_{\mathrm{joint}}\) is most visible under structured function approximation.
In the full tabular class, Bellman-error-based complexity measures may scale as
\(S A_{\mathrm{joint}}\), and our framework recovers this dependence. However,
this is only the worst-case specialization. Prior work on BE dimension shows that, for structured function classes such as low Bellman-rank, linear, generalized-linear, and kernel models, the Bellman-error complexity is governed by an intrinsic dimension \(d\), rather than by the number of state--joint-action pairs \citep{NeurIPS2021_BellmanEluderDim_Jin, jin2022power, ghosh2025scaling}. In particular, low Bellman rank \(d\) implies
\(
\dim_{\mathrm{MABE}}
\le
\widetilde{\mathcal O}(d),
\)
\citep[Proposition 11]{NeurIPS2021_BellmanEluderDim_Jin}, which can be much smaller than \(S A_{\mathrm{joint}}\). Our robust MABE-to-MADC result plays the same
role in the robust multi-agent setting by
Theorem~\ref{thm:robust_thm54},
where \(
d_{\mathrm{MADC}}^{\mathrm{rob},\max}
=
\widetilde{\mathcal O}\!\left(H d_{\text{MABE}}^{\text{rob},\max} \right).
\)
Thus, whenever the robust Bellman-error class has low robust MABE dimension
\(d_{\text{MABE}}^{\text{rob},\max} (\mathcal{F}, \Pi,1/K)\ll S A_{\mathrm{joint}}\), the robust MADC is also much smaller
than the tabular complexity up to horizon and logarithmic factors. This is the
precise sense in which our bound avoids explicit state--joint-action dependence:
it replaces the combinatorial tabular quantity by an intrinsic complexity
measure that captures how the chosen function class generalizes robust Bellman
errors across policies, agents, and adversarial transition dynamics.

\paragraph{Linear function approximation: reduction from the bilinear theory.}
The linear setting serves as a useful benchmark for comparing our general
intrinsic-complexity theory with specialized linear robust MARL analyses. In
the non-robust setting, MAMEX achieves
\(
\widetilde{\mathcal O}\!\left(
m^2H^3d_{\mathrm{lin}}^2\varepsilon^{-2}
\right),
\)
while decentralized linear Markov game methods such as
\citep{wang2023breaking} obtain bounds of order
\(
\widetilde{\mathcal O}\!\left(
m^2H^6d_{\mathrm{lin}}^4(\max_i A_i^5)\varepsilon^{-2}
\right).
\)
These methods exploit linear structure and, in decentralized settings, reduce
the burden of joint exploration, but they remain tied to linear realizability
and specialized algorithmic designs.

In the robust linear setting, the closest comparison is
\citep{zheng2026distributionally}, which studies online distributionally robust
general-sum Markov games under linear function approximation and TV uncertainty.
Their sample complexity is
\(
\widetilde{\mathcal O}\!\left(
H^3d_{\mathrm{lin}}^2(\min\{H,1/\sigma_{\min}\})^2
\varepsilon^{-2}
\right),
\)
with a nearly matching lower bound
\(
\widetilde{\Omega}\!\left(
H^2d_{\mathrm{lin}}^2(\min\{H,1/\sigma_{\min}\})^2
\varepsilon^{-2}
\right).
\)
These rates are sharper in \(H\) and in the TV-radius dependence because the
analysis is tailored to linear structure and TV uncertainty.

Our linear specialization follows instead from the general robust bilinear
framework. Theorem~\ref{thm:robust-bilinear-implies-low-madc} gives, under
robust-to-nominal feature coverage,
\(
d_{\mathrm{MADC}}^{\mathrm{rob},\max}
=
\widetilde{\mathcal O}\!\left(
\gamma^2H d_{\mathrm{lin}}
\right).
\)
Substituting this into Theorem~\ref{thm:main-robust-regret} yields the linear
rows in Table~\ref{tab:comparison}. In particular, under TV uncertainty,
\[
\widetilde{\mathcal O}\!\left(
\max\left\{
\frac{\gamma^2H^4d_{\mathrm{lin}}^2}{\varepsilon^2},
\frac{H^7(\min\{H,1/\sigma_{\min}\})^4
\,\iota_{\mathrm{lin}}^\delta}{\varepsilon^2}
\right\}
\right)
\]
is obtained up to logarithmic factors. Compared with
\citep{zheng2026distributionally}, this bound has worse horizon and uncertainty
dependence, but it is derived as a direct specialization of a much broader
general-function-approximation theory and applies uniformly to TV, \(\chi^2\),
and KL uncertainty sets. Thus our result should be viewed as complementary to
linear-specific analyses rather than as a minimax-optimal linear result.

\paragraph{Overall interpretation.}
The table highlights a trade-off between specialization and generality. Specialized tabular or linear algorithms achieve sharper rates by exploiting
problem-specific structure, often under a fixed divergence such as TV. In contrast, our framework gives a unified analysis for general-sum DRMGs with general function approximation and multiple \(\phi\)-divergence uncertainty sets. The resulting rates separate three effects: the intrinsic exploration complexity \(d_{\mathrm{MADC}}^{\mathrm{rob},\max}\), and the divergence geometry
\(\Psi_\phi(\sigma)\). This separation clarifies what is structural, what is statistical, and what is specific to the chosen uncertainty model.

\section{Numerical Validation}
\label{app:Experiments}

We provide a preliminary numerical evaluation of {\Algoname} to examine three aspects of the proposed framework: 
(i) whether robust training improves performance under transition-kernel shift relative to its non-robust counterpart;
(ii) whether the use of general function approximation preserves the robustness achieved by an exact tabular robust method; and
(iii) whether increasing the expressiveness of the dual function class reduces the approximation error underlying Assumption~2.
Our experiments therefore complement the theoretical analysis by evaluating the practical behavior of the two central components of {\Algoname}: robust fitted value learning and functional approximation of the dual robust Bellman operator.

\subsection{Experimental Setup}
\label{app:exp-setup}

\paragraph{Risky-Coordination DRMG.}
We consider a scalable general-sum DRMG, termed \emph{Risky-Coordination}, which extends the two-agent risky/safe coordination environment of \citet{farhat2026sampleefficient} to multiple agents and a tunable state space.
The environment consists of $m=4$ agents and horizon $H=2$.
At $h=0$, each agent $i$ observes an independently sampled context
$c_i\in\{0,\ldots,n_{\rm ctx}-1\}$ and chooses between a \emph{safe} and a \emph{risky} action.
The contexts are resampled in every episode and affect neither the transition dynamics nor rewards; instead, $n_{\rm ctx}$ controls the size of the state space and hence the difficulty of explicit tabular estimation.

Let $\rho$ denote the fraction of agents choosing the risky action.
The resulting joint action leads to a trap with probability
\[
    p_{\rm trap}(\rho)
    =
    (\texttt{trap\_base}+\sigma)\rho,
\]
where $\texttt{trap\_base}=0.6$ and $\sigma$ controls the transition-kernel shift.
Conditional on avoiding the trap, the reward level is sampled from
$\mathrm{Binomial}(n_{\rm reward}-1,\rho)$.
A fully safe joint action has zero trap probability for every $\sigma$, providing a shift-invariant alternative to the potentially higher-return risky action.
The outcome is realized through the transition at $h=0$ and the corresponding reward is received at $h=1$, after which the episode terminates.
We consider a TV-divergence ambiguity set around the nominal transition kernel, so that $\sigma$ directly controls the severity of the transition perturbation.

\paragraph{Baselines.}
We compare {\Algoname} with two complementary baselines.
\begin{itemize}
\item \textbf{MAMEX} \citep{Xiong2023SampleEfficientMR} provides the non-robust function-approximation comparison.
We use the same fitted-learning architecture as {\Algoname}, but set the training robustness radius to $\sigma_{\rm train}=0$.
Since the robust Bellman operator reduces to the nominal Bellman operator when $\sigma=0$, this comparison isolates the effect of robust training while keeping the function-approximation machinery otherwise fixed.

\item \textbf{MORNAVI} \citep{farhat2026sampleefficient} provides the robust tabular comparison.
It estimates the transition model explicitly and performs tabular robust planning using the closed-form TV worst-case expectation and an exploration bonus.
Both MORNAVI and {\Algoname} are trained with $\sigma_{\rm train}=0.3$.
This comparison examines whether replacing explicit tabular robust planning with the dual function-approximation machinery of {\Algoname} incurs a substantial loss in robustness.
\end{itemize}

\paragraph{Implementation and evaluation protocol.}
We instantiate {\Algoname} under TV uncertainty (\emph{RoMEX-TV}).
Each agent uses an independent double-Q network together with a dual network that estimates the local upper and lower value bounds required by the TV robust backup.
The resulting Bellman target takes the form
\[
    y(s,a)
    =
    r
    +
    \gamma(1-\mathrm{done})
    \left[
        v_{\rm next}
        -
        \sigma\bigl(g_{\max}-g_{\min}\bigr)
    \right],
\]
which agrees with the closed-form TV worst-case expectation under the calibration used in our experiments.
For computational tractability, we use independent per-agent learners for all methods rather than explicitly constructing the equilibrium oracle over enumerated joint policies.

Each method is trained once at a fixed robustness radius:
$\sigma_{\rm train}=0.3$ for {\Algoname} and MORNAVI, and
$\sigma_{\rm train}=0$ for MAMEX.
After training, we freeze the learned policies and evaluate them under transition shifts
\[
    \sigma\in\{0.0,0.1,0.2,0.3,0.4,0.5\},
\]
without further adaptation.
Results are averaged over three random seeds and $200$ evaluation episodes per seed; shaded regions indicate one standard deviation across seeds.
Thus, the evaluation measures robustness to deployment-time transition shift rather than continued online adaptation.

\subsection{Results}
\label{app:exp-results}

\begin{figure}[t]
    \centering
    \begin{subfigure}[t]{0.32\textwidth}
        \centering
        \includegraphics[width=\linewidth]
        {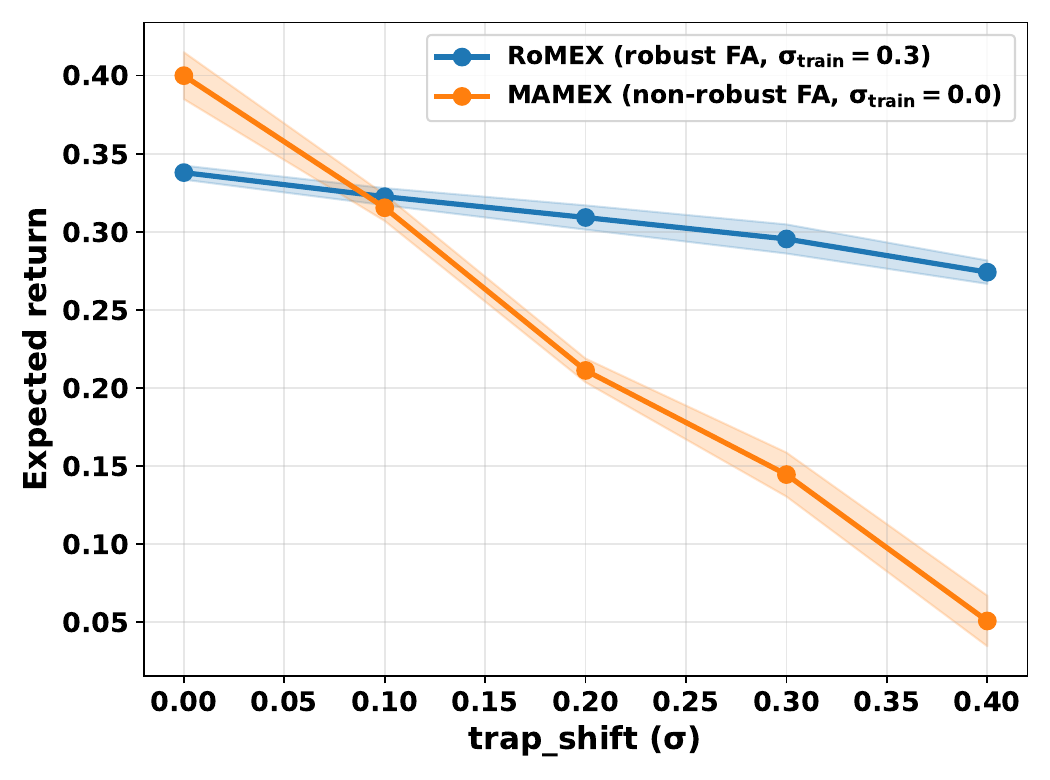}
        \caption{Robust vs.\ non-robust training.}
        \label{fig:romex_vs_mamex}
    \end{subfigure}
    \hfill
    \begin{subfigure}[t]{0.32\textwidth}
        \centering
        \includegraphics[width=\linewidth]
        {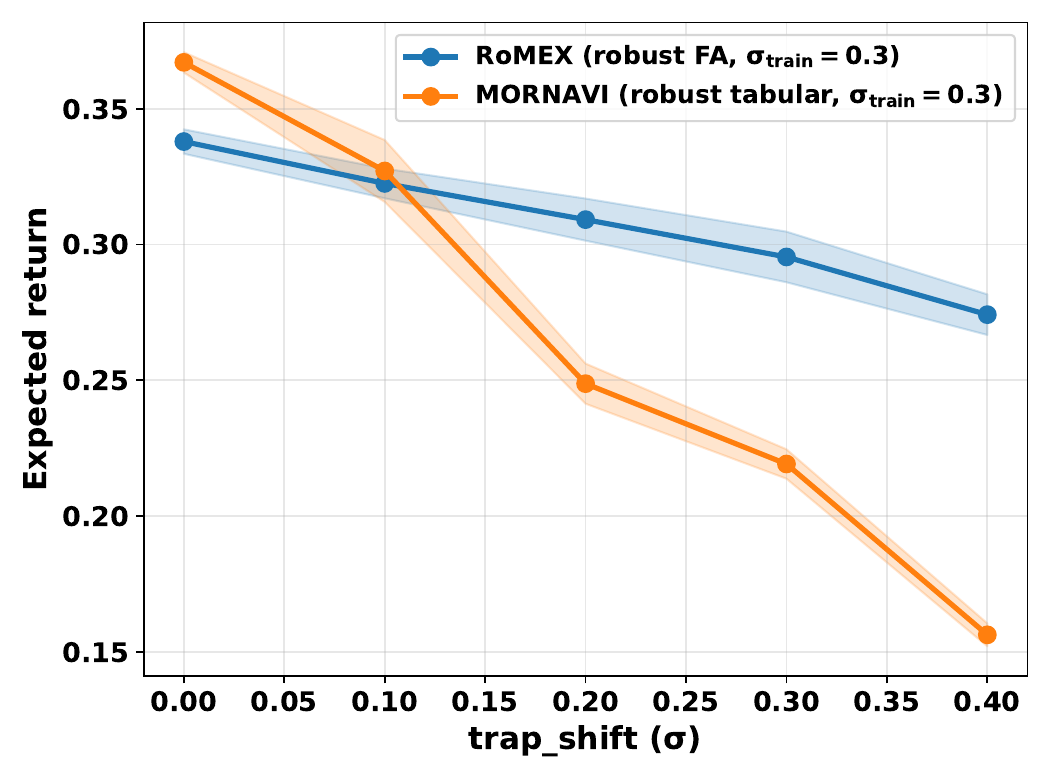}
        \caption{Function approximation vs.\ tabular.}
        \label{fig:romex_vs_mornavi}
    \end{subfigure}
    \hfill
    \begin{subfigure}[t]{0.32\textwidth}
        \centering
        \includegraphics[width=\linewidth]
        {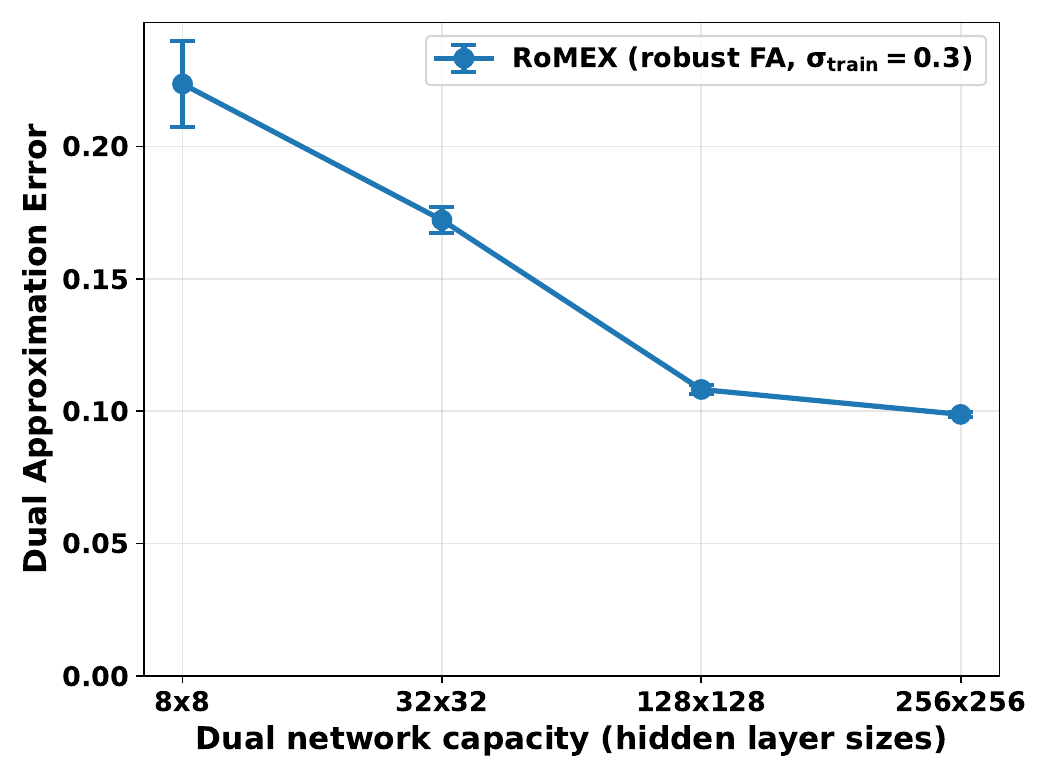}
        \caption{Dual approximation error.}
        \label{fig:assumption2}
    \end{subfigure}
    \caption{
    Performance of {\Algoname}.
    (a) Expected return under increasing transition shift for {\Algoname}
    ($\sigma_{\rm train}=0.3$) and its non-robust counterpart MAMEX
    ($\sigma_{\rm train}=0$).
    (b) Comparison with the robust tabular baseline MORNAVI, with both methods
    trained at $\sigma_{\rm train}=0.3$.
    (c) Empirical dual approximation error as the capacity of the dual function
    class increases.
    Shaded regions denote one standard deviation across three random seeds.
    }
    \label{fig:romex_performance}
\end{figure}

\paragraph{Does robust training improve resilience to transition shift?}
Figure~\ref{fig:romex_vs_mamex} compares {\Algoname} with the non-robust MAMEX baseline.
At the nominal kernel ($\sigma=0$), MAMEX achieves a higher return
($0.392$ vs.\ $0.338$), illustrating the expected trade-off between nominal performance and robustness.
As the transition shift increases, however, MAMEX deteriorates substantially faster.
The curves cross around $\sigma\approx0.1$, after which {\Algoname} consistently achieves higher return.
At $\sigma=0.4$, for example, MAMEX decreases from $0.392$ to $0.063$, whereas {\Algoname} decreases from $0.338$ to $0.277$.
These results indicate that optimizing the robust objective sacrifices some nominal performance but yields substantially greater stability under transition mismatch.

\paragraph{Does function approximation sacrifice robust performance?}
Figure~\ref{fig:romex_vs_mornavi} compares {\Algoname} against MORNAVI, with both methods trained using the same robustness radius $\sigma_{\rm train}=0.3$.
Despite replacing explicit tabular transition estimation and robust planning with learned value and dual functions, {\Algoname} remains competitive with MORNAVI around the training radius and achieves higher return for the larger shifts considered in our evaluation.
Across the shift sweep, MORNAVI's return decreases from approximately $0.367$ to $0.156$, whereas {\Algoname} exhibits substantially milder degradation.
Thus, in this environment, the use of function approximation does not introduce an observable loss in robustness relative to the tabular robust baseline.
This provides empirical support for the motivation behind our framework: robust value estimation can be performed through learned functions without explicitly enumerating the state space.

\paragraph{How expressive must the dual function class be?}
Finally, Figure~\ref{fig:assumption2} examines the approximation component underlying Assumption~2.
We increase the hidden-layer width of the dual network from
$8\times8$ to $256\times256$ and measure its empirical approximation error.
The error decreases from $0.224\pm0.032$ to $0.099\pm0.002$, while the variability across seeds also decreases.
This trend is consistent with Assumption~2: a richer dual function class reduces the empirical gap associated with approximating the functional dual solution.
We emphasize that this experiment does not establish Assumption~2 in general; rather, it provides evidence that the dual approximation error can be made small with a moderately expressive function class in the environment considered here.

\subsection{Ablation: Sensitivity to the Robustness Radius}
\label{app:exp-sigma-sensitivity}

The preceding experiments use $\sigma_{\rm train}=0.3$ for {\Algoname}. 
We next examine whether the observed robustness gain is sensitive to this particular choice and characterize how the training robustness radius affects nominal and shifted performance. 
We train {\Algoname} with
$\sigma_{\rm train}\in\{0.1,0.2,0.3,0.4,0.5\}$, using three random seeds and $n_{\rm ctx}=10$, while keeping the remaining experimental protocol unchanged from Section~\ref{app:exp-setup}. 
We evaluate each trained policy without further adaptation in the nominal environment ($\sigma=0$) and under a representative large transition shift ($\sigma=0.4$). 
For reference, $\sigma_{\rm train}=0$ corresponds to the non-robust MAMEX baseline.

\begin{table}[H]
\centering
\caption{Sensitivity of {\Algoname} to the training robustness radius $\sigma_{\rm train}$ on the Risky-Coordination DRMG ($n_{\rm ctx}=10$, three seeds, and 50 evaluation episodes per seed). The case $\sigma_{\rm train}=0$ corresponds to MAMEX.}
\label{tab:sigma-sensitivity}
\begin{tabular}{lcc}
\toprule
$\sigma_{\rm train}$ 
& Return at $\sigma=0$ 
& Return at $\sigma=0.4$ \\
\midrule
0.0 (MAMEX) & $0.366 \pm 0.444$ & $0.037 \pm 0.105$ \\
0.1 & $0.349 \pm 0.236$ & $0.214 \pm 0.160$ \\
0.2 & $0.338 \pm 0.139$ & $0.254 \pm 0.142$ \\
0.3 & $0.336 \pm 0.112$ & $0.262 \pm 0.137$ \\
0.4 & $0.334 \pm 0.119$ & $0.257 \pm 0.140$ \\
0.5 & $0.336 \pm 0.122$ & $0.259 \pm 0.138$ \\
\bottomrule
\end{tabular}
\end{table}

Table~\ref{tab:sigma-sensitivity} reveals a clear trade-off between nominal performance and robustness to transition shift. 
At nominal deployment ($\sigma=0$), introducing robustness results in a modest decrease in average return, from $0.366$ at $\sigma_{\rm train}=0$ to approximately $0.33$--$0.35$ for positive robustness radii. 
In contrast, under the larger shift $\sigma=0.4$, even a small robustness radius substantially improves performance: the average return increases from $0.037$ at $\sigma_{\rm train}=0$ to $0.214$ at $\sigma_{\rm train}=0.1$ and to $0.254$ at $\sigma_{\rm train}=0.2$. 
Beyond $\sigma_{\rm train}=0.2$, performance remains relatively stable, with average returns between $0.254$ and $0.262$; given the observed variability, these differences should not be interpreted as a meaningful ordering among the larger radii. 
Overall, the ablation shows that the robustness gain observed in Figure~\ref{fig:romex_vs_mamex} is not specific to the choice $\sigma_{\rm train}=0.3$: training with a positive robustness radius consistently improves performance under substantial transition shift, while increasing the radius beyond a moderate level yields diminishing empirical gains in this environment.

\subsection{Discussion and Limitations}
\label{app:exp-limits}

Taken together, the experiments support three observations.
First, robust training provides substantially greater resilience to transition-kernel shift than an otherwise comparable non-robust learner.
Second, the function-approximation implementation of {\Algoname} remains competitive with an explicit tabular robust method, suggesting that the scalability gained through functional approximation need not come at the expense of robust performance.
Third, increasing dual-network capacity systematically reduces the empirical dual approximation error, supporting the practical feasibility of the functional dual formulation.

These experiments are intended as a controlled validation of the proposed mechanism rather than a comprehensive benchmark study.
In particular, we use independent per-agent learners instead of the explicit equilibrium oracle in Algorithm~1, and our implementation focuses on TV uncertainty because its closed-form worst-case expectation enables direct validation of the learned dual approximation.
Extending the empirical study to $\chi^2$ and KL uncertainty requires the corresponding dual objectives and robust backups.
Finally, our implementation of MORNAVI uses a Hoeffding-style exploration bonus rather than the tighter Bernstein-type bonus appearing in its theoretical analysis.

\section{Conclusion}
\label{sec:Conclusion}
We study online distributionally robust MARL in general-sum Markov games under general function approximation. We propose {\Algoname}, a model-free framework that uses a dual reformulation of the robust Bellman operator to enable tractable worst-case evaluation from nominal data via an equilibrium oracle. We introduce the \emph{robust Multi-Agent Decoupling Coefficient} (robust MADC) as an intrinsic complexity measure and establish the first sublinear regret guarantees for learning robust equilibria. Our results decompose sample complexity into intrinsic exploration, governed by robust MADC, and robust Bellman estimation from nominal data, while avoiding explicit dependence on the state and joint action spaces. Robust MADC further provides a unified view connecting intrinsic complexity with structured function classes. An important direction for future work is extending the analysis to richer function classes, such as infinite or neural parameterizations, as well as to decentralized or partially observed multi-agent settings.


\bibliographystyle{abbrv}  
\bibliography{iclr2027_conference}

@book{fudenberg1991game,
  title={{Game Theory}},
  author={Fudenberg, Drew and Tirole, Jean},
  year={1991},
  publisher={MIT press}
}

@article{daskalakisnash09,
author = {Daskalakis, Constantinos and Goldberg, Paul W. and Papadimitriou, Christos H.},
title = {{The Complexity of Computing a Nash Equilibrium}},
year = {2009},
issue_date = {February 2009},
publisher = {Association for Computing Machinery},
address = {New York, NY, USA},
volume = {52},
number = {2},
issn = {0001-0782},
doi = {10.1145/1461928.1461951},
journal = {Commun. ACM},
month = feb,
pages = {89–97},
numpages = {9}
}

@InProceedings{pmlr-v139-liu21z,
  title = 	 {{A Sharp Analysis of Model-based Reinforcement Learning with Self-Play}},
  author =       {Liu, Qinghua and Yu, Tiancheng and Bai, Yu and Jin, Chi},
  booktitle = 	 {Proceedings of the 38th International Conference on Machine Learning (ICML)},
  pages = 	 {7001--7010},
  year = 	 {2021},
  editor = 	 {Meila, Marina and Zhang, Tong},
  volume = 	 {139},
  series = 	 {Proceedings of Machine Learning Research (PMLR)},
  month = 	 {18--24 Jul},
  publisher =    {PMLR},
}

@inproceedings{Xiong2023SampleEfficientMR,
  author       = {Nuoya Xiong and
                  Zhihan Liu and
                  Zhaoran Wang and
                  Zhuoran Yang},
  title        = {{Sample-Efficient Multi-Agent {RL:} An Optimization Perspective}},
  booktitle    = {The Twelfth International Conference on Learning Representations,
                  {ICLR} 2024, Vienna, Austria, May 7-11, 2024},
  publisher    = {OpenReview.net},
  year         = {2024},
  bibsource    = {dblp computer science bibliography, https://dblp.org}
}

@article{yangdual22,
author = {Yang, Wenhao and Zhang, Liangyu and Zhang, Zhihua},
year = {2022},
month = {12},
pages = {},
title = {{Toward Theoretical Understandings of Robust Markov Decision Processes: Sample Complexity and Asymptotics}},
volume = {50},
journal = {The Annals of Statistics},
doi = {10.1214/22-AOS2225}
}

@article{ghosh2025scaling,
  title={{Scaling Online Distributionally Robust Reinforcement Learning: Sample-Efficient Guarantees with General Function Approximation}},
  author={Ghosh, Debamita and Atia, George K and Wang, Yue},
  journal={arXiv preprint arXiv:2512.18957},
  year={2025}
}

@misc{ghosh2025provablynearoptimaldistributionallyrobust,
      title={{ORVIT: Near-Optimal Online Distributionally Robust Reinforcement Learning}}, 
      author={Debamita Ghosh and George K. Atia and Yue Wang},
      year={2025},
      eprint={2508.03768},
      archivePrefix={arXiv},
      primaryClass={cs.LG},
      url={https://arxiv.org/abs/2508.03768}, 
}

@article{INFORM2005_RobusDP_Iyengar,
  title={{Robust Dynamic Programming}},
  author={Iyengar, Garud N},
  journal={Mathematics of Operations Research},
  volume={30},
  number={2},
  pages={257--280},
  year={2005},
  publisher={INFORMS}
}

@article{NeuRIPS2023_CuriousPriceDRRLGenerativeMdel_Shi,
  title={{The Curious Price of Distributional Robustness in Reinforcement Learning with a Generative Model}},
  author={Shi, Laixi and Li, Gen and Wei, Yuting and Chen, Yuxin and Geist, Matthieu and Chi, Yuejie},
  journal={Advances in Neural Information Processing Systems},
  volume={36},
  pages={79903--79917},
  year={2023}
}

@article{INFORMS2013_RobustMDP_wiesemann,
  title={{Robust Markov Decision Processes}},
  author={Wiesemann, Wolfram and Kuhn, Daniel and Rustem, Ber{\c{c}}},
  journal={Mathematics of Operations Research},
  volume={38},
  number={1},
  pages={153--183},
  year={2013},
  publisher={INFORMS}
}

@inproceedings{PMLR2021_DROTabularRL_Zhou,
  title={{Finite-Sample Regret Bound for Distributionally Robust Offline Tabular Reinforcement Learning}},
  author={Zhou, Zhengqing and Zhou, Zhengyuan and Bai, Qinxun and Qiu, Linhai and Blanchet, Jose and Glynn, Peter},
  booktitle={International Conference on Artificial Intelligence and Statistics},
  pages={3331--3339},
  year={2021},
  organization={PMLR}
}

@article{NeuRIPS2023_DoublePessimismDROfflineRL_Blanchet,
  title={{Double Pessimism is Provably Efficient for Distributionally Robust Offline Reinforcement Learning: Generic Algorithm and Robust Partial Coverage}},
  author={Blanchet, Jose and Lu, Miao and Zhang, Tong and Zhong, Han},
  journal={Advances in Neural Information Processing Systems},
  volume={36},
  pages={66845--66859},
  year={2023}
}

@article{AnnalsStat2022_TheoreticalUnderstandingRMDP_Yang,
  title={{Toward Theoretical Understandings of Robust Markov Decision Processes: Sample Complexity and Asymptotics}},
  author={Yang, Wenhao and Zhang, Liangyu and Zhang, Zhihua},
  journal={The Annals of Statistics},
  volume={50},
  number={6},
  pages={3223--3248},
  year={2022},
  publisher={Institute of Mathematical Statistics}
}

@article{NeuRIPS2024_UnifiedPessimismOfflineRL_Yue,
  title={{A Unified Principle of Pessimism for Offline Reinforcement Learning under Model Mismatch}},
  author={Wang, Yue and Sun, Zhongchang and Zou, Shaofeng},
  journal={Advances in Neural Information Processing Systems},
  volume={37},
  pages={9281--9328},
  year={2024}
}

@article{Arxiv2023_TowardsMinimaxOptimalityRobustRL_Clavier,
  title={{Towards Minimax Optimality of Model-based Robust Reinforcement Learning}},
  author={Clavier, Pierre and Pennec, Erwan Le and Geist, Matthieu},
  journal={arXiv preprint arXiv:2302.05372},
  year={2023}
}

@inproceedings{liu2024distributionally,
  title={Distributionally robust off-dynamics reinforcement learning: Provable efficiency with linear function approximation},
  author={Liu, Zhishuai and Xu, Pan},
  booktitle={International Conference on Artificial Intelligence and Statistics},
  pages={2719--2727},
  year={2024},
  organization={PMLR}
}

@string{aaai="Proc. Conference on Artificial Intelligence (AAAI)"}

@string{colt="Proc. Annual Conference on Learning Theory (CoLT)"}

@string{iclr="Proc. International Conference on Learning Representations (ICLR)"}

@string{icml="Proc. International Conference on Machine Learning (ICML)"}

@string{nips="Proc. Advances in Neural Information Processing Systems (NIPS)"}

@string{nipsnew="Proc. Advances in Neural Information Processing Systems (NeurIPS)"}

@article{Arxiv2024_DRORLwithInteractiveData_Lu,
  title={Distributionally robust reinforcement learning with interactive data collection: Fundamental hardness and near-optimal algorithm},
  author={Lu, Miao and Zhong, Han and Zhang, Tong and Blanchet, Jose},
  journal={The Thirty-eighth Annual Conference on Neural Information Processing Systemss},
  year={2024}
}

@article{NeuRIPS2001_OnlineLearningAlgo_Cesa,
  title={{On the Generalization Ability of On-Line Learning Algorithms}},
  author={Cesa-Bianchi, Nicol{\'o} and Conconi, Alex and Gentile, Claudio},
  journal={Advances in neural information processing systems},
  volume={14},
  year={2001}
}

@inproceedings{ICML2025_OnlineDRMDPSampleComplexity_He,
  title={{Sample Complexity of Distributionally Robust Off-Dynamics Reinforcement Learning with Online Interaction}},
  author={He, Yiting and Liu, Zhishuai and Wang, Weixin and Xu, Pan},
  booktitle={Forty-second International Conference on Machine Learning},
  year={2025}
}

@article{Arxiv2020_OffDynRL_Eysenbach,
  title={{Off-Dynamics Reinforcement Learning: Training for Transfer with Domain Classifiers}},
  author={Eysenbach, Benjamin and Asawa, Swapnil and Chaudhari, Shreyas and Levine, Sergey and Salakhutdinov, Ruslan},
  journal={arXiv preprint arXiv:2006.13916},
  year={2020}
}

@article{Arxiv2024_UpperLowerDRRL_Liu,
  title={Upper and lower bounds for distributionally robust off-dynamics reinforcement learning},
  author={Liu, Zhishuai and Wang, Weixin and Xu, Pan},
  journal={arXiv preprint arXiv:2409.20521},
  year={2024}
}

@article{NeurIPS2024_MinimaxOptimalOfflineRL_Liu,
  title={{Minimax Optimal and Computationally Efficient Algorithms for Distributionally Robust Offline Reinforcement Learning}},
  author={Liu, Zhishuai and Xu, Pan},
  journal={Advances in Neural Information Processing Systems},
  volume={37},
  pages={86602--86654},
  year={2024}
}

@article{Arxiv2024_ModelFreeRobustRL_Panaganti,
  title={{Model-Free Robust $\phi$-Divergence Reinforcement Learning Using Both Offline and Online Data}},
  author={Panaganti, Kishan and Wierman, Adam and Mazumdar, Eric},
  journal={arXiv preprint arXiv:2405.05468},
  year={2024}
}

@article{NeurIPS2022_RobustRLOffline_Panaganti,
  title={{Robust Reinforcement Learning using Offline Data}},
  author={Panaganti, Kishan and Xu, Zaiyan and Kalathil, Dileep and Ghavamzadeh, Mohammad},
  journal={Advances in neural information processing systems},
  volume={35},
  pages={32211--32224},
  year={2022}
}

@article{Arxiv2024_SampleComplexityOfflineLinearDRMDP_Wang,
  title={{Sample Complexity of Offline Distributionally Robust Linear Markov Decision Processes}},
  author={Wang, He and Shi, Laixi and Chi, Yuejie},
  journal={arXiv preprint arXiv:2403.12946},
  year={2024}
}

@article{Arxiv2022_OfflineDRRLLinearFunctionApprox_Ma,
  title={{Distributionally Robust Offline Reinforcement Learning with Linear Function Approximation}},
  author={Ma, Xiaoteng and Liang, Zhipeng and Blanchet, Jose and Liu, Mingwen and Xia, Li and Zhang, Jiheng and Zhao, Qianchuan and Zhou, Zhengyuan},
  journal={arXiv preprint arXiv:2209.06620},
  year={2022}
}

@article{NeurIPS2021_OnlineRobustRLModelUncertainty_Wang,
  title={{Online Robust Reinforcement Learning with Model Uncertainty}},
  author={Wang, Yue and Zou, Shaofeng},
  journal={Advances in Neural Information Processing Systems},
  volume={34},
  pages={7193--7206},
  year={2021}
}

@article{NeurIPS2021_BellmanEluderDim_Jin,
  title={{Bellman Eluder Dimension: New Rich Classes of RL Problems, and Sample-Efficient Algorithms}},
  author={Jin, Chi and Liu, Qinghua and Miryoosefi, Sobhan},
  journal={Advances in neural information processing systems},
  volume={34},
  pages={13406--13418},
  year={2021}
}

@article{vinitsky2020robust,
  title={Robust Reinforcement Learning using Adversarial Populations},
  author={Vinitsky, Eugene and Du, Yuqing and Parvate, Kanaad and Jang, Kathy and Abbeel, Pieter and Bayen, Alexandre},
  journal={arXiv preprint arXiv:2008.01825},
  year={2020}
}

@article{abdullah2019wasserstein,
  title={{Wasserstein Robust Reinforcement Learning}},
  author={Abdullah, Mohammed Amin and Ren, Hang and Ammar, Haitham Bou and Milenkovic, Vladimir and Luo, Rui and Zhang, Mingtian and Wang, Jun},
  journal={arXiv preprint arXiv:1907.13196},
  year={2019}
}

@inproceedings{zhang2020robust,
  title={Robust Multi-Agent Reinforcement Learning with Model Uncertainty},
  author={Zhang, Kaiqing and Sun, Tao and Tao, Yunzhe and Genc, Sahika and Mallya, Sunil and Basar, Tamer},
  booktitle=nipsnew,
  volume={33},
  year={2020}
}

@article{kardecs2011discounted,
  title={Discounted robust stochastic games and an application to queueing control},
  author={Karde{\c{s}}, Erim and Ord{\'o}{\~n}ez, Fernando and Hall, Randolph W},
  journal={Operations research},
  volume={59},
  number={2},
  pages={365--382},
  year={2011},
  publisher={INFORMS}
}

@inproceedings{agarwal2014taming,
  title={{Taming the Monster: A Fast and Simple Algorithm for Contextual Bandits}},
  author={Agarwal, Alekh and Hsu, Daniel and Kale, Satyen and Langford, John and Li, Lihong and Schapire, Robert},
  booktitle=icml,
  pages={1638--1646},
  year={2014},
  organization={PMLR}
}

@inproceedings{jiang2017contextual,
  title={Contextual decision processes with low {B}ellman rank are {PAC}-learnable},
  author={Jiang, Nan and Krishnamurthy, Akshay and Agarwal, Alekh and Langford, John and Schapire, Robert E},
  booktitle=icml,
  pages={1704--1713},
  year={2017},
  organization={PMLR}
}

@inproceedings{abbasi2011improved,
  title={{Improved Algorithms for Linear Stochastic Bandits}},
  author={Abbasi-Yadkori, Yasin and P{\'a}l, D{\'a}vid and Szepesv{\'a}ri, Csaba},
  booktitle={NIPS},
  volume={11},
  pages={2312--2320},
  year={2011}
}

@article{du2021bilinear,
  title={Bilinear classes: A structural framework for provable generalization in rl},
  author={Du, Simon S and Kakade, Sham M and Lee, Jason D and Lovett, Shachar and Mahajan, Gaurav and Sun, Wen and Wang, Ruosong},
  journal={arXiv preprint arXiv:2103.10897},
  year={2021}
}

@inproceedings{liu2025distributionally,
title={Distributionally Robust Multi-Agent Reinforcement Learning for Dynamic Chute Mapping},
author={Guangyi Liu and Suzan Iloglu and Michael Caldara and Joseph W Durham and Michael M. Zavlanos},
booktitle=icml,
year={2025}
}

@inproceedings{zhao2020sim,
  title={Sim-to-real transfer in deep reinforcement learning for robotics: a survey},
  author={Zhao, Wenshuai and Queralta, Jorge Pe{\~n}a and Westerlund, Tomi},
  booktitle={2020 IEEE symposium series on computational intelligence (SSCI)},
  pages={737--744},
  year={2020},
  organization={IEEE}
}

@book{holla2021off,
  title={On the off-dynamics approach to reinforcement learning},
  author={Holla, Joshua Arvind},
  year={2021},
  publisher={McGill University (Canada)}
}

@inproceedings{
zhang2025modelfree,
title={Model-Free Offline Reinforcement Learning with Enhanced Robustness},
author={Chi Zhang and Zain Ulabedeen Farhat and George K. Atia and Yue Wang},
booktitle=iclr,
year={2025}
}

@inproceedings{peng2018sim,
  title={Sim-to-real transfer of robotic control with dynamics randomization},
  author={Peng, Xue Bin and Andrychowicz, Marcin and Zaremba, Wojciech and Abbeel, Pieter},
  booktitle={2018 IEEE international conference on robotics and automation (ICRA)},
  pages={3803--3810},
  year={2018},
  organization={IEEE}
}

@article{padakandla2020reinforcement,
  title={Reinforcement learning algorithm for non-stationary environments},
  author={Padakandla, Sindhu and KJ, Prabuchandran and Bhatnagar, Shalabh},
  journal={Applied Intelligence},
  volume={50},
  number={11},
  pages={3590--3606},
  year={2020},
  publisher={Springer}
}

@article{rajeswaran2016epopt,
  title={Epopt: Learning robust neural network policies using model ensembles},
  author={Rajeswaran, Aravind and Ghotra, Sarvjeet and Ravindran, Balaraman and Levine, Sergey},
  journal={arXiv preprint arXiv:1610.01283},
  year={2016}
}

@article{jiao_minimax-optimal_2024,
  title={Minimax-Optimal Multi-Agent Robust Reinforcement Learning},
  author={Jiao, Yuchen and Li, Gen},
  journal={arXiv preprint arXiv:2412.19873},
  year={2024}
}

@article{shi2024breaking,
  title={Breaking the Curse of Multiagency in Robust Multi-Agent Reinforcement Learning},
  author={Shi, Laixi and Gai, Jingchu and Mazumdar, Eric and Chi, Yuejie and Wierman, Adam},
  journal={arXiv preprint arXiv:2409.20067},
  year={2024}
}

@unpublished{li2025sample,
  title={Sample Efficient Robust Offline Self-Play for Model-based Reinforcement Learning},
  author={Li, Na and Zheng, Zewu and Ni, Wei and Shan, Hangguan and Zhang, Wenjie and Li, Xinyu},
  note={Manuscript, OpenReview preprint},
  year={2025},
  url={https://openreview.net/forum?id=3lXZjsir0e}
}

@article{ma2023decentralized,
  title={Decentralized robust v-learning for solving markov games with model uncertainty},
  author={Ma, Shaocong and Chen, Ziyi and Zou, Shaofeng and Zhou, Yi},
  journal={Journal of Machine Learning Research},
  volume={24},
  number={371},
  pages={1--40},
  year={2023}
}

@inproceedings{cisneros2023finite,
  title={{Finite-sample Guarantees for Nash Q-learning with Linear Function Approximation
}},
  author={Cisneros-Velarde, Pedro and Koyejo, Sanmi},
  booktitle={Uncertainty in Artificial Intelligence},
  pages={424--432},
  year={2023},
  organization={PMLR}
}

@inproceedings{jin2022power,
  title={The power of exploiter: Provable multi-agent rl in large state spaces},
  author={Jin, Chi and Liu, Qinghua and Yu, Tiancheng},
  booktitle=icml,
  pages={10251--10279},
  year={2022},
  organization={PMLR}
}

@article{shalev2016safe,
  title={Safe, multi-agent, reinforcement learning for autonomous driving},
  author={Shalev-Shwartz, Shai and others},
  journal={arXiv preprint arXiv:1610.03295},
  year={2016}
}

@inproceedings{lowe2017multi,
  title={Multi-agent actor-critic for mixed cooperative-competitive environments},
  author={Lowe, Ryan and Wu, Yi and Tamar, Aviv and Harb, Jean and Abbeel, Pieter and Mordatch, Igor},
  booktitle=nips,
  pages={6379--6390},
  year={2017}
}

@article{matignon2012independent,
  title={Independent reinforcement learners in cooperative markov games: a survey regarding coordination problems},
  author={Matignon, Laetitia and Laurent, Guillaume J and Le Fort-Piat, Nadine},
  journal={The Knowledge Engineering Review},
  volume={27},
  number={1},
  pages={1--31},
  year={2012},
  publisher={Cambridge University Press}
}

@article{papoudakis2019dealing,
title={Dealing with non-stationarity in multi-agent deep reinforcement learning},
author={Papoudakis, Georgios and Christianos, Filippos and Rahman, Arrasy and Albrecht, Stefano V},
journal={arXiv preprint arXiv:1906.04737},
year={2019}
}

@book{albrecht2024multi,
  title={{Multi-Agent Reinforcement Learning: Foundations and Modern Approaches}},
  author={Albrecht, Stefano V and Christianos, Filippos and Sch{\"a}fer, Lukas},
  year={2024},
  publisher={MIT Press}
}

@inproceedings{xie2020learning,
  title={Learning zero-sum simultaneous-move markov games using function approximation and correlated equilibrium},
  author={Xie, Qiaomin and Chen, Yudong and Wang, Zhaoran and Yang, Zhuoran},
  booktitle=colt,
  pages={3674--3682},
  year={2020},
  organization={PMLR}
}

@article{wang_reinforcement_2020,
	title = {Reinforcement {Learning} with {General} {Value} {Function} {Approximation}: {Provably} {Efficient} {Approach} via {Bounded} {Eluder} {Dimension}},
	shorttitle = {Reinforcement {Learning} with {General} {Value} {Function} {Approximation}},
	language = {en},
	urldate = {2022-03-08},
	journal = {arXiv:2005.10804 [cs, math, stat]},
	author = {Wang, Ruosong and Salakhutdinov, Ruslan and Yang, Lin F.},
	month = jun,
	year = {2020},
	note = {arXiv: 2005.10804},
}

@inproceedings{du_bilinear_2021,
	series = {Proceedings of {Machine} {Learning} {Research}},
	title = {Bilinear {Classes}: {A} {Structural} {Framework} for {Provable} {Generalization} in {RL}},
	volume = {139},
	booktitle = {Proceedings of the 38th {International} {Conference} on {Machine} {Learning}},
	publisher = {PMLR},
	author = {Du, Simon and Kakade, Sham and Lee, Jason and Lovett, Shachar and Mahajan, Gaurav and Sun, Wen and Wang, Ruosong},
	editor = {Meila, Marina and Zhang, Tong},
	month = jul,
	year = {2021},
	pages = {2826--2836},
}

@article{wong2023deep,
  title={Deep multiagent reinforcement learning: Challenges and directions},
  author={Wong, Annie and B{\"a}ck, Thomas and Kononova, Anna V and Plaat, Aske},
  journal={Artificial Intelligence Review},
  volume={56},
  number={6},
  pages={5023--5056},
  year={2023},
  publisher={Springer}
}

@article{canese2021multi,
  title={{Multi-Agent Reinforcement Learning: A Review of Challenges and Applications}},
  author={Canese, Lorenzo and Cardarilli, Gian Carlo and Di Nunzio, Luca and Fazzolari, Rocco and Giardino, Daniele and Re, Marco and Span{\`o}, Sergio},
  journal={Applied Sciences},
  volume={11},
  number={11},
  pages={4948},
  year={2021},
  publisher={MDPI}
}

@article{Arxiv2024_SampleEfficientMARL_Shi,
  title={{Sample-Efficient Robust Multi-Agent Reinforcement Learning in the Face of Environmental Uncertainty}},
  author={Shi, Laixi and Mazumdar, Eric and Chi, Yuejie and Wierman, Adam},
  journal={arXiv preprint arXiv:2404.18909},
  year={2024}
}

@article{article,
author = {Kardes, Erim and Ordóñez, Fernando and Hall, Randolph},
year = {2011},
month = {04},
pages = {365-382},
title = {Discounted Robust Stochastic Games and an Application to Queueing Control},
volume = {59},
journal = {Operations Research},
doi = {10.2307/23013175}
}

@InProceedings{pmlr-v167-chen22d,
  title = 	 {{Almost Optimal Algorithms for Two-player Zero-Sum Linear Mixture Markov Games}},
  author =       {Chen, Zixiang and Zhou, Dongruo and Gu, Quanquan},
  booktitle = 	 {Proceedings of The 33rd International Conference on Algorithmic Learning Theory},
  pages = 	 {227--261},
  year = 	 {2022},
  editor = 	 {Dasgupta, Sanjoy and Haghtalab, Nika},
  volume = 	 {167},
  series = 	 {Proceedings of Machine Learning Research},
  month = 	 {29 Mar--01 Apr},
  publisher =    {PMLR},
  url = 	 {https://proceedings.mlr.press/v167/chen22d.html}
}

@article{GO_DavidSilver,
author = {Silver, David and Huang, Aja and Maddison, Christopher and Guez, Arthur and Sifre, Laurent and Driessche, George and Schrittwieser, Julian and Antonoglou, Ioannis and Panneershelvam, Veda and Lanctot, Marc and Dieleman, Sander and Grewe, Dominik and Nham, John and Kalchbrenner, Nal and Sutskever, Ilya and Lillicrap, Timothy and Leach, Madeleine and Kavukcuoglu, Koray and Graepel, Thore and Hassabis, Demis},
year = {2016},
month = {01},
pages = {484-489},
title = {Mastering the game of Go with deep neural networks and tree search},
volume = {529},
journal = {Nature},
doi = {10.1038/nature16961}
}

@article{Vinyals2019GrandmasterLI,
  title={Grandmaster level in StarCraft II using multi-agent reinforcement learning},
  author={Oriol Vinyals and Igor Babuschkin and Wojciech M. Czarnecki and Micha{\"e}l Mathieu and Andrew Dudzik and Junyoung Chung and David Choi and Richard Powell and Timo Ewalds and Petko Georgiev and Junhyuk Oh and Dan Horgan and Manuel Kroiss and Ivo Danihelka and Aja Huang and L. Sifre and Trevor Cai and John P. Agapiou and Max Jaderberg and Alexander Sasha Vezhnevets and R{\'e}mi Leblond and Tobias Pohlen and Valentin Dalibard and David Budden and Yury Sulsky and James Molloy and Tom Le Paine and Caglar Gulcehre and Ziyun Wang and Tobias Pfaff and Yuhuai Wu and Roman Ring and Dani Yogatama and Dario W{\"u}nsch and Katrina McKinney and Oliver Smith and Tom Schaul and Timothy P. Lillicrap and Koray Kavukcuoglu and Demis Hassabis and Chris Apps and David Silver},
  journal={Nature},
  year={2019},
  volume={575},
  pages={350 - 354},
  url={https://api.semanticscholar.org/CorpusID:204972004}
}

@misc{hua2024multiagentreinforcementlearningconnected,
      title={Multi-Agent Reinforcement Learning for Connected and Automated Vehicles Control: Recent Advancements and Future Prospects}, 
      author={Min Hua and Dong Chen and Xinda Qi and Kun Jiang and Zemin Eitan Liu and Quan Zhou and Hongming Xu},
      year={2024},
      eprint={2312.11084},
      archivePrefix={arXiv},
      primaryClass={cs.RO},
      url={https://arxiv.org/abs/2312.11084}, 
}

@inproceedings{lu2020deepreinforcementlearningready,
  title={{Is deep reinforcement learning ready for practical applications in healthcare? A sensitivity analysis of duel-DDQN for hemodynamic management in sepsis patients}},
  author={Lu, MingYu and Shahn, Zachary and Sow, Daby and Doshi-Velez, Finale and Lehman, Li-wei H},
  booktitle={AMIA annual symposium proceedings},
  volume={2020},
  pages={773},
  year={2021}
}

@misc{alaa2023drlhealthcare,
  author       = {Alaa Eldin, Baraa},
  title        = {{Why applying Deep Reinforcement Learning in Healthcare is hard}},
  year         = {2023},
  howpublished = {\url{https://medium.com/@baraa.alaa.eldin/why-applying-deep-reinforcement-learning-in-healthcare-is-hard-ffc6e05ab7ca}},
  note         = {Accessed: 2025-07-28}
}

@misc{demontis2022surveyreinforcementlearningsecurity,
      title={A Survey on Reinforcement Learning Security with Application to Autonomous Driving}, 
      author={Ambra Demontis and Maura Pintor and Luca Demetrio and Kathrin Grosse and Hsiao-Ying Lin and Chengfang Fang and Battista Biggio and Fabio Roli},
      year={2022},
      eprint={2212.06123},
      archivePrefix={arXiv},
      primaryClass={cs.LG},
      url={https://arxiv.org/abs/2212.06123}, 
}

@article{daskalakis2013complexity,
  title={{On the Complexity of Approximating a Nash Equilibrium}},
  author={Daskalakis, Constantinos},
  journal={ACM Transactions on Algorithms (TALG)},
  volume={9},
  number={3},
  pages={1--35},
  year={2013},
  publisher={ACM New York, NY, USA}
}

@inproceedings{cui2023breaking,
  title={{Breaking the Curse of Multiagents in a Large State Space: RL in Markov Games with Independent Linear Function Approximation}},
  author={Cui, Qiwen and Zhang, Kaiqing and Du, Simon},
  booktitle={The Thirty Sixth Annual Conference on Learning Theory},
  pages={2651--2652},
  year={2023},
  organization={PMLR}
}

@book{rockafellar1998variational,
  title={Variational analysis},
  author={Rockafellar, R Tyrrell and Wets, Roger JB},
  year={1998},
  publisher={Springer}
}

@book{shalev2014understanding,
  title={Understanding machine learning: From theory to algorithms},
  author={Shalev-Shwartz, Shai and Ben-David, Shai},
  year={2014},
  publisher={Cambridge university press}
}

@article{dann2021provably,
  title={{A Provably Efficient Model-Free Posterior Sampling Method for Episodic Reinforcement Learning}},
  author={Dann, Christoph and Mohri, Mehryar and Zhang, Tong and Zimmert, Julian},
  journal={Advances in neural information processing systems},
  volume={34},
  pages={12040--12051},
  year={2021}
}

@article{russo2013eluder,
  title={Eluder dimension and the sample complexity of optimistic exploration},
  author={Russo, Daniel and Van Roy, Benjamin},
  journal={Advances in Neural Information Processing Systems},
  volume={26},
  year={2013}
}

@inproceedings{
farhat2026sampleefficient,
title={Sample-Efficient Distributionally Robust Multi-Agent Reinforcement Learning via Online Interaction},
author={Zain Ulabedeen Farhat and Debamita Ghosh and George K. Atia and Yue Wang},
booktitle={The Fourteenth International Conference on Learning Representations},
year={2026},
url={https://openreview.net/forum?id=hIBMCDOwMI}
}

@inproceedings{sun2019model,
  title={Model-based rl in contextual decision processes: Pac bounds and exponential improvements over model-free approaches},
  author={Sun, Wen and Jiang, Nan and Krishnamurthy, Akshay and Agarwal, Alekh and Langford, John},
  booktitle={Conference on learning theory},
  pages={2898--2933},
  year={2019},
  organization={PMLR}
}

@inproceedings{wang2023breaking,
  title={Breaking the curse of multiagency: Provably efficient decentralized multi-agent rl with function approximation},
  author={Wang, Yuanhao and Liu, Qinghua and Bai, Yu and Jin, Chi},
  booktitle={The Thirty Sixth Annual Conference on Learning Theory},
  pages={2793--2848},
  year={2023},
  organization={PMLR}
}

@article{foster2021statistical,
  title={The statistical complexity of interactive decision making},
  author={Foster, Dylan J and Kakade, Sham M and Qian, Jian and Rakhlin, Alexander},
  journal={arXiv preprint arXiv:2112.13487},
  year={2021}
}

@article{roch2025distributionally,
      title={Distributionally Robust Markov Games with Average Reward}, 
      author={Zachary Roch and Yue Wang},
      year={2025},
      journal={arXiv preprint arXiv:2508.03136}
}

@inproceedings{dai2024refined,
  title={{Refined Sample Complexity for Markov Games with Independent Linear Function Approximation}},
  author={Dai, Yan and Cui, Qiwen and Du, Simon S},
  booktitle={The Thirty Seventh Annual Conference on Learning Theory},
  pages={1260--1261},
  year={2024},
  organization={PMLR}
}

@inproceedings{he2023nearly,
  title={Nearly minimax optimal reinforcement learning for linear markov decision processes},
  author={He, Jiafan and Zhao, Heyang and Zhou, Dongruo and Gu, Quanquan},
  booktitle={International Conference on Machine Learning},
  pages={12790--12822},
  year={2023},
  organization={PMLR}
}

@article{huang2021towards,
  title={Towards general function approximation in zero-sum markov games},
  author={Huang, Baihe and Lee, Jason D and Wang, Zhaoran and Yang, Zhuoran},
  journal={arXiv preprint arXiv:2107.14702},
  year={2021}
}

@article{zhong2022posterior,
  title={A posterior sampling framework for interactive decision making},
  author={Zhong, Han and Xiong, Wei and Zheng, Sirui and Wang, Liwei and Wang, Zhaoran and Yang, Zhuoran and Zhang, Tong},
  journal={arXiv preprint arXiv:2211.01962},
  volume={2},
  number={3},
  year={2022}
}

@article{chen2022unified,
  title={{Unified Algorithms for RL with Decision-Estimation Coefficients: No-Regret, PAC, and Reward-Free Learning }},
  author={Chen, Fan and Mei, Song and Bai, Yu},
  year={2022}
}

@inproceedings{foster2023complexity,
  title={On the complexity of multi-agent decision making: From learning in games to partial monitoring},
  author={Foster, Dean and Foster, Dylan J and Golowich, Noah and Rakhlin, Alexander},
  booktitle={The Thirty Sixth Annual Conference on Learning Theory},
  pages={2678--2792},
  year={2023},
  organization={PMLR}
}

@article{zhan2022decentralized,
  title={Decentralized optimistic hyperpolicy mirror descent: Provably no-regret learning in markov games},
  author={Zhan, Wenhao and Lee, Jason D and Yang, Zhuoran},
  journal={arXiv preprint arXiv:2206.01588},
  year={2022}
}

@article{ni2022representation,
  title={Representation learning for general-sum low-rank markov games},
  author={Ni, Chengzhuo and Song, Yuda and Zhang, Xuezhou and Jin, Chi and Wang, Mengdi},
  journal={arXiv preprint arXiv:2210.16976},
  year={2022}
}

@inproceedings{zheng2026distributionally,
  title={Distributionally Robust Online Markov Game with Linear Function Approximation},
  author={Zheng, Zewu and Lin, Yuanyuan},
  booktitle={Proceedings of the AAAI Conference on Artificial Intelligence},
  volume={40},
  pages={28892--28900},
  year={2026}
}

\newpage
\appendix
\onecolumn
\label{sec:Appendix}

\section{Additional Preliminaries of DRMGs}
\label{app:DRMG_Prelim}

\section{Dual Formulation and Functional Optimization}
\label{app:dual}
\paragraph{Functional formulation.}
To avoid pointwise optimization in the dual Bellman operator, we consider a functional optimization problem over dual functions. Let $\mu_h^\pi$ denote the state–joint action visitation distribution induced by policy $\pi$ under the nominal kernel $P^\star$. 

We define the dual function space
\[
\mathcal{L}^1_i(\mu_h^\pi;\mathbb{R}^2)
:=
\left\{
g_i=(g_i^\lambda,g_i^\nu):
\mathcal{S}\times\mathcal{A}\to\mathbb{R}^2
\;\middle|\;
g_i^\lambda,g_i^\nu \in \mathcal{L}^1(\mu_h^\pi)
\right\},
\]
and the dual loss (restating \Cref{eq:dual_loss})
\[
\mathrm{LossDual}_{i,h,\phi}(g; f_i)
=
\mathbb{E}_{(s,\bm a)\sim \mu_h^\pi}
\left[
\mathbb{E}_{s'\sim P_h^\star}
\mathrm{Loss}_{i,h,\phi}(f_i; s,\bm a,s'; g(s,\bm a))
\right].
\]

\paragraph{Dual equivalence.}
The following lemma extend the results of single-agent in \citep{NeurIPS2022_RobustRLOffline_Panaganti} (which considers the TV-divergence set in an offline setting), \citep{ghosh2025provablynearoptimaldistributionallyrobust} (which consider $\phi$-divergence set in an online setting), and \citep{Arxiv2024_ModelFreeRobustRL_Panaganti} (which considers regularized MDPs in a hybrid setting), and show that for general-sum $\phi$-divergence DRMG, minimization of the dual loss is equivalent to the support function under some distribution for the multi-agent setup. 

\begin{lem}[Multi-agent Dual loss minimization]
\label{lem:equiv_loss_dual}
Let $\mathrm{LossDual}_{i,\phi}$ be defined as the dual loss function as in eq. \ref{eq:dual_loss} for agent $i \in \mathcal M$. Then, for any function $f_i:\mathcal S \times \mathcal A \to [0,H]$ for any $i \in \mathcal M$, we have 
\begin{align*}
\inf_{g\in \mathcal{L}^1_i(\mu;\mathbb R^2)} \mathrm{LossDual}_{i,h\phi}(g; f_i)=
\mathbb{E}_{(s,{\bm a})\sim \mu}\left[\mathbb{E}_{\mathcal{P}_{i,h,\phi}^{\sigma_i}(s, {\bm a})} \Big[V^{\pi,f}_{i,h+1}\Big]\right], \forall i \in \mathcal M.
\end{align*}
\end{lem}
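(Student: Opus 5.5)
The plan is to reduce the functional statement to the pointwise dual identity \eqref{eq:general_phi_dual} and then exchange the infimum with the integral against $\mu$ via a measurable-selection / interchange argument. Concretely, fix agent $i$, stage $h$, policy $\pi$ and $f_i$ with values in $[0,H]$. For each $(s,\bm a)$ define the pointwise dual objective
\[
\ell(s,\bm a;\lambda,\nu):=\mathbb E_{s'\sim P_h^\star(\cdot|s,\bm a)}\big[\mathrm{Loss}_{i,h,\phi}(f_i;s,\bm a,s';\lambda,\nu)\big].
\]
By \eqref{eq:general_phi_dual} (strong duality for $\phi$-divergence balls, \citep{yangdual22}), we have $\inf_{\lambda>0,\nu\in\mathbb R}\ell(s,\bm a;\lambda,\nu)$ equal to the worst-case expectation $\mathbb E_{\mathcal P^{\sigma_i}_{i,h,\phi}(s,\bm a)}[V^{\pi,f}_{i,h+1}]$, up to the sign convention used in \eqref{eq:dual_rob_bellman_operator}. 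Since $\mathrm{LossDual}_{i,h,\phi}(g;f_i)=\mathbb E_{\mu}[\ell(s,\bm a;g(s,\bm a))]$, the claim becomes
\[
\inf_{g\in\mathcal L^1_i(\mu;\mathbb R^2)}\mathbb E_\mu\big[\ell(s,\bm a;g(s,\bm a))\big]=\mathbb E_\mu\Big[\inf_{(\lambda,\nu)}\ell(s,\bm a;\lambda,\nu)\Big].
\]

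The inequality ``$\ge$'' is immediate. For every admissible $g$ and every $(s,\bm a)$, $\ell(s,\bm a;g(s,\bm a))\ge\inf_{\lambda,\nu}\ell(s,\bm a;\lambda,\nu)$, and integrating gives the bound. Measurability of the right-hand side is automatic here, since $\mathcal S\times\mathcal A$ is finite.

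For ``$\le$'' I would invoke the interchange theorem of \citep[Thm.~14.60]{rockafellar1998variational}. Its hypotheses are the following.
\begin{itemize}
\item $\ell$ is a normal integrand. This holds because $\phi^\star$ is convex and lower semicontinuous, so $(\lambda,\nu)\mapsto\lambda\phi^\star((\nu-V)/\lambda)$ is a perspective function and hence jointly convex and lsc in $(\lambda,\nu)$.
\item There is at least one $g_0\in\mathcal L^1$ with finite loss. A constant choice such as $\lambda=1$, $\nu=H$ works, using that $V^{\pi,f}_{i,h+1}\in[0,H]$ keeps the argument of $\phi^\star$ bounded.
\end{itemize}
Because the state--action space is finite, I would actually avoid the general theorem. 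For each $(s,\bm a)$ in the support of $\mu$, take an $\epsilon$-minimizer $(\lambda_\epsilon(s,\bm a),\nu_\epsilon(s,\bm a))$ of $\ell(s,\bm a;\cdot)$, and set $g$ arbitrarily (e.g. $g_0$) off the support. This $g$ takes finitely many values, so it lies in $\mathcal L^1(\mu;\mathbb R^2)$, and $\mathrm{LossDual}(g;f_i)\le\mathbb E_\mu[\inf\ell]+\epsilon$. Letting $\epsilon\to0$ closes the argument.

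The main obstacle is making sure the pointwise infimum is finite and that $\epsilon$-minimizers exist with $\lambda>0$ finite. The infimum may only be approached as $\lambda\to0$ or $\lambda\to\infty$, for instance in the TV case or when the worst case puts mass on the minimum of $V$. I would handle this by using boundedness of $V^{\pi,f}_{i,h+1}$ to show the infimum lies in $[0,H]$ and is therefore finite; $\epsilon$-minimizers then exist by the definition of the infimum, so attainment is never needed. Finally, I would check that the support-inclusion constraint in Definition~\ref{def:f_divergence_uncertainty} is exactly what the dual in \eqref{eq:general_phi_dual} encodes, since the expectation there is taken only over $P_h^\star$.
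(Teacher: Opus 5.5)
Your reduction matches the paper's. You write $\mathrm{LossDual}_{i,h,\phi}(g;f_i)$ as the $\mu$-integral of the pointwise objective, exchange the infimum over $g$ with that integral, and then apply pointwise duality. Where you differ is in justifying the exchange. The paper applies Rockafellar--Wets Theorem~14.60 (Lemma~\ref{lem:rockafellar_Thm14.60}) after checking decomposability of $\mathcal L^1(\mu;\mathbb R^2)$, measurability and lower semicontinuity. You instead prove the easy inequality pointwise, and the hard one by picking an $\epsilon$-minimizer at each of the finitely many atoms of $\mu$, which gives a simple, hence integrable, $g$.

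On the finite $\mathcal S\times\mathcal A$ the paper assumes, your argument is complete and self-contained and needs no attainment. It also bypasses the normal-integrand verification, which the paper carries out somewhat loosely. Extending by $+\infty$ for $\lambda\le 0$ is not lower semicontinuous at $\lambda=0$ in general: for TV with $\nu\le\min V^{\pi,f}_{i,h+1}$, the term $\lambda\phi^\star((\nu-V^{\pi,f}_{i,h+1})/\lambda)$ tends to $0$ as $\lambda\downarrow 0$. Moreover, Lemma~\ref{lem:rockafellar_Thm14.60} as stated only covers finite-valued scalar integrands. What the paper's route buys is generality: on an uncountable state space, selecting $\epsilon$-minimizers measurably is exactly what the interchange theorem supplies, and your construction would need that ingredient.

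Two small corrections. First, your feasible point $\lambda=1,\nu=H$ fails for TV, because $\phi^\star(t)=+\infty$ for $t>1$ when $\phi(t)=|t-1|$. The choice $\lambda=1,\nu=0$ works for TV, $\chi^2$ and KL. Your finite-support construction never needs such a point anyway, since values of $g$ off the support of $\mu$ do not enter the integral.

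Second, the sign you hedged on resolves as follows. With $\mathrm{Loss}_{i,h,\phi}$ as defined, standard $\phi$-divergence duality gives $\inf_{\lambda>0,\nu}\ell(s,\bm a;\lambda,\nu)=-\mathbb E_{\mathcal P^{\sigma_i}_{i,h,\phi}(s,\bm a)}[V^{\pi,f}_{i,h+1}]\in[-H,0]$, not a value in $[0,H]$. For instance, for $\chi^2$ with $\sigma_i=0$ the infimum is $-\mathbb E_{P_h^\star}[V^{\pi,f}_{i,h+1}]$, approached as $\lambda\to\infty$. This is the sign that \eqref{eq:dual_rob_bellman_operator} encodes. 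So \eqref{eq:general_phi_dual} and the lemma as stated drop a minus sign, and the paper's proof inherits the slip by citing \eqref{eq:general_phi_dual}. Your interchange argument is sign-agnostic, and only the interchange identity is used downstream (Step~2 in the proof of Lemma~\ref{lem:phi_operator_approx}).
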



\begin{proof}
We prove the result for an arbitrary but fixed agent $i\in\mathcal M$ and stage $h\in[H]$. However, once the agent $i$ and policy $\pi$ are fixed, the continuation value $V_{i,h+1}^{\pi,f}(s')$ is simply a scalar measurable function of $s'$, and the rest of the argument is identical to the single-agent functional dual-minimization proof \citep{ghosh2025provablynearoptimaldistributionallyrobust, NeurIPS2022_RobustRLOffline_Panaganti}. The detailed proof-lines are given below.

\medskip
\noindent
\textbf{Step 1: Define the pointwise continuation value and the pointwise dual objective.}

For each next state $s'\in\mathcal S$, recall the definition of the continuation value as 
\[
V_{i,h+1}^{\pi,f}(s')
:=
\mathbb E_{{\bm a}'\sim\pi_{h+1}(\cdot\mid s')}
\big[f_{i,h+1}(s',{\bm a}')\big].
\]
Since $\pi_{h+1}(\cdot\mid s')$ is a probability distribution on the joint action space
$\mathcal A$ and $f_{i,h+1}$ is measurable on $\mathcal S\times\mathcal A$, the mapping
$s'\mapsto V_{i,h+1}^{\pi,f}(s')$ is measurable. Now, for every joint state-action pair $(s,{\bm a})\in\mathcal S\times\mathcal A$, we define the pointwise scalar optimization objective as
\begin{align}
\label{eq:pointiwise_obj}
\Psi_{i,h}^{f}(s,{\bm a};\lambda,\nu)
:=
\mathbb E_{s'\sim P_h^\star(\cdot\mid s,{\bm a})}
\!\left[
\lambda\sigma_i-\nu
+
\lambda\phi^\star\!\left(
-\frac{V_{i,h+1}^{\pi,f}(s')-\nu}{\lambda}
\right)
\right],
\qquad \lambda>0,\ \nu\in\mathbb R.
\end{align}
Equivalently, if we define the pointwise integrand as $\mathrm{Loss}_{i,h,\phi}(f_i;s,{\bm a},s';\lambda,\nu)$ as before, then
\[
\Psi_{i,h}(f_i;s,{\bm a};\lambda,\nu) = \mathbb E_{s'\sim P_h^\star(\cdot\mid s,{\bm a})}\big[\mathrm{Loss}_{i,h,\phi}(f_i;s,{\bm a},s';\lambda,\nu)\big].\]

\medskip
\noindent
\textbf{Step 2: Pointwise dual representation of the robust support function.}

By the dual reformulation of the $\phi$-divergence robust Bellman operator used in the draft,
for every fixed $(s,{\bm a})$ we have
\[
\mathbb E_{\mathcal P^\sigma_{i,h,\phi}(s,{\bm a})}
\big[V_{i,h+1}^{\pi,f}\big]
=
\inf_{\lambda>0,\nu\in\mathbb R}
\Psi_{i,h}(f_i;s,{\bm a};\lambda,\nu).
\]
That is,
\begin{equation}
\label{eq:pointwise-dual-identity}
\inf_{\lambda>0,\nu\in\mathbb R}
\mathbb E_{s'\sim P_h^\star(\cdot\mid s,{\bm a})}
\!\left[
\lambda\sigma_i-\nu
+
\lambda\phi^\star\!\left(
-\frac{V_{i,h+1}^{\pi,f}(s')-\nu}{\lambda}
\right)
\right]
=
\mathbb E_{\mathcal P^\sigma_{i,h,\phi}(s,{\bm a})}
\big[V_{i,h+1}^{\pi,f}\big].
\end{equation}
This is exactly the same pointwise duality principle as in the single-agent robust MDP proof, except that here the action variable is the \emph{joint action} ${\bm a}$, and the ambiguity set is the agent-specific set $\mathcal P^\sigma_{i,h,\phi}(s,{\bm a})$. The multi-agent structure does not alter the
duality argument at this step; it only changes the domain from $(s,a_i)$ or $(s,a)$ in the single-agent \citep{ghosh2025provablynearoptimaldistributionallyrobust, NeurIPS2022_RobustRLOffline_Panaganti} to the present joint-action pair $(s,{\bm a})\in\mathcal S\times\mathcal A$.

\medskip
\noindent
\textbf{Step 3: Functional dual loss in terms of the pointwise objective.}

Recall that the dual function is $g_i=(g_i^\lambda,g_i^\nu)\in \mathcal L^1_i(\mu;\mathbb R^2)$, so at each $(s,{\bm a})$ the function $g_i$ selects a pair
\[
g_i(s,{\bm a})=\big(g_i^\lambda(s,{\bm a}),g_i^\nu(s,{\bm a})\big).
\]
By the definition of the dual loss as given in \eqref{eq:dual_loss}, we have
\[\mathrm{Loss}^{\mathrm{Dual}}_{i,h,\phi}(g_i;f_i)
= \mathbb E_{(s,{\bm a})\sim\mu}\Big[\mathbb E_{s'\sim P_h^\star(\cdot\mid s,{\bm a})}\big[\mathrm{Loss}_{i,h,\phi}(f_i;s,{\bm a},s';g_i)\big]\Big],\]
where
\[\mathrm{Loss}_{i,h,\phi}(f_i;s,{\bm a},s';g_i) = g_i^\lambda(s,{\bm a})\sigma_i-g_i^\nu(s,{\bm a}) + g_i^\lambda(s,{\bm a})\phi^\star\!\left(-\frac{V_{i,h+1}^{\pi,f}(s')-g_i^\nu(s,{\bm a})}{g_i^\lambda(s,{\bm a})}\right).\]
Therefore,
\begin{align}
\mathrm{Loss}^{\mathrm{Dual}}_{i,h,\phi}(g_i;f_i)
&= \mathbb E_{(s,{\bm a})\sim\mu}
\Big[
\Psi_{i,h}\big(f_i;s,a;g_i^\lambda(s,a),g_i^\nu(s,a)\big)
\Big].
\label{eq:functional-loss-as-integral}
\end{align}
Hence,
\begin{equation}
\label{eq:main-functional-inf}
\inf_{g_i\in \mathcal L^1_i(\mu;\mathbb R^2)}
\mathrm{Loss}^{\mathrm{Dual}}_{i,h,\phi}(g_i;f_i)
=
\inf_{g_i\in \mathcal L^1_i(\mu;\mathbb R^2)}
\mathbb E_{(s,a)\sim\mu}
\Big[
\Psi_{i,h}\big(f_i;s,a;g_i^\lambda(s,a),g_i^\nu(s,a)\big)
\Big].
\end{equation}

\medskip
\noindent
\textbf{Step 4: Verify that the minimization--integration interchange theorem applies.}

We now want to interchange the functional minimization over $g_i$ and the expectation over $(s,{\bm a})\sim\mu$. For this, define the integrand
\begin{align*}
\ell((s,{\bm a}),x)
&:=
\begin{cases}
    \Psi_{i,h}(f_i;s,{\bm a};x), \quad &\lambda>0\\
    +\infty \quad &\lambda\leq 0
\end{cases}
,\quad \text{where} \quad x=(\lambda,\nu)\in\mathbb R^2.
\end{align*}
We will apply the integral-interchange of \citep{rockafellar1998variational}[Theorem 14.60] as stated in Lemma \ref{lem:rockafellar_Thm14.60}, exactly as done in the single-agent robust offline RL proof \citep{ghosh2025provablynearoptimaldistributionallyrobust}. To justify this, we check the needed properties.

\begin{itemize}
\item \emph{The decision space is decomposable.}
The space $\mathcal L^1(\mu;\mathbb R^2)$ is decomposable. Indeed, if $g_0,g_1\in \mathcal L^1(\mu;\mathbb R^2)$ and $B\in\Sigma(\mathcal S\times\mathcal A)$ is measurable
with finite $\mu$-measure, define 
\[g(s,{\bm a}) := \mathbf 1_B(s,{\bm a})\,g_1(s,{\bm a}) + \mathbf 1_{B^c}(s,{\bm a})\,g_0(s,{\bm a}).\]
Then $g$ is measurable and integrable componentwise, and thus
$g\in \mathcal L^1(\mu;\mathbb R^2)$. Hence $\mathcal L^1(\mu;\mathbb R^2)$ is decomposable.

\item \emph{Measurability in $(s,{\bm a})$.}
For every fixed $x=(\lambda,\nu)$ with $\lambda>0$, the map
\(
(s,{\bm a})\mapsto \ell((s,{\bm a}),x)
\)
is measurable, because $(s,{\bm a})\mapsto P_h^\star(\cdot\mid s,{\bm a})$ is a measurable kernel, and the inner expression
\[
s'\mapsto
\lambda\sigma_i-\nu
+
\lambda\phi^\star\!\left(
-\frac{V_{i,h+1}^{\pi,f}(s')-\nu}{\lambda}
\right)
\]
is measurable in $s'$.

\item \emph{Lower semicontinuity in $x=(\lambda,\nu)$.}
For every fixed $(s,{\bm a})$, the map $(\lambda,\nu)\mapsto \ell((s,{\bm a}),(\lambda,\nu))$ is lower semicontinuous on the domain $\{(\lambda,\nu):\lambda>0,\nu\in\mathbb R\}$.
Indeed, the terms $\lambda\sigma_i-\nu$ are continuous, the convex conjugate $\phi^\star$
is convex and lower semicontinuous, the map
\[
(\lambda,\nu,z)\mapsto
\lambda\phi^\star\!\left(-\frac{z-\nu}{\lambda}\right)
\]
is lower semicontinuous for $\lambda>0$, and expectation with respect to $P_h^\star(\cdot\mid s,{\bm a})$ preserves lower semicontinuity under standard measurability and integrability conditions. Thus $\ell$ is a normal integrand.
\end{itemize}
Therefore, the assumptions of Rockafellar \& Wets integral-interchange lemma \ref{lem:rockafellar_Thm14.60} hold for the present problem.

\medskip
\noindent
\textbf{Step 5: Interchange the infimum and the expectation.}

Applying Rockafellar \& Wets Theorem (Lemma \ref{lem:rockafellar_Thm14.60}) yields
\begin{align}
\inf_{g\in \mathcal L^1_i(\mu;\mathbb R^2)}
\mathbb E_{(s,{\bm a})\sim\mu}\Big[\Psi_{i,h}\big(f_i;s,{\bm a};g_i^\lambda(s,{\bm a}),g_i^\nu(s,{\bm a})\big)\Big]
&=\mathbb E_{(s,{\bm a})\sim\mu}
\Big[\inf_{(\lambda,\nu)\in(0,\infty)\times\mathbb R} \Psi_{i,h}(f_i;s,a;\lambda,\nu)\Big].
\label{eq:interchange-step}
\end{align}
Combining \eqref{eq:main-functional-inf} and \eqref{eq:interchange-step}, we obtain
\begin{equation}
\label{eq:after-interchange}
\inf_{g_i\in L^1_i(\mu;\mathbb R^2)}
\mathrm{LossDual}_{i,h,\phi}(g_i;f_i)
=
\mathbb E_{(s,a)\sim\mu}
\Big[
\inf_{\lambda>0,\nu\in\mathbb R}
\Psi_{i,h}^{f_i}(s,a;\lambda,\nu)
\Big].
\end{equation}

\medskip
\noindent
\textbf{Step 6: Substitute the pointwise robust support-function identity.}

Now substitute the pointwise identity \eqref{eq:pointwise-dual-identity} into
\eqref{eq:after-interchange}. This gives
\begin{align*}
\inf_{g\in \mathcal L^1_i(\mu;\mathbb R^2)}
\mathrm{Loss}^{\mathrm{Dual}}_{i,h,\phi}(g;f_i)
&= \mathbb E_{(s,{\bm a})\sim\mu}\Big[\mathbb E_{\mathcal P^\sigma_{i,h,\phi}(s,{\bm a})}\big[V_{i,h+1}^{\pi,f}\big]\Big].
\end{align*}
Hence, the result follows.\qedhere
\end{proof}

\section{Proof of Lemma \ref{lem:phi_operator_approx}}
\label{subsec:Lemma2_proof}

\begin{proof}
Fix any agent $i\in\mathcal M$ and stage $h\in[H]$. Steps 1--4 and the first part
of Step 5 fix an arbitrary policy $\pi\in\Pi^{\rm pur}$; the dependence of
$\zeta_f$, $\widehat\zeta_f$ and $\widehat g_i^{\,f,\pi}$ on $\pi$ is suppressed
wherever there is no ambiguity, and is restored in the concentration argument of
Step 5, which is established uniformly over $\pi\in\Pi^{\rm pur}$. For notational
simplicity we likewise suppress the dependence on $(i,h)$ throughout.

\paragraph{Step 1: Define the population dual objective.}
For any $f_{i,h+1}\in\mathcal F_{i,h+1}$ and any dual function $g\in\mathcal G_{i,h}$, define
\[
\zeta_f(g) := \mathbb E_{(s,\bm a)\sim \bar\mu_h^{\,n}}
\Big[
\mathbb E_{s'\sim P_h^\star(\cdot\mid s,\bm a)}
\big[
\mathrm{Loss}_{i,h,\phi}(f_i;s,\bm a,s';g)
\big]
\Big].
\]
Since $\bar\mu_h^{\,n}=\frac1n\sum_{t=1}^n\mu_h^{\pi^t}$ is an average of probability
measures on $\mathcal S\times\mathcal A$, linearity of the integral in the measure gives
the equivalent form
\begin{align}
\label{eq:zeta-mixture-linearity}
\zeta_f(g)
=
\frac1n\sum_{t=1}^n
\mathbb E_{(s,\bm a)\sim \mu_h^{\pi^t}}
\Big[
\mathbb E_{s'\sim P_h^\star(\cdot\mid s,\bm a)}
\big[
\mathrm{Loss}_{i,h,\phi}(f_i;s,\bm a,s';g)
\big]
\Big],
\end{align}
which will be used in Step 5. By the definition of the dual Bellman operator, we have
\begin{align}
\label{eq:revised_dual_TV_g}
\bigl[\mathcal T^{\pi,\phi,\sigma_i}_{g} f_i\bigr](s,\bm a) = r_{i,h}(s,\bm a) - \mathbb E_{s'\sim P_h^\star(\cdot\mid s,\bm a)}\big[\mathrm{Loss}_{i,h,\phi}(f_i;s,\bm a,s';g)\big].
\end{align}
On the other hand, by the dual reformulation of the robust Bellman operator as given in \eqref{eq:dual_rob_bellman_operator}, we have
\[
\bigl[\mathcal T^{\pi,\phi,\sigma_i}_{i,h} f_i\bigr](s,\bm a) = r_{i,h}(s,\bm a) - \inf_{\lambda>0,\nu\in\mathbb R}
\mathbb E_{s'\sim P_h^\star(\cdot\mid s,\bm a)}
\big[
\mathrm{Loss}_{i,h,\phi}(f_i;s,\bm a,s';\lambda,\nu)
\big].
\]
Therefore, for every $(s,\bm a)$,
\begin{align*}
   \bigl[\mathcal T^{\pi,\phi,\sigma_i}_{i,h} f_i\bigr](s,\bm a) - \bigl[\mathcal T^{\pi,\phi,\sigma_i}_{g} f_i\bigr](s,\bm a) &=\mathbb E_{s'\sim P_h^\star(\cdot\mid s,\bm a)}\big[\mathrm{Loss}_{i,h,\phi}(f_i;s,\bm a,s';g)\big]\nonumber\\
   &- \inf_{\lambda>0,\nu\in\mathbb R}\mathbb E_{s'\sim P_h^\star(\cdot\mid s,\bm a)}\big[\mathrm{Loss}_{i,h,\phi}(f_i;s,\bm a,s';\lambda,\nu)\big]. 
\end{align*}
Since the second term is the infimum over all admissible dual parameters, the
right-hand side is nonnegative \emph{pointwise in $(s,\bm a)$}. Hence the absolute
value may be removed under any probability measure on $\mathcal S\times\mathcal A$,
and in particular under $\bar\mu_h^{\,n}$:
\begin{align}
\bigl\|
\mathcal T^{\pi,\phi,\sigma_i}_{i,h} f_i
-
\mathcal T^{\pi,\phi,\sigma_i}_{g} f_i
\bigr\|_{1,\bar\mu_h^{\,n}}
&=
\mathbb E_{(s,\bm a)\sim \bar\mu_h^{\,n}}
\Big[
\bigl[\mathcal T^{\pi,\phi,\sigma_i}_{i,h} f_i\bigr](s,\bm a) - \bigl[\mathcal T^{\pi,\phi,\sigma_i}_{g} f_i\bigr](s,\bm a) \Big]
\nonumber\\
&=
\zeta_f(g)
-
\mathbb E_{(s,\bm a)\sim \bar\mu_h^{\,n}}
\Big[
\inf_{\lambda>0,\nu\in\mathbb R}
\mathbb E_{s'\sim P_h^\star(\cdot\mid s,\bm a)}
\big[
\mathrm{Loss}_{i,h,\phi}(f_i;s,\bm a,s';\lambda,\nu)
\big]
\Big].
\label{eq:lemma2-step1}
\end{align}

\paragraph{Step 2: Use Lemma \ref{lem:equiv_loss_dual} to identify the infimum term.}
Lemma \ref{lem:equiv_loss_dual} holds for any probability measure on
$\mathcal S\times\mathcal A$; applying it with $\bar\mu_h^{\,n}$, the interchange rule
for the dual functional objective implies
\[\mathbb E_{(s,\bm a)\sim \bar\mu_h^{\,n}}\Big[\inf_{\lambda>0,\nu\in\mathbb R}\mathbb E_{s'\sim P_h^\star(\cdot\mid s,\bm a)}\big[\mathrm{Loss}_{i,h,\phi}(f_i;s,\bm a,s';\lambda,\nu)\big]\Big]
=\inf_{g\in \mathcal L^1_i(\bar\mu_h^{\,n};\mathbb R^2)} \zeta_f(g).
\]
Substituting into \eqref{eq:lemma2-step1} yields
\begin{equation}
\label{eq:lemma2-main-decomp}
\bigl\|\mathcal T^{\pi,\phi,\sigma_i}_{i,h} f_i - \mathcal T^{\pi,\phi,\sigma_i}_{g} f_i
\bigr\|_{1,\bar\mu_h^{\,n}} = \zeta_f(g)-\inf_{g'\in \mathcal L^1_i(\bar\mu_h^{\,n};\mathbb R^2)}\zeta_f(g').
\end{equation}

\paragraph{Step 3: Specialize to the empirical ERM minimizer.}
Recall $n=|\mathcal D_h|$, and write $(s_t,\bm a_t,s'_t)$ for the stage-$h$ transition
tuple contributed to $\mathcal D_h$ by episode $t$, $t=1,\dots,n$. Let
\[\widehat{g}_i^{\,f,\pi} \in \arg\min_{g\in\mathcal G_{i,h}}
\widehat{\zeta}_f(g)\]
be an empirical minimizer, where
\[\widehat{\zeta}_f(g) :=\frac1n\sum_{t=1}^n\mathrm{Loss}_{i,h,\phi}(f_i;s_t,\bm a_t,s'_t;g).
\]
Applying \eqref{eq:lemma2-main-decomp} with $g=\widehat{g}_i^{\,f,\pi}$ gives
\begin{align}
\bigl\|\mathcal T^{\pi,\phi,\sigma_i}_{i,h} f_i - \mathcal T^{\pi,\phi,\sigma_i}_{\widehat{g}_i^{\,f,\pi}} f_i
\bigr\|_{1,\bar\mu_h^{\,n}} &= \zeta_f(\widehat{g}_i^{\,f,\pi})-\inf_{g\in \mathcal L^1_i(\bar\mu_h^{\,n};\mathbb R^2)}\zeta_f(g) \nonumber\\
&=
\underbrace{\Bigl(\zeta_f(\widehat{g}_i^{\,f,\pi})-\inf_{g\in\mathcal G_{i,h}}\zeta_f(g)\Bigr)}_{(\mathrm{I})}
+
\underbrace{\Bigl(\inf_{g\in\mathcal G_{i,h}}\zeta_f(g)-\inf_{g\in \mathcal L^1_i(\bar\mu_h^{\,n};\mathbb R^2)}\zeta_f(g)\Bigr)}_{(\mathrm{II})}.
\label{eq:lemma2-I-II}
\end{align}

\paragraph{Step 4: Bound the approximation term $(\mathrm{II})$.}
Since $\bar\mu_h^{\,n}$ is an admissible stage-wise distribution, Assumption
\ref{ass:approx_dual_realizability} applies with $\bar\mu_h^{\,n}$ in place of
$\mu_h^\pi$ and gives
\[
\inf_{g\in\mathcal G_{i,h}}\zeta_f(g)-\inf_{g\in \mathcal L^1_i(\bar\mu_h^{\,n};\mathbb R^2)}\zeta_f(g)
\le \varepsilon_{\rm dual}^{\rm approx}.
\]
Hence,
\begin{equation}
\label{eq:lemma2-termII}
(\mathrm{II})\le \varepsilon_{\rm dual}^{\rm approx}.
\end{equation}

\paragraph{Step 5: Bound the ERM excess-risk term $(\mathrm{I})$.}
We now bound the excess-risk term
\[
(\mathrm{I})=\zeta_f(\widehat{g}_i^{\,f,\pi})-\inf_{g\in\mathcal G_{i,h}}\zeta_f(g).
\]
For any fixed $f_{i,h+1}\in\mathcal F_{i,h+1}$, let
$g_f^\star\in\arg\min_{g\in\mathcal G_{i,h}}\zeta_f(g)$ be a population minimizer
over $\mathcal G_{i,h}$. Since $\widehat{g}_i^{\,f,\pi}$ minimizes the empirical objective,
\[
\widehat{\zeta}_f(\widehat{g}_i^{\,f,\pi})\le \widehat{\zeta}_f(g_f^\star).
\]
Hence,
\begin{align}
\text{(I)}= \zeta_f(\widehat{g}_i^{\,f,\pi})-\zeta_f(g_f^\star)
&=
\bigl(\zeta_f(\widehat{g}_i^{\,f,\pi})-\widehat{\zeta}_f(\widehat{g}_i^{\,f,\pi})\bigr)
+
\bigl(\widehat{\zeta}_f(\widehat{g}_i^{\,f,\pi})-\widehat{\zeta}_f(g_f^\star)\bigr)
+
\bigl(\widehat{\zeta}_f(g_f^\star)-\zeta_f(g_f^\star)\bigr)
\nonumber\\
&\le
2\sup_{g\in\mathcal G_{i,h}}
\bigl|
\zeta_f(g)-\widehat{\zeta}_f(g)
\bigr|,\label{eq:I_reformulated}
\end{align}
where the middle term is nonpositive by optimality of $\widehat g_i^{\,f,\pi}$.
Taking suprema over $f_{i,h+1}\in\mathcal F_{i,h+1}$ and $\pi\in\Pi^{\rm pur}$ yields
\[
\sup_{\pi\in\Pi^{\rm pur}}\sup_{f_{i,h+1}\in\mathcal F_{i,h+1}} (\mathrm{I})
\le
2 \sup_{\pi\in\Pi^{\rm pur}}\sup_{f_{i,h+1}\in\mathcal F_{i,h+1}} \sup_{g\in\mathcal G_{i,h}}
|\zeta_f(g) - \widehat{\zeta}_f(g)|.
\]
It therefore suffices to control $|\zeta_f(g)-\widehat\zeta_f(g)|$ uniformly over
$(f_{i,h+1},g,\pi)$; note that the uniformity is what permits the subsequent
substitution of the data-dependent minimizer $\widehat g_i^{\,f,\pi}$.

By Assumption \ref{ass:multiplier-range-TV}, for every admissible $(f,g,\pi)$ and
every sample $(s,\bm a,s')$ we have
$\bigl|\mathrm{Loss}_{i,h,\phi}(f_i;s,\bm a,s';g)\bigr|\le B_\phi(\sigma_i)$.
Restoring the dependence on $\pi$, define the finite class
\[\mathcal H_i := \Bigl\{(s,\bm a,s')\mapsto \mathrm{Loss}_{i,h,\phi}(f_i;s,\bm a,s';g,\pi):\; f_{i,h+1}\in\mathcal F_{i,h+1},\ g\in\mathcal G_{i,h},\ \pi\in\Pi^{\rm pur}\Bigr\},
\]
so that all functions in $\mathcal H_i$ are uniformly bounded by $B_\phi(\sigma_i)$ and
$|\mathcal H_i|\le|\mathcal F_{i,h+1}||\mathcal G_{i,h}||\Pi^{\rm pur}|$.

\emph{Martingale structure.} Fix $(f_{i,h+1},g,\pi)\in\mathcal F_{i,h+1}\times\mathcal G_{i,h}\times\Pi^{\rm pur}$ and define, for $t=1,\dots,n$,
\[
Z_t(f_i,g,\pi):=\mathrm{Loss}_{i,h,\phi}(f_i;s_t,\bm a_t,s'_t;g)
-\mathbb E\bigl[\mathrm{Loss}_{i,h,\phi}(f_i;s_t,\bm a_t,s'_t;g)\,\big|\,\mathcal H_{t-1}\bigr].
\]
Because the behavior policy $\pi^t$ is $\mathcal H_{t-1}$-measurable, the conditional
law of $(s_t,\bm a_t)$ given $\mathcal H_{t-1}$ is $\mu_h^{\pi^t}$, and
$s'_t\sim P_h^\star(\cdot\mid s_t,\bm a_t)$; hence
\[
\mathbb E\bigl[\mathrm{Loss}_{i,h,\phi}(f_i;s_t,\bm a_t,s'_t;g)\,\big|\,\mathcal H_{t-1}\bigr]
=
\mathbb E_{(s,\bm a)\sim\mu_h^{\pi^t}}\Bigl[\mathbb E_{s'\sim P_h^\star(\cdot\mid s,\bm a)}\bigl[\mathrm{Loss}_{i,h,\phi}(f_i;s,\bm a,s';g)\bigr]\Bigr].
\]
Averaging over $t=1,\dots,n$ and comparing with \eqref{eq:zeta-mixture-linearity} gives
$\frac1n\sum_{t=1}^n\mathbb E[\,\cdot\mid\mathcal H_{t-1}]=\zeta_f(g)$, and therefore
\begin{equation}
\label{eq:zeta-gap-is-mds}
\widehat\zeta_f(g)-\zeta_f(g)=\frac1n\sum_{t=1}^n Z_t(f_i,g,\pi).
\end{equation}
By construction $\mathbb E[Z_t\mid\mathcal H_{t-1}]=0$, and $|Z_t|\le 2B_\phi(\sigma_i)$ by
Assumption \ref{ass:multiplier-range-TV}; thus $\{Z_t(f_i,g,\pi)\}_{t=1}^n$ is a bounded
martingale difference sequence with respect to the filtration $\{\mathcal H_t\}_{t\ge0}$.

\emph{Concentration.} The conditional variances satisfy
$\sum_{t=1}^n\mathbb E[Z_t^2\mid\mathcal H_{t-1}]\le 4nB_\phi(\sigma_i)^2$. By Freedman's
inequality (Lemma \ref{lem:Freedman}), for a fixed triple $(f_{i,h+1},g,\pi)$ and any
$\delta'\in(0,1)$, with probability at least $1-\delta'$,
\[
\Bigl|\sum_{t=1}^n Z_t(f_i,g,\pi)\Bigr|
\le
C_1 B_\phi(\sigma_i)\sqrt{n\log(1/\delta')}+C_2 B_\phi(\sigma_i)\log(1/\delta').
\]
If $\log(1/\delta')>n$ the claimed bound below is vacuous, since
$|\widehat\zeta_f(g)-\zeta_f(g)|\le 2B_\phi(\sigma_i)$ always; we may therefore assume
$\log(1/\delta')\le n$, in which case the second term is dominated by the first.
Combining with \eqref{eq:zeta-gap-is-mds},
\[
\bigl|\zeta_f(g)-\widehat\zeta_f(g)\bigr|
\le C_3 B_\phi(\sigma_i)\sqrt{\frac{\log(1/\delta')}{n}}.
\]
Taking a union bound over the finite class $\mathcal H_i$, i.e. over
$(f_{i,h+1},g,\pi)\in\mathcal F_{i,h+1}\times\mathcal G_{i,h}\times\Pi^{\rm pur}$ with
$\delta'=\delta/(|\mathcal F_{i,h+1}||\mathcal G_{i,h}||\Pi^{\rm pur}|)$, we obtain that with
probability at least $1-\delta$,
\[
\sup_{\pi\in\Pi^{\rm pur}}\sup_{f_{i,h+1}\in\mathcal F_{i,h+1}}\sup_{g\in\mathcal G_{i,h}}
\bigl|\zeta_f(g)-\widehat\zeta_f(g)\bigr|
\le
C\,B_\phi(\sigma_i)\sqrt{\frac{\log\bigl(|\mathcal F_{i,h+1}||\mathcal G_{i,h}||\Pi^{\rm pur}|/\delta\bigr)}{n}}.
\]
We emphasise that the union bound is taken over the hypothesis and policy classes
only, and not over the realizations of the behavior sequence $\{\pi^t\}_{t=1}^n$;
the adaptivity of that sequence is absorbed by the martingale structure above.
Substituting into \eqref{eq:I_reformulated} gives
\begin{equation}
\label{eq:lemma2-termI}
\sup_{\pi\in\Pi^{\rm pur}}\sup_{f_{i,h+1}\in\mathcal F_{i,h+1}} (\mathrm{I})
\le
2C\,B_\phi(\sigma_i)
\sqrt{\frac{\log\bigl(|\mathcal F_{i,h+1}||\mathcal G_{i,h}||\Pi^{\rm pur}|/\delta\bigr)}{n}}.
\end{equation}

\paragraph{Step 6: Combine the bounds.}
Combining \eqref{eq:lemma2-I-II}, \eqref{eq:lemma2-termII}, and
\eqref{eq:lemma2-termI}, we conclude that with probability at least
$1-\delta$,
\begin{align*}
\sup_{\pi\in\Pi^{\rm pur}}
\sup_{f_{i,h+1}\in\mathcal F_{i,h+1}}
\bigl\|
\mathcal T^{\pi,\phi,\sigma_i}_{i,h} f_{i,h+1}
-
\mathcal T^{\pi,\phi,\sigma_i}_{\widehat{g}_i^{\,f,\pi}} f_{i,h+1}
\bigr\|_{1,\bar\mu_h^{\,n}}
&\le
C' B_\phi(\sigma_i)
\sqrt{\frac{\log\bigl(|\mathcal F_{i,h+1}||\mathcal G_{i,h}||\Pi^{\rm pur}|/\delta\bigr)}{n}}
\\
&\qquad \qquad +\varepsilon_{\rm dual}^{\rm approx},
\end{align*}
for another absolute constant $C'>0$. This proves the result.\qedhere
\end{proof}

\section{Proof of Theorem \ref{thm:main-robust-regret}}
\label{app:regret_bound_proof}

\textbf{Notation.} For each agent $i\in\mathcal M$, we consider $\pi^k$ to be the output of the equilibrium oracle for a target equilibrium notion $\mathsf{Eq}\in\{\mathrm{NE},\mathrm{CCE},\mathrm{CE}\}$ and we define the corresponding unilateral deviation policy:
\begin{align}
\label{eq:best_response_i_all_agents}
\widetilde{\bm \pi}^{\dagger, k, \mathsf{Eq}}_i
=
\begin{cases}
\pi_{i,\mathrm{NE}}^{\dagger,k}(\pi_{-i}^k)\times \pi_{-i}^k, & \mathsf{Eq}=\mathrm{NE}, \\[6pt]
\pi_{i,\mathrm{CCE}}^{\dagger,k}(\pi_{-i}^k)\times \pi_{-i}^k, & \mathsf{Eq}=\mathrm{CCE}, \\[6pt]
\psi_i^{\dagger,k}\diamond \pi^k, & \mathsf{Eq}=\mathrm{CE}.
\end{cases}
\end{align}

\begin{proof}
We prove the result for robust NE regret. The robust CCE and robust CE cases
follow by replacing the surrogate Nash-deviation inequality with the corresponding
surrogate CCE/CE deviation inequality from Lemma~\ref{lem:surrogate_ne_avg}.

Let
\[
\iota_{\mathcal F, \mathcal G}^\delta :=
\log\!\left(
\frac{|\mathcal F||\mathcal G||\Pi^{\mathrm{pur}}|KHm}{\delta}
\right),
\qquad
B_i:=B_\phi(\sigma_i).
\]

\medskip
\noindent
\textbf{Step 1: Reduce robust regret to per-agent gaps.}
Let \(\pi_i^{\dagger,k}(\pi_{-i}^k)\) be a robust best response of agent \(i\)
against \(\pi_{-i}^k\), and define
\[
\widetilde{\bm \pi}_i^{\dagger,k}
:=
\pi_i^{\dagger,k}(\pi_{-i}^k)\times \pi_{-i}^k.
\]
For each episode \(k\) and agent \(i\), define
\[
\mathrm{Gap}_{i,k}^{\mathrm{NE}}
:=
V_{i,1}^{\widetilde{\bm \pi}_i^{\dagger,k},\sigma_i}(s_1^k)
-
V_{i,1}^{\pi^k,\sigma_i}(s_1^k).
\]
By Definition~\ref{def:robust_regret},
\begin{equation}
\mathrm{Reg}^{\mathrm{NE}}(K)
\le
\max_{i\in\mathcal M}
\sum_{k=1}^K
\mathrm{Gap}_{i,k}^{\mathrm{NE}}.
\label{eq:regret-reduction-main-alt}
\end{equation}
Thus, it suffices to bound
\(\sum_{k=1}^K\mathrm{Gap}_{i,k}^{\mathrm{NE}}\) for each fixed agent \(i\).

\medskip
\noindent
\textbf{Step 2: Surrogate decomposition.}
Fix an agent \(i\in\mathcal M\). For each episode \(k\), add and subtract the
surrogate payoff of the deviation policy and the played policy:
\begin{align}
\mathrm{Gap}_{i,k}^{\mathrm{NE}}
&=
\underbrace{
V_{i,1}^{\widetilde{\bm \pi}_i^{\dagger,k},\sigma_i}(s_1^k)
-
\mathbb E_{\widetilde{\bm \pi}_i^{\dagger,k}}[\widehat U_i^k]
}_{=:A_{i,k}}
+
\underbrace{
\mathbb E_{\widetilde{\bm \pi}_i^{\dagger,k}}[\widehat U_i^k]
-
\mathbb E_{\pi^k}[\widehat U_i^k]
}_{=:B_{i,k}}
\nonumber\\
&\qquad+
\underbrace{
\mathbb E_{\pi^k}[\widehat U_i^k]
-
V_{i,1}^{\pi^k,\sigma_i}(s_1^k)
}_{=:C_{i,k}} .
\label{eq:gap-decomposition-main-alt}
\end{align}
Since \(\pi^k\) is a Nash equilibrium of the surrogate normal-form game
\(\widehat{\bm \Gamma}^k\), Lemma~\ref{lem:surrogate_ne_avg} gives
\[
\mathbb E_{\pi^k}[\widehat U_i^k]
\ge
\mathbb E_{\widetilde{\bm \pi}_i^{\dagger,k}}[\widehat U_i^k].
\]
Hence \(B_{i,k}\le 0\), and therefore
\begin{equation}
\mathrm{Gap}_{i,k}^{\mathrm{NE}}
\le
A_{i,k}+C_{i,k}.
\label{eq:gap-AC-main-alt}
\end{equation}

\medskip
\noindent
\textbf{Step 3: Bound the deviation-side term \(A_{i,k}\).}
Apply Lemma~\ref{lem:benchmark-surrogate} with
\(\pi=\widetilde{\bm \pi}_i^{\dagger,k}\). Let
\[
f_i^{\dagger,k,\star}
:=
f_i^{\widetilde{\bm \pi}_i^{\dagger,k},\sigma_i}
\]
be the robust Bellman-consistent benchmark sequence for the deviation policy.
Then, for every \(k\),
\begin{align}
A_{i,k}
\le
&\;
\beta
\widehat{\mathcal K}_i^{k-1}
\!\left(
f_i^{\dagger,k,\star};
\widehat g_i^{f_i^{\dagger,k,\star},\widetilde{\bm \pi}_i^{\dagger,k}};
\widetilde{\bm \pi}_i^{\dagger,k}
\right)
+
\mathcal R^{A}_{i,k},
\label{eq:Aik-cor1}
\end{align}
where the residuals satisfy the cumulative bound
\begin{align}
\sum_{k=1}^K \mathcal R^{A}_{i,k}
\le
\mathcal O\!\left(
\beta H B_i^2 K\sqrt{\iota_{\mathcal F, \mathcal G}^\delta}
+ H B_i\sqrt{K\iota_{\mathcal F, \mathcal G}^\delta}
+
\varepsilon_{\mathrm{dual}}^{\mathrm{approx}}
\right).
\label{eq:Aik-residual-cumulative}
\end{align}
Here, \eqref{eq:Aik-residual-cumulative} is the cumulative form of the
benchmark-to-surrogate comparison: the \(C_{\mathrm{cov}}\)-term is summed using
\[
\sum_{k=1}^K
\sqrt{\frac{\iota_{\mathcal F, \mathcal G}^\delta}{|\mathcal D_h^{k-1}|\vee 1}}
\le
\mathcal O(\sqrt{K\iota_{\mathcal F, \mathcal G}^\delta}),
\]
and the dual-switch and plug-in approximation terms are controlled on the
uniform high-probability event. Moreover, by Lemma~\ref{lem:robust_optimal_concentration}, applied uniformly
over all \((i,k,\pi)\), and using the revised cumulative centered empirical
discrepancy, we have
\begin{align}
\sum_{k=1}^K
\widehat{\mathcal K}_i^{k-1}
\!\left(
f_i^{\dagger,k,\star};
\widehat g_i^{f_i^{\dagger,k,\star},\widetilde{\bm \pi}_i^{\dagger,k}};
\widetilde{\bm \pi}_i^{\dagger,k}
\right)
\le
\mathcal O\!\left(
H B_i^2 K\sqrt{\iota_{\mathcal F, \mathcal G}^\delta}
+
\varepsilon_{\mathrm{dual}}^{\mathrm{approx}}
\right).
\label{eq:Aik-Khat-sum}
\end{align}
Summing \eqref{eq:Aik-cor1} over \(k=1,\dots,K\), and using
\eqref{eq:Aik-residual-cumulative} and \eqref{eq:Aik-Khat-sum}, gives
\begin{align}
\sum_{k=1}^K A_{i,k}
\le
\mathcal O\!\left(
\beta H B_i^2 K\sqrt{\iota_{\mathcal F, \mathcal G}^\delta}
+ H B_i\sqrt{K\iota_{\mathcal F, \mathcal G}^\delta}
+
\varepsilon_{\mathrm{dual}}^{\mathrm{approx}}
\right).
\label{eq:Aik-final}
\end{align}

\medskip
\noindent
\textbf{Step 4: Bound the played-policy term \(C_{i,k}\).}
By the definition of the regularized robust payoff,
\[
\widehat U_i^k(\pi^k)
=
f_{i,1}^{k,\pi^k}(s_1^k)
-
\beta
\widehat{\mathcal K}_i^{k-1}
(f_i^{k,\pi^k};\widehat g_i^{k,\pi^k};\pi^k).
\]
Therefore,
\begin{align}
C_{i,k}
&=
V_{i,1}^{\pi^k,f_i^{k,\pi^k}}(s_1^k)
-
V_{i,1}^{\pi^k,\sigma_i}(s_1^k)
-
\beta
\widehat{\mathcal K}_i^{k-1}
(f_i^{k,\pi^k};\widehat g_i^{k,\pi^k};\pi^k),
\label{eq:Cik-expand}
\end{align}
where
\[
V_{i,h}^{\pi,f_i}(s)
:=
\mathbb E_{\bm a\sim\pi_h(\cdot|s)}
[f_{i,h}(s,\bm a)].
\]

Now apply Definition~\ref{def:robust_MADC} to the played sequence
\(f_i^k:=f_i^{k,\pi^k}\) and the played policies \(\{\pi^k\}_{k=1}^K\).
For any \(\mu>0\),
\begin{align}
\sum_{k=1}^K
\left(
V_{i,1}^{\pi^k,f_i^k}(s_1^k)
-
V_{i,1}^{\pi^k,\sigma_i}(s_1^k)
\right)
&\le
\frac{1}{\mu}
\sum_{k=1}^K\sum_{s=1}^{k-1}
e_{i}^s(f_i^k,\pi^k) +
\mu d_{i,\mathrm{MADC}}^{\mathrm{rob}}
+
cH d_{i,\mathrm{MADC}}^{\mathrm{rob}} .
\label{eq:madc-applied}
\end{align}
Combining \eqref{eq:Cik-expand} and \eqref{eq:madc-applied}, we get
\begin{align}
\sum_{k=1}^K C_{i,k}
\le
&\frac{1}{\mu}
\sum_{k=1}^K\sum_{s=1}^{k-1}
e_{i}^s(f_i^k,\pi^k)
-
\beta
\sum_{k=1}^K
\widehat{\mathcal K}_i^{k-1}
(f_i^k;\widehat g_i^k;\pi^k)
+
\mu d_{i,\mathrm{MADC}}^{\mathrm{rob}}
+
cH d_{i,\mathrm{MADC}}^{\mathrm{rob}} .
\label{eq:Cik-pre-lemma5}
\end{align}

Using Lemma~\ref{lem:robust_A1_corrected}, in its cumulative form under the
revised aggregate definition of \(\widehat{\mathcal K}\), we have
\begin{align}
\sum_{k=1}^K\sum_{s=1}^{k-1}
e_{i}^s(f_i^k,\pi^k)
\le
2
\sum_{k=1}^K
\widehat{\mathcal K}_i^{k-1}
(f_i^k;\widehat g_i^k;\pi^k)
+
\mathcal O\!\left(
H B_i^2 K\sqrt{\iota_{\mathcal F, \mathcal G}^\delta}
+
\varepsilon_{\mathrm{dual}}^{\mathrm{approx}}
\right).
\label{eq:Cik-pre-lemma5-bound}
\end{align}
Substituting \eqref{eq:Cik-pre-lemma5-bound} into
\eqref{eq:Cik-pre-lemma5}, we obtain
\begin{align}
\sum_{k=1}^K C_{i,k}
\le
&\left(\frac{2}{\mu}-\beta\right)
\sum_{k=1}^K
\widehat{\mathcal K}_i^{k-1}
(f_i^k;\widehat g_i^k;\pi^k)
+
\mathcal O\!\left(
\frac{1}{\mu}
H B_i^2 K\sqrt{\iota_{\mathcal F, \mathcal G}^\delta}
+
\frac{1}{\mu}\varepsilon_{\mathrm{dual}}^{\mathrm{approx}}
\right)\nonumber\\
&\qquad \qquad +
\mu d_{i,\mathrm{MADC}}^{\mathrm{rob}}
+
cH d_{i,\mathrm{MADC}}^{\mathrm{rob}} .
\label{eq:Cik-before-choice-alt}
\end{align}
Choose \(\beta\ge 2/\mu\). Since
\(\widehat{\mathcal K}_i^{k-1}(\cdot)\ge 0\), the first term on the
right-hand side of \eqref{eq:Cik-before-choice-alt} is nonpositive and can be
discarded. Thus,
\begin{align}
\sum_{k=1}^K C_{i,k}
\le
\mathcal O\!\left(
\frac{1}{\mu}
H B_i^2 K\sqrt{\iota_{\mathcal F, \mathcal G}^\delta}
+
\mu d_{i,\mathrm{MADC}}^{\mathrm{rob}}
+
cH d_{i,\mathrm{MADC}}^{\mathrm{rob}}
+
\varepsilon_{\mathrm{dual}}^{\mathrm{approx}}
\right).
\label{eq:Cik-final}
\end{align}

\medskip
\noindent
\textbf{Step 5: Combine the deviation and played-policy bounds.}
By \eqref{eq:gap-AC-main-alt},
\[
\sum_{k=1}^K\mathrm{Gap}_{i,k}^{\mathrm{NE}}
\le
\sum_{k=1}^K A_{i,k}
+
\sum_{k=1}^K C_{i,k}.
\]
Combining \eqref{eq:Aik-final} and \eqref{eq:Cik-final}, we obtain
\begin{align}
\sum_{k=1}^K\mathrm{Gap}_{i,k}^{\mathrm{NE}}
\le
&\mathcal O\!\bigg(
\beta H B_i^2 K\sqrt{\iota_{\mathcal F, \mathcal G}^\delta}
+
\frac{1}{\mu}H B_i^2 K\sqrt{\iota_{\mathcal F, \mathcal G}^\delta}
+
H B_i\sqrt{K\iota_{\mathcal F, \mathcal G}^\delta}\nonumber\\
&\qquad +
\mu d_{i,\mathrm{MADC}}^{\mathrm{rob}}
+
cH d_{i,\mathrm{MADC}}^{\mathrm{rob}}
+
\varepsilon_{\mathrm{dual}}^{\mathrm{approx}}
\bigg).
\label{eq:Gapik-before-beta-revised-alt}
\end{align}

\medskip
\noindent
\textbf{Step 6: Choose \(\beta\) and \(\mu\).}
Set
\(
\mu=\sqrt K,
\quad
\beta=\frac{2}{\mu}=\frac{2}{\sqrt K}.
\)
Then \(\beta\ge 2/\mu\), and substituting these choices into
\eqref{eq:Gapik-before-beta-revised-alt} yields
\begin{align}
\sum_{k=1}^K\mathrm{Gap}_{i,k}^{\mathrm{NE}}
\le
\mathcal O\!\left(
H B_i^2\sqrt{K\iota_{\mathcal F, \mathcal G}^\delta}
+
H B_i\sqrt{K\iota_{\mathcal F, \mathcal G}^\delta}
+
d_{i,\mathrm{MADC}}^{\mathrm{rob}}\sqrt K
+
cH d_{i,\mathrm{MADC}}^{\mathrm{rob}}
+
\varepsilon_{\mathrm{dual}}^{\mathrm{approx}}
\right).
\label{eq:Gapik-clean-bound}
\end{align}

\medskip
\noindent
\textbf{Step 7: Maximize over agents.}
Taking the maximum over \(i\in\mathcal M\), using
\[
d_{\mathrm{MADC}}^{\mathrm{rob},\max}
:=
\max_{i\in\mathcal M}
d_{i,\mathrm{MADC}}^{\mathrm{rob}},
\qquad
B_\phi^{\max}
:=
\max_{i\in\mathcal M}
B_\phi(\sigma_i),
\]
and applying \eqref{eq:regret-reduction-main-alt}, we obtain
\begin{align}
\label{eq:regret-bound-check}
\mathrm{Reg}^{\mathrm{NE}}(K)
\le
\mathcal O\!\left(
d_{\mathrm{MADC}}^{\mathrm{rob},\max}\sqrt K
+
H d_{\mathrm{MADC}}^{\mathrm{rob},\max}
+
H(B_\phi^{\max})^2\sqrt{K\iota_{\mathcal F, \mathcal G}^\delta}
+
HB_\phi^{\max}\sqrt{K\iota_{\mathcal F, \mathcal G}^\delta}
+
\varepsilon_{\mathrm{dual}}^{\mathrm{approx}}
\right).
\end{align}
Since \(B_\phi^{\max}\ge 1\), the two robust estimation terms can be combined as
\[
\mathcal{O}\left(H(B_\phi^{\max})^2\sqrt{K\iota_{\mathcal F, \mathcal G}^\delta}
+ H B_\phi^{\max}\sqrt{K\iota_{\mathcal F, \mathcal G}^\delta}\right)
\le
\mathcal O\!\left(
H(B_\phi^{\max})^2\sqrt{K\iota_{\mathcal F, \mathcal G}^\delta}
\right).
\]
Therefore,
\begin{align}
\mathrm{Reg}^{\mathrm{NE}}(K)
\le
\mathcal O\!\left(
d_{\mathrm{MADC}}^{\mathrm{rob},\max}\sqrt K
+
H d_{\mathrm{MADC}}^{\mathrm{rob},\max}
+
H(B_\phi^{\max})^2\sqrt{K\iota_{\mathcal F, \mathcal G}^\delta}
+
\varepsilon_{\mathrm{dual}}^{\mathrm{approx}}
\right).
\label{eq:NE-final-bound}
\end{align}
This completes the proof of the robust NE regret bound.

\paragraph{Robust CCE-regret bound.}
For each episode \(k\in[K]\) and agent \(i\in\mathcal M\), define the unilateral
coarse deviation
\[
\widetilde{\bm \pi}_{i,\mathrm{CCE}}^{\dagger,k}
:=
\pi_i^{\dagger,k,\mathrm{CCE}}(\pi_{-i}^k)\times \pi_{-i}^k,
\]
where
\[
\pi_{i,\mathrm{CCE}}^{\dagger,k}(\pi_{-i}^k)
\in
\arg\max_{\pi_i'}
V_{i,1}^{\pi_i'\times \pi_{-i}^k,\sigma_i}(s_1^k).
\]
Define the per-episode robust CCE gap
\[
\mathrm{Gap}_{i,k}^{\mathrm{CCE}}
:=
V_{i,1}^{\widetilde{\bm \pi}_{i,\mathrm{CCE}}^{\dagger,k},\sigma_i}(s_1^k)
-
V_{i,1}^{\pi^k,\sigma_i}(s_1^k).
\]
By Definition~\ref{def:robust_regret},
\[
\mathrm{Reg}^{\mathrm{CCE}}(K)
\le
\max_{i\in\mathcal M}
\sum_{k=1}^K
\mathrm{Gap}_{i,k}^{\mathrm{CCE}}.
\]
If the regret definition instead sums over agents, the same argument gives the
same bound with an additional factor \(m\).

We decompose each gap as
\[
\mathrm{Gap}_{i,k}^{\mathrm{CCE}}
=
A_{i,k}^{\mathrm{CCE}}
+
B_{i,k}^{\mathrm{CCE}}
+
C_{i,k}^{\mathrm{CCE}},
\]
where
\begin{align*}
A_{i,k}^{\mathrm{CCE}}
&=
V_{i,1}^{\widetilde{\bm \pi}_{i,\mathrm{CCE}}^{\dagger,k},\sigma_i}(s_1^k)
-
\mathbb E_{\widetilde{\bm \pi}_{i,\mathrm{CCE}}^{\dagger,k}}[\widehat U_i^k],\\
B_{i,k}^{\mathrm{CCE}}
&=
\mathbb E_{\widetilde{\bm \pi}_{i,\mathrm{CCE}}^{\dagger,k}}[\widehat U_i^k]
-
\mathbb E_{\pi^k}[\widehat U_i^k],\\
C_{i,k}^{\mathrm{CCE}}
&=
\mathbb E_{\pi^k}[\widehat U_i^k]
-
V_{i,1}^{\pi^k,\sigma_i}(s_1^k).
\end{align*}

\paragraph{Key step: surrogate CCE inequality.}
Since \(\pi^k\) is a CCE of the surrogate game
\(\widehat{\bm \Gamma}^k\), Lemma~\ref{lem:surrogate_ne_avg} gives
\[
\mathbb E_{\pi^k}[\widehat U_i^k]
\ge
\mathbb E_{\widetilde{\bm \pi}_{i,\mathrm{CCE}}^{\dagger,k}}[\widehat U_i^k].
\]
Thus
\[
B_{i,k}^{\mathrm{CCE}}\le 0.
\]

\paragraph{Bounding \(A_{i,k}^{\mathrm{CCE}}\) and \(C_{i,k}^{\mathrm{CCE}}\).}
The terms \(A_{i,k}^{\mathrm{CCE}}\) and \(C_{i,k}^{\mathrm{CCE}}\) are bounded
identically to the Nash case. Specifically, \(A_{i,k}^{\mathrm{CCE}}\) is bounded
by applying Lemma~\ref{lem:benchmark-surrogate} to the coarse deviation policy
\(\widetilde{\bm \pi}_{i,\mathrm{CCE}}^{\dagger,k}\), and then using
Lemma~\ref{lem:robust_optimal_concentration} to control the cumulative
benchmark discrepancy. The term \(C_{i,k}^{\mathrm{CCE}}\) is the same
played-policy term as in the Nash proof and is bounded by applying the robust
MADC inequality together with Lemma~\ref{lem:robust_A1_corrected}. Therefore, using \(B_i\ge 1\)
for each fixed agent \(i\), we have
\[
\sum_{k=1}^K
\mathrm{Gap}_{i,k}^{\mathrm{CCE}}
\le
\mathcal O\!\left(
d_{i,\mathrm{MADC}}^{\mathrm{rob}}\sqrt K
+
H d_{i,\mathrm{MADC}}^{\mathrm{rob}}
+
H B_i^2\sqrt{K\iota_{\mathcal F, \mathcal G}^\delta}
+
\varepsilon_{\mathrm{dual}}^{\mathrm{approx}}
\right).
\]
Maximizing over \(i\) and using the definitions of
\(d_{\mathrm{MADC}}^{\mathrm{rob},\max}\) and \(B_\phi^{\max}\), we obtain
\[
\mathrm{Reg}^{\mathrm{CCE}}(K)
\le
\mathcal O\!\left(
d_{\mathrm{MADC}}^{\mathrm{rob},\max}\sqrt K
+
H d_{\mathrm{MADC}}^{\mathrm{rob},\max}
+
H(B_\phi^{\max})^2\sqrt{K\iota_{\mathcal F, \mathcal G}^\delta}
+
\varepsilon_{\mathrm{dual}}^{\mathrm{approx}}
\right),
\]
up to a factor \(m\) if the CCE regret is defined as a sum over agents rather
than a maximum over agents.

\paragraph{Robust CE-regret bound.}
For each episode \(k\in[K]\) and agent \(i\in\mathcal M\), define the
strategy-modification deviation
\[
\widetilde{\bm \pi}_{i}^{\dagger,k,\mathrm{CE}}
:=
\psi_i^{\dagger,k}\diamond \pi^k,
\]
where
\[
\psi_i^{\dagger,k}
\in
\arg\max_{\psi_i\in\Psi_i}
V_{i,1}^{\psi_i\diamond \pi^k,\sigma_i}(s_1^k).
\]
Define the per-episode robust CE gap
\[
\mathrm{Gap}_{i,k}^{\mathrm{CE}}
:=
V_{i,1}^{\widetilde{\bm \pi}_{i}^{\dagger,k,\mathrm{CE}},\sigma_i}(s_1^k)
-
V_{i,1}^{\pi^k,\sigma_i}(s_1^k).
\]
By Definition~\ref{def:robust_regret},
\[
\mathrm{Reg}^{\mathrm{CE}}(K)
\le
\max_{i\in\mathcal M}
\sum_{k=1}^K
\mathrm{Gap}_{i,k}^{\mathrm{CE}}.
\]
Again, if the CE regret is defined as a sum over agents, the same argument gives
an additional factor \(m\).

We decompose
\[
\mathrm{Gap}_{i,k}^{\mathrm{CE}}
=
A_{i,k}^{\mathrm{CE}}
+
B_{i,k}^{\mathrm{CE}}
+
C_{i,k}^{\mathrm{CE}},
\]
where
\begin{align*}
A_{i,k}^{\mathrm{CE}}
&=
V_{i,1}^{\widetilde{\bm \pi}_{i}^{\dagger,k,\mathrm{CE}},\sigma_i}(s_1^k)
-
\mathbb E_{\widetilde{\bm \pi}_{i}^{\dagger,k,\mathrm{CE}}}[\widehat U_i^k],\\
B_{i,k}^{\mathrm{CE}}
&=
\mathbb E_{\widetilde{\bm \pi}_{i}^{\dagger,k,\mathrm{CE}}}[\widehat U_i^k]
-
\mathbb E_{\pi^k}[\widehat U_i^k],\\
C_{i,k}^{\mathrm{CE}}
&=
\mathbb E_{\pi^k}[\widehat U_i^k]
-
V_{i,1}^{\pi^k,\sigma_i}(s_1^k).
\end{align*}

\paragraph{Key step: surrogate CE inequality.}
Since \(\pi^k\) is a CE of the surrogate game \(\widehat{\bm \Gamma}^k\),
Lemma~\ref{lem:surrogate_ne_avg} gives
\[
\mathbb E_{\pi^k}[\widehat U_i^k]
\ge
\mathbb E_{\widetilde{\bm \pi}_{i}^{\dagger,k,\mathrm{CE}}}[\widehat U_i^k].
\]
Therefore,
\[
B_{i,k}^{\mathrm{CE}}\le 0.
\]

\paragraph{Bounding \(A_{i,k}^{\mathrm{CE}}\) and \(C_{i,k}^{\mathrm{CE}}\).}
The bounds on \(A_{i,k}^{\mathrm{CE}}\) and \(C_{i,k}^{\mathrm{CE}}\) are
identical to the Nash case. The term \(A_{i,k}^{\mathrm{CE}}\) is handled by
Lemma~\ref{lem:benchmark-surrogate} applied to the strategy-modification
deviation \(\widetilde{\bm \pi}_{i}^{\dagger,k,\mathrm{CE}}\), followed by
Lemma~\ref{lem:robust_optimal_concentration}. The term \(C_{i,k}^{\mathrm{CE}}\)
is the same played-policy term and is controlled by the robust MADC inequality
and Lemma~\ref{lem:robust_A1_corrected}. Hence, using \(B_i\ge 1\) for each fixed agent \(i\), we have
\[
\sum_{k=1}^K
\mathrm{Gap}_{i,k}^{\mathrm{CE}}
\le
\mathcal O\!\left(
d_{i,\mathrm{MADC}}^{\mathrm{rob}}\sqrt K
+
H d_{i,\mathrm{MADC}}^{\mathrm{rob}}
+
H B_i^2\sqrt{K\iota_{\mathcal F, \mathcal G}^\delta}
+
\varepsilon_{\mathrm{dual}}^{\mathrm{approx}}
\right).
\]
Maximizing over \(i\) gives
\[
\mathrm{Reg}^{\mathrm{CE}}(K)
\le
\mathcal O\!\left(
d_{\mathrm{MADC}}^{\mathrm{rob},\max}\sqrt K
+
H d_{\mathrm{MADC}}^{\mathrm{rob},\max}
+
H(B_\phi^{\max})^2\sqrt{K\iota_{\mathcal F, \mathcal G}^\delta}
+
\varepsilon_{\mathrm{dual}}^{\mathrm{approx}}
\right),
\]
again up to a factor \(m\) under a sum-over-agents regret definition.

Combining the NE, CCE, and CE cases proves the theorem for
\(\mathsf{Eq}\in\{\mathrm{NE},\mathrm{CCE},\mathrm{CE}\}\).
\end{proof}

\subsection{Regret bound under $\chi^2$ and KL}
\label{app:cor_sample_complexity_phi}

\paragraph{Special cases: $\chi^2$, and KL}
For concreteness, we recall the resulting one-dimensional variational forms for three choices frequently used in robust RL-- TV, $\chi^2$ and KL; detailed derivations can be found in
\citep{AnnalsStat2022_TheoreticalUnderstandingRMDP_Yang, ICML2025_OnlineDRMDPSampleComplexity_He}. Here, we focus on $\chi$ and KL divergence case, and fro TV-divergence refer to Appendix \ref{app:TV_regret_bound}. Under the $\mathcal{S} \times \mathcal{A}$-rectangularity assumption and eq. \ref{eq:dual_rob_bellman_operator}, the robust expectation for any $V:\mathcal{S}\to[0,H]$ and $P^{\star}_h$ admits the following equivalent forms:

\begin{itemize}

\item \textbf{$\chi^2$-divergence} ($\phi(t)=(t-1)^2$).
One obtains a variance-sensitive form:
\begin{align}
\label{eq:dual_chi}
\mathbb{E}_{\mathcal{P}_{i,h,\chi^2}^{\sigma_i}(s, {\bm a})}[f]
=
&-\inf_{\nu \in [0,H]}
\bigg\{\sqrt{\sigma\,\mathrm{Var}_{P_h^\star(\cdot|s,a)}\big(\nu-\max_{a'}f(s',a')\big)_{+}}\nonumber\\
&\qquad + \mathbb E_{s'\sim \mathbb{P}^\star_h(\cdot|s,a)}\big[\max_{a'}f(s',a')-\nu\big]_{+}
\bigg\}.
\end{align}

\item \textbf{KL-divergence} ($\phi(t)=t\log t$).
The robust expectation can be written as
\begin{align}
\label{eq:dual_KL}
\mathbb{E}_{\mathcal{P}_{i,h,\mathrm{KL}}^{\sigma_i}(s, {\bm a})}[f]
=
-\inf_{\nu \in [\underline{\nu},\,H]}
\left\{\nu \log\!\Big( \mathbb E_{s'\sim \mathbb{P}^\star_h(\cdot|s,a)}\big[\exp\{-\max_{a'}f(s',a')/\nu\}\big]\Big)+\nu\sigma\right\},
\end{align}
where $\underline{\nu}>0$ is a regularity bound on the optimal dual variable, as commonly assumed in
\citep{NeuRIPS2023_DoublePessimismDROfflineRL_Blanchet, ICML2025_OnlineDRMDPSampleComplexity_He}.
\end{itemize}

\begin{cor}
\label{cor:sample_complexity_phi}
Consider the same setup as Theorem~\ref{thm:main-robust-regret}. 
Define \(
\sigma_{\max}:=\max_{i\in\mathcal M}\sigma_i.
\)
Then, with probability at least \(1-\delta\),  \Algoname  obtains an \(\varepsilon\)-robust-\(\mathsf{Eq}\) equilibrium with sample complexity of 
\[
T=KH \asymp
\begin{cases}

\max\left\{H^2\bigl(d_{\mathrm{MADC}}^{\mathrm{rob},\max}\bigr)^2\big/\varepsilon^2,
\;H^7 (1+\sqrt{\sigma_{\max}})^4
\iota_{\mathcal F,\mathcal G}^{\delta}\big/\varepsilon^2
\right\},
& \phi=\chi^2,
\\[3ex]
\max\left\{H^2\bigl(d_{\mathrm{MADC}}^{\mathrm{rob},\max}\bigr)^2\big/\varepsilon^2,
\;H^7 \max\{1,\sigma_{\max}^4\}
\iota_{\mathcal F,\mathcal G}^{\delta}\big/\varepsilon^2
\right\},
& \phi=\mathrm{KL}.
\end{cases}
\]
\end{cor}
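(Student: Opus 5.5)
The plan is to derive Corollary~\ref{cor:sample_complexity_phi} from Theorem~\ref{thm:main-robust-regret} in three steps. First, bound the dual-loss envelope $B_\phi(\sigma_i)$ for $\chi^2$ and KL. Second, substitute that bound into the regret bound. Third, convert regret into a sample-complexity statement by the online-to-batch argument of \citep{NeuRIPS2001_OnlineLearningAlgo_Cesa}.

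\textbf{Step 1 (the envelope constants).} This is the only genuinely new ingredient and the step I expect to be the main obstacle. I would use the one-dimensional variational forms \eqref{eq:dual_chi} and \eqref{eq:dual_KL} to exhibit an admissible compact set $\Theta_\phi$ as in Assumption~\ref{ass:multiplier-range-TV}, with values $V\in[0,H]$, and then bound the pointwise integrand $|\mathrm{Loss}_{i,h,\phi}|$ uniformly on it.
\begin{itemize}
\item For $\chi^2$, take $\nu\in[0,H]$. The hinge term is at most $H$, and the variance term is at most $\sqrt{\sigma_i}\,H$ because the variance of a quantity in $[0,H]$ is at most $H^2$. This gives $B_{\chi^2}(\sigma_i)\lesssim H(1+\sqrt{\sigma_i})$.
\item For KL, take $\nu\in[\underline\nu,H]$. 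The log-moment term is at most $H$ in absolute value, since $\nu\log\mathbb E[e^{-V/\nu}]\in[-H,0]$, and the penalty term is at most $\nu\sigma_i\le H\sigma_i$. This gives $B_{\mathrm{KL}}(\sigma_i)\lesssim H\max\{1,\sigma_i\}$.
\end{itemize}
Squaring and taking the maximum over agents, $(B_\phi^{\max})^2$ is of order $H^2(1+\sqrt{\sigma_{\max}})^2$ or $H^2\max\{1,\sigma_{\max}\}^2$. The stated factors $(1+\sqrt{\sigma_{\max}})^4$ and $\max\{1,\sigma_{\max}^4\}$ are the squares of these. I expect they arise when the regret bound is squared in the inversion in Step 3.

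\textbf{Step 2 (plug in).} By Theorem~\ref{thm:main-robust-regret}, $\mathrm{Reg}(K)\lesssim d\sqrt K + Hd + H\,B^2\sqrt{K\iota}+\varepsilon^{\rm approx}_{\rm dual}$, writing $d=d^{\mathrm{rob},\max}_{\mathrm{MADC}}$, $B=B^{\max}_\phi$ and $\iota=\iota^{\delta}_{\mathcal F,\mathcal G}$. With the Step~1 bound, the robust estimation term becomes $H^3\Psi_\phi(\sigma)^{1/2}\sqrt{K\iota}$, where $\Psi_\phi$ is the divergence factor from Table~\ref{tab:comparison}. I would treat $\varepsilon^{\rm approx}_{\rm dual}$ as negligible, i.e.\ realizable $\mathcal G$ or absorbed into the target accuracy. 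The burn-in term $Hd$ is lower order once $K\gtrsim H^2$.

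\textbf{Step 3 (online-to-batch).} Outputting the uniform mixture $\bar\pi=\frac1K\sum_k\pi^k$ for CCE/CE, or a uniformly drawn $\pi^k$ for NE, gives an average gap of at most $\mathrm{Reg}(K)/K$, up to a standard martingale term for the random initial states. It therefore suffices that $d/\sqrt K\le\varepsilon/2$ and $H^3\Psi_\phi^{1/2}\sqrt{\iota/K}\le\varepsilon/2$. That is, $K\gtrsim\max\{d^2/\varepsilon^2,\;H^6\Psi_\phi\,\iota/\varepsilon^2\}$. Multiplying by $H$ to get $T=KH$ yields $H(d^2)/\varepsilon^2$ and $H^7\Psi_\phi\iota/\varepsilon^2$. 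The paper states $H^2d^2$ for the first term, which I would match by carrying the $H$ hidden in the MADC term. A union bound keeps the overall probability at $1-\delta$.

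Throughout, the bookkeeping of powers of $H$ and $\sigma$ is the place where I would double-check against the $\Psi_\phi$ conventions of Table~\ref{tab:comparison}.
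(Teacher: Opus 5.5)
Your proposal is correct and follows essentially the paper's route: the online-to-batch bound $\mathrm{Gap}(\pi^{\mathrm{out}})\le \mathrm{Reg}(K)/K$, the regret bound of Theorem~\ref{thm:main-robust-regret} with $\varepsilon_{\mathrm{dual}}^{\mathrm{approx}}=0$, solving for $K$ in each of the three terms, and the envelopes $B_{\chi^2}(\sigma_i)=\mathcal O(H(1+\sqrt{\sigma_i}))$ and $B_{\mathrm{KL}}(\sigma_i)=\mathcal O(H\max\{1,\sigma_i\})$. The paper only differs in order, solving for $K$ first with a generic envelope to get $H^3(B_\phi^{\max})^4\iota_{\mathcal F,\mathcal G}^{\delta}/\varepsilon^2$ and substituting the envelopes afterwards.

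One explanation needs correcting. The paper's $H^2(d_{\mathrm{MADC}}^{\mathrm{rob},\max})^2$ does not come from an $H$ hidden in the MADC; it comes from crudely absorbing the burn-in requirement $K\gtrsim H d_{\mathrm{MADC}}^{\mathrm{rob},\max}/\varepsilon$ (i.e.\ $T\gtrsim H^2 d_{\mathrm{MADC}}^{\mathrm{rob},\max}/\varepsilon$). Your sharper $H(d_{\mathrm{MADC}}^{\mathrm{rob},\max})^2/\varepsilon^2$ is valid, because the second requirement already forces $K\ge H^2$ and so makes the burn-in term lower order. It therefore implies the stated bound directly, with nothing to "match".
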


\begin{proof}
We provide the proof for the robust NE case. The CCE and CE cases follow
analogously by replacing the NE gaps with the corresponding equilibrium gaps
and using the surrogate CCE/CE deviation inequalities.

For each agent \(i\in\mathcal M\) and episode \(k\in[K]\), define the
per-episode robust NE gap
\[
\mathrm{Gap}^{\mathrm{NE}}_{i,k}
:=
V^{\dagger,\pi_k^{-i},\sigma_i}_{i,1}(s_1^k)
-
V^{\pi^k,\sigma_i}_{i,1}(s_1^k).
\]
Recall that
\(
\mathrm{Reg}^{\mathrm{NE}}(K)
:=
\max_{i\in\mathcal M}
\sum_{k=1}^K
\mathrm{Gap}^{\mathrm{NE}}_{i,k}.
\)
Let the output policy be the uniform mixture
\(
\pi^{\mathrm{out}}
:=
\frac{1}{K}\sum_{k=1}^K \pi^k .
\)

\medskip
\noindent
\textbf{Step 1: Relating output gap to cumulative regret.}
Fix any agent \(i\in\mathcal M\). By the standard online-to-batch averaging
argument, the robust gap of the uniform output policy is bounded by the average
of the per-episode robust fogaps:
\[
\mathrm{Gap}^{\mathrm{NE}}_i(\pi^{\mathrm{out}})
\le
\frac{1}{K}
\sum_{k=1}^K
\mathrm{Gap}^{\mathrm{NE}}_{i,k}.
\]
Taking the maximum over agents gives
\[
\mathrm{Gap}^{\mathrm{NE}}(\pi^{\mathrm{out}})
=
\max_{i\in\mathcal M}
\mathrm{Gap}^{\mathrm{NE}}_i(\pi^{\mathrm{out}})
\le
\frac{1}{K}
\mathrm{Reg}^{\mathrm{NE}}(K).
\]

\medskip
\noindent
\textbf{Step 2: Applying the revised regret bound.}
By Theorem~\ref{thm:main-robust-regret}, with
\(\varepsilon_{\mathrm{dual}}^{\mathrm{approx}}=0\), with probability at least
\(1-\delta\),
\[
\mathrm{Reg}^{\mathrm{NE}}(K)
\le
C\left(
d_{\mathrm{MADC}}^{\mathrm{rob},\max}\sqrt K
+
H d_{\mathrm{MADC}}^{\mathrm{rob},\max}
+
H(B_\phi^{\max})^2
\sqrt{K\iota_{\mathcal F,\mathcal G}^{\delta}}
\right),
\]
where \(C>0\) is a universal constant,
\[
d_{\mathrm{MADC}}^{\mathrm{rob},\max}
:=
\max_{i\in\mathcal M}
d_{i,\mathrm{MADC}}^{\mathrm{rob}},
\qquad
B_\phi^{\max}
:=
\max_{i\in\mathcal M}
B_\phi(\sigma_i).
\]
Therefore,
\[
\mathrm{Gap}^{\mathrm{NE}}(\pi^{\mathrm{out}})
\le
C\left(
\frac{d_{\mathrm{MADC}}^{\mathrm{rob},\max}}{\sqrt K}
+
\frac{H d_{\mathrm{MADC}}^{\mathrm{rob},\max}}{K}
+
H(B_\phi^{\max})^2
\sqrt{
\frac{\iota_{\mathcal F,\mathcal G}^{\delta}}{K}}
\right).
\]

\medskip
\noindent
\textbf{Step 3: Choosing \(K\).}
To ensure
\(\mathrm{Gap}^{\mathrm{NE}}(\pi^{\mathrm{out}})\le\varepsilon\), it is
sufficient that
\[
\frac{d_{\mathrm{MADC}}^{\mathrm{rob},\max}}{\sqrt K}
\lesssim \varepsilon,
\qquad
\frac{H d_{\mathrm{MADC}}^{\mathrm{rob},\max}}{K}
\lesssim \varepsilon,
\qquad
H(B_\phi^{\max})^2
\sqrt{
\frac{\iota_{\mathcal F,\mathcal G}^{\delta}}{K}}
\lesssim \varepsilon.
\]
Equivalently,
\[
K
\gtrsim
\max\left\{
\frac{(d_{\mathrm{MADC}}^{\mathrm{rob},\max})^2}{\varepsilon^2},
\frac{H d_{\mathrm{MADC}}^{\mathrm{rob},\max}}{\varepsilon},
\frac{H^2(B_\phi^{\max})^4
\iota_{\mathcal F,\mathcal G}^{\delta}}{\varepsilon^2}
\right\}.
\]
Multiplying by \(H\), the sample complexity \(T=KH\) satisfies
\[
T
=
\mathcal O\!\left(
\max\left\{
\frac{H(d_{\mathrm{MADC}}^{\mathrm{rob},\max})^2}{\varepsilon^2},
\frac{H^2 d_{\mathrm{MADC}}^{\mathrm{rob},\max}}{\varepsilon},
\frac{H^3(B_\phi^{\max})^4
\iota_{\mathcal F,\mathcal G}^{\delta}}{\varepsilon^2}
\right\}
\right).
\]
Since \(H\ge1\), \(d_{\mathrm{MADC}}^{\mathrm{rob},\max}\ge1\), and
\(\varepsilon\in(0,1]\), the first two terms are jointly dominated by
\[
\frac{H^2(d_{\mathrm{MADC}}^{\mathrm{rob},\max})^2}{\varepsilon^2}.
\]
Thus, it is enough to take
\[
T
=
\mathcal O\!\left(
\max\left\{
\frac{H^2(d_{\mathrm{MADC}}^{\mathrm{rob},\max})^2}{\varepsilon^2},
\frac{H^3(B_\phi^{\max})^4
\iota_{\mathcal F,\mathcal G}^{\delta}}{\varepsilon^2}
\right\}
\right).
\]
Under this choice, \(\mathrm{Gap}^{\mathrm{NE}}(\pi^{\mathrm{out}})\le
\varepsilon\), so \(\pi^{\mathrm{out}}\) is an
\(\varepsilon\)-robust NE policy.

\medskip
\noindent
\textbf{Step 4: Divergence-specific sample-complexity bounds.}
Finally, substitute the divergence-specific bounds from Assumption~\ref{ass:multiplier-range-TV} and by using \eqref{eq:dual_chi} (for $\chi^2$) and \eqref{eq:dual_KL} (for KL), we get
\[
B_{\chi^2}(\sigma_i)
=
\mathcal O\!\left(
H(1+\sqrt{\sigma_i})
\right),
\quad \text{and} \quad 
B_{\mathrm{KL}}(\sigma_i)
=
\mathcal O\!\left(
H\max\{1,\sigma_i\}
\right).
\]
For heterogeneous radii, define
\[
\sigma_{\min}:=\min_{i\in\mathcal M}\sigma_i,
\qquad
\sigma_{\max}:=\max_{i\in\mathcal M}\sigma_i.
\]
Then
\[
B_{\chi^2}^{\max}
=
\mathcal O\!\left(
H(1+\sqrt{\sigma_{\max}})
\right), \quad \text{and} \quad 
B_{\mathrm{KL}}^{\max}
=
\mathcal O\!\left(
H\max\{1,\sigma_{\max}\}
\right).
\]

Substituting these bounds into
\[
T
=
\mathcal O\!\left(
\max\left\{
\frac{H^2(d_{\mathrm{MADC}}^{\mathrm{rob},\max})^2}{\varepsilon^2},
\frac{H^3(B_\phi^{\max})^4
\iota_{\mathcal F,\mathcal G}^{\delta}}{\varepsilon^2}
\right\}
\right)
\]
gives
\[
\phi=\chi^2:\qquad
T
=
\mathcal O\!\left(
\max\left\{
\frac{H^2(d_{\mathrm{MADC}}^{\mathrm{rob},\max})^2}{\varepsilon^2},
\frac{H^7
(1+\sqrt{\sigma_{\max}})^4
\iota_{\mathcal F,\mathcal G}^{\delta}}
{\varepsilon^2}
\right\}
\right),
\]
and
\[
\phi=\mathrm{KL}:\qquad
T
=
\mathcal O\!\left(
\max\left\{
\frac{H^2(d_{\mathrm{MADC}}^{\mathrm{rob},\max})^2}{\varepsilon^2},
\frac{H^7
\max\{1,\sigma_{\max}^4\}
\iota_{\mathcal F,\mathcal G}^{\delta}}
{\varepsilon^2}
\right\}
\right).
\]

The CCE and CE cases follow identically from the same online-to-batch argument
using the robust CCE and robust CE regret bounds from
Theorem~\ref{thm:main-robust-regret}. Hence, under the stated choice of \(T\),
the output policy is an \(\varepsilon\)-robust-\(\mathsf{Eq}\) policy for
\(\mathsf{Eq}\in\{\mathrm{NE},\mathrm{CCE},\mathrm{CE}\}\).
\end{proof}

\section{Theoretical Guarantee of TV-divergence Uncertainty Set}
\label{app:TV_regret_bound}
Focusing on DRMGs with TV-distance uncertainty sets, \citep{Arxiv2024_DRORLwithInteractiveData_Lu,Arxiv2024_UpperLowerDRRL_Liu}. showed that efficient online learning is achievable under suitable assumptions; in particular, \citep{liu2024distributionally} introduced the following fail-states assumption. 
\begin{assu}[Failure States]
\label{ass:vanisin_minimal}
For any agent $i \in \mathcal M$, there exists an (agent-specified) set of failure states $\mathcal{S}_f^i\subseteq\mathcal{S}$, such that $r_i(s,\bm{a})=0$,  and $P^\star_h(s'|s,\bm{a})=0$, $\forall \bm{a}\in\mathcal{A},\forall s\in\mathcal{S}^i_f,\forall s'\notin\mathcal{S}^i_f$. 
\end{assu}
The paper \citep{ICML2025_OnlineDRMDPSampleComplexity_He} highlights that existing efficient learning results are tailored to TV-distance uncertainty sets and do not extend to more general $\phi$-divergence settings (e.g., KL or $\chi^2$). In these broader settings, key structural properties break down, making online learning in robust MDPs fundamentally challenging without additional assumptions. To address this, the authors introduce an intrinsic difficulty measure based on the ratio between nominal and worst-case visitation distributions, and assume this ratio is bounded (polynomially), enabling tractable analysis. Motivated by this, we apply the following definition to tackle the issue of the TV case, as follows:
\begin{defi}[Robust-to-nominal concentrability]
\label{ass:robust-nominal-concentrability}
For every agent \(i\in\mathcal M\), stage \(h\in[H]\), episode \(k\in[K]\), and
policy \(\pi\in\Pi^{\mathrm{pur}}\), let \(\nu_{i,h}^{\pi}\) denote the
stage-\(h\) state--joint-action occupancy induced by \(\pi\) under the
worst-case transition kernels attaining the robust Bellman recursion for agent
\(i\). Let
\(
\bar\mu_h^{k-1}
:=
\frac{1}{k-1}\sum_{j=1}^{k-1}\mu_h^{\pi^j}
\)
denote the historical nominal data-generating mixture distribution at stage
\(h\), with the convention that the first episode is handled separately. We denote the  finite constant \(C_{\mathrm{cov}}\ge 1\) as the smallest value such that
\(
\left\|
\frac{d\nu_{i,h}^{\pi}}{d\bar\mu_h^{k-1}}
\right\|_{\infty}
\le
C_{\mathrm{cov}} .
\)
This value exists due to the support inclusion Definition \ref{def:f_divergence_uncertainty}.
\end{defi}

\begin{thm}[Unified robust equilibrium-regret bound for {\Algoname}]
\label{thm:main-robust-regret-TV}
Consider the same setup as in Theorem~\ref{thm:main-robust-regret}. Then, under Definition  \ref{ass:robust-nominal-concentrability} with probability at least $1-\delta$, the corresponding robust equilibrium regret for TV-divergence set satisfies
\begin{align*}
\mathrm{Reg}^{\mathsf{Eq}}_{\mathrm{TV}}(K) \le \mathcal O\!\left(
d_{\mathrm{MADC}}^{\mathrm{rob},\max}\sqrt K
+
H d_{\mathrm{MADC}}^{\mathrm{rob},\max}
+
C_{\mathrm{cov}}H(B_\phi^{\max})^2\sqrt{K\iota_{\mathcal F, \mathcal G}^\delta}
+
\varepsilon_{\mathrm{dual}}^{\mathrm{approx}}
\right).
\end{align*}
Consequently, for \(\varepsilon^{\mathrm{approx}}_{\mathrm{dual}}=0\), and assuming Assumption~\ref{ass:vanisin_minimal} for TV-divergence, with probability at least \(1-\delta\), the sample complexity of
{\AlgonameTV} to obtain an \(\varepsilon\)-robust-\(\mathsf{Eq}\) policy is
\(T=KH\), where
\[
T= \mathcal O\!\left(
\max\left\{H^2\bigl(d_{\mathrm{MADC}}^{\mathrm{rob},\max}\bigr)^2\big/\varepsilon^2,
\; C_{\mathrm{cov}}^2 H^7 (\min\{H,1/\sigma_{\min}\})^4
\iota_{\mathcal F,\mathcal G}^{\delta}\big/\varepsilon^2
\right\}
\right).
\]
\end{thm}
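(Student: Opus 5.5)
The plan is to re-run the proof of Theorem~\ref{thm:main-robust-regret} verbatim and change only the step where TV-specific structure enters. As before, I reduce $\mathrm{Reg}^{\mathsf{Eq}}_{\mathrm{TV}}(K)$ to $\max_i\sum_k \mathrm{Gap}^{\mathsf{Eq}}_{i,k}$. Each gap is split into the deviation term $A_{i,k}$, the surrogate-equilibrium term $B_{i,k}\le 0$ (from Lemma~\ref{lem:surrogate_ne_avg}, for NE/CCE/CE), and the played-policy term $C_{i,k}$. The $C_{i,k}$ bound uses Definition~\ref{def:robust_MADC} together with Lemma~\ref{lem:robust_A1_corrected}, followed by the choice $\beta\ge 2/\mu$. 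That bound is divergence-agnostic, since it only uses the abstract envelope $B_\phi(\sigma_i)$ from Assumption~\ref{ass:multiplier-range-TV}. It therefore carries over unchanged and contributes $d^{\mathrm{rob},\max}_{\mathrm{MADC}}\sqrt K + Hd^{\mathrm{rob},\max}_{\mathrm{MADC}} + HB_i^2\sqrt{K\iota}$.

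The TV-specific work sits in the deviation term $A_{i,k}$, i.e.\ in Lemma~\ref{lem:benchmark-surrogate} applied to $\widetilde{\bm\pi}^{\dagger,k}_i$. There one must compare the robust value of the deviation policy with the surrogate value built from plug-in fits. That comparison telescopes the plug-in Bellman error over $h$ under the worst-case occupancy $\nu^{\pi}_{i,h}$. Lemma~\ref{lem:phi_operator_approx}, however, only controls the error in $\|\cdot\|_{1,\bar\mu_h^{k-1}}$. I would change the measure with Definition~\ref{ass:robust-nominal-concentrability}:
\[
\mathbb E_{\nu^{\pi}_{i,h}}\bigl|\mathcal T^{\pi,\phi,\sigma_i}_{i,h}f-\mathcal T^{\pi,\phi,\sigma_i}_{\widehat g}f\bigr|\le C_{\mathrm{cov}}\bigl\|\mathcal T^{\pi,\phi,\sigma_i}_{i,h}f-\mathcal T^{\pi,\phi,\sigma_i}_{\widehat g}f\bigr\|_{1,\bar\mu_h^{k-1}}.
\]
Combined with Lemma~\ref{lem:phi_operator_approx} (union-bounded over $i,h,k$, which is absorbed into $\iota^\delta_{\mathcal F,\mathcal G}$), each stage contributes $C_{\mathrm{cov}}B_i\sqrt{\iota/(k-1)}+C_{\mathrm{cov}}\varepsilon^{\mathrm{approx}}_{\mathrm{dual}}$. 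Summing over $h$ and $k$ gives $C_{\mathrm{cov}}HB_i\sqrt{K\iota}$. I expect the same factor to appear when Lemma~\ref{lem:robust_optimal_concentration} is used to bound the benchmark discrepancy. Multiplying by $\beta=2/\sqrt K$ then produces $C_{\mathrm{cov}}HB_i^2\sqrt{K\iota}$. Because $C_{\mathrm{cov}}\ge 1$ and $B_i\ge 1$, all estimation terms collapse into $C_{\mathrm{cov}}H(B^{\max}_\phi)^2\sqrt{K\iota}$. Taking the maximum over $i$ then yields the stated regret bound.

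For the sample complexity, I follow the online-to-batch argument of Corollary~\ref{cor:sample_complexity_phi}. The output-policy gap is at most $\mathrm{Reg}/K$, so it suffices to require each term to be at most $\varepsilon$. This gives $K\gtrsim\max\{(d^{\mathrm{rob},\max}_{\mathrm{MADC}})^2/\varepsilon^2,\ C_{\mathrm{cov}}^2H^2(B^{\max}_\phi)^4\iota/\varepsilon^2\}$, with the $Hd/\varepsilon$ term dominated as before. Then $T=KH$. The last ingredient is the TV envelope. Under Assumption~\ref{ass:vanisin_minimal}, failure states with zero value allow $\nu$ to be pinned at the minimum value, so the dual variable lies in a compact range and the loss envelope satisfies $B_{\mathrm{TV}}(\sigma_i)=\mathcal O(H\min\{H,1/\sigma_i\})$. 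Its fourth power, times $H^3$, gives $H^7(\min\{H,1/\sigma_{\min}\})^4$.

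The main obstacle is the change-of-measure step. The worst-case occupancy $\nu^\pi_{i,h}$ is defined for each agent and each deviation policy, while the data mixture $\bar\mu^{k-1}_h$ is generated by the played policies. I therefore need Definition~\ref{ass:robust-nominal-concentrability} to hold uniformly over $\pi\in\Pi^{\mathrm{pur}}$ and over $k$, which is how it is stated. I also need the first episode, where $k-1=0$, to be handled separately, contributing an additive $\mathcal O(H)$ absorbed in the $Hd$ term. A second delicate point is justifying the $\mathcal O(H\min\{H,1/\sigma\})$ envelope, which I would obtain by bounding $|\lambda|$ by $\min\{H,1/\sigma\}$ through the failure-state structure.
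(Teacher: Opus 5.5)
Your route is the paper's route. You keep the decomposition and played-policy analysis of Theorem~\ref{thm:main-robust-regret}, and insert Definition~\ref{ass:robust-nominal-concentrability} as a change of measure from $\nu_{i,h}^{\pi}$ to $\bar\mu_h^{k-1}$ inside Lemma~\ref{lem:benchmark-surrogate}. You then merge terms via $B_\phi^{\max}\ge 1$ and $C_{\mathrm{cov}}\ge 1$, and finish with online-to-batch and $B_{\mathrm{TV}}(\sigma_i)=\mathcal O(H\min\{H,1/\sigma_i\})$.

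\textbf{The gap: the accounting in $K$.} Lemma~\ref{lem:robust_optimal_concentration} bounds the benchmark discrepancy \emph{per episode} by $\mathcal O(HB_i^2\sqrt{K\iota_{\mathcal F,\mathcal G}^\delta})$. Summing over $k\in[K]$ gives $\mathcal O(HB_i^2K^{3/2}\sqrt{\iota_{\mathcal F,\mathcal G}^\delta})$. Multiplying by $\beta=2/\sqrt K$ leaves $\mathcal O(HB_i^2K\sqrt{\iota_{\mathcal F,\mathcal G}^\delta})$, which is linear in $K$, not $C_{\mathrm{cov}}HB_i^2\sqrt{K\iota_{\mathcal F,\mathcal G}^\delta}$. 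Two other terms behave the same way:
\begin{itemize}
\item the per-episode residual $\beta HB_i^2\sqrt{K\Lambda_i}$ of Lemma~\ref{lem:benchmark-surrogate};
\item the played-policy term you treat as unchanged. Lemma~\ref{lem:robust_A1_corrected} carries a per-episode error $\mathcal O(HB_i^2\sqrt{K\iota_{\mathcal F,\mathcal G}^\delta})$, which after summing over $k$ and multiplying by $1/\mu=1/\sqrt K$ is again of order $HB_i^2K\sqrt{\iota_{\mathcal F,\mathcal G}^\delta}$.
\end{itemize}
The paper's own proof has the identical slip. It writes $\sum_k\widehat{\mathcal K}_i^{k-1}\le\mathcal O(HB_i^2K\sqrt{\iota_{\mathcal F,\mathcal G}^\delta})$ in \eqref{eq:Aik-Khat-sum}, and makes the same move in \eqref{eq:Aik-residual-cumulative} and \eqref{eq:Cik-pre-lemma5-bound}. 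Following it verbatim therefore reproduces the error rather than proving the stated rate.

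With the lemmas as stated, balancing $\mu^{-1}HB_i^2K^{3/2}\sqrt{\iota_{\mathcal F,\mathcal G}^\delta}$ against $\mu\, d_{i,\mathrm{MADC}}^{\mathrm{rob}}$ yields only order $K^{3/4}$. Recovering $\sqrt K$ needs two things:
\begin{itemize}
\item per-episode polylogarithmic bounds on the centered discrepancies, via a variance-aware Freedman argument as in the nominal MAMEX analysis;
\item a fast rate for the dual-operator error, so that $\sum_{j<k}\mathbb E_{\mu_h^{\pi^j}}[(\Delta_{i,h}^{k,\pi})^2]$ is polylogarithmic rather than $\mathcal O(B_i^2\sqrt{k\,\iota_{\mathcal F,\mathcal G}^\delta})$. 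The $L^1$ rate $\sqrt{\iota_{\mathcal F,\mathcal G}^\delta/n}$ of Lemma~\ref{lem:phi_operator_approx} cannot supply this.
\end{itemize}

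\textbf{Smaller points.}
\begin{itemize}
\item $C_{\mathrm{cov}}$ does not enter the benchmark discrepancy, which lives entirely on nominal historical data. In the paper it appears only in the worst-case-occupancy residual $\mathcal R_{i,k}^{\nu}(\pi)$. The final $C_{\mathrm{cov}}H(B_\phi^{\max})^2$ comes from merging $HB_i^2\sqrt{K\iota_{\mathcal F,\mathcal G}^\delta}$ with $C_{\mathrm{cov}}HB_i\sqrt{K\iota_{\mathcal F,\mathcal G}^\delta}$.
\item Your change of measure covers only the operator-error half of $\mathcal R_{i,k}^{\nu}(\pi)$. The fitting term $\mathbb E_{\nu_{i,h}^{\pi}}[(\mathcal T_{\widehat g_{i,h}^{k,\pi},i,h}^{\pi,\phi,\sigma_i}f_{i,h+1}^{k,\pi}-f_{i,h}^{k,\pi})_+]$ must also be transferred to $\bar\mu_h^{k-1}$. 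It then needs a least-squares generalization bound for the primal fit. Lemma~\ref{lem:phi_operator_approx}, which concerns the dual excess risk, does not give this, and the bound requires completeness with respect to the plug-in operator.
\item Summing your per-stage $C_{\mathrm{cov}}\varepsilon_{\mathrm{dual}}^{\mathrm{approx}}$ over $h$ and $k$ gives $C_{\mathrm{cov}}HK\varepsilon_{\mathrm{dual}}^{\mathrm{approx}}$, not the additive $\varepsilon_{\mathrm{dual}}^{\mathrm{approx}}$ in the statement. The paper obtains the latter only by ``choosing'' $\bar\epsilon_{\mathrm{dual}}^{2}$, which Assumption~\ref{ass:approx_dual_realizability} does not permit. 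This is immaterial for the sample-complexity part, where $\varepsilon_{\mathrm{dual}}^{\mathrm{approx}}=0$.
\end{itemize}
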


\begin{proof}
For the TV case, we first revise the proof of Lemma \ref{lem:benchmark-surrogate}, where we apply Definition \ref{ass:robust-nominal-concentrability} in \eqref{eq:lem7-op-bound-step3-final} and \eqref{eq:lem7-positive-residual-bound}, we have 
\begin{align}
\sum_{h=1}^H
\left\|
\mathcal T_{i,h}^{\pi,\phi,\sigma_i}
f_{i,h+1}^{k,\pi}
-
\mathcal T_{\widehat g_{i,h}^{k,\pi},i,h}^{\pi,\phi,\sigma_i}
f_{i,h+1}^{k,\pi}
\right\|_{1,\nu_{i,h}^{\pi}}
&\le C_{\mathrm{cov}}H B_\phi(\sigma_i)
\sqrt{
\frac{
\log\!\bigl(|\mathcal F||\mathcal G||\Pi^{\mathrm{pur}}|KHm/\delta\bigr)}
{|\mathcal D_h^{k-1}|\vee 1}}\nonumber\\
&\qquad \qquad +
C_{\mathrm{cov}}H\bar\epsilon_{\mathrm{dual}}^{2}.
\label{eq:lem7-op-bound-step3-final-TV}\\
\sum_{h=1}^H
\mathbb E_{\nu_{i,h}^{\pi}}
\left[
\left(
\mathcal T_{\widehat g_{i,h}^{k,\pi},i,h}^{\pi,\phi,\sigma_i}
f_{i,h+1}^{k,\pi}
-
f_{i,h}^{k,\pi}
\right)_+
\right]
&\le 
C_{\mathrm{cov}}H B_\phi(\sigma_i)
\sqrt{
\frac{
\log\!\bigl(|\mathcal F||\mathcal G||\Pi^{\mathrm{pur}}|KHm/\delta\bigr)}
{|\mathcal D_h^{k-1}|\vee 1}}\nonumber\\
&\qquad \qquad 
+
C_{\mathrm{cov}}H\bar\epsilon_{\mathrm{dual}}^{2}.\label{eq:lem7-positive-residual-bound-TV}
\end{align}
Substituting \eqref{eq:K-final-bound}, \eqref{eq:dual-switch-used}, \eqref{eq:lem7-op-bound-step3-final-TV} and
\eqref{eq:lem7-positive-residual-bound-TV} into
\eqref{eq:lem7-after-subtract}, and choosing
\(
\bar\epsilon_{\mathrm{dual}}^{1}
\le
\frac{\varepsilon_{\mathrm{dual}}^{\mathrm{approx}}}
{\beta H K B_\phi(\sigma_i)},
\quad
\bar\epsilon_{\mathrm{dual}}^{2}
\le
\frac{\varepsilon_{\mathrm{dual}}^{\mathrm{approx}}}
{C_{\mathrm{cov}}H},
\) 
we obtain the revised bound of Lemma \ref{lem:benchmark-surrogate} as 
\begin{align}
V_{i,1}^{\pi,\sigma_i}(s_1^k)
-
\widehat U_i^k(\pi)
\le
&\;
\beta
\widehat{\mathcal K}_i^{k-1}
\bigl(
f_i^\star;
\widehat g_i^{f_i^\star,\pi};
\pi
\bigr) +
\mathcal O\!\bigg(
\beta H B_\phi(\sigma_i)^2
\sqrt{
K\log\!\frac{|\mathcal F||\mathcal G||\Pi^{\mathrm{pur}}|KHm}{\delta}}
\nonumber\\
&\qquad\qquad
+
C_{\mathrm{cov}}H B_\phi(\sigma_i)
\sqrt{
\frac{
\log\!\bigl(|\mathcal F||\mathcal G||\Pi^{\mathrm{pur}}|KHm/\delta\bigr)}
{|\mathcal D_h^{k-1}|\vee 1}}
+
\varepsilon_{\mathrm{dual}}^{\mathrm{approx}}
\bigg).
\label{eq:benchmark-surrogate-clean-TV}
\end{align}

We now apply \eqref{eq:benchmark-surrogate-clean-TV} in the proof of Theorem \ref{thm:main-robust-regret}, particularly \eqref{eq:Aik-cor1}--\eqref{eq:Aik-final}, we get
\begin{align}
\sum_{k=1}^K A_{i,k}
\le
\mathcal O\!\left(
\beta H B_i^2 K\sqrt{\iota_{\mathcal F, \mathcal G}^\delta}
+ C_{\mathrm{cov}} H B_i\sqrt{K\iota_{\mathcal F, \mathcal G}^\delta}
+
\varepsilon_{\mathrm{dual}}^{\mathrm{approx}}
\right).
\label{eq:Aik-final-TV}
\end{align}
We use \eqref{eq:Aik-final-TV} and follow the proof lines \eqref{eq:Cik-expand}--\eqref{eq:regret-bound-check}. Therefore, we get the regret bound of TV-divergence as
\begin{align}
\label{eq:regret-bound-check-TV}
\mathrm{Reg}^{\mathrm{NE}}(K)
\le
\mathcal O\!\left(
d_{\mathrm{MADC}}^{\mathrm{rob},\max}\sqrt K
+
H d_{\mathrm{MADC}}^{\mathrm{rob},\max}
+
H(B_\phi^{\max})^2\sqrt{K\iota_{\mathcal F, \mathcal G}^\delta}
+
C_{\mathrm{cov}} HB_\phi^{\max}\sqrt{K\iota_{\mathcal F, \mathcal G}^\delta}
+
\varepsilon_{\mathrm{dual}}^{\mathrm{approx}}
\right).
\end{align}
Using the fact Since \(B_\phi^{\max}\ge 1\) and $C_{\mathrm{cov}}\geq 1$, we have
\begin{align}
\mathrm{Reg}^{\mathrm{NE}}(K)
\le
\mathcal O\!\left(
d_{\mathrm{MADC}}^{\mathrm{rob},\max}\sqrt K
+
H d_{\mathrm{MADC}}^{\mathrm{rob},\max}
+
C_{\mathrm{cov}}  H(B_\phi^{\max})^2\sqrt{K\iota_{\mathcal F, \mathcal G}^\delta}
+
\varepsilon_{\mathrm{dual}}^{\mathrm{approx}}
\right).
\label{eq:NE-final-bound-TV}
\end{align}
The robust CCE and CE regret bounds follow by the same argument, replacing the surrogate Nash deviation step with the CCE/CE deviation inequalities in the revised bound of
Lemma~\ref{lem:surrogate_ne_avg} (i.e. \eqref{eq:benchmark-surrogate-clean-TV}). Therefore, for
\(\mathsf{Eq}\in\{\mathrm{NE},\mathrm{CCE},\mathrm{CE}\}\),
\[
\mathrm{Reg}^{\mathsf{Eq}}(K)
\le\mathcal O\!\left(
d_{\mathrm{MADC}}^{\mathrm{rob},\max}\sqrt K
+
H d_{\mathrm{MADC}}^{\mathrm{rob},\max}
+
C_{\mathrm{cov}}  H(B_\phi^{\max})^2\sqrt{K\iota_{\mathcal F, \mathcal G}^\delta}
+
\varepsilon_{\mathrm{dual}}^{\mathrm{approx}}
\right).
\]

We now follow the proof lines for Corollary \ref{cor:sample_complexity_phi} for the TV case.  Under the $\mathcal{S} \times \mathcal{A}$-rectangularity assumption, Assumption \ref{ass:vanisin_minimal}, and eq. \ref{eq:dual_rob_bellman_operator}, the robust expectation for any $V:\mathcal{S}\to[0,H]$ and $P^{\star}_h$ admits the following equivalent form of TV-divergence uncertainty set ($\phi(t)=|t-1|$) as
\begin{align}
\label{eq:dual_TV}
\mathbb{E}_{\mathcal{P}_{i,h,\mathrm{TV}}^{\sigma_i}(s, {\bm a})}[f]
=
-\inf_{\nu \in [0,2H/\sigma]}
\left\{\mathbb E_{s'\sim \mathbb{P}^\star_h(\cdot|s,a)}\big[\nu-\max_{a'}f(s',a')\big]_{+}
+ (1-\sigma)\nu\right\}.
\end{align}

We follow the standard online-to-batch averaging argument \citep{NeuRIPS2001_OnlineLearningAlgo_Cesa}. We use \eqref{eq:dual_TV} for the proof of the Corollary \ref{cor:sample_complexity_phi} for TV case.  We apply $\varepsilon_{\mathrm{dual}}^{\mathrm{approx}}=0$ with the fact that \(
d_{\mathrm{MADC}}^{\mathrm{rob},\max}
:=
\max_{i\in\mathcal M}
d_{i,\mathrm{MADC}}^{\mathrm{rob}}\), and \(B_\phi^{\max}
:=
\max_{i\in\mathcal M}
B_\phi(\sigma_i).
\) and get the revised bound of $\mathrm{Gap}^{\mathrm{Eq}}(\pi^{\mathrm{out}})$ as
\[
\mathrm{Gap}^{\mathrm{NE}}(\pi^{\mathrm{out}})
\le
C\left(
\frac{d_{\mathrm{MADC}}^{\mathrm{rob},\max}}{\sqrt K}
+
\frac{H d_{\mathrm{MADC}}^{\mathrm{rob},\max}}{K}
+
C_{\mathrm{cov}}H(B_\phi^{\max})^2
\sqrt{
\frac{\iota_{\mathcal F,\mathcal G}^{\delta}}{K}}
\right).
\]
Using the facts \(H\ge1\), \(d_{\mathrm{MADC}}^{\mathrm{rob},\max}\ge1\), and
\(\varepsilon\in(0,1]\), we get
\[
T
=
\mathcal O\!\left(
\max\left\{
\frac{H^2(d_{\mathrm{MADC}}^{\mathrm{rob},\max})^2}{\varepsilon^2},
\frac{C^2_{\mathrm{cov}}H^3(B_\phi^{\max})^4
\iota_{\mathcal F,\mathcal G}^{\delta}}{\varepsilon^2}
\right\}
\right).
\]

Finally, substitute the divergence-specific bounds from Assumption~\ref{ass:multiplier-range-TV} and by using \eqref{eq:dual_TV} (for TV),
\[
B_{\mathrm{TV}}(\sigma_i)
=
\mathcal O\!\left(
H\min\{H,1/\sigma_i\}
\right).
\]
For heterogeneous radii, define
\(
\sigma_{\min}:=\min_{i\in\mathcal M}\sigma_i,
\qquad
\sigma_{\max}:=\max_{i\in\mathcal M}\sigma_i.
\)
Then
\[
B_{\mathrm{TV}}^{\max}
=
\max_{i\in\mathcal M}B_{\mathrm{TV}}(\sigma_i)
=
\mathcal O\!\left(
H\min\{H,1/\sigma_{\min}\}
\right),
\]
because \(\min\{H,1/\sigma_i\}\) is decreasing in \(\sigma_i\). 
Substituting these bounds into
\[
T
=
\mathcal O\!\left(
\max\left\{
\frac{H^2(d_{\mathrm{MADC}}^{\mathrm{rob},\max})^2}{\varepsilon^2},
\frac{C_{\mathrm{cov}}^2H^3(B_\phi^{\max})^4
\iota_{\mathcal F,\mathcal G}^{\delta}}{\varepsilon^2}
\right\}
\right)
\]
gives
\[
T
=
\mathcal O\!\left(
\max\left\{
\frac{H^2(d_{\mathrm{MADC}}^{\mathrm{rob},\max})^2}{\varepsilon^2},
\frac{
C_{\mathrm{cov}}^2H^7
(\min\{H,1/\sigma_{\min}\})^4
\iota_{\mathcal F,\mathcal G}^{\delta}}
{\varepsilon^2}
\right\}
\right).
\]
This completes the proof. \qedhere
\end{proof}

\section{Multi-Agent Robust Bellman Eluder Dimension}
\label{app:low_MADC_low_BE}

The work \citep{NeurIPS2021_BellmanEluderDim_Jin} defines the Bellman-Eluder dimension in the standard non-robust setup. We combine with the work of \citep{ghosh2025scaling} and extend to the DRMG setup.
Let $\mathcal{X}$ be a domain and $\Phi \subseteq (\mathcal{X} \to \mathbb{R})$ be a function class. Let $\Pi$ be a family of probability distributions over $\mathcal{X}$.
\begin{defi} [Distributional $\epsilon$-Dependence \citep{ghosh2025scaling}]
    A distribution $\mu \in \Pi$ is said to be $\epsilon$-dependent on a set $\{\mu_1,...,\mu_n\} \subseteq \Pi$ with respect to $\Xi$ if for all $\xi \in \Xi$,
    \begin{equation}
        \lvert \mathbb{E}_{\mu}[\xi] \rvert \le \epsilon + \sqrt{\sum_{i=1}^{n} \mathbb{E}_{\mu_{i}}[\xi]^{2}}. \label{eq:dist_eps_dep}
    \end{equation}
    Otherwise, $\mu$ is said to be $\epsilon$-independent of $\{\mu_1,...,\mu_n\}$.
    \label{def:dist_eps_dependence}
\end{defi}
\begin{defi} [Distributional Eluder (DE) Dimension \citep{ghosh2025scaling}]
    The distributional Eluder dimension of $\Xi$ with respect to $\Pi$ at scale $\epsilon > 0$, denoted as $\dim_{\text{DE}}(\Xi,\Pi,\epsilon)$, is the length of the longest sequence $(x_1,...,x_n) \in \Pi$ such that for each $j \in [n]$ there exists $\epsilon^{\prime} > \epsilon$ such that $x_{j}$ is $\epsilon^{\prime}$-independent of $\{x_1, ... , x_{j-1}\}$. \label{def:dist_DE}
\end{defi}
To formally define the robust Bellman-Eluder (BE) dimension, we must first define the robust Bellman residual class per agent at stage $h$ as,
\begin{equation}
    \mathrm{Res}^{\pi}_{i}(\mathcal{F}_{i}) \coloneqq \{f_h - \mathcal{T}_{i,h}^{\pi, \phi,\sigma_i} f_{h+1} : f \in \mathcal{F}_i\}. \label{eq:robust_bellman_residual}
\end{equation}
Now, we define the robust BE as follows:
\begin{defi} [Robust Multi-Agent Bellman Eluder Dimension]
\label{def:robust_BE_dim}
   Let $\Pi=\{\Pi_h\}_{h=1}^H$ be the set of $H$ classes where $\Pi_h$ is the distributions over $\mathcal S \times \mathcal A$ at horizon $h$.  We then define the robust Bellman-Eluder dimension of $\mathcal{F}_i$ by
    \begin{align}
    d_{\text{MABE}}^{\text{rob},\max} (\mathcal{F}, \Pi, \epsilon) &\coloneqq \max_{i \in \mathcal M} d_{i,\text{MABE}}^{\text{rob}} (\mathcal{F}_i, \Pi, \epsilon), \nonumber\\
        \text{where}  \quad d_{i,\text{MABE}}^{\text{rob}} (\mathcal{F}_i, \Pi, \epsilon) &\coloneq \max_{h \in H}\Big\{ dim_{\text{DE}} \Big(\bigcup_{\pi_h \in \Pi^{\mathrm{Pur}}}\mathrm{Res}^{\pi}_{i}(\mathcal{F}_{i}),  \Pi_h, \epsilon\Big)\Big\},
    \end{align}
    where $\mathrm{Res}_{i}(\mathcal{F}_{i}) $ is the robust Bellman residual class and $dim_{\text{DE}}(\Xi,\Pi,\epsilon)$ denote the DE dimension of a function class $\Xi$ with respect to a distribution family $\Pi$ at scale $\epsilon$ \citep{NeurIPS2021_BellmanEluderDim_Jin}.
\end{defi}
Following \citep{Xiong2023SampleEfficientMR}, we define a robust version of the Multi-Agent Decoupling Coefficient in Definition \ref{def:robust_MADC}.

\begin{thm}[Low Robust MABE $\Rightarrow$ Low Robust MADC]
\label{thm:robust_thm54}

Fix an agent $i\in\mathcal M$. Let $K$ be any integer and $\mathcal F$ be a hypothesis class under the model-free setting as given in Section 3 as
\begin{align}
\label{eq:functional_class}
\mathcal{F} = \bigotimes_{i \in \mathcal{M}} \mathcal{F}_{i}, \quad \text{where}\quad \mathcal F_{i}=\bigotimes_{h=1}^H \mathcal F_{i,h},
\end{align}
where \eqref{eq:functional_class} is a class of Q-functions that satisfies Assumption \ref{ass:completeness}. For $\varepsilon>0$, we denote $d_{\text{MABE}}^{\text{rob},\max} (\mathcal{F}, \Pi, \epsilon) \coloneqq \max_{i \in \mathcal M} d_{i,\text{MABE}}^{\text{rob}} (\mathcal{F}_i, \Pi, \epsilon)$ as the robust MABE dimension at scale $\varepsilon$, as defined in Definition \ref{def:robust_BE_dim} for the function class $\mathcal F$. Let \(K\ge 2\). Let $d^{\mathrm{rob}}_{i,\mathrm{MADC}}$ denote the robust MADC as defined in Definition \ref{def:robust_MADC} by setting $\varepsilon=1/K$ and we denote $d_{\mathrm{MADC}}^{\mathrm{rob},\max}= \max\limits_{i \in \mathcal M}d_{i,\mathrm{MADC}}^{\mathrm{rob}}$. Then, with the discrepancy function, the robust MADC satisfies
\begin{align}
d_{\mathrm{MADC}}^{\mathrm{rob},\max}
:=
\max\!\left\{c\,H\,d_{\text{MABE}}^{\text{rob},\max} (\mathcal{F}, \Pi,1/K)\log K,\ 1\right\} = \mathcal O\!\big(H d_{\text{MABE}}^{\text{rob},\max} (\mathcal{F}, \Pi,1/K)\log K\big).
\label{eq:robust_dmadc_bound}
\end{align}
\end{thm}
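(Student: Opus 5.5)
The plan follows the route by which \citep{Xiong2023SampleEfficientMR} derive low MADC from low MABE, and ultimately the pigeonhole argument of \citep{NeurIPS2021_BellmanEluderDim_Jin}. The robust operator $\mathcal T_{i,h}^{\pi,\phi,\sigma_i}$ replaces the nominal one, and one extra step handles the worst-case kernel. Fix agent $i$, write $f^k:=f_i^k$, and abbreviate $\xi_h^k:=f^k_h-\mathcal T^{\pi^k,\phi,\sigma_i}_{i,h}f^k_{h+1}\in \mathrm{Res}^{\pi^k}_i(\mathcal F_i)$.

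\textbf{Step 1 (robust simulation lemma, one-sided).} I will show
\[
V^{\pi^k,f^k}_{i,1}(s_1^k)-V^{\pi^k,\sigma_i}_{i,1}(s_1^k)\le \sum_{h=1}^H \mathbb E_{(s,\bm a)\sim \nu^{k}_{h}}\big[\xi_h^k(s,\bm a)\big].
\]
Here $\nu^k_h$ is the stage-$h$ occupancy of $\pi^k$ under the kernels attaining the worst case in the robust recursion for the \emph{true} value $V^{\pi^k,\sigma_i}_{i,\cdot}$. The key inequality is $\inf_{P}P V^{f}-\inf_P PV\le P^{\rm w}_V(V^f-V)$, where $P^{\rm w}_V$ is the minimizer for $V$; it holds because the infimum for $V^f$ is at most its value at $P^{\rm w}_V$. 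Writing $f_h-Q^{\pi,\sigma}_h=\xi_h+(\mathcal T f_{h+1}-\mathcal T Q_{h+1})$ and applying this inequality, then averaging over $\bm a\sim\pi^k_h$ and telescoping over $h$, gives the display. Rectangularity (Definition~\ref{def:f_divergence_uncertainty}) makes the stagewise minimizers compose into a single kernel sequence.

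\textbf{Step 2 (Eluder pigeonhole, per stage).} For each $h$ I take $\Pi_h$ in Definition~\ref{def:robust_BE_dim} to contain both the nominal occupancies $\mu^{\pi}_h$ and the worst-case occupancies $\nu^{\pi}_h$. I then apply the standard lemma (Lemma 41 of \citep{NeurIPS2021_BellmanEluderDim_Jin}) to the sequence $\{\xi^k_h\}$ with distributions $\{\nu^k_h\}$, at scale $\omega=1/K$ and with $d=d^{\rm rob}_{i,\rm MABE}(\mathcal F_i,\Pi,1/K)$. The residuals are bounded by $O(H)$, since $f\in[0,H]$. This yields
\[
\sum_{k=1}^K \mathbb E_{\nu^k_h}[\xi^k_h]\le O\Big(\sqrt{d\log K\,\textstyle\sum_{k}\sum_{s<k}\mathbb E_{\nu^s_h}[\xi^k_h]^2}\Big)+\min\{d,K\}H+K\omega .
\]

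\textbf{Step 3 (training distribution, main obstacle).} The robust MADC of Definition~\ref{def:robust_MADC} measures training error under the nominal $\mu^{\pi^s}_h$, not $\nu^s_h$, and I expect this mismatch to be the hardest part. My first attempt is to run the pigeonhole of Step 2 directly on the pair sequence: prediction at $\nu^k_h$, training at $\mu^{\pi^s}_h$. This is legitimate if the distributional independence in Definition~\ref{def:dist_eps_dependence} is read with the history distributions taken from the nominal occupancies, as the union family $\Pi_h$ allows. If that reading does not go through, the fallback is to bound $\mathbb E_{\nu^s_h}[\xi]^2\le C\,\mathbb E_{\mu^{\pi^s}_h}[\xi^2]$, using support inclusion in the form of Definition~\ref{ass:robust-nominal-concentrability}, and to absorb $C$ into the absolute constant $c$ in \eqref{eq:robust_dmadc_bound}. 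In either case, Cauchy--Schwarz $(\mathbb E_\mu\xi)^2\le\mathbb E_\mu\xi^2$ turns the squared means into the squared residuals appearing in $e^s_{i,1}(f^k,\pi^k)$.

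\textbf{Step 4 (decoupling).} Sum the Step 2 bound over $h$, and use Cauchy--Schwarz across $h$ to get $\sqrt{H d\log K\sum_k\sum_{s<k}e^s_{i,1}(f^k,\pi^k)}$. Then apply AM--GM, $\sqrt{ab}\le a/\mu+\mu b/4$, with $a=\sum_k\sum_{s<k}e^s_{i,1}$ and $b=Hd\log K$. The burn-in term $H\cdot\min\{d,K\}H$ and the scale term $HK\omega=H$ are both bounded by $cH\cdot(Hd\log K)$. Matching with \eqref{eq:robust_madc_main_corrected} then gives $d^{\rm rob}_{i,\rm MADC}\le\max\{cHd\log K,1\}$, and maximizing over $i$ gives \eqref{eq:robust_dmadc_bound}.
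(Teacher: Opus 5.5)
Your skeleton matches the paper's proof. It has a one-sided robust telescoping bound along the worst-case occupancy $\nu_h^k$, then a per-stage eluder pigeonhole with prediction on $\nu_h^k$ and training on $\mu_h^{\pi^s}$. Jensen then gives $e^s_{i,1}$, Cauchy--Schwarz runs over $h$, and AM--GM reads off $d^{\mathrm{rob}}_{i,\mathrm{MADC}}$. Your Step~1 is more careful than the paper, which only asserts the telescoping bound. The inequality $\inf_P PV^f-\inf_P PV\le P^{\mathrm{w}}_V(V^f-V)$, with $P^{\mathrm{w}}_V$ the minimizer for the \emph{true} robust value, is exactly what justifies it.

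Step~2, however, misquotes its tool. Lemma~41 of \citep{NeurIPS2021_BellmanEluderDim_Jin} assumes a uniform bound $\sum_{t<k}(\mathbb E_{\mu_t}\phi_k)^2\le\beta$ for every $k$ and returns $\sqrt{d\beta K}$. It does not give the cumulative form $\sqrt{d\log K\sum_k\sum_{s<k}(\cdot)^2}$, and for the arbitrary sequences $\{f^k\}$ in Definition~\ref{def:robust_MADC} no useful uniform $\beta$ exists. The paper gets the cumulative form from the bucketing argument of \citep{dann2021provably,Xiong2023SampleEfficientMR} together with Lemma~\ref{lem:lemmaB1_numeric}. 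Alternatively, you can peel dyadically over $\beta_k:=\sum_{s<k}(\mathbb E_{\nu_h^s}\xi_h^k)^2$ and apply Lemma~41 level by level, paying an extra logarithm.

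You correctly located the crux in Step~3, but neither option closes it.

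\emph{First option.} Putting both $\mu_h^\pi$ and $\nu_h^\pi$ into $\Pi_h$ does not license the mixed reading. In Definition~\ref{def:dist_DE}, each element $x_j$ of an eluder sequence is both the test distribution at position $j$ and a history distribution at every later position. In the mixed argument, the test role is played by $\nu_h^{k}$ and the history role by a different measure, $\mu_h^{\pi^{k}}$. Independence of $\nu_h^{k_\ell}$ from $\{\mu_h^{\pi^{k_q}}\}_{q<\ell}$ says nothing about independence from $\{\nu_h^{k_q}\}_{q<\ell}$. So no admissible sequence in $\Pi_h$ is produced, and the count by $d^{\mathrm{rob}}_{i,\mathrm{MABE}}$ does not follow. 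What you need is a cross version of the DE dimension, defined over pairs $(\nu_h^\pi,\mu_h^\pi)$; with that, the bucketing argument goes through. The paper makes exactly this move without comment: its bucket-size bound infers $|B_h^j|\le D^i_\varepsilon$ from $\varepsilon'$-independence of $\nu_h^{\pi^{k_\ell}}$ from $\{\mu_h^{\pi^{k_q}}\}_{q<\ell}$. So your first option coincides with the paper's route and inherits the same unstated strengthening of Definition~\ref{def:robust_BE_dim}.

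\emph{Fallback.} This proves a different statement. The density ratio $C$ is problem-dependent (the paper's $C_{\mathrm{cov}}$ or $C_0$, depending on $\sigma_i$, $H$ and $P^\star$). It cannot be absorbed into the absolute constant $c$, so you get $d^{\mathrm{rob}}_{i,\mathrm{MADC}}=O(C\,H\,d\log K)$. Moreover, Definition~\ref{ass:robust-nominal-concentrability} compares $\nu_h^\pi$ with the mixture $\bar\mu_h^{k-1}$, not with $\mu_h^{\pi^s}$. Under that mixture condition the eluder step is unnecessary: $|\mathbb E_{\nu_h^k}\xi_h^k|\le C(\frac{1}{k-1}\sum_{s<k}\mathbb E_{\mu_h^{\pi^s}}(\xi_h^k)^2)^{1/2}$ and $\sum_{k\ge2}\frac{1}{k-1}\le 1+\log K$ already give $d^{\mathrm{rob}}_{i,\mathrm{MADC}}=O(C^2H\log K)$ with no MABE dimension at all. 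This shows the fallback changes the content of the theorem, not just its constant.
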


\begin{proof}
We prove the theorem for a fixed agent \(i\in\mathcal M\), since the robust MADC
in Definition~\ref{def:robust_MADC} is defined agent-wise. For notational
simplicity, throughout the proof we suppress the agent index \(i\) whenever
there is no ambiguity. The proof follows the bucketing argument used in the non-robust MAMEX analysis,
but with one important modification. The training-error terms are computed under
the nominal historical visitation distributions \(\{\mu_h^{\pi^s}\}_{s<k}\),
whereas the robust value prediction error is controlled through the worst-case
occupancy measures \(\{\nu_h^{\pi^k}\}_{k\in[K]}\). This distinction is necessary
because robust MADC relates nominally observed Bellman errors to value prediction
under adversarial transition kernels.

\paragraph{Define robust Bellman residuals.}

Fix an arbitrary sequence \(\{(f^k,\pi^k)\}_{k=1}^K\), where
\(f^k=\{f_h^k\}_{h=1}^H\in\mathcal F_i\). For each episode \(k\) and stage
\(h\), define the robust Bellman residual
\begin{align}
\psi_h^k(s,\bm a)
:=
f_h^k(s,\bm a)
-
\bigl[\mathcal T_h^{\pi^k,\phi,\sigma_i}f_{h+1}^k\bigr](s,\bm a).
\label{eq:psi_h_k_revised}
\end{align}
By Assumption~\ref{ass:completeness},
\(\mathcal T_h^{\pi^k,\phi,\sigma_i}f_{h+1}^k\in\mathcal F_{i,h}\), and hence
\(\psi_h^k\) belongs to the corresponding robust Bellman residual class.

\paragraph{Define nominal cross-episode residuals and robust diagonal residuals.}

For each behavior episode \(s<k\), define the nominal cross-episode residual
mean
\begin{align}
a_h^{s,k}
:=
\left|
\mathbb E_{(s_h,\bm a_h)\sim \mu_h^{\pi^s}}
\left[
\psi_h^k(s_h,\bm a_h)
\right]
\right|.
\label{eq:cross_nominal_residual}
\end{align}
Here, \(\mu_h^{\pi^s}\) is the stage-\(h\) state--joint-action visitation
distribution induced by executing \(\pi^s\) under the nominal kernel \(P^\star\). For the diagonal prediction term, let \(\nu_h^{\pi^k}\) denote the stage-\(h\)
state--joint-action occupancy induced by \(\pi^k\) under the worst-case
transition kernels attaining the robust Bellman recursion for agent \(i\).
Define
\begin{align}
b_h^k
:=
\left|
\mathbb E_{(s_h,\bm a_h)\sim \nu_h^{\pi^k}}
\left[
\psi_h^k(s_h,\bm a_h)
\right]
\right|.
\label{eq:diag_worst_residual}
\end{align}
The distinction between \(a_h^{s,k}\) and \(b_h^k\) is central: the former is
measured under the nominal historical data distributions, while the latter
controls robust value prediction error under worst-case dynamics.

For a scale \(\varepsilon>0\), define the truncated quantities
\begin{align}
u_h^{s,k}
:=
a_h^{s,k}\mathbf 1\{a_h^{s,k}>\varepsilon\},
\qquad s<k,
\label{eq:u_cross_truncated}
\end{align}
and
\begin{align}
v_h^k
:=
b_h^k\mathbf 1\{b_h^k>\varepsilon\}.
\label{eq:v_diag_truncated}
\end{align}

\paragraph{Bucket construction.}

Fix a stage \(h\in[H]\). Initialize buckets
\(B_h^0,B_h^1,\ldots,B_h^{K-1}\). Process episodes \(k=1,\ldots,K\)
sequentially. If \(v_h^k=0\), then episode \(k\) is not inserted into any
bucket. Otherwise, insert \(k\) into the first bucket \(B_h^j\) satisfying
\begin{align}
\sum_{\substack{s<k\\ s\in B_h^j}}
\bigl(u_h^{s,k}\bigr)^2
<
\bigl(v_h^k\bigr)^2.
\label{eq:bucket_rule_revised}
\end{align}
Let \(b_h^k\in\{0,\ldots,K-1\}\) denote the bucket index assigned to episode
\(k\).

\paragraph{Lower bound from the bucket rule.}

If episode \(k\) is inserted into bucket \(B_h^{b_h^k}\), then for every
\(j<b_h^k\), the insertion rule must have failed. Therefore,
\begin{align}
\sum_{\substack{s<k\\ s\in B_h^j}}
\bigl(u_h^{s,k}\bigr)^2
\ge
\bigl(v_h^k\bigr)^2,
\qquad j=0,1,\ldots,b_h^k-1.
\label{eq:bucket_fail_revised}
\end{align}
Summing over \(j=0,\ldots,b_h^k-1\), we get
\begin{align}
\sum_{j=0}^{b_h^k-1}
\sum_{\substack{s<k\\ s\in B_h^j}}
\bigl(u_h^{s,k}\bigr)^2
\ge
b_h^k\bigl(v_h^k\bigr)^2.
\label{eq:bucket_fail_sum_revised}
\end{align}
Now summing over \(k=1,\ldots,K\), the left-hand side is a sub-sum of
\(\sum_{k=1}^K\sum_{s=1}^{k-1}(u_h^{s,k})^2\). Hence,
\begin{align}
\sum_{k=1}^K\sum_{s=1}^{k-1}
\bigl(u_h^{s,k}\bigr)^2
\ge
\sum_{k=1}^K b_h^k\bigl(v_h^k\bigr)^2.
\label{eq:ineq_I_revised}
\end{align}

\paragraph{Bucket size bound via robust MABE.}

Fix a bucket
\[
B_h^j=\{k_1<k_2<\cdots<k_m\}.
\]
Suppose episode \(k_\ell\) is inserted into \(B_h^j\). By the insertion rule,
\begin{align}
\sum_{q<\ell}
\bigl(u_h^{k_q,k_\ell}\bigr)^2
<
\bigl(v_h^{k_\ell}\bigr)^2.
\label{eq:indep_trigger_revised}
\end{align}
Since \(v_h^{k_\ell}>0\), we have \(b_h^{k_\ell}>\varepsilon\). Choose
\(\varepsilon'\in(\varepsilon,b_h^{k_\ell})\) such that
\begin{align}
\sqrt{
\sum_{q<\ell}
\bigl(u_h^{k_q,k_\ell}\bigr)^2
}
\le \varepsilon'.
\label{eq:epsilon_prime_choice}
\end{align}
Then,
\begin{align}
\left|
\mathbb E_{\nu_h^{\pi^{k_\ell}}}
\psi_h^{k_\ell}
\right|
=
b_h^{k_\ell}
>
\varepsilon',
\label{eq:diag_large}
\end{align}
while
\begin{align}
\sqrt{
\sum_{q<\ell}
\left(
\mathbb E_{\mu_h^{\pi^{k_q}}}
\psi_h^{k_\ell}
\right)^2
}
\le
\sqrt{
\sum_{q<\ell}
\bigl(u_h^{k_q,k_\ell}\bigr)^2
}
\le
\varepsilon'.
\label{eq:past_small}
\end{align}
Thus, the worst-case occupancy \(\nu_h^{\pi^{k_\ell}}\) is
\(\varepsilon'\)-independent of the preceding nominal occupancies
\(\{\mu_h^{\pi^{k_q}}\}_{q<\ell}\) with respect to the robust Bellman residual
class. By Definition~\ref{def:robust_BE_dim}, the number of such insertions in
one bucket is at most
\[
D_\varepsilon^i
:=
d_{i,\mathrm{MABE}}^{\mathrm{rob}}(\mathcal F_i,\Pi,\varepsilon).
\]
Therefore,
\begin{align}
|B_h^j|\le D_\varepsilon^i,
\qquad \forall j.
\label{eq:bucket_size_revised}
\end{align}

\paragraph{Lower bound by regrouping and Cauchy--Schwarz.}

Regroup the right-hand side of \eqref{eq:ineq_I_revised} by buckets:
\begin{align}
\sum_{k=1}^K b_h^k\bigl(v_h^k\bigr)^2
=
\sum_{j=1}^{K-1}
j\sum_{k\in B_h^j}
\bigl(v_h^k\bigr)^2.
\label{eq:group_bucket_revised}
\end{align}
For each bucket \(B_h^j\), Cauchy--Schwarz gives
\begin{align}
\sum_{k\in B_h^j}
\bigl(v_h^k\bigr)^2
\ge
\frac{
\left(\sum_{k\in B_h^j}v_h^k\right)^2
}{
|B_h^j|
}
\ge
\frac{
\left(\sum_{k\in B_h^j}v_h^k\right)^2
}{
D_\varepsilon^i
},
\label{eq:cs_bucket_revised}
\end{align}
where the last step uses \eqref{eq:bucket_size_revised}. Defining
\(
x_j:=\sum_{k\in B_h^j}v_h^k,
\)
we obtain
\begin{align}
\sum_{k=1}^K\sum_{s=1}^{k-1}
\bigl(u_h^{s,k}\bigr)^2
\ge
\frac{1}{D_\varepsilon^i}
\sum_{j=1}^{K-1}j x_j^2.
\label{eq:ineq_II_revised}
\end{align}

\paragraph{Lower bound via the numerical lemma.}

Applying Lemma~\ref{lem:lemmaB1_numeric} to the nonnegative sequence
\(\{x_j\}_{j=1}^{K-1}\) yields
\begin{align}
\sum_{j=1}^{K-1}j x_j^2
\ge
\frac{1}{1+\log K}
\left(
\sum_{j=1}^{K-1}x_j
\right)^2.
\label{eq:lemmaB1_apply_revised}
\end{align}
Since
\(
\sum_{j=1}^{K-1}x_j
=
\sum_{k\in[K]\setminus B_h^0}v_h^k,
\)
combining \eqref{eq:ineq_II_revised} and
\eqref{eq:lemmaB1_apply_revised} gives
\begin{align}
\sum_{k\in[K]\setminus B_h^0}v_h^k
\le
\sqrt{
D_\varepsilon^i(1+\log K)
\sum_{k=1}^K\sum_{s=1}^{k-1}
\bigl(u_h^{s,k}\bigr)^2
}.
\label{eq:combine_stage_revised}
\end{align}

\paragraph{Control the diagonal truncated terms.}

By construction,
\[
b_h^k
=
b_h^k\mathbf 1\{b_h^k\le \varepsilon\}
+
b_h^k\mathbf 1\{b_h^k>\varepsilon\}
\le
\varepsilon+v_h^k.
\]
Therefore,
\begin{align}
\sum_{k=1}^K b_h^k
\le
K\varepsilon+\sum_{k=1}^K v_h^k.
\label{eq:b_v_split}
\end{align}
Next, decompose
\[
\sum_{k=1}^K v_h^k
=
\sum_{k\in B_h^0}v_h^k
+
\sum_{k\in[K]\setminus B_h^0}v_h^k.
\]
Since the value functions are bounded and the robust Bellman backups are also
bounded, there exists an absolute constant \(C_H=\mathcal O(H)\) such that
\(
|\psi_h^k(s,\bm a)|\le C_H.
\)
For concreteness, one may take \(C_H=2H\). Thus,
\begin{align}
\sum_{k\in B_h^0}v_h^k
\le
2H|B_h^0|
\le
2H D_\varepsilon^i.
\label{eq:B0_bound_revised}
\end{align}
Combining \eqref{eq:b_v_split}, \eqref{eq:B0_bound_revised}, and
\eqref{eq:combine_stage_revised}, we get
\begin{align}
\sum_{k=1}^K b_h^k
\le
K\varepsilon
+
2H D_\varepsilon^i
+
\sqrt{
D_\varepsilon^i(1+\log K)
\sum_{k=1}^K\sum_{s=1}^{k-1}
\bigl(u_h^{s,k}\bigr)^2
}.
\label{eq:stage_diag_bound}
\end{align}

\paragraph{Sum over stages.}

Summing \eqref{eq:stage_diag_bound} over \(h=1,\ldots,H\) and using
\(\sum_{h=1}^H\sqrt{z_h}\le \sqrt{H\sum_{h=1}^H z_h}\), we obtain
\begin{align}
\sum_{h=1}^H\sum_{k=1}^K b_h^k
&\le
HK\varepsilon
+
2H^2D_\varepsilon^i
+
\sqrt{
D_\varepsilon^i H(1+\log K)
\sum_{h=1}^H
\sum_{k=1}^K
\sum_{s=1}^{k-1}
\bigl(u_h^{s,k}\bigr)^2
}.
\label{eq:diag_sum_over_h}
\end{align}

\paragraph{Relate cross terms to the robust Bellman training error.}

By Jensen's inequality and the definition of \(u_h^{s,k}\),
\begin{align}
\bigl(u_h^{s,k}\bigr)^2
&\le
\left(
\mathbb E_{\mu_h^{\pi^s}}
\psi_h^k
\right)^2 \le
\mathbb E_{\mu_h^{\pi^s}}
\left[
\bigl(\psi_h^k\bigr)^2
\right].
\label{eq:jensen_cross_training}
\end{align}

We have defined the robust discrepancy as
\begin{align}e_i^s(f_i^k,\pi^k):=\sum_{h=1}^H\mathbb E_{(s_h,\bm a_h)\sim \mu_h^{\pi^s}}\Big[\bigl((I-\mathcal T_{i,h}^{\pi^k,\phi,\sigma_i})f_{i,h}^k\bigr)^2\Big].
\label{eq:correct_discrepancy}
\end{align}

Therefore, \eqref{eq:jensen_cross_training} over $(h,k,s)$ yields
\begin{align}
\sum_{h=1}^H
\sum_{k=1}^K
\sum_{s=1}^{k-1}
\bigl(u_h^{s,k}\bigr)^2
&\le
\sum_{k=1}^K\sum_{s=1}^{k-1}
\sum_{h=1}^H
\mathbb E_{\mu_h^{\pi^s}}
\left[
\bigl(
f_h^k
-
\mathcal T_h^{\pi^k,\phi,\sigma_i}f_{h+1}^k
\bigr)^2
\right]
\nonumber\\
&=
\sum_{k=1}^K\sum_{s=1}^{k-1}
e_{i,1}^s(f_i^k,\pi^k).
\label{eq:cross_to_training_error}
\end{align}
Substituting \eqref{eq:cross_to_training_error} into
\eqref{eq:diag_sum_over_h} yields
\begin{align}
\sum_{h=1}^H\sum_{k=1}^K b_h^k
\le
HK\varepsilon
+
2H^2D_\varepsilon^i
+
\sqrt{
D_\varepsilon^i H(1+\log K)
\sum_{k=1}^K\sum_{s=1}^{k-1}
e_{i,1}^s(f_i^k,\pi^k)
}.
\label{eq:diag_bound_training_error}
\end{align}

\paragraph{Robust Bellman telescoping under worst-case occupancy.}

We now relate the diagonal residual terms to the robust value prediction error.
For each episode \(k\), by the robust Bellman telescoping identity along the
worst-case trajectory distribution induced by \(\pi^k\), we have
\begin{align}
V_{i,1}^{\pi^k,f_i^k}(s_1^k)
-
V_{i,1}^{\pi^k,\sigma_i}(s_1^k)
\le
\sum_{h=1}^H
\left|
\mathbb E_{(s_h,\bm a_h)\sim \nu_h^{\pi^k}}
\left[
\psi_h^k(s_h,\bm a_h)
\right]
\right|
=
\sum_{h=1}^H b_h^k.
\label{eq:robust_telescoping}
\end{align}
Summing over \(k=1,\ldots,K\), we obtain
\begin{align}
\sum_{k=1}^K
\left(
V_{i,1}^{\pi^k,f_i^k}(s_1^k)
-
V_{i,1}^{\pi^k,\sigma_i}(s_1^k)
\right)
\le
\sum_{h=1}^H\sum_{k=1}^K b_h^k.
\label{eq:value_to_diag}
\end{align}
Combining \eqref{eq:value_to_diag} and
\eqref{eq:diag_bound_training_error} gives
\begin{align}
\sum_{k=1}^K
\left(
V_{i,1}^{\pi^k,f_i^k}(s_1^k)
-
V_{i,1}^{\pi^k,\sigma_i}(s_1^k)
\right)
\le
HK\varepsilon
+
2H^2D_\varepsilon^i
+
\sqrt{
D_\varepsilon^i H(1+\log K)
\sum_{k=1}^K\sum_{s=1}^{k-1}
e_{i,1}^s(f_i^k,\pi^k)
}.
\label{eq:pre_young_bound}
\end{align}

\paragraph{Apply Young's inequality.}

Let
\(
Z
:=
\sum_{k=1}^K\sum_{s=1}^{k-1}
e_{i,1}^s(f_i^k,\pi^k),
\quad
A_\varepsilon
:=
D_\varepsilon^i H(1+\log K).
\)
Then the square-root term in \eqref{eq:pre_young_bound} is
\(\sqrt{A_\varepsilon Z}\). For any \(\alpha>0\), Young's inequality gives
\[
\sqrt{A_\varepsilon Z}
\le
\frac{1}{\alpha}Z+\alpha A_\varepsilon,
\]
up to an absolute constant absorbed into the final \(\mathcal O(\cdot)\)
notation. Hence,
\begin{align}
\sum_{k=1}^K
\left(
V_{i,1}^{\pi^k,f_i^k}(s_1^k)
-
V_{i,1}^{\pi^k,\sigma_i}(s_1^k)
\right)
\le
\frac{1}{\alpha}
\sum_{k=1}^K\sum_{s=1}^{k-1}
e_{i,1}^s(f_i^k,\pi^k)
+
\alpha D_\varepsilon^i H(1+\log K)
+
HK\varepsilon
+
2H^2D_\varepsilon^i.
\label{eq:young_bound}
\end{align}

\paragraph{Choose the scale and identify robust MADC.}

Set \(\varepsilon=1/K\). Then \(HK\varepsilon=H\). Define
\[
D_{1/K}^i
:=
d_{i,\mathrm{MABE}}^{\mathrm{rob}}(\mathcal F_i,\Pi,1/K).
\]
For \(K\ge2\), let
\[
d_{i,\mathrm{MADC}}^{\mathrm{rob}}
:=
C\,H D_{1/K}^i\log K
\]
for a sufficiently large absolute constant \(C>0\). Then
\[
\alpha D_{1/K}^iH(1+\log K)
\le
\alpha d_{i,\mathrm{MADC}}^{\mathrm{rob}},
\]
and
\[
H+2H^2D_{1/K}^i
\le
cH d_{i,\mathrm{MADC}}^{\mathrm{rob}},
\]
after enlarging constants if necessary. Therefore,
\begin{align}
\sum_{k=1}^K
\left(
V_{i,1}^{\pi^k,f_i^k}(s_1^k)
-
V_{i,1}^{\pi^k,\sigma_i}(s_1^k)
\right)
\le
\frac{1}{\alpha}
\sum_{k=1}^K\sum_{s=1}^{k-1}
e_{i,1}^s(f_i^k,\pi^k)
+
\alpha d_{i,\mathrm{MADC}}^{\mathrm{rob}}
+
cH d_{i,\mathrm{MADC}}^{\mathrm{rob}}.
\label{eq:madc_final_ineq}
\end{align}
Since \(\alpha>0\) was arbitrary, this is precisely the robust MADC inequality
in Definition~\ref{def:robust_MADC}. Consequently,
\[
d_{i,\mathrm{MADC}}^{\mathrm{rob}}
=
\mathcal O\!\left(H D_{1/K}^i\log K\right).
\]

Finally, taking the maximum over agents gives
\[
d_{\mathrm{MADC}}^{\mathrm{rob},\max}
=
\max_{i\in\mathcal M}d_{i,\mathrm{MADC}}^{\mathrm{rob}}(\mathcal{F}, \Pi, 1/K) 
=
\mathcal O\!\left(
H d_{\text{MABE}}^{\text{rob},\max} (\mathcal{F}, \Pi,1/K) \log K
\right),
\]
where
$d_{\text{MABE}}^{\text{rob},\max} (\mathcal{F}, \Pi, \varepsilon) \coloneqq \max_{i \in \mathcal M} d_{i,\text{MABE}}^{\text{rob}} (\mathcal{F}_i, \Pi, \varepsilon)$ for $\varepsilon=1/K$. This completes the proof.
\end{proof}

Combining Theorem~\ref{thm:main-robust-regret} with
Theorem~\ref{thm:robust_thm54}, we obtain that
{\Algoname} achieves sublinear robust equilibrium regret whenever the
robust MABE dimension is finite. In particular, substituting
\(
d^{\mathrm{rob},\max}_{\mathrm{MADC}}
=
\widetilde{\mathcal O}\!\big(H d_{\text{MABE}}^{\text{rob},\max} (\mathcal{F}, \Pi,1/K)\big)
\)
into the regret bound of Theorem~\ref{thm:main-robust-regret} yields
\begin{align*}
\mathrm{Reg}^{\mathrm{Eq}}(K)
=
\widetilde{\mathcal O}\!\bigg(
&H d_{\text{MABE}}^{\text{rob},\max} (\mathcal{F}, \Pi,1/K)\sqrt{K}
+
H^2 d_{\text{MABE}}^{\text{rob},\max} (\mathcal{F}, \Pi,1/K)\\
&+
H\,(B^{\max}_\phi)^2 \sqrt{K}
+
\varepsilon^{\mathrm{approx}}_{\mathrm{dual}}
\bigg),
\end{align*}
where \(\widetilde{\mathcal O}(\cdot)\) hides logarithmic factors in
\(K\), \(H\), \(|\mathcal F|\), \(|\mathcal G|\), and
\(|\Pi^{\mathrm{pur}}|\). For TV-case, the regret bound of Theorem \ref{thm:main-robust-regret-TV} yields $\widetilde{\mathcal O}\!\left(
H d_{\text{MABE}}^{\text{rob},\max} (\mathcal{F}, \Pi,1/K)\sqrt{K}
+
H^2 d_{\text{MABE}}^{\text{rob},\max} (\mathcal{F}, \Pi,1/K)
+
C_{\mathrm{cov}}H\,(B^{\max}_\phi)^2 \sqrt{K}
+
\varepsilon^{\mathrm{approx}}_{\mathrm{dual}}
\right)$

This decomposition reveals two fundamental sources of complexity.
The first is an \emph{exploration term}
\(H d_{\text{MABE}}^{\text{rob},\max} (\mathcal{F}, \Pi,1/K)\sqrt{K}\), governed by the robust MABE dimension
through \(d^{\mathrm{rob},\max}_{\mathrm{MADC}}\), which captures how
well empirical robust Bellman errors—computed from nominal data—control
value prediction under adversarial transition dynamics. The second is a
\emph{robust estimation term}
\(H\,(B^{\max}_\phi)^2 \sqrt{K}\), which arises from
approximating the worst-case Bellman operator via the dual formulation
using nominal interaction data, and reflects the statistical cost of
learning under distributional mismatch. The remaining term
\(H^2 d_{\text{MABE}}^{\text{rob},\max} (\mathcal{F}, \Pi,1/K)\) is a lower-order horizon-dependent contribution
inherited from the decoupling inequality, while
\(\varepsilon^{\mathrm{approx}}_{\mathrm{dual}}\) captures the
approximation error of the dual function class.

Overall, this result shows that, analogous to the nominal MAMEX
framework, structural conditions implying low decoupling complexity
translate directly into sample-efficient online learning. However, in
the robust setting, the complexity is additionally influenced by the
difficulty of estimating the robust Bellman operator from nominal data,
and the divergence-dependent envelope \(B_\phi^{\max}\).


\section{Robust Multi-Agent Bilinear Classes}
\label{app:robust-multi-agent-bilinear}
We next extend the multi-agent bilinear class construction of \cite{Xiong2023SampleEfficientMR} to the distributionally robust setting. In the nominal case, bilinear classes constitute a tractable family of MDP models with general function approximation, where the Bellman residual admits a bilinear factorization under arbitrary stage-wise visitation distributions. This structure enables provably sample-efficient learning in the non-robust setting.

In the robust setting, we generalize this idea by requiring that the \emph{robust Bellman residual} admits an analogous bilinear factorization in an appropriate Hilbert space, where expectations are taken with respect to arbitrary stage-wise visitation distributions consistent with the admissible class in Definition~\ref{def:robust_BE_dim} (robust MABE). This extension preserves the key structural property that enables tractable learning, while accommodating worst-case transition dynamics. As we show, the resulting multi-agent robust bilinear class retains a small robust MADC, which is finite, ensuring that our algorithm achieves sample-efficient learning in these settings.

Recall that for any agent $i \in \mathcal M$, policy $\pi \in \Pi^{\mathrm{pur}}$, stage $h \in [H]$,
and $f_i = \{f_{i,h}\}_{h=1}^{H+1} \in \mathcal F_i$, we define the robust Bellman residual class per agent at stage $h$ as defined in \eqref{eq:robust_bellman_residual} as
\begin{equation}
    \mathrm{Res}^{\pi}_{i}(\mathcal{F}_{i}) \coloneqq \{f_h - \mathcal{T}_{i,h}^{\pi, \phi,\sigma_i} f_{h+1} : f \in \mathcal{F}_i\},
    \label{eq:robust_bellman_residual-Billinear}
\end{equation}
$\mathrm{Res}_i^\pi(\mathcal F_i)$ is the policy-specific residual class, and $\bigcup_{\pi \in \Pi^{\mathrm{pur}}} \mathrm{Res}_i^\pi(\mathcal F_i)$ is the class appearing in Definition~\ref{def:robust_BE_dim}.

As in the nominal bilinear-class construction, we compare an arbitrary candidate $f_i$ to the
robust Bellman-consistent benchmark associated with a unilateral robust best response.
Fix an agent $i\in\mathcal M$ and a pure joint policy $\pi\in\Pi^{\mathrm{pur}}$.
Let
\[
\widetilde{\bm \pi}_{i}^{\dagger,\sigma_i}
:= \bigl(\pi_{i}^{\dagger,\sigma_i}(\pi_{-i}),\, \pi_{-i}\bigr),
\]
where $\pi_{i}^{\dagger,\sigma_i}(\pi_{-i})$ is a robust best response of agent $i$ against $\pi_{-i}$.
Since the action spaces are finite, a best response can be taken to be pure.
Under Assumption~\ref{ass:completeness}, the associated robust Bellman-consistent benchmark sequence
$f_i^{\widetilde{\bm \pi}_{i}^{\dagger,\sigma_i},\sigma_i}\in\mathcal F_i$
is well-defined by
\begin{equation}
\label{eq:robust-benchmark-fixed-point-bilinear}
f_{i,h}^{\widetilde{\bm \pi}_{i}^{\dagger,\sigma_i},\sigma_i}
=
\mathcal T_{i,h}^{\widetilde{\bm \pi}_{i}^{\dagger,\sigma_i},\phi,\sigma_i}
f_{i,h+1}^{\widetilde{\bm \pi}_{i}^{\dagger,\sigma_i},\sigma_i},
\qquad
h\in[H],
\qquad
f_{i,H+1}^{\widetilde{\bm \pi}_{i}^{\dagger,\sigma_i},\sigma_i}\equiv 0.
\end{equation}

We can now state the robust analogue of multi-agent bilinear classes.

\begin{defi}[Robust Multi-Agent Bilinear Class]
\label{def:robust-multi-agent-bilinear-class}
Fix an agent $i\in\mathcal M$.
Let $\{\Pi_h\}_{h=1}^H$ be the family of stage-wise distributions over $\mathcal{S}\times\mathcal{A}$ as in Definition~\ref{def:robust_BE_dim}, and let
$(\mathcal W,\langle \cdot,\cdot\rangle_{\mathcal W})$ be a Hilbert space with norm
$\|\cdot\|_{\mathcal W}$. We say that the DRMG belongs to the \emph{robust multi-agent bilinear class for agent $i$}
if, for every stage $h\in[H]$, there exist mappings
\[
X_{i,h}^{\mathrm{rob}}
:
\mathcal F_i\times \Pi^{\mathrm{pur}}
\to
\mathcal W,
\qquad
Y_{i,h}^{\mathrm{rob}}
:
\Pi_h
\to
\mathcal W,
\]
and a constant $C_{\mathrm{rob}}>0$, such that the following hold.

\paragraph{(i) Bilinear factorization of robust residual means.}
For every $f_i\in\mathcal F_i$, every pure joint policy $\pi\in\Pi^{\mathrm{pur}}$,
every stage $h\in[H]$, and every distribution $\mu_h\in\Pi_h$,
\begin{equation}
\label{eq:robust-bilinear-factorization}
\mathbb E_{(s_h,{\bm a}_h)\sim \mu_h}
\bigl[
\mathrm{Res}_{i,h}^{\pi,\sigma_i}(f_i)(s_h,{\bm a}_h)
\bigr]
=
\Bigl\langle
X_{i,h}^{\mathrm{rob}}(f_i,\pi)
-
X_{i,h}^{\mathrm{rob}}\!\Bigl(f_i^{\widetilde{\bm \pi}_{i}^{\dagger,\sigma_i},\sigma_i},\widetilde{\bm \pi}_{i}^{\dagger,\sigma_i}\Bigr),
\, Y_{i,h}^{\mathrm{rob}}(\mu_h)
\Bigr\rangle_{\mathcal W}.
\end{equation}

\paragraph{(ii) Regularity.}
For every stage $h\in[H]$,
\begin{equation}
\label{eq:robust-bilinear-regularity}
\sup_{\mu_h\in\Pi_h} \|Y_{i,h}^{\mathrm{rob}}(\mu_h)\|_{\mathcal W}\le 1,
\qquad
\sup_{f_i\in\mathcal F_i,\;\pi\in\Pi^{\mathrm{pur}}} \|X_{i,h}^{\mathrm{rob}}(f_i,\pi)\|_{\mathcal W} \le C_{\mathrm{rob}}.
\end{equation}
\end{defi}

\begin{rem}[Worst-case vs.\ nominal visitation density]
\label{rem:visitation-density}
For KL and $\chi^2$-divergence uncertainty sets, every admissible worst-case
kernel $P\in\mathcal U_h^{\phi,\sigma}(s,a)$ is absolutely continuous with
respect to the nominal kernel $P_h^\star(\cdot\mid s,a)$. Consequently, for
any fixed policy $\pi\in\Pi^{\mathrm{pur}}$, the worst-case stage-$h$
occupancy measure $\nu_{i,h}^\pi$ is absolutely continuous with respect to
the nominal occupancy measure $\mu_h^\pi$, and there exists an absolute
constant $C_0\ge 1$ such that
\begin{equation}
\label{eq:visitation_bound}
d\nu_{i,h}^{\pi}(s,\bm a)
\le
C_0\, d\mu_h^\pi(s,\bm a),
\qquad \forall (s,\bm a),\ h\in[H].
\end{equation}
\end{rem}

\paragraph{Discussion.}
Definition~\ref{def:robust-multi-agent-bilinear-class} is the exact robust counterpart of the multi-agent bilinear class in the non-robust MAMEX framework \citep{Xiong2023SampleEfficientMR}, with one essential modification: the nominal Bellman residual is replaced by the robust Bellman residual
\(
\mathrm{Res}_{i,h}^{\pi,\sigma_i}(f_i)
=
f_{i,h}-\mathcal T_{i,h}^{\pi,\phi,\sigma_i}f_{i,h+1}
\),
and the reference fixed point is replaced by the robust Bellman-consistent benchmark
\eqref{eq:robust-benchmark-fixed-point-bilinear}. Expectations are taken over admissible stage-wise distributions in \(\Pi_h\), which include nominal visitation distributions used for training-error terms and worst-case occupancies used for robust value-prediction terms.

A useful subtlety is worth emphasising. In the nominal setting, linearity of the one-step model often directly implies linearity of the Bellman image. In the robust setting, this is \emph{not} automatic, because the support operator
\[
V \mapsto \inf_{P\in \mathcal P_{i,h,\phi}^{\sigma_i}(s,a)} \mathbb E_{s'\sim P}[V(s')]
\]
is generally nonlinear in $V$. Therefore, robust linear examples must be stated either under an
explicit \emph{robust Bellman completeness} assumption or under an explicit \emph{robust support
linearity} assumption. This is precisely why the examples below are formulated in that way.

To quantify the complexity of the family
$\{Y_{i,h}^{\mathrm{rob}}(\mu_h): \mu_h\in\Pi_h\}$,
we introduce the associated information gain.

\begin{defi}[Information gain for robust bilinear classes]
\label{def:robust-bilinear-information-gain}
For each stage $h\in[H]$, define
\[
\mathcal Y_{i,h}^{\mathrm{rob}}
:=
\bigl\{
Y_{i,h}^{\mathrm{rob}}(\mu_h)
:
\mu_h\in\Pi_h
\bigr\},
\qquad
\mathcal Y_h^{\mathrm{rob}}
:=
\bigcup_{i \in \mathcal M} \mathcal Y_{i,h}^{\mathrm{rob}},
\qquad 
\mathcal Y^{\mathrm{rob}}
:= \bigcup_{h=1}^H \mathcal Y_{h}^{\mathrm{rob}}.
\]
Suppose $\mathcal W$ is a Hilbert space such that $\mathcal Y^{\mathrm{rob}}\subseteq \mathcal W$. Then, for any $\varepsilon>0$ and integer $K\ge 1$, define
\[
\nabla^K_h(\varepsilon,\mathcal Y_h^{\mathrm{rob}})
:=
\max_{y_1,\dots,y_K\in \mathcal Y_h^{\mathrm{rob}}}
\log\det\!\Bigl(
I+\frac{1}{\varepsilon}\sum_{k=1}^K y_k y_k^\top
\Bigr),
\]
and
\begin{equation}
\label{eq:robust-info-gain-sum}
\nabla^K(\varepsilon,\mathcal Y^{\mathrm{rob}})
:=
\sum_{h=1}^H
\nabla^K_h(\varepsilon,\mathcal Y_{h}^{\mathrm{rob}}).
\end{equation}
\end{defi}

The next theorem is the robust counterpart of Theorem~5.8 in \citep{Xiong2023SampleEfficientMR}.

\begin{thm}[Robust Multi-Agent Bilinear Classes $\subseteq$ Low Robust MADC]
\label{thm:robust-bilinear-implies-low-madc}
Fix an agent \(i\in\mathcal M\), and suppose the DRMG belongs to the robust multi-agent bilinear class of Definition~\ref{def:robust-multi-agent-bilinear-class}. Assume that the robust-to-nominal feature coverage condition holds with constant \(C_{\mathrm{cov}}\). Let \(d_{i,\mathrm{MADC}}^{\mathrm{rob}}\) denote the agent-wise robust MADC from Definition~\ref{def:robust_MADC}. Let \(\gamma\in(0,H]\) be a uniform
bound on the robust value functions. Then, for every \(K\ge 2\),
\[
d_{\mathrm{MADC}}^{\mathrm{rob},\max}
=
\max_{i\in\mathcal M}d_{i,\mathrm{MADC}}^{\mathrm{rob}}
=
\mathcal O\!\left(
\max\left\{
1,\;
(1+\gamma^2)\,C_{\mathrm{cov}}\,
\nabla^K\!\left(
\frac{1}{K C_{\mathrm{rob}}^2},
\mathcal Y^{\mathrm{rob}}
\right)
\right\}
\right).
\]
\end{thm}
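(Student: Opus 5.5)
We follow the template of the non-robust bilinear-to-MADC argument of \citep{Xiong2023SampleEfficientMR} (their Theorem~5.8), adapted as in the proof of Theorem~\ref{thm:robust_thm54}. Fix agent $i$ and an arbitrary sequence $\{(f^k,\pi^k)\}_{k=1}^K$. Write $\psi_h^k=f_h^k-\mathcal T_{i,h}^{\pi^k,\phi,\sigma_i}f_{h+1}^k$ for the robust residual, $\nu_h^{\pi^k}$ for the worst-case occupancy, and $\mu_h^{\pi^s}$ for the nominal occupancies.

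\textbf{Step 1: reduce to worst-case residual means.} The robust telescoping step \eqref{eq:robust_telescoping} already used in Theorem~\ref{thm:robust_thm54} gives
\[
\sum_k\bigl(V_{i,1}^{\pi^k,f^k}-V_{i,1}^{\pi^k,\sigma_i}\bigr)\le\sum_{h,k}\bigl|\mathbb E_{\nu_h^{\pi^k}}\psi_h^k\bigr|.
\]

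\textbf{Step 2: bilinear representation.} By \eqref{eq:robust-bilinear-factorization}, each such mean equals $\langle W_h^k,\, Y_{i,h}^{\mathrm{rob}}(\nu_h^{\pi^k})\rangle$, where
\[
W_h^k:=X_{i,h}^{\mathrm{rob}}(f^k,\pi^k)-X_{i,h}^{\mathrm{rob}}(f^{\dagger},\widetilde{\bm\pi}^{\dagger}).
\]
Similarly, the cross terms are $\mathbb E_{\mu_h^{\pi^s}}\psi_h^k=\langle W_h^k,\, Y_{i,h}^{\mathrm{rob}}(\mu_h^{\pi^s})\rangle$, and $\|W_h^k\|\le 2C_{\mathrm{rob}}$.

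The robust-to-nominal feature coverage condition is where $C_{\mathrm{cov}}$ enters. We will use it in the form
\[
y_h^k y_h^{k\top}\preceq C_{\mathrm{cov}}\,z_h^k z_h^{k\top}+\text{(small)},
\]
where $y_h^k:=Y(\nu_h^{\pi^k})$ and $z_h^s:=Y(\mu_h^{\pi^s})$. Equivalently, we take it as a bound on worst-case feature norms in terms of nominal ones. Define
\[
\Sigma_h^{k}:=\varepsilon I+\sum_{s<k}z_h^s z_h^{s\top},\qquad \varepsilon=1/(KC_{\mathrm{rob}}^2).
\]

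\textbf{Step 3: elliptical potential.} By Cauchy--Schwarz,
\[
|\langle W_h^k,y_h^k\rangle|\le\|W_h^k\|_{\Sigma_h^k}\,\|y_h^k\|_{(\Sigma_h^k)^{-1}},
\]
and
\[
\|W_h^k\|_{\Sigma_h^k}^2\le 4\varepsilon C_{\mathrm{rob}}^2+\sum_{s<k}\bigl(\mathbb E_{\mu_h^{\pi^s}}\psi_h^k\bigr)^2\le 4/K+\sum_{s<k}\mathbb E_{\mu_h^{\pi^s}}(\psi_h^k)^2
\]
by Jensen. We also truncate, $|\langle W,y\rangle|\le\min\{2\gamma,\cdot\}$, using the value bound $\gamma$. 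This gives
\[
\min\{2\gamma,\ \|W\|\|y\|\}\le \|W\|_{\Sigma}\min\{1,\|y\|_{\Sigma^{-1}}\}\,(1+2\gamma),
\]
which is the source of the $(1+\gamma^2)$ factor after squaring. Summing over $k,h$ and applying Cauchy--Schwarz yields
\[
\sqrt{\Bigl(\textstyle\sum_{k,h}\bigl(4/K+\sum_{s<k}\mathbb E_{\mu_h^{\pi^s}}(\psi_h^k)^2\bigr)\Bigr)\cdot(1+\gamma^2)\sum_{h,k}\min\{1,\|y_h^k\|^2_{(\Sigma_h^k)^{-1}}\}}.
\]
The first factor is at most $4H+\sum_k\sum_{s<k}e_{i,1}^s(f^k,\pi^k)$.

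\textbf{Step 4: potential and the main obstacle.} The main obstacle is bounding $\sum_k\min\{1,\|y_h^k\|^2_{(\Sigma_h^k)^{-1}}\}$. The potential matrix is built from nominal features $z$, while the query vectors are worst-case features $y$. This mismatch is exactly what the coverage condition resolves: it gives $\|y_h^k\|^2_{\Sigma^{-1}}\le C_{\mathrm{cov}}\|z_h^k\|^2_{\Sigma^{-1}}$. The standard elliptical potential lemma, $\sum_k\min\{1,\|z_h^k\|^2_{(\Sigma_h^k)^{-1}}\}\le 2\log\det(I+\varepsilon^{-1}\sum_k z z^\top)$, then bounds the sum by $2C_{\mathrm{cov}}\nabla_h^K(\varepsilon,\mathcal Y_h^{\mathrm{rob}})$. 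Summing over $h$ gives $2C_{\mathrm{cov}}\nabla^K(1/(KC_{\mathrm{rob}}^2),\mathcal Y^{\mathrm{rob}})$. If the coverage condition is only available in expectation or up to slack, we would absorb the slack into the $cH\,d$ term.

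\textbf{Step 5: decouple.} Apply Young's inequality $\sqrt{AZ}\le Z/\mu+\mu A$ with $A=(1+\gamma^2)C_{\mathrm{cov}}\nabla^K$, and bound the $\sqrt{4HA}$ remainder by $H+A$. This gives the inequality \eqref{eq:robust_madc_main_corrected} with
\[
d_{i,\mathrm{MADC}}^{\mathrm{rob}}=\mathcal O\bigl(\max\{1,(1+\gamma^2)C_{\mathrm{cov}}\nabla^K\}\bigr).
\]
Maximizing over $i$ completes the argument.
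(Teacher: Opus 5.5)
Your skeleton is the paper's: robust telescoping under $\nu_h^{\pi^k}$, the bilinear factorization at both worst-case and nominal distributions, a Gram matrix built from nominal features, self-normalized Cauchy--Schwarz, Jensen to reach $e_{i,1}^s$, the coverage condition, and Young's inequality with $\varepsilon=1/(KC_{\mathrm{rob}}^2)$. One step, however, is false as written. The truncation inequality
\[
\min\{2\gamma,\ |\langle W_h^k,y_h^k\rangle|\}\le(1+2\gamma)\,\|W_h^k\|_{\Sigma_h^k}\min\{1,\|y_h^k\|_{(\Sigma_h^k)^{-1}}\}
\]
fails whenever the bonus $\|y_h^k\|_{(\Sigma_h^k)^{-1}}$ exceeds $1$ while $\|W_h^k\|_{\Sigma_h^k}$ is small. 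For instance, at $k=1$ we have $\Sigma_h^1=\varepsilon I$. Take $\|y_h^1\|=1$ and $W_h^1$ parallel to $y_h^1$ with $\|W_h^1\|<2\gamma$. The left side is then $\|W_h^1\|$, while the right side is $(1+2\gamma)\sqrt{\varepsilon}\,\|W_h^1\|$, which is smaller once $K$ is large. Large-bonus episodes cannot be charged multiplicatively to $\|W\|_{\Sigma}$; they must be charged additively. That is exactly what the paper's split into the terms $A$ (bonus at most $1$) and $B$ (bonus above $1$) does. The correct pointwise bound is
\[
\min\{2\gamma,|\langle W_h^k,y_h^k\rangle|\}\le\|W_h^k\|_{\Sigma_h^k}\min\{1,\|y_h^k\|_{(\Sigma_h^k)^{-1}}\}+2\gamma\min\{1,\|y_h^k\|^2_{(\Sigma_h^k)^{-1}}\}.
\]
Your Cauchy--Schwarz step then applies to the first term with no $\gamma$ factor. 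The second term sums to at most $2\gamma$ times the potential sum, i.e.\ $O(\gamma C_{\mathrm{cov}}\nabla^K)$, which is absorbed into $cH\,d_{i,\mathrm{MADC}}^{\mathrm{rob}}$ because $2\gamma\le 1+\gamma^2$. So the stated order survives, but the $\gamma$-dependence enters through this additive term, not through squaring $(1+2\gamma)$.

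Your treatment of the coverage condition also differs from the paper's. In the paper, the robust-to-nominal feature coverage condition \emph{is} the summed potential bound
\[
\sum_{k=1}^K\sum_{h=1}^H\min\{1,\|y_h^k\|^2_{(\Sigma_h^k)^{-1}}\}\le C_{\mathrm{cov}}\,\nabla^K(\varepsilon,\mathcal Y^{\mathrm{rob}}),
\]
written here in your notation (worst-case queries against the nominal Gram matrix). It is assumed outright and used in $A_1$, $A_2$ and $B$; no elliptical-potential argument is run. You instead posit the per-episode comparison $\|y_h^k\|^2_{(\Sigma_h^k)^{-1}}\le C_{\mathrm{cov}}\|z_h^k\|^2_{(\Sigma_h^k)^{-1}}$. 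You then derive the summed bound, with constant $2C_{\mathrm{cov}}$, from the elliptical potential lemma and $C_{\mathrm{cov}}\ge1$. This derivation is valid and explains where $\nabla^K$ comes from, but it rests on a strictly stronger hypothesis. Note also that the rank-one form $y_h^k y_h^{k\top}\preceq C_{\mathrm{cov}} z_h^k z_h^{k\top}$ you write first forces $y_h^k$ to be a scalar multiple of $z_h^k$. Only the norm comparison along the realized Gram matrices is a sensible version. Under the paper's formulation your Step 4 reduces to quoting the assumption, and the rest of your argument goes through after the fix above.
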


\begin{proof}
We prove the result for a fixed agent \(i\in\mathcal M\). Throughout the proof,
we suppress the agent index \(i\) whenever there is no ambiguity. The proof follows the same general structure as
\citep[Theorem~5.8]{Xiong2023SampleEfficientMR}, but with two robust
modifications. First, the nominal Bellman residual is replaced by the robust
Bellman residual induced by the robust Bellman operator
\(\mathcal T_h^{\pi,\phi,\sigma_i}\). Second, the diagonal prediction terms are
evaluated under worst-case occupancies, whereas the historical training terms
are evaluated under nominal visitation distributions. This distinction is
essential because robust MADC relates robust value prediction under adversarial
dynamics to Bellman-error information gathered from nominal data.

\paragraph{Define robust residual means.}

For each episode \(k\in[K]\) and stage \(h\in[H]\), define the stage-\(h\)
robust Bellman residual
\begin{equation}
\label{eq:thm3_residual_def_revised}
\psi_h^k(s,\bm a)
:=
\mathrm{Res}_{h}^{\pi^k,\sigma_i}(f^k)(s,\bm a)
=
f_h^k(s,\bm a)
-
\bigl[\mathcal T_h^{\pi^k,\phi,\sigma_i}f_{h+1}^k\bigr](s,\bm a).
\end{equation}
By Assumption~\ref{ass:completeness},
\(
\mathcal T_h^{\pi^k,\phi,\sigma_i}f_{h+1}^k\in\mathcal F_h,
\)
and hence \(\psi_h^k\) belongs to the robust Bellman residual class at stage
\(h\). Let \(\nu_h^k:=\nu_{i,h}^{\pi^k}\) denote the stage-\(h\) worst-case
state--joint-action occupancy induced by policy \(\pi^k\) and the robust
Bellman recursion for agent \(i\). Let \(\mu_h^k:=\mu_h^{\pi^k}\) denote the
stage-\(h\) nominal visitation distribution induced by executing \(\pi^k\) in
the nominal environment \(P^\star\).

Define the episode-\(k\) robust value-prediction error
\begin{equation}
\label{eq:thm3_delta_def_revised}
\Delta_k
:=
V_1^{\pi^k,f^k}(s_1^k)
-
V_1^{\pi^k,\sigma_i}(s_1^k).
\end{equation}
By robust Bellman telescoping along the worst-case occupancy sequence
\(\{\nu_h^k\}_{h=1}^H\), we have
\begin{equation}
\label{eq:thm3_telescoping_revised}
\Delta_k
\le
\sum_{h=1}^H
\left|
\mathbb E_{(s_h,\bm a_h)\sim \nu_h^k}
\bigl[
\psi_h^k(s_h,\bm a_h)
\bigr]
\right|.
\end{equation}

Now apply the robust bilinear factorization with
\(f=f^k\), \(\pi=\pi^k\), and \(\mu_h=\nu_h^k\). Since the robust bilinear
class is defined for the admissible stage-wise distribution family
\(\Pi_h\), and \(\nu_h^k\in\Pi_h\), this yields
\begin{equation}
\label{eq:thm3_bilinear_diag_revised}
\mathbb E_{\nu_h^k}[\psi_h^k]
=
\left\langle
\theta_h^k,\,
z_h^k
\right\rangle_{\mathcal W},
\end{equation}
where
\begin{align}
\label{eq:thm3_theta_def_revised}
\theta_h^k
&:=
X_h^{\mathrm{rob}}(f^k,\pi^k)
-
X_h^{\mathrm{rob}}
\Bigl(
f^{\widetilde{\bm\pi}^{\dagger,\sigma_i}(\pi^k),\sigma_i},
\widetilde{\bm\pi}^{\dagger,\sigma_i}(\pi^k)
\Bigr),
\\
\label{eq:thm3_z_def_revised}
z_h^k
&:=
Y_h^{\mathrm{rob}}(\nu_h^k).
\end{align}
For historical nominal data distributions, define similarly
\begin{equation}
\label{eq:thm3_y_def_revised}
y_h^s
:=
Y_h^{\mathrm{rob}}(\mu_h^s),
\qquad s<k.
\end{equation}
By the regularity condition of the robust bilinear class,
\begin{equation}
\label{eq:thm3_regular_norms_revised}
\|\theta_h^k\|_{\mathcal W}\le 2C_{\mathrm{rob}},
\qquad
\|z_h^k\|_{\mathcal W}\le 1,
\qquad
\|y_h^s\|_{\mathcal W}\le 1.
\end{equation}
Combining \eqref{eq:thm3_telescoping_revised} and
\eqref{eq:thm3_bilinear_diag_revised}, we obtain
\begin{equation}
\label{eq:thm3_delta_by_inner_revised}
\Delta_k
\le
\sum_{h=1}^H
\left|
\left\langle
\theta_h^k,z_h^k
\right\rangle_{\mathcal W}
\right|.
\end{equation}
Since both \(V_1^{\pi^k,f^k}(s_1^k)\) and
\(V_1^{\pi^k,\sigma_i}(s_1^k)\) are bounded by \(\gamma\), we also have
\[
\Delta_k\le 2\gamma.
\]
Therefore,
\begin{equation}
\label{eq:thm3_delta_min_revised}
\Delta_k
\le
\sum_{h=1}^H
\min\left\{
\left|
\left\langle
\theta_h^k,z_h^k
\right\rangle_{\mathcal W}
\right|,
2\gamma
\right\}.
\end{equation}

\paragraph{Introduce Gram matrices and decompose the sum.}

Fix \(\varepsilon>0\), to be chosen later. For each stage \(h\) and episode
\(k\), define the regularized Gram matrix using the historical nominal feature
directions:
\begin{equation}
\label{eq:thm3_gram_def_revised}
\Sigma_h^k
:=
\varepsilon I
+
\sum_{s=1}^{k-1} y_h^s(y_h^s)^\top .
\end{equation}
Then \(\Sigma_h^k\succeq \varepsilon I\), and the self-normalized
Cauchy--Schwarz inequality gives
\begin{equation}
\label{eq:thm3_selfnorm_cs_revised}
\left|
\left\langle \theta_h^k,z_h^k\right\rangle_{\mathcal W}
\right|
\le
\|\theta_h^k\|_{\Sigma_h^k}\,
\|z_h^k\|_{(\Sigma_h^k)^{-1}}.
\end{equation}

Using \eqref{eq:thm3_delta_min_revised}, decompose
\[
1
=
\mathbf 1\{\|z_h^k\|_{(\Sigma_h^k)^{-1}}\le 1\}
+
\mathbf 1\{\|z_h^k\|_{(\Sigma_h^k)^{-1}}>1\}.
\]
Then
\begin{equation}
\label{eq:thm3_AplusB_revised}
\sum_{k=1}^K\Delta_k
\le A+B,
\end{equation}
where
\begin{align}
\label{eq:thm3_A_def_revised}
A
&:=
\sum_{k=1}^K\sum_{h=1}^H
\min\left\{
\left|
\langle \theta_h^k,z_h^k\rangle_{\mathcal W}
\right|,
2\gamma
\right\}
\mathbf 1\{\|z_h^k\|_{(\Sigma_h^k)^{-1}}\le 1\},
\\
\label{eq:thm3_B_def_revised}
B
&:=
\sum_{k=1}^K\sum_{h=1}^H
\min\left\{
\left|
\langle \theta_h^k,z_h^k\rangle_{\mathcal W}
\right|,
2\gamma
\right\}
\mathbf 1\{\|z_h^k\|_{(\Sigma_h^k)^{-1}}>1\}.
\end{align}

\paragraph{Step 3: Bound the term \(A\).}

On the event \(\{\|z_h^k\|_{(\Sigma_h^k)^{-1}}\le 1\}\), combining
\eqref{eq:thm3_selfnorm_cs_revised} with
\(\min\{|x|,2\gamma\}\le 2\gamma |x|\) for \(|x|\le 1\), we obtain
\begin{align}
A
&\le
2\gamma
\sum_{k=1}^K\sum_{h=1}^H
\|\theta_h^k\|_{\Sigma_h^k}
\min\{\|z_h^k\|_{(\Sigma_h^k)^{-1}},1\}.
\label{eq:thm3_A_start_revised}
\end{align}
Expanding the weighted norm gives
\begin{align}
\|\theta_h^k\|_{\Sigma_h^k}
&=
\left[
\varepsilon \|\theta_h^k\|_{\mathcal W}^2
+
\sum_{s=1}^{k-1}
\langle \theta_h^k,y_h^s\rangle_{\mathcal W}^2
\right]^{1/2}
\le
2\sqrt{\varepsilon}\,C_{\mathrm{rob}}
+
\left[
\sum_{s=1}^{k-1}
\langle \theta_h^k,y_h^s\rangle_{\mathcal W}^2
\right]^{1/2},
\label{eq:thm3_expand_norm_revised}
\end{align}
where we used \eqref{eq:thm3_regular_norms_revised} and
\(\sqrt{a+b}\le \sqrt a+\sqrt b\).
Substituting \eqref{eq:thm3_expand_norm_revised} into
\eqref{eq:thm3_A_start_revised}, we get
\[
A\le A_1+A_2,
\]
where
\begin{align}
\label{eq:thm3_A1_def_revised}
A_1
&:=
4\gamma\sqrt{\varepsilon}\,C_{\mathrm{rob}}
\sum_{k=1}^K\sum_{h=1}^H
\min\{\|z_h^k\|_{(\Sigma_h^k)^{-1}},1\},
\\
\label{eq:thm3_A2_def_revised}
A_2
&:=
2\gamma
\sum_{k=1}^K\sum_{h=1}^H
\left[
\sum_{s=1}^{k-1}
\langle \theta_h^k,y_h^s\rangle_{\mathcal W}^2
\right]^{1/2}
\min\{\|z_h^k\|_{(\Sigma_h^k)^{-1}},1\}.
\end{align}

\begin{itemize}
\item \textbf{Bound \(A_1\).} By Cauchy--Schwarz,
\begin{align}
A_1
&\le
4\gamma\sqrt{\varepsilon}\,C_{\mathrm{rob}}
\sqrt{KH}
\left(
\sum_{k=1}^K\sum_{h=1}^H
\min\{\|z_h^k\|_{(\Sigma_h^k)^{-1}}^2,1\}
\right)^{1/2}.
\label{eq:thm3_A1_before_info_revised}
\end{align}
At this point, because \(z_h^k=Y_h^{\mathrm{rob}}(\nu_h^k)\) is a
worst-case feature direction while \(\Sigma_h^k\) is built from nominal
directions \(y_h^s=Y_h^{\mathrm{rob}}(\mu_h^s)\), we require the robust-to-nominal
feature coverage condition:
\begin{equation}
\label{eq:feature_coverage_revised}
\sum_{k=1}^K\sum_{h=1}^H
\min\{\|z_h^k\|_{(\Sigma_h^k)^{-1}}^2,1\}
\le
C_{\mathrm{cov}}\,
\nabla^K(\varepsilon,\mathcal Y^{\mathrm{rob}}),
\end{equation}
where $C_{\mathrm{cov}}\geq 1$ is the robust-to-nominal feature coverage
constant assumed in the statement of the theorem. A sufficient condition
under which this constant is finite is given in
Remark~\ref{rem:visitation-density} (for KL and $\chi^2$-divergence
uncertainty sets, where one may take $C_{\mathrm{cov}}=C_0$ with $C_0$ the
absolute constant of \eqref{eq:visitation_bound}); the corresponding
sufficient condition for TV-divergence uncertainty sets, via
Definition~\ref{ass:robust-nominal-concentrability}, is given in Remark~2
below. This is the bilinear-feature analogue of the robust-to-nominal
concentrability condition: it controls the worst-case directions by the
nominal data directions accumulated in the Gram matrix. Using \eqref{eq:feature_coverage_revised} in
\eqref{eq:thm3_A1_before_info_revised}, we obtain
\begin{align}
A_1
&\le
4\gamma
\sqrt{
\varepsilon K H C_{\mathrm{rob}}^2
C_{\mathrm{cov}}\,
\nabla^K(\varepsilon,\mathcal Y^{\mathrm{rob}})
}.
\end{align}
Applying Young's inequality gives
\begin{equation}
\label{eq:thm3_A1_bound_revised}
A_1
\le
2\varepsilon K H C_{\mathrm{rob}}^2
+
8\gamma^2 C_{\mathrm{cov}}\,
\nabla^K(\varepsilon,\mathcal Y^{\mathrm{rob}}).
\end{equation}

\item \textbf{Bound \(A_2\).} Applying Cauchy--Schwarz,
\begin{align}
A_2
&\le
2\gamma
\left(
\sum_{k=1}^K\sum_{h=1}^H\sum_{s=1}^{k-1}
\langle \theta_h^k,y_h^s\rangle_{\mathcal W}^2
\right)^{1/2}
\left(
\sum_{k=1}^K\sum_{h=1}^H
\min\{\|z_h^k\|_{(\Sigma_h^k)^{-1}}^2,1\}
\right)^{1/2}.
\label{eq:thm3_A2_before_factorization_revised}
\end{align}
By the robust bilinear factorization applied to the nominal distribution
\(\mu_h^s\),
\[
\langle \theta_h^k,y_h^s\rangle_{\mathcal W}
=
\mathbb E_{(s_h,\bm a_h)\sim \mu_h^s}
[
\psi_h^k(s_h,\bm a_h)
].
\]
Therefore, by Jensen's inequality,
\begin{align}
\langle \theta_h^k,y_h^s\rangle_{\mathcal W}^2
&\le
\mathbb E_{(s_h,\bm a_h)\sim \mu_h^s}
[
(\psi_h^k(s_h,\bm a_h))^2
].
\end{align}
Summing over \(h\) and \(s<k\) gives
\begin{equation}
\label{eq:thm3_jensen_to_discrepancy_revised}
\sum_{h=1}^H\sum_{s=1}^{k-1}
\langle \theta_h^k,y_h^s\rangle_{\mathcal W}^2
\le
\sum_{s=1}^{k-1}
e_i^s(f_i^k,\pi^k),
\end{equation}
where \(e_i^s(f_i^k,\pi^k)\) is exactly the robust Bellman training error
appearing in Definition~\ref{def:robust_MADC}.

Combining \eqref{eq:thm3_A2_before_factorization_revised},
\eqref{eq:feature_coverage_revised}, and
\eqref{eq:thm3_jensen_to_discrepancy_revised}, we obtain
\begin{align}
A_2
&\le
2\gamma
\sqrt{C_{\mathrm{cov}}\,
\nabla^K(\varepsilon,\mathcal Y^{\mathrm{rob}})
\sum_{k=1}^K\sum_{s=1}^{k-1}
e_i^s(f_i^k,\pi^k)
}.
\label{eq:thm3_A2_raw_revised}
\end{align}
Applying Young's inequality in the form
\[
2\sqrt{ab}\le \frac{1}{\mu}a+\mu b,
\]
with
\[
a=\sum_{k=1}^K\sum_{s=1}^{k-1} e_i^s(f_i^k,\pi^k),
\qquad
b=\gamma^2 C_{\mathrm{cov}}\,
\nabla^K(\varepsilon,\mathcal Y^{\mathrm{rob}}),
\]
yields
\begin{equation}
\label{eq:thm3_A2_bound_revised}
A_2
\le
\frac{1}{\mu}
\sum_{k=1}^K\sum_{s=1}^{k-1}e_i^s(f_i^k,\pi^k)
+
\mu\gamma^2 C_{\mathrm{cov}}\,
\nabla^K(\varepsilon,\mathcal Y^{\mathrm{rob}}).
\end{equation}
\end{itemize}

\paragraph{Step 4: Bound the term \(B\).}

By the definition of \(B\),
\[
B
\le
2\gamma
\sum_{k=1}^K\sum_{h=1}^H
\mathbf 1\{\|z_h^k\|_{(\Sigma_h^k)^{-1}}>1\}.
\]
Since \(\mathbf 1\{x>1\}\le \min\{1,x^2\}\) for \(x\ge0\),
\begin{align}
B
&\le
2\gamma
\sum_{k=1}^K\sum_{h=1}^H
\min\{1,\|z_h^k\|_{(\Sigma_h^k)^{-1}}^2\}
\nonumber\\
&\le
2\gamma C_{\mathrm{cov}}\,
\nabla^K(\varepsilon,\mathcal Y^{\mathrm{rob}}),
\label{eq:thm3_B_bound_revised}
\end{align}
where the last step again uses \eqref{eq:feature_coverage_revised}.

\paragraph{Step 5: Combine the bounds and choose \(\varepsilon\).}

Combining \eqref{eq:thm3_AplusB_revised},
\eqref{eq:thm3_A1_bound_revised},
\eqref{eq:thm3_A2_bound_revised}, and
\eqref{eq:thm3_B_bound_revised}, we obtain
\begin{align}
\sum_{k=1}^K\Delta_k
&\le
\frac{1}{\mu}
\sum_{k=1}^K\sum_{s=1}^{k-1}e_i^s(f_i^k,\pi^k)
+
2\varepsilon KHC_{\mathrm{rob}}^2
\nonumber\\
&\quad
+
(8+\mu)\gamma^2 C_{\mathrm{cov}}\,
\nabla^K(\varepsilon,\mathcal Y^{\mathrm{rob}})
+
2\gamma C_{\mathrm{cov}}\,
\nabla^K(\varepsilon,\mathcal Y^{\mathrm{rob}}).
\label{eq:thm3_before_eps_revised}
\end{align}

Choose
\(
\varepsilon=\frac{1}{K C_{\mathrm{rob}}^2}.
\)
Then
\(
2\varepsilon KHC_{\mathrm{rob}}^2=2H.
\)
Thus,
\begin{align}
\sum_{k=1}^K\Delta_k
&\le
\frac{1}{\mu}
\sum_{k=1}^K\sum_{s=1}^{k-1}e_i^s(f_i^k,\pi^k)
+
2H +
(8+\mu)\gamma^2 C_{\mathrm{cov}}\,
\nabla^K\!\left(
\frac{1}{KC_{\mathrm{rob}}^2},\mathcal Y^{\mathrm{rob}}
\right)\nonumber\\
&\qquad \qquad +
2\gamma C_{\mathrm{cov}}\,
\nabla^K\!\left(
\frac{1}{KC_{\mathrm{rob}}^2},\mathcal Y^{\mathrm{rob}}
\right).
\label{eq:thm3_after_eps_revised}
\end{align}

Define
\[
d_{i,\mathrm{MADC}}^{\mathrm{rob}}
:=
\max\left\{
1,\;
c_0(1+\gamma^2)C_{\mathrm{cov}}\,
\nabla^K\!\left(
\frac{1}{KC_{\mathrm{rob}}^2},\mathcal Y^{\mathrm{rob}}
\right)
\right\},
\]
for a sufficiently large universal constant \(c_0>0\). Then the terms
\[
(8+\mu)\gamma^2 C_{\mathrm{cov}}\nabla^K(\cdot),
\qquad
2\gamma C_{\mathrm{cov}}\nabla^K(\cdot),
\qquad
2H
\]
are absorbed into
\[
\mu d_{i,\mathrm{MADC}}^{\mathrm{rob}}
+
cH d_{i,\mathrm{MADC}}^{\mathrm{rob}},
\]
for a sufficiently large universal constant \(c>0\). Consequently,
\[
\sum_{k=1}^K\Delta_k
\le
\frac{1}{\mu}
\sum_{k=1}^K\sum_{s=1}^{k-1}e_i^s(f_i^k,\pi^k)
+
\mu d_{i,\mathrm{MADC}}^{\mathrm{rob}}
+
cH d_{i,\mathrm{MADC}}^{\mathrm{rob}}.
\]
This is exactly the decoupling inequality in
Definition~\ref{def:robust_MADC}. Hence,
\[
d_{i,\mathrm{MADC}}^{\mathrm{rob}}
\le
\mathcal O\!\left(
\max\left\{
1,\;
(1+\gamma^2)C_{\mathrm{cov}}\,
\nabla^K\!\left(
\frac{1}{KC_{\mathrm{rob}}^2},
\mathcal Y^{\mathrm{rob}}
\right)
\right\}
\right).
\]
Taking the maximum over agents gives
\[
d_{\mathrm{MADC}}^{\mathrm{rob},\max}
=
\max_{i\in\mathcal M}d_{i,\mathrm{MADC}}^{\mathrm{rob}}
\le
\mathcal O\!\left(
\max\left\{
1,\;
(1+\gamma^2)C_{\mathrm{cov}}\,
\nabla^K\!\left(
\frac{1}{KC_{\mathrm{rob}}^2},
\mathcal Y^{\mathrm{rob}}
\right)
\right\}
\right).
\]
This proves the theorem.
\end{proof}

We now introduce concrete instances of the multi-agent robust bilinear classes, focusing on general-sum DRMGs with linear function approximation. In  the single-agent robust RL setting \citep{ghosh2025scaling}, linear Bellman-complete models assume that the robust Bellman operator preserves linear structure with respect to a given feature representation. Extending this idea to the multi-agent setting, we consider DRMGs in which the \emph{robust} Bellman operator satisfies a stagewise linear completeness condition, so that the class of linear Q-functions for each agent remains closed under robust Bellman updates. This yields a natural and tractable subclass of robust bilinear models, and serves as a canonical example illustrating how Definition~\ref{def:robust-multi-agent-bilinear-class} can be verified in structured multi-agent robust settings.

\paragraph{Example: Distributionally robust linear Bellman-complete DRMGs.}
We first introduce a general linear robust class and then show that a robust zero-sum linear Markov game is a concrete special case of this class under an additional structural condition on the worst-case support operator.

\medskip
\noindent
\textbf{Part I: Distributionally robust linear Bellman-complete DRMGs.}
We say that a DRMG is a \emph{distributionally robust linear Bellman-complete DRMG of dimension \(d_{\mathrm{lin}}\)} if, for every stage \(h\in[H]\), there exists a known feature map
\[
\phi_h:\mathcal S\times\mathcal A \to \mathbb R^{d_{\mathrm{lin}}},
\qquad
\|\phi_h(s,\bm a)\|_2 \le 1,
\quad \forall (s,\bm a)\in \mathcal S\times\mathcal A,
\]
such that, for every agent \(i\in\mathcal M\), the stagewise function class satisfies
\[
\mathcal F_{i,h}
\subseteq
\Bigl\{
(s,\bm a)\mapsto \langle \theta,\phi_h(s,\bm a)\rangle
:\theta\in\mathbb R^{d_{\mathrm{lin}}},\ \|\theta\|_2\le \sqrt{C_\theta}
\Bigr\},
\]
for some constant \(C_\theta>0\), and Assumption~\ref{ass:completeness} holds, i.e.,
\[
\mathcal T_{i,h}^{\pi,\phi,\sigma_i}f_{i,h+1}\in \mathcal F_{i,h},
\qquad
\forall f_{i,h+1}\in \mathcal F_{i,h+1},
\quad
\forall \pi\in \Pi^{\mathrm{pur}}.
\]
This is the robust analogue of the linear Bellman-complete class in the nominal
setting: the nominal Bellman image is replaced by the robust Bellman image under
\(\mathcal T_{i,h}^{\pi,\phi,\sigma_i}\).

\begin{itemize}
\item \textbf{Step 1: Verify the bilinear factorization in Definition~\ref{def:robust-multi-agent-bilinear-class}.}
Fix an agent \(i\in\mathcal M\), a stage \(h\in[H]\), a pure joint policy
\(\pi\in\Pi^{\mathrm{pur}}\), and a hypothesis \(f_i\in\mathcal F_i\).
Since \(f_{i,h}\in\mathcal F_{i,h}\), there exists a coefficient vector
\(\theta_{i,h}^f\in\mathbb R^{d_{\mathrm{lin}}}\) such that
\[
f_{i,h}(s,\bm a)=\langle \theta_{i,h}^f,\phi_h(s,\bm a)\rangle,
\qquad
\|\theta_{i,h}^f\|_2\le \sqrt{C_\theta}.
\]
Moreover, by Assumption~\ref{ass:completeness}, the robust Bellman image of
\(f_{i,h+1}\) also belongs to \(\mathcal F_{i,h}\). Hence there exists another
vector \(w_{i,h}^{f,\pi,\sigma_i}\in\mathbb R^{d_{\mathrm{lin}}}\) such that
\[
\bigl[\mathcal T_{i,h}^{\pi,\phi,\sigma_i}f_{i,h+1}\bigr](s,\bm a)
=
\langle w_{i,h}^{f,\pi,\sigma_i},\phi_h(s,\bm a)\rangle .
\]
Therefore, the robust Bellman residual remains linear in the same feature map:
\begin{align}
\mathrm{Res}_{i,h}^{\pi,\sigma_i}(f_i)(s,\bm a)
&=
f_{i,h}(s,\bm a)
-
\bigl[\mathcal T_{i,h}^{\pi,\phi,\sigma_i}f_{i,h+1}\bigr](s,\bm a)
\nonumber\\
&=
\langle \theta_{i,h}^f-w_{i,h}^{f,\pi,\sigma_i},\phi_h(s,\bm a)\rangle .
\label{eq:merged_example_linear_residual}
\end{align}

Now let \(\mu_h\in\Pi_h\) be any admissible stagewise visitation distribution
in the sense of Definition~\ref{def:robust-multi-agent-bilinear-class}. In
particular, \(\mu_h\) may correspond either to a nominal visitation distribution
or to a worst-case occupancy induced by the robust Bellman recursion. Taking
expectation under \(\mu_h\) in \eqref{eq:merged_example_linear_residual}, we
obtain
\begin{align}
\mathbb E_{(s_h,\bm a_h)\sim \mu_h}
\bigl[
\mathrm{Res}_{i,h}^{\pi,\sigma_i}(f_i)(s_h,\bm a_h)
\bigr]
&=
\left\langle
\theta_{i,h}^f-w_{i,h}^{f,\pi,\sigma_i},
\;
\mathbb E_{\mu_h}[\phi_h(s_h,\bm a_h)]
\right\rangle .
\label{eq:merged_example_linear_expectation}
\end{align}
Thus, the bilinear factorization in
Definition~\ref{def:robust-multi-agent-bilinear-class} holds by taking
\[
\mathcal W=\mathbb R^{d_{\mathrm{lin}}},
\qquad
Y_{i,h}^{\mathrm{rob}}(\mu_h)=\mathbb E_{\mu_h}[\phi_h(s_h,\bm a_h)],
\]
and using the centered feature coefficient
\[
X_{i,h}^{\mathrm{rob}}(f_i,\pi)
-
X_{i,h}^{\mathrm{rob}}
\bigl(f_i^{\star,\pi,\sigma_i},\pi\bigr)
=
\theta_{i,h}^f-w_{i,h}^{f,\pi,\sigma_i}.
\]
Equivalently, the Bellman-image coefficient
\(w_{i,h}^{f,\pi,\sigma_i}\) can be absorbed into the centered \(X\)-map in
Definition~\ref{def:robust-multi-agent-bilinear-class}.

\item \textbf{Step 2: Verify the regularity condition.}
First, since \(\|\phi_h(s,\bm a)\|_2\le 1\) for all \((s,\bm a)\), Jensen’s
inequality gives
\[
\|Y_{i,h}^{\mathrm{rob}}(\mu_h)\|_2
=
\left\|
\mathbb E_{\mu_h}[\phi_h(s_h,\bm a_h)]
\right\|_2
\le
\mathbb E_{\mu_h}\|\phi_h(s_h,\bm a_h)\|_2
\le 1.
\]
This bound holds uniformly over both nominal visitation distributions and
worst-case occupancies. Second, because both \(f_{i,h}\) and
\(\mathcal T_{i,h}^{\pi,\phi,\sigma_i}f_{i,h+1}\) belong to the same linear
class,
\[
\|\theta_{i,h}^f-w_{i,h}^{f,\pi,\sigma_i}\|_2
\le
\|\theta_{i,h}^f\|_2+\|w_{i,h}^{f,\pi,\sigma_i}\|_2
\le
2\sqrt{C_\theta}.
\]
Hence one may take
\[
C_{\mathrm{rob}}=2\sqrt{C_\theta}.
\]

\item \textbf{Step 3: Consequence for robust information gain and robust MADC.}
Therefore, distributionally robust linear Bellman-complete DRMGs belong to the
robust multi-agent bilinear class of
Definition~\ref{def:robust-multi-agent-bilinear-class}. Since the feature
dimension is \(d_{\mathrm{lin}}\) and the features are uniformly bounded, the
associated robust bilinear information gain satisfies
\[
\nabla^K\!\left(\frac{1}{K C_{\mathrm{rob}}^2},\mathcal Y^{\mathrm{rob}}\right)
=
\widetilde{\mathcal O}(H d_{\mathrm{lin}}),
\]
where \(\widetilde{\mathcal O}\) hides only absolute constants and logarithmic
factors.

To translate this information-gain bound into a robust MADC bound, we also use
the robust-to-nominal feature coverage condition from
Theorem~\ref{thm:robust-bilinear-implies-low-madc}. This condition accounts for
the mismatch between worst-case feature directions
\(Y_{i,h}^{\mathrm{rob}}(\nu_{i,h}^{\pi})\) and the nominal historical feature
directions \(Y_{i,h}^{\mathrm{rob}}(\mu_h^{\pi})\) accumulated from interaction
with \(P^\star\). Consequently, by
Theorem~\ref{thm:robust-bilinear-implies-low-madc},
\[
d_{\mathrm{MADC}}^{\mathrm{rob},\max}
=
\widetilde{\mathcal O}\!\left(
(1+\gamma^2)\,C_0\,H d_{\mathrm{lin}}
\right).
\]
When \(\gamma\ge 1\), this simplifies to
\[
d_{\mathrm{MADC}}^{\mathrm{rob},\max}
=
\widetilde{\mathcal O}\!\left(
\gamma^2 C_0 H d_{\mathrm{lin}}
\right).
\]

\item \textbf{Step 4: Regret implication.}
Combining the above with the regret bound in
Theorem~\ref{thm:main-robust-regret}, it follows that RoMEX-\(\phi\) achieves
\[
\widetilde{\mathcal O}\!\left(
\gamma^2C_0H d_{\mathrm{lin}}\sqrt K
+
\gamma^2C_0H^2 d_{\mathrm{lin}}
+
C_0H(B^{\max}_\phi)^2\sqrt K
+
\varepsilon^{\mathrm{approx}}_{\mathrm{dual}}
\right)
\]
robust equilibrium regret in this class, up to the logarithmic factors already
present in Theorem~\ref{thm:main-robust-regret}. Thus, robust linear
Bellman-complete DRMGs provide a concrete structured family in which the robust
bilinear information gain is finite and the robust MADC is finite under
robust-to-nominal feature coverage. The resulting regret scales polynomially in the horizon and linearly in the feature dimension, up to logarithmic factors,
while explicitly reflecting the distribution mismatch through
\(C_0\).
\end{itemize}

\medskip
\noindent
\textbf{Part II: Robust zero-sum linear Markov games as a special case.}
We now specialize the above construction to the two-player zero-sum setting motivated by \citep[Example 5.10]{Xiong2023SampleEfficientMR}. Consider a two-player zero-sum DRMG with joint action \((a,b)\in\mathcal A\times\mathcal B\). Suppose that, for each stage \(h\in[H]\), there exists a known feature map
\(
\varphi_h:\mathcal S\times\mathcal A\times\mathcal B\to\mathbb R^{d_{\mathrm{lin}}},
\quad
\|\varphi_h(s,a,b)\|_2\le 1,
\)
and a reward parameter \(\theta_h\in\mathbb R^{d_{\mathrm{lin}}}\) such that
\(
r_h(s,a,b)=\langle \theta_h,\varphi_h(s,a,b)\rangle,
\qquad
\|\theta_h\|_2\le \sqrt{C_\theta}.
\)
Assume also that the nominal transition kernel is linear in the same feature map:
\(
P_h^\star(\cdot\mid s,a,b)=\langle \varphi_h(s,a,b),\mu_h(\cdot)\rangle,
\)
for a vector of signed measures \(\mu_h=\{\mu_{h,j}\}_{j=1}^{d_{\mathrm{lin}}}\).

In the nominal setting, such linearity is sufficient to imply that the Bellman image remains linear. In the robust setting, however, this implication is no longer automatic, because the robust support operator
\[
V\mapsto \inf_{P\in\mathcal P_{i,h,\phi}^{\sigma_i}(s,a,b)}
\mathbb E_{s'\sim P}[V(s')]
\]
is generally nonlinear in \(V\). Consequently, nominal linearity of \(P_h^\star\) alone is insufficient for verifying Definition~\ref{def:robust-multi-agent-bilinear-class}. To recover a bilinear structure, we impose an additional \emph{robust support linearity} assumption: for each player \(i\in\{1,2\}\), each stage \(h\in[H]\), and every bounded continuation value \(V:\mathcal S\to\mathbb R\), there exists a vector
\(
\beta_{i,h}^{\sigma_i}(V)\in\mathbb R^{d_{\mathrm{lin}}}
\)
such that
\begin{equation}
\label{eq:merged_example_support_linearity}
\inf_{P\in \mathcal P_{i,h,\phi}^{\sigma_i}(s,a,b)}
\mathbb E_{s'\sim P}[V(s')]
=
\langle \beta_{i,h}^{\sigma_i}(V),\varphi_h(s,a,b)\rangle,
\qquad
\forall (s,a,b)\in\mathcal S\times\mathcal A\times\mathcal B.
\end{equation}

\begin{itemize}
\item \textbf{Step 5: Verify that robust zero-sum linear MGs fit the bilinear framework.}
Fix an agent \(i\in\{1,2\}\), a stage \(h\in[H]\), a pure joint policy \(\pi\in\Pi^{\mathrm{pur}}\), and a hypothesis \(f_i\in\mathcal F_i\). Suppose
\[
f_{i,h}(s,a,b)=\langle \vartheta_{i,h}^f,\varphi_h(s,a,b)\rangle,
\qquad
\|\vartheta_{i,h}^f\|_2\le \sqrt{C_\theta}.
\]
Using the continuation value
\[
V_{i,h+1}^{\pi,f_i}(s')
=
\mathbb E_{(a',b')\sim \pi_h(\cdot\mid s')}
[f_{i,h+1}(s',a',b')],
\]
and applying \eqref{eq:merged_example_support_linearity}, we obtain
\begin{align}
\bigl[\mathcal T_{i,h}^{\pi,\phi,\sigma_i}f_{i,h+1}\bigr](s,a,b)
&=
r_h(s,a,b)+
\inf_{P\in \mathcal P_{i,h,\phi}^{\sigma_i}(s,a,b)}
\mathbb E_{s'\sim P}[V_{i,h+1}^{\pi,f_i}(s')]
\nonumber\\
&=
\langle \theta_h,\varphi_h(s,a,b)\rangle
+
\langle \beta_{i,h}^{\sigma_i}(V_{i,h+1}^{\pi,f_i}),\varphi_h(s,a,b)\rangle
\nonumber\\
&=
\left\langle
\theta_h+\beta_{i,h}^{\sigma_i}(V_{i,h+1}^{\pi,f_i}),
\varphi_h(s,a,b)
\right\rangle.
\label{eq:merged_example_zero_sum_bellman}
\end{align}
Hence the robust Bellman image is again linear in the same feature map. Therefore, the robust Bellman residual becomes
\begin{align}
\mathrm{Res}_{i,h}^{\pi,\sigma_i}(f_i)(s,a,b)
&=
f_{i,h}(s,a,b)-\bigl[\mathcal T_{i,h}^{\pi,\phi,\sigma_i}f_{i,h+1}\bigr](s,a,b)
\nonumber\\
&=
\left\langle
\vartheta_{i,h}^f-\theta_h-\beta_{i,h}^{\sigma_i}(V_{i,h+1}^{\pi,f_i}),
\varphi_h(s,a,b)
\right\rangle.
\label{eq:merged_example_zero_sum_residual}
\end{align}

Now let \(\mu_h\in\Pi_h\) be any admissible stagewise visitation distribution (including both nominal visitation distributions and worst-case occupancies). Taking expectation under \(\mu_h\) in \eqref{eq:merged_example_zero_sum_residual}, we obtain
\begin{align}
\mathbb E_{(s_h,a_h,b_h)\sim \mu_h}
\bigl[
\mathrm{Res}_{i,h}^{\pi,\sigma_i}(f_i)(s_h,a_h,b_h)
\bigr]
&=
\left\langle
\vartheta_{i,h}^f-\theta_h-\beta_{i,h}^{\sigma_i}(V_{i,h+1}^{\pi,f_i}),
\;
\mathbb E_{\mu_h}[\varphi_h(s_h,a_h,b_h)]
\right\rangle.
\label{eq:merged_example_zero_sum_expectation}
\end{align}
Thus, robust zero-sum linear MGs belong to the robust multi-agent bilinear class with
\[
\mathcal W=\mathbb R^{d_{\mathrm{lin}}},
\qquad
Y_{i,h}^{\mathrm{rob}}(\mu_h)=\mathbb E_{\mu_h}[\varphi_h(s_h,a_h,b_h)],
\]
and the centered \(X\)-map defined through the coefficient difference, consistent with Definition~\ref{def:robust-multi-agent-bilinear-class}.

\item \textbf{Step 6: Verify regularity and derive robust MADC.}
The bound
\[
\|Y_{i,h}^{\mathrm{rob}}(\mu_h)\|_2
\le 1
\]
follows exactly as in Part~I by Jensen’s inequality and holds uniformly over both nominal and worst-case distributions. To control the coefficient vector in \eqref{eq:merged_example_zero_sum_residual}, assume that the support-linearity coefficient is uniformly bounded over bounded continuation values, namely
\[
\sup_{h\in[H]}
\sup_{\|V\|_\infty\le \gamma}
\|\beta_{i,h}^{\sigma_i}(V)\|_2
\le
B_\beta.
\]
Then
\[
\bigl\|
\vartheta_{i,h}^f-\theta_h-\beta_{i,h}^{\sigma_i}(V_{i,h+1}^{\pi,f_i})
\bigr\|_2
\le
2\sqrt{C_\theta}+B_\beta,
\]
so that one may take
\[
C_{\mathrm{rob}}=2\sqrt{C_\theta}+B_\beta.
\]

Since the feature dimension remains \(d_{\mathrm{lin}}\), the associated robust information gain satisfies
\[
\nabla^K\!\left(\frac{1}{K C_{\mathrm{rob}}^2},\mathcal Y^{\mathrm{rob}}\right)
=
\widetilde{\mathcal O}(H d_{\mathrm{lin}}).
\]
However, translating this into a robust MADC bound requires controlling the mismatch between worst-case feature directions and nominal historical feature directions. Under the robust-to-nominal feature coverage condition of Theorem~\ref{thm:robust-bilinear-implies-low-madc}, we obtain
\[
d_{\mathrm{MADC}}^{\mathrm{rob},\max}
=
\widetilde{\mathcal O}\!\left(
\gamma^2\,C_0\,H d_{\mathrm{lin}}
\right).
\]

\item \textbf{Step 7: Regret implication for the zero-sum robust linear subclass.}
Combining the above with Theorem~\ref{thm:main-robust-regret}, we conclude that RoMEX-\(\phi\) achieves
\begin{align}
\label{eq:regret-bilinear}
  \widetilde{\mathcal O}\!\left(
\gamma^2 C_0 H d_{\mathrm{lin}}\sqrt K
+
\gamma^2 C_0 H^2 d_{\mathrm{lin}}
+
C_0 H(B^{\max}_\phi)^2\sqrt K
+
\varepsilon^{\mathrm{approx}}_{\mathrm{dual}}
\right)  
\end{align}
robust equilibrium regret in this subclass, up to logarithmic factors. In particular, this provides a concrete structured two-player zero-sum robust class in which the robust MADC is finite and the regret remains sublinear in \(K\), with an explicit dependence on the distribution mismatch through \(C_0\).
\end{itemize}
\begin{rem}[Bilinear Regret for TV]
    For TV-case, we apply Definition \ref{ass:robust-nominal-concentrability} to handle the support-shift issue and we apply $C_{\mathrm{cov}}\geq 1$ in  \eqref{eq:regret-bilinear} instead of $C_0$ to get the revised regret bound as $  \widetilde{\mathcal O}\!\left(
\gamma^2 C_{\mathrm{cov}} H d_{\mathrm{lin}}\sqrt K
+
\gamma^2 C_{\mathrm{cov}} H^2 d_{\mathrm{lin}}
+
C_{\mathrm{cov}} H(B^{\max}_\phi)^2\sqrt K
+
\varepsilon^{\mathrm{approx}}_{\mathrm{dual}}
\right)$.
\end{rem}

This merged example highlights the key structural distinction between the nominal and robust settings. In the nominal case, linearity of the transition model directly implies linearity of the Bellman operator and hence immediately yields a bilinear factorization. In the robust case, this implication fails because the worst-case support operator is nonlinear in the continuation value. The general robust linear Bellman-complete DRMG class addresses this by assuming closure of the robust Bellman image, while the robust zero-sum linear MG specialization makes explicit the additional support-linearity condition under which this closure can be verified constructively. 

In both cases, the resulting information gain scales as \(\widetilde{\mathcal O}(H d_{\mathrm{lin}})\). However, in the robust setting this alone is not sufficient to control value prediction, due to the mismatch between nominal training distributions and worst-case occupancies. When combined with the robust-to-nominal feature coverage condition, this yields a bounded robust MADC of order 
\(\widetilde{\mathcal O}(\gamma^2 H d_{\mathrm{lin}})\). Consequently, by Theorem~\ref{thm:main-robust-regret}, this structure ensures sublinear robust equilibrium regret in bilinear classes.


\section{Auxiliary Lemmas}
\label{app:aux_lemma}
We first prove several auxiliary lemmas.
\begin{lem}[Unified surrogate deviation inequality]
\label{lem:surrogate_ne_avg}
Fix an episode $k\in[K]$, and let $\widehat{\bm \Gamma}^k$ be the surrogate normal-form game induced by the regularized robust payoffs $\{\widehat U_i^k\}_{i\in\mathcal M}$ defined in~(26). Let $\pi^k$ be the output of the equilibrium oracle for a target equilibrium notion $\mathsf{Eq}\in\{\mathrm{NASH},\mathrm{CCE},\mathrm{CE}\}$. For each agent $i\in\mathcal M$, define the corresponding surrogate unilateral deviation policy $\widetilde{\bm \pi}^{\dagger, k, \mathsf{Eq}}_i$ as defined in \eqref{eq:best_response_i_all_agents}. Then, for every agent $i\in\mathcal M$, the surrogate payoff satisfies
\[
\mathbb E_{\pi^k}\!\left[\widehat U_i^k\right]
\ge
\mathbb E_{\widetilde{\bm \pi}^{\dagger, k, \mathsf{Eq}}_i}\!\left[\widehat U_i^k\right].
\]

\end{lem}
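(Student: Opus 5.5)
The plan is to treat this as a purely game-theoretic consequence of the equilibrium definitions applied to the finite normal-form game $\widehat{\bm \Gamma}^k$. The strategy sets are the finite pure-policy sets $\Pi_i^{\mathrm{pur}}$, and the payoffs $\widehat U_i^k(\cdot)$ are fixed real numbers once the data $\mathcal D_h^{k-1}$ are fixed. For a (possibly correlated) distribution $\rho$ over $\Pi^{\mathrm{pur}}$, I read $\mathbb E_{\rho}[\widehat U_i^k]$ as $\sum_{\pi\in\Pi^{\mathrm{pur}}}\rho(\pi)\widehat U_i^k(\pi)$. The deviation policies in \eqref{eq:best_response_i_all_agents} are interpreted as the corresponding deviations in $\widehat{\bm \Gamma}^k$:
\begin{itemize}
\item a product $\pi_i'\times\pi^k_{-i}$ with $\pi_i'\in\Delta(\Pi_i^{\mathrm{pur}})$ in the NE/CCE case;
\item a pushforward of $\pi^k$ under a swap map $\psi_i:\Pi_i^{\mathrm{pur}}\to\Pi_i^{\mathrm{pur}}$ in the CE case.
\end{itemize}
By linearity of $\mathbb E$ in the mixing weights, it suffices in all cases to compare against pure deviations.

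\textbf{Case NE.} The oracle returns a product $\pi^k=\times_j\pi^k_j$ with $\pi^k_j\in\Delta(\Pi_j^{\mathrm{pur}})$. By the Nash condition,
\[
\mathbb E_{\pi^k}[\widehat U_i^k]\ge \mathbb E_{\pi_i'\times\pi^k_{-i}}[\widehat U_i^k]\quad\text{for every pure }\pi_i'.
\]
Averaging this over any mixture $\pi_i'$ gives the claim for $\widetilde{\bm\pi}_i^{\dagger,k,\mathrm{NE}}$.

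\textbf{Case CCE.} The CCE condition of $\widehat{\bm\Gamma}^k$ states
\[
\sum_\pi \pi^k(\pi)\,\widehat U_i^k(\pi)\ge \sum_\pi \pi^k(\pi)\,\widehat U_i^k(\pi_i',\pi_{-i})\quad\text{for each fixed pure }\pi_i'.
\]
The right-hand side is exactly $\mathbb E_{\pi_i'\times\pi^k_{-i}}[\widehat U_i^k]$, where $\pi^k_{-i}$ is the marginal of $\pi^k$. Again, linearity extends the inequality to mixed $\pi_i'$.

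\textbf{Case CE.} The CE condition gives
\[
\sum_\pi\pi^k(\pi)\,\widehat U_i^k(\pi)\ge\sum_\pi\pi^k(\pi)\,\widehat U_i^k(\psi_i(\pi_i),\pi_{-i})\quad\text{for every swap map }\psi_i.
\]
The right-hand side equals $\mathbb E_{\psi_i\diamond\pi^k}[\widehat U_i^k]$.

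The only real obstacle is definitional. The deviation $\widetilde{\bm\pi}^{\dagger,k,\mathsf{Eq}}_i$ is defined as a robust best response in the DRMG, that is, a Markov policy maximizing $V^{\cdot,\sigma_i}_{i,1}$. It is not automatically a strategy of $\widehat{\bm\Gamma}^k$. I would handle this in two steps:
\begin{itemize}
\item Note (as in the bilinear section) that a robust best response can be taken pure, i.e.\ an element of $\Pi_i^{\mathrm{pur}}$.
\item Note that a per-state strategy modification $\psi_{h,s}$ induces a map on $\Pi_i^{\mathrm{pur}}$, namely $\pi_i\mapsto(\psi_{h,s}(\pi_{i,h}(s)))_{h,s}$, and is therefore a legitimate swap map in $\widehat{\bm\Gamma}^k$.
\end{itemize}
Once this embedding is checked, the inequality holds deterministically for every $k$ and $i$, with no probabilistic event required.
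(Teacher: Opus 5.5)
Your proposal is correct and matches the paper's proof. Both arguments invoke the NE/CCE/CE unilateral-deviation conditions of the finite surrogate game $\widehat{\bm \Gamma}^k$ case by case, plug in the specific deviation $\widetilde{\bm \pi}^{\dagger,k,\mathsf{Eq}}_i$, and need no probabilistic event. Your embedding step, which the paper leaves implicit, is what makes $\mathbb E_{\widetilde{\bm \pi}^{\dagger,k,\mathsf{Eq}}_i}[\widehat U_i^k]$ well defined in the first place: you place the robust best response in $\Pi_i^{\mathrm{pur}}$ (or, by linearity, in $\Delta(\Pi_i^{\mathrm{pur}})$), and you lift the per-state modifications $\psi_{h,s}$ to a swap map on $\Pi_i^{\mathrm{pur}}$.
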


\begin{proof}
Fix $k\in[K]$ and $i\in\mathcal M$. By construction, $\widehat{\bm \Gamma}^k$ is a normal-form game whose pure strategies are elements of $\Pi^{\mathrm{pur}}$, and whose payoff to agent $i$ at pure profile $\pi$ is $\widehat U_i^k(\pi)$ as defined in \eqref{eq:reg_robust_payoff}. Depending on the target equilibrium notion, the oracle returns either
(i) a product mixed strategy over $\{\Pi_i^{\mathrm{pur}}\}_{i\in\mathcal M}$ in the NASH case,
or (ii) a correlated distribution over $\Pi^{\mathrm{pur}}$ in the CCE/CE cases. Since $\pi^k$ is returned by the Equilibrium oracle applied to the surrogate normal-form game $\widehat{\bm \Gamma}^k$ under the target equilibrium notion $\mathsf{Eq}=\{\mathrm{NASH}/\mathrm{CCE}/\mathrm{CE}\}$, it satisfies the unilateral deviation property: for any agent $i \in \mathcal{M}$ and any mixed strategy $\pi'_i \in \Delta(\Pi_i^{\mathrm{pur}})$,
\begin{equation}\label{eq:ne_deviation_property}
\mathbb{E}_{\pi^k}\!\left[\widehat U_i^k\right]
\;\ge\;
\mathbb{E}_{\pi'_i \times \pi_{-i}^k}\!\left[\widehat U_i^k\right].
\end{equation}

\begin{itemize}
\item \textbf{Case 1: $\bm{\mathsf{Eq}=\mathrm{NASH}}$.}
Since $\pi^k$ is a Nash equilibrium of $\widehat{\bm \Gamma}^k$, no agent can improve its surrogate payoff by unilateral deviation. Hence,
\[
\mathbb E_{\pi^k}\!\left[\widehat U_i^k\right]
\ge
\mathbb E_{\pi_i'\times \pi_{-i}^k}\!\left[\widehat U_i^k\right]
\quad \forall \pi_i'.
\]
Taking $\pi_i'=\pi_{i,\mathrm{NASH}}^{\dagger,k}(\pi_{-i}^k)$ gives the result.

\item \textbf{Case 2: $\bm {\mathsf{Eq}=\mathrm{CCE}}$.}
Since $\pi^k$ is a coarse correlated equilibrium of $\widehat{\bm \Gamma}^k$, no agent can improve its surrogate payoff by deviating to any fixed policy chosen independently of the recommendation. Therefore,
\[
\mathbb E_{\pi^k}\!\left[\widehat U_i^k\right]
\ge
\mathbb E_{\pi_i'\times \pi_{-i}^k}\!\left[\widehat U_i^k\right]
\quad \forall \pi_i'.
\]
Taking $\pi_i'=\pi_{i,\mathrm{CCE}}^{\dagger,k}(\pi_{-i}^k)$ yields the result. Here $\pi_{-i}^k$ denotes the marginal distribution over the opponents' pure strategies
induced by the correlated distribution $\pi^k$ in the surrogate game.

\item \textbf{Case 3: $\bm{\mathsf{Eq}=\mathrm{CE}}$.} Since $\pi^k$ is a correlated equilibrium of $\widehat{\Gamma}^k$, no agent can improve
its surrogate payoff by applying any strategy modification
$\psi_i \in \widehat{\Psi}_i$ to its recommendation. Hence,
\[
\mathbb E_{\pi^k}[\widehat U_i^k]
\ge
\mathbb E_{\psi_i \diamond \pi^k}[\widehat U_i^k],
\qquad \forall \psi_i \in \widehat{\Psi}_i,
\]
where we define $\widehat{\Psi}_i :=\{\psi_i: \Pi^{\mathrm{pur}}_i \to \Pi^{\mathrm{pur}}_i\}$. Taking $\psi_i = \psi_i^{\dagger,k}$ gives the result.
\end{itemize}
Combining the three cases proves the lemma. This argument follows the same outer normal-form equilibrium step used in \citep[Appendix A.2] {Xiong2023SampleEfficientMR}; see, e.g., Eqs.~(A.13)--(A.14) for an analogous deviation inequality.
\end{proof}

\begin{lem}[Robust payoff domination by plug-in surrogate value]
\label{cor:robust-payoff-domination}
Fix an episode \(k\), a pure joint policy \(\pi \in \Pi^{\mathrm{pur}}\), and an agent
\(i \in \mathcal M\). Let \(\{f_{i,h}^{k,\pi}\}_{h=1}^{H+1}\) be the backward fitted sequence
constructed by the algorithm, with \(f_{i,H+1}^{k,\pi}\equiv 0\), and let
\(\{\widehat g_{i,h}^{k,\pi}\}_{h=1}^H\) be the corresponding empirical dual minimizers.

Let \(\nu_{i,h}^{\pi}\) denote the stage-\(h\) state--joint-action occupancy induced by policy
\(\pi\) and the worst-case transition kernels attaining the robust Bellman recursion for agent \(i\).
Then, for every episode \(k\), pure joint policy \(\pi\), and agent \(i\),
\begin{align*}
V_{i,1}^{\pi,\sigma_i}(s_1^k)
\le\;&
\widehat U_i^k(\pi)
+
\beta \widehat{\mathcal K}_i^{k-1}
\!\left(f_i^{k,\pi};\widehat g_i^{k,\pi};\pi\right)
+
\Delta_{i,k}^{\mathrm{op}}(\pi)
+
\sum_{h=1}^H
\mathbb E_{\nu_{i,h}^{\pi}}
\Big[
\big(
\mathcal T_{\widehat g_{i,h}^{k,\pi},i,h}^{\pi,\phi,\sigma_i}
f_{i,h+1}^{k,\pi}
-
f_{i,h}^{k,\pi}
\big)_+
\Big],
\end{align*}
where the robust-occupancy operator residual is defined as
\[
\Delta_{i,k}^{\mathrm{op}}(\pi)
:=
\sum_{h=1}^H
\left\|
\mathcal T_{i,h}^{\pi,\phi,\sigma_i} f_{i,h+1}^{k,\pi}
-
\mathcal T_{\widehat g_{i,h}^{k,\pi},i,h}^{\pi,\phi,\sigma_i}
f_{i,h+1}^{k,\pi}
\right\|_{1,\nu_{i,h}^{\pi}} .
\]
\end{lem}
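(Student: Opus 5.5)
The plan is to start from the definition of the regularized payoff \eqref{eq:reg_robust_payoff} and move the discrepancy to the other side:
\[
\widehat U_i^k(\pi)+\beta\widehat{\mathcal K}_i^{k-1}(f_i^{k,\pi};\widehat g_i^{k,\pi};\pi)=f_{i,1}^{k,\pi}(s_1^k)=V_{i,1}^{\pi,f_i^{k,\pi}}(s_1^k).
\]
Here we identify $f_{i,1}$ at $s_1^k$ with its policy average, as the paper does in the proof of Theorem~\ref{thm:main-robust-regret}. With this identity, the claim reduces to a one-sided value-difference bound:
\[
V_{i,1}^{\pi,\sigma_i}(s_1^k)-V_{i,1}^{\pi,f^{k,\pi}_i}(s_1^k)\le \Delta^{\mathrm{op}}_{i,k}(\pi)+\sum_h\mathbb E_{\nu^\pi_{i,h}}\big[(\mathcal T_{\widehat g}f_{h+1}-f_h)_+\big].
\]
Since $\widehat{\mathcal K}\ge0$ is used only as an additive term, no statistical argument is needed. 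The lemma is purely deterministic.

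To prove this bound, I would run a robust telescoping (simulation-lemma) argument along the worst-case kernels. Let $\tilde P=\{\tilde P_h\}$ be the kernels attaining the infimum in the robust Bellman recursion for $V^{\pi,\sigma_i}_{i}$. These exist because each uncertainty set is compact and the objective is linear, so the infimum is attained. Let $\nu^\pi_{i,h}$ be the induced occupancy. For each $h$ and $(s,\bm a)$,
\[
Q^{\pi,\sigma_i}_{i,h}(s,\bm a)=r_{i,h}+\mathbb E_{\tilde P_h}[V^{\pi,\sigma_i}_{i,h+1}].
\]
On the other hand,
\[
[\mathcal T^{\pi,\phi,\sigma_i}_{i,h}f_{h+1}](s,\bm a)=r_{i,h}+\inf_{P}\mathbb E_P[V^{\pi,f}_{h+1}]\le r_{i,h}+\mathbb E_{\tilde P_h}[V^{\pi,f}_{h+1}],
\]
because $\tilde P_h$ is feasible for that infimum. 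Subtracting gives
\[
Q^{\pi,\sigma_i}_h-f_h\le \mathbb E_{\tilde P_h}[V^{\pi,\sigma_i}_{h+1}-V^{\pi,f}_{h+1}]+(\mathcal T f_{h+1}-f_h)(s,\bm a)\qquad\text{pointwise.}
\]
The inequality has the correct direction: the worst case for the true robust value upper-bounds the robust backup of $f$, which is exactly what a lower bound on $f$ needs. I would then average over $\bm a\sim\pi_h$ and unroll from $h=1$ to $H$ with $f_{H+1}=V_{H+1}=0$. This yields
\[
V^{\pi,\sigma_i}_{1}(s_1^k)-V^{\pi,f}_{1}(s_1^k)\le\sum_h\mathbb E_{\nu^\pi_{i,h}}[\mathcal T^{\pi,\phi,\sigma_i}_{i,h}f_{h+1}-f_h].
\]

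Next I would split each summand as
\[
(\mathcal T-\mathcal T_{\widehat g})f_{h+1}+(\mathcal T_{\widehat g}f_{h+1}-f_h).
\]
The first piece is bounded by its absolute value under $\nu^\pi_{i,h}$, which gives exactly the norm $\|\cdot\|_{1,\nu^\pi_{i,h}}$ in $\Delta^{\mathrm{op}}_{i,k}(\pi)$. It is in fact nonnegative pointwise by Step~1 of the proof of Lemma~\ref{lem:phi_operator_approx}. The second piece is bounded by its positive part. Summing over $h$ and combining with the payoff identity proves the lemma.

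The main obstacle is making the telescoping rigorous. Attainment of the worst-case kernel, and the use of the single fixed kernel sequence $\tilde P$ tied to $V^{\pi,\sigma_i}$ rather than to $f$, must be justified carefully so that the inequality runs in the correct direction. I would handle this via compactness of $\mathcal P^{\sigma_i}_{i,h,\phi}(s,\bm a)$ together with the rectangularity assumption. If attainment fails, I would fall back on an $\epsilon$-optimal kernel and let $\epsilon\to0$.
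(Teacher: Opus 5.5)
Your reduction is correct: by the definition of the regularized payoff, $\widehat U_i^k(\pi)+\beta\widehat{\mathcal K}_i^{k-1}(f_i^{k,\pi};\widehat g_i^{k,\pi};\pi)=f^{k,\pi}_{i,1}(s_1^k)$, so everything rests on the value-difference bound. The pointwise step meant to give that bound runs the wrong way. From $Q^{\pi,\sigma_i}_{i,h}=r_{i,h}+\mathbb E_{\tilde P_h}[V^{\pi,\sigma_i}_{i,h+1}]$ and $\mathcal T^{\pi,\phi,\sigma_i}_{i,h}f_{h+1}\le r_{i,h}+\mathbb E_{\tilde P_h}[V^{\pi,f}_{i,h+1}]$, subtracting gives
\[
Q^{\pi,\sigma_i}_{i,h}-\mathcal T^{\pi,\phi,\sigma_i}_{i,h}f_{h+1}\;\ge\;\mathbb E_{\tilde P_h}\big[V^{\pi,\sigma_i}_{i,h+1}-V^{\pi,f}_{i,h+1}\big],
\]
not $\le$. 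An upper bound on $\mathcal T f_{h+1}$ is an upper bound on $f_h=\mathcal T f_{h+1}-(\mathcal T f_{h+1}-f_h)$, hence a \emph{lower} bound on $Q_h-f_h$. To upper-bound $Q_h-f_h$ you need a lower bound on $\inf_P\mathbb E_P[V^{\pi,f}_{i,h+1}]$, and feasibility of a single kernel can never supply one. The valid one-sided step uses the kernel $P^f_h$ that attains the infimum for $V^{\pi,f}_{i,h+1}$:
$Q_h-\mathcal T f_{h+1}\le\mathbb E_{P^f_h}[V^{\pi,\sigma_i}_{i,h+1}-V^{\pi,f}_{i,h+1}]$.
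Telescoping then weights the residuals by the occupancy of $\pi$ under $\{P^f_h\}$, whose kernels depend on the fitted $f^{k,\pi}$, not by $\nu^\pi_{i,h}$. Compactness, attainment, or $\epsilon$-optimal kernels do not address this: the problem is which kernel you telescope along, not whether it exists.

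With $\nu^\pi_{i,h}$ defined as in the statement, the step cannot be repaired. Take $H=2$, a single joint action, $r_{i,1}\equiv 0$, and two successors $x,y$ of $s_1$ with $P_1^\star(\cdot\mid s_1)=(1/2,1/2)$. Let the uncertainty set be the whole simplex on $\{x,y\}$ (a TV ball of large enough radius), with $r_{i,2}(x)=1/2$ and $r_{i,2}(y)=1$. Then $V^{\pi,\sigma_i}_{i,1}(s_1)=1/2$ and $\nu^\pi_{i,2}=\delta_x$. Now let $f_3\equiv 0$, $f_2(x)=1/2$, $f_2(y)=0$, and $f_1(s_1)=\mathcal T_1 f_2(s_1)=\min\{1/2,0\}=0$. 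This is a legitimate fit at $k=1$, where $\mathcal D_h^0=\emptyset$. Suppose the dual plug-in is exact, so $\Delta^{\mathrm{op}}_{i,k}(\pi)=0$. The stage-1 residual is $0$, and the stage-2 residual at $x$ is $r_{i,2}(x)-f_2(x)=0$. So the right-hand side equals $f_1(s_1)=0$, which is less than $1/2$, and the inequality fails.

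The paper's own Step~3 has the same hole. It states the telescoping along $\nu^\pi_{i,h}$ as an \emph{identity}, which tacitly assumes $\tilde P_h$ also attains $\inf_P\mathbb E_P[V^{\pi,f}_{i,h+1}]$. A correct version must replace $\nu^\pi_{i,h}$ by the occupancy under the $f$-worst-case kernels. The concentrability condition used downstream would then have to cover those occupancies, e.g.\ uniformly over kernels in the uncertainty set.
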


\begin{proof}
Fix \(k\), \(\pi\), and \(i\). For notational simplicity, write
\[
V_h := V_{i,h}^{\pi,\sigma_i},\quad
Q_h := Q_{i,h}^{\pi,\sigma_i},\quad
f_h := f_{i,h}^{k,\pi},\quad
\widehat g_h := \widehat g_{i,h}^{k,\pi},
\]
and
\(
\mathcal T_h := \mathcal T_{i,h}^{\pi,\phi,\sigma_i},
\quad
\widehat{\mathcal T}_h
:=
\mathcal T_{\widehat g_{i,h}^{k,\pi},i,h}^{\pi,\phi,\sigma_i}.
\)
Let
\(
f_h^\pi(s)
:=
\mathbb E_{{\bm a}\sim\pi_h(\cdot\mid s)}
[f_h(s,{\bm a})].
\)
Since \(\pi\in\Pi^{\mathrm{pur}}\), this reduces to
\(
f_h^\pi(s)=f_h(s,\pi_h(s)).
\)

\paragraph{Step 1: Robust Bellman recursion.}
By the robust Bellman equation,
\[
Q_h(s,{\bm a})
=
r_{i,h}(s,{\bm a})
+
\inf_{P\in\mathcal P_{i,h}^{\sigma_i,\phi}(s,{\bm a})}
\mathbb E_{s'\sim P}[V_{h+1}(s')],
\]
and
\[
V_h(s)
=
\mathbb E_{{\bm a}\sim\pi_h(\cdot\mid s)}
[Q_h(s,{\bm a})].
\]
Equivalently,
\[
Q_h=\mathcal T_h V_{h+1},
\qquad
V_h(s)
=
\mathbb E_{{\bm a}\sim\pi_h(\cdot\mid s)}
[(\mathcal T_hV_{h+1})(s,{\bm a})].
\]
Also,
\[
V_{H+1}\equiv 0,\qquad f_{H+1}\equiv 0.
\]

\paragraph{Step 2: Define the robust occupancy distribution.}
For the fixed policy \(\pi\) and agent \(i\), let
\(\nu_{i,h}^{\pi}\) denote the stage-\(h\) state-action occupancy induced by
\(\pi\) and the worst-case transition kernels attaining the robust Bellman
recursion for \(V_{i,h}^{\pi,\sigma_i}\). That is,
\(
(s_h,{\bm a}_h)\sim \nu_{i,h}^{\pi}
\)
when the trajectory starts from \(s_1^k\), follows policy \(\pi\), and evolves
under the worst-case kernels associated with the robust value of agent \(i\).

\paragraph{Step 3: Expected robust performance-difference identity.}
For any function sequence \(f=\{f_h\}_{h=1}^{H+1}\) with \(f_{H+1}\equiv 0\),
the robust Bellman telescoping identity gives
\begin{align}
V_1(s_1^k)-f_1^\pi(s_1^k)
=
\sum_{h=1}^H
\mathbb E_{\nu_{i,h}^{\pi}}
\left[
(\mathcal T_h f_{h+1})(s_h,{\bm a}_h)
-
f_h(s_h,{\bm a}_h)
\right].
\label{eq:expected-domination-core}
\end{align}
This follows by adding and subtracting \(f_h\) along the robust trajectory:
\[
V_h(s_h)
-
f_h(s_h,{\bm a}_h)
=
(\mathcal T_h f_{h+1})(s_h,{\bm a}_h)
-
f_h(s_h,{\bm a}_h)
+
\mathbb E[
V_{h+1}(s_{h+1})-f_{h+1}^\pi(s_{h+1})
\mid s_h,{\bm a}_h],
\]
and then summing over \(h=1,\dots,H\). The terminal term vanishes because
\(V_{H+1}=f_{H+1}=0\).

\paragraph{Step 4: Add and subtract the dual plug-in operator.}
For each stage \(h\),
\[
\mathcal T_h f_{h+1}-f_h
=
\bigl(\mathcal T_h f_{h+1}
-
\widehat{\mathcal T}_h f_{h+1}\bigr)
+
\bigl(\widehat{\mathcal T}_h f_{h+1}-f_h\bigr).
\]
Using \(x\le |x|\) and \(x\le x_+\), we obtain
\begin{align}
(\mathcal T_h f_{h+1}-f_h)(s,{\bm a})
\le
&
\left|
(\mathcal T_h f_{h+1})(s,{\bm a})
-
(\widehat{\mathcal T}_h f_{h+1})(s,{\bm a})
\right|
\nonumber\\
&+
\left(
(\widehat{\mathcal T}_h f_{h+1})(s,{\bm a})
-
f_h(s,{\bm a})
\right)_+ .
\label{eq:add-subtract-plugin}
\end{align}

Substituting \eqref{eq:add-subtract-plugin} into
\eqref{eq:expected-domination-core} gives
\begin{align}
V_1(s_1^k)-f_1^\pi(s_1^k)
\le
&
\sum_{h=1}^H
\left\|
\mathcal T_h f_{h+1}
-
\widehat{\mathcal T}_h f_{h+1}
\right\|_{1,\nu_{i,h}^{\pi}}
+
\sum_{h=1}^H
\mathbb E_{\nu_{i,h}^{\pi}}
\left[
\left(
\widehat{\mathcal T}_h f_{h+1}
-
f_h
\right)_+
\right].
\label{eq:step7}
\end{align}

\paragraph{Step 5: Substitute the fitted functions and payoff definition.}
Taking \(f_h=f_{i,h}^{k,\pi}\), we have
\[
f_1^\pi(s_1^k)=f_{i,1}^{k,\pi}(s_1^k),
\]
because \(\pi\) is pure. Therefore,
\begin{align}
V_{i,1}^{\pi,\sigma_i}(s_1^k)
\le
&f_{i,1}^{k,\pi}(s_1^k)
+
\sum_{h=1}^H
\left\|
\mathcal T_{i,h}^{\pi,\phi,\sigma_i}f_{i,h+1}^{k,\pi}
-
\mathcal T_{\widehat g_{i,h}^{k,\pi},i,h}^{\pi,\phi,\sigma_i}
f_{i,h+1}^{k,\pi}
\right\|_{1,\nu_{i,h}^{\pi}}
\nonumber\\
&+
\sum_{h=1}^H
\mathbb E_{\nu_{i,h}^{\pi}}
\left[
\left(
\mathcal T_{\widehat g_{i,h}^{k,\pi},i,h}^{\pi,\phi,\sigma_i}
f_{i,h+1}^{k,\pi}
-
f_{i,h}^{k,\pi}
\right)_+
\right].
\label{eq:robust-payoff-domination-compact}
\end{align}

Here \(\widehat{\mathcal K}_i^{k-1}\) is the cumulative centered empirical
robust discrepancy defined in \eqref{eq:robust_empirical_discrepancy_centered}. By the definition of the regularized robust payoff,
\[
\widehat U_i^k(\pi)
=
f_{i,1}^{k,\pi}(s_1^k)
-
\beta
\widehat{\mathcal K}_i^{k-1}
(f_i^{k,\pi};\widehat g_i^{k,\pi};\pi).
\]
Equivalently,
\[
f_{i,1}^{k,\pi}(s_1^k)
=
\widehat U_i^k(\pi)
+
\beta
\widehat{\mathcal K}_i^{k-1}
(f_i^{k,\pi};\widehat g_i^{k,\pi};\pi).
\]
Substituting this into \eqref{eq:robust-payoff-domination-compact}, we obtain
\begin{align}
V_{i,1}^{\pi,\sigma_i}(s_1^k)
\le
&\widehat U_i^k(\pi)
+
\beta
\widehat{\mathcal K}_i^{k-1}
(f_i^{k,\pi};\widehat g_i^{k,\pi};\pi) +
\sum_{h=1}^H
\left\|
\mathcal T_{i,h}^{\pi,\phi,\sigma_i}f_{i,h+1}^{k,\pi}
-
\mathcal T_{\widehat g_{i,h}^{k,\pi},i,h}^{\pi,\phi,\sigma_i}
f_{i,h+1}^{k,\pi}
\right\|_{1,\nu_{i,h}^{\pi}}
\nonumber\\
&\qquad +
\sum_{h=1}^H
\mathbb E_{\nu_{i,h}^{\pi}}
\left[
\left(
\mathcal T_{\widehat g_{i,h}^{k,\pi},i,h}^{\pi,\phi,\sigma_i}
f_{i,h+1}^{k,\pi}
-
f_{i,h}^{k,\pi}
\right)_+
\right].
\end{align}
This proves the claim.
\end{proof}

\begin{lem}[Robust Empirical Discrepancy Concentration]
\label{lem:robust_A1_corrected}
Assume Assumptions~\ref{ass:completeness}--\ref{ass:multiplier-range-TV} hold and define
\(
\Lambda_i
:=
\log\!\left(
\frac{
|\mathcal F_i|
|\mathcal G_i|
|\Pi^{\mathrm{pur}}|
KHm
}{\delta}
\right).
\)
Fix any agent $i \in \mathcal M$, any pure joint policy $\pi \in \Pi^{\mathrm{pur}}$, any function $f_i \in \mathcal F_i$, and any $k \in [K]$. Then, for \(K\ge \Lambda_i\), with probability at least $1-\delta$, we have
\begin{align*}
\sum_{s=1}^{k-1} e_{i,1}^s(f_i,\pi)
\le
2\,\widehat{\mathcal K}_i^{k-1}(f_i;\widehat{g}_i^{f_i,\pi};\pi)
+ \mathcal{O}\left(H B_\phi(\sigma_i)^2\sqrt{K \log\!\Big(\frac{|\mathcal F_i||\mathcal G_i||\Pi^{\mathrm{pur}}|KHm}{\delta}\Big)} + \varepsilon^{\mathrm{approx}}_{\mathrm{dual}}\right).
\end{align*}
\end{lem}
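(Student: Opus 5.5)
The plan is to prove the bound stage by stage and sum over $h\in[H]$. This gives the factor $H$ in front of the statistical term. Fix $h$, $f_i$, $\pi$, and write $g:=\widehat g_{i,h}^{f_i,\pi}$ and $\widehat{\mathcal T}_g:=\mathcal T^{\pi,\phi,\sigma_i}_{g,i,h}$. The key observation concerns the plug-in discrepancy $\widehat\delta_{i,h}$ of \eqref{eq:plug-in_robust_Bellman_res}. Conditionally on $(s,\bm a)$, its expectation over $s'\sim P_h^\star(\cdot\mid s,\bm a)$ equals $f_{i,h}(s,\bm a)-[\widehat{\mathcal T}_g f_{i,h+1}](s,\bm a)$, by \eqref{eq:revised_dual_TV_g}. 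Its fluctuation $\xi:=\mathrm{Loss}_{i,h,\phi}-\mathbb E_{s'}\mathrm{Loss}_{i,h,\phi}$ depends only on $(f_{i,h+1},g,\pi)$ and not on the stage-$h$ component being fitted.

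Consider the difference of squared discrepancies between $f_{i,h}$ and any competitor $f'_{i,h}$ that shares the same $f_{i,h+1}$, $g$, and $\pi$. The noise-squared terms cancel in this difference. Its conditional mean is therefore
\[
\bigl(f_{i,h}-\widehat{\mathcal T}_g f_{i,h+1}\bigr)^2-\bigl(f'_{i,h}-\widehat{\mathcal T}_g f_{i,h+1}\bigr)^2 .
\]
This is the usual least-squares excess-risk identity, now with the plug-in robust target in place of the nominal one.

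Next I will turn the empirical centered sum into a population quantity. I define the martingale differences indexed by episodes $s<k$, using the filtration generated by the data. Exactly as in Step 5 of the proof of Lemma~\ref{lem:phi_operator_approx}, the conditional law of $(s_h^s,\bm a_h^s)$ is $\mu_h^{\pi^s}$. Each increment is bounded by $\mathcal O(B_\phi(\sigma_i)^2)$, because $|\widehat\delta|\le \mathcal O(H+B_\phi(\sigma_i))$ and $B_\phi\gtrsim H$. Freedman's inequality (Lemma~\ref{lem:Freedman}) then applies. I take a union bound over $\mathcal F_i\times\mathcal F_i\times\mathcal G_i\times\Pi^{\mathrm{pur}}\times[K]\times[H]\times\mathcal M$, which produces $\Lambda_i$. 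This uniformity is needed because $g$ is data-dependent and is itself selected by $f_{i,h+1}$. The outcome is that, uniformly over all indices,
\[
\sum_{s<k}\mathbb E_{\mu_h^{\pi^s}}\!\big[(f_{i,h}-\widehat{\mathcal T}_gf_{i,h+1})^2-(f'_{i,h}-\widehat{\mathcal T}_gf_{i,h+1})^2\big]
\le \text{(empirical difference)}+\mathcal O\!\big(B_\phi^2\sqrt{K\Lambda_i}\big).
\]
The condition $K\ge\Lambda_i$ lets the $\log$ term of Freedman be absorbed into the $\sqrt{K\Lambda_i}$ term.

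To handle the infimum in $\widehat{\mathcal K}$, I use that $\inf_{f'}$ is at most the value at the comparator $f'_{i,h}=\mathcal T_{i,h}^{\pi,\phi,\sigma_i}f_{i,h+1}$. This comparator belongs to $\mathcal F_{i,h}$ by Assumption~\ref{ass:completeness}. For it, the population term is $\sum_{s<k}\|\mathcal T f_{i,h+1}-\widehat{\mathcal T}_g f_{i,h+1}\|^2_{2,\mu_h^{\pi^s}}$. Since that difference is bounded by $2B_\phi$, the term is at most $2B_\phi(k-1)\|\mathcal T f_{i,h+1}-\widehat{\mathcal T}_g f_{i,h+1}\|_{1,\bar\mu_h^{k-1}}$. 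Lemma~\ref{lem:phi_operator_approx} bounds this by $\mathcal O(B_\phi^2\sqrt{K\Lambda_i})$ plus a dual-approximation term. Finally, I apply $(a+b)^2\le 2a^2+2b^2$ to
\[
f_{i,h}-\mathcal T f_{i,h+1}=(f_{i,h}-\widehat{\mathcal T}_g f_{i,h+1})+(\widehat{\mathcal T}_g f_{i,h+1}-\mathcal Tf_{i,h+1}).
\]
This gives the leading constant $2$ in front of $\widehat{\mathcal K}$, and the second piece is bounded by the same operator-approximation estimate. Summing over $h$ yields $\sum_{s<k}e^s_{i,1}(f_i,\pi)\le 2\widehat{\mathcal K}^{k-1}_i+\mathcal O(HB_\phi^2\sqrt{K\Lambda_i})+(\text{dual term})$.

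I expect the dual-approximation bookkeeping to be the main obstacle. Naively multiplying $\varepsilon^{\mathrm{approx}}_{\mathrm{dual}}$ by the $(k-1)B_\phi$ factor yields $kB_\phi\varepsilon^{\mathrm{approx}}_{\mathrm{dual}}$, not $\varepsilon^{\mathrm{approx}}_{\mathrm{dual}}$. My first attempt would be to apply Assumption~\ref{ass:approx_dual_realizability} directly to the historical mixture $\bar\mu_h^{k-1}$, in the excess-risk form of \eqref{eq:lemma2-I-II}. Then the realizability gap enters only linearly through the squared-error comparator, with the $kB_\phi$ scaling absorbed by rescaling the approximation tolerance, as the paper does with $\bar\epsilon_{\mathrm{dual}}$ in the TV proof. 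A secondary subtlety is that $\widehat{\mathcal K}$ centers over all of $\mathcal F_i$, whereas the squared-loss cancellation requires the competitor to share the continuation $f_{i,h+1}$. I will restrict the comparator accordingly, which is legitimate because the infimum only decreases the subtracted term.
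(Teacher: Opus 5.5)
Your argument is correct and follows the paper's skeleton closely. Both proofs use the split $(a+b)^2\le 2a^2+2b^2$ around the plug-in operator. Both bound the operator part by $\mathbb E[\Delta^2]\le 2B_\phi(\sigma_i)\|\Delta\|_1$ together with Lemma~\ref{lem:phi_operator_approx} on $\bar\mu_h^{k-1}$. Both use Freedman with a union bound uniform over $g\in\mathcal G_{i,h}$, so that $\widehat g_{i,h}^{f_i,\pi}$ can be substituted. And both use a comparator sharing $f_{i,h+1}$, so that the next-state noise cancels.

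The one substantive difference is the comparator. The paper takes $\bar f_{i,h}=\mathcal T^{\pi,\phi,\sigma_i}_{\widehat g_{i,h}^{f_i,\pi},i,h}f_{i,h+1}$, whose plug-in bias is zero. However, Assumption~\ref{ass:completeness} only gives closure under the exact operator $\mathcal T^{\pi,\phi,\sigma_i}_{i,h}$, so this $\bar f_{i,h}\in\mathcal F_{i,h}$ is not actually justified. The proof of Lemma~\ref{lem:robust_optimal_concentration} openly invokes completeness for the plug-in operator, which is an unstated extra assumption. Your comparator $\mathcal T^{\pi,\phi,\sigma_i}_{i,h}f_{i,h+1}$ lies in the class under the stated assumption. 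The price is a nonzero comparator bias $\sum_{s<k}\|\mathcal T^{\pi,\phi,\sigma_i}_{i,h}f_{i,h+1}-\widehat{\mathcal T}_g f_{i,h+1}\|^2_{2,\mu_h^{\pi^s}}$. This is the same quantity as the second half of your split and is controlled by the same Lemma~\ref{lem:phi_operator_approx} estimate, so only constants change. Concentrating the pairwise difference of squared losses, instead of each loss separately as the paper does, is a cosmetic variation.

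On the dual term, your \emph{first attempt} does not remove the factor you are worried about. The operator-approximation term is a sum of $k-1$ per-episode errors. Through $L^2\le 2B_\phi L^1$ and Lemma~\ref{lem:phi_operator_approx}, it is $\mathcal O\bigl(B_\phi(\sigma_i)^2\sqrt{K\Lambda_i}+KB_\phi(\sigma_i)\varepsilon^{\mathrm{approx}}_{\mathrm{dual}}\bigr)$ per stage, regardless of how Assumption~\ref{ass:approx_dual_realizability} is applied. The bare $\varepsilon^{\mathrm{approx}}_{\mathrm{dual}}$ in the statement holds only under the paper's rescaling $\bar\varepsilon_{\mathrm{dual}}\le\varepsilon^{\mathrm{approx}}_{\mathrm{dual}}/(HKB_\phi(\sigma_i))$. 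Your proof is valid in exactly that same sense, no weaker and no stronger.
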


\begin{proof}
We follow the same overall structure as the proof of Lemma~A.1 in
\citep{Xiong2023SampleEfficientMR}, but with one additional step to control
the error induced by approximating the robust Bellman operator through the
dual plug-in operator.

For each stage $h\in[H]$ and episode $k\ge 2$, recall the historical
occupancy mixture from Lemma \ref{lem:phi_operator_approx},
\begin{align}
\bar\mu_h^{k-1}
:=
\frac{1}{k-1}\sum_{s=1}^{k-1}\mu_h^{s,\pi}.
\label{eq:historical_occupancy_mixture_lemma5}
\end{align}
Equivalently, for every integrable function $\varphi$,
\begin{align}
\sum_{s=1}^{k-1}
\mathbb E_{\mu_h^{s,\pi}}[\varphi]
=
(k-1)\,
\mathbb E_{\bar\mu_h^{k-1}}[\varphi].
\label{eq:historical_mixture_identity_lemma5}
\end{align}

\paragraph{Step 1: Expand the cumulative population training error.}
By definition \eqref{eq:correct_discrepancy},
\begin{align}
\sum_{s=1}^{k-1} e_{i,1}^s(f_i,\pi)
=
\sum_{h=1}^H \sum_{s=1}^{k-1}
\mathbb{E}_{\mu_h^{s,\pi}}
\left[
\left(
f_{i,h}
-
\mathcal{T}_{i,h}^{\pi,\phi,\sigma_i} f_{i,h+1}
\right)^2
\right].
\label{eq:step1_expand_population}
\end{align}

\paragraph{Step 2: Add and subtract the plug-in robust Bellman operator.}
For each stage $h$, let
$\widehat g_{i,h}^{f_i,\pi}$ denote the empirical dual minimizer
computed from the historical dataset $\mathcal D_h^{k-1}$, and define the
corresponding plug-in robust dual-Bellman operator
\(
\big[
\mathcal{T}_{
\widehat g_{i,h}^{f_i,\pi},i,h
}^{\pi,\phi,\sigma_i}
f_{i,h+1}
\big](s,\bm a)
\)
as in \eqref{eq:revised_dual_TV_g}. Then,
\begin{align}
f_{i,h}
-
\mathcal{T}_{i,h}^{\pi,\phi,\sigma_i}f_{i,h+1}
&=
\underbrace{
f_{i,h}
-
\mathcal{T}_{
\widehat g_{i,h}^{f_i,\pi},i,h
}^{\pi,\phi,\sigma_i}
f_{i,h+1}
}_{\text{plug-in residual}}
\nonumber\\
&\quad+
\underbrace{
\mathcal{T}_{
\widehat g_{i,h}^{f_i,\pi},i,h
}^{\pi,\phi,\sigma_i}
f_{i,h+1}
-
\mathcal{T}_{i,h}^{\pi,\phi,\sigma_i}
f_{i,h+1}
}_{\text{operator approximation error}}.
\label{eq:add_subtract_plugin}
\end{align}
Using $(x+y)^2\le 2x^2+2y^2$, we obtain
\begin{align}
\sum_{s=1}^{k-1} e_{i,1}^s(f_i,\pi)
\le
2A_{i,k}(f_i,\pi)+2B_{i,k}(f_i,\pi),
\label{eq:A_B_decomposition}
\end{align}
where
\begin{align}
A_{i,k}(f_i,\pi)
&:=
\sum_{h=1}^H\sum_{s=1}^{k-1}
\mathbb E_{\mu_h^{s,\pi}}
\left[
\left(
f_{i,h}
-
\mathcal{T}_{
\widehat g_{i,h}^{f_i,\pi},i,h
}^{\pi,\phi,\sigma_i}
f_{i,h+1}
\right)^2
\right],
\label{eq:Aik_def}
\\
B_{i,k}(f_i,\pi)
&:=
\sum_{h=1}^H\sum_{s=1}^{k-1}
\mathbb E_{\mu_h^{s,\pi}}
\left[
\left(
\mathcal{T}_{
\widehat g_{i,h}^{f_i,\pi},i,h
}^{\pi,\phi,\sigma_i}
f_{i,h+1}
-
\mathcal{T}_{i,h}^{\pi,\phi,\sigma_i}
f_{i,h+1}
\right)^2
\right].
\label{eq:Bik_def}
\end{align}

\paragraph{Step 3: Control of the operator approximation term
$B_{i,k}$.}
For notational convenience, define
\begin{equation}
\Delta_{i,h}^{f_i,\pi}(s,\bm a)
:=
\Big(
\mathcal T_{
\widehat g_{i,h}^{f_i,\pi},i,h
}^{\pi,\phi,\sigma_i}
f_{i,h+1}
-
\mathcal T_{i,h}^{\pi,\phi,\sigma_i}
f_{i,h+1}
\Big)(s,\bm a).
\label{eq:Delta_def}
\end{equation}
Then, by \eqref{eq:historical_mixture_identity_lemma5} applied with
$\varphi=(\Delta_{i,h}^{f_i,\pi})^2$,
\begin{align}
B_{i,k}(f_i,\pi)
&=
\sum_{h=1}^H\sum_{s=1}^{k-1}
\mathbb E_{\mu_h^{s,\pi}}
\left[
\left(
\Delta_{i,h}^{f_i,\pi}
\right)^2
\right]
\nonumber\\
&=
(k-1)\sum_{h=1}^H
\mathbb E_{\bar\mu_h^{k-1}}
\left[
\left(
\Delta_{i,h}^{f_i,\pi}
\right)^2
\right].
\label{eq:Bik_mixture_form}
\end{align}

By Assumption~\ref{ass:multiplier-range-TV}, the dual integrand is
uniformly bounded:
\[
\left|
\mathrm{Loss}_{i,h,\phi}
(f_{i,h+1};s,\bm a,s';\lambda,\nu)
\right|
\le
B_\phi(\sigma_i)
\qquad
\forall (s,\bm a,s'),\quad
\forall(\lambda,\nu)\in\Theta_\phi.
\]
Since both the exact robust Bellman operator and the plug-in robust
dual-Bellman operator are obtained from expectations of such bounded
quantities, and their reward terms cancel, it follows that
\begin{equation}
\left|
\Delta_{i,h}^{f_i,\pi}(s,\bm a)
\right|
\le
2B_\phi(\sigma_i)
\qquad
\forall(s,\bm a).
\label{eq:Delta_uniform_bound}
\end{equation}
Consequently,
\begin{align}
\left(
\Delta_{i,h}^{f_i,\pi}(s,\bm a)
\right)^2
\le
2B_\phi(\sigma_i)
\left|
\Delta_{i,h}^{f_i,\pi}(s,\bm a)
\right|.
\end{align}
Taking expectation under $\bar\mu_h^{k-1}$ gives
\begin{equation}
\mathbb E_{\bar\mu_h^{k-1}}
\left[
\left(
\Delta_{i,h}^{f_i,\pi}
\right)^2
\right]
\le
2B_\phi(\sigma_i)
\left\|
\Delta_{i,h}^{f_i,\pi}
\right\|_{1,\bar\mu_h^{k-1}}.
\label{eq:L2_to_L1_reduction}
\end{equation}

Lemma~\ref{lem:phi_operator_approx}, applied with $\mathcal D_h=\mathcal D_h^{k-1}$
(so $n=k-1$), already guarantees a bound on
$\|\Delta_{i,h}^{f_i,\pi}\|_{1,\bar\mu_h^{k-1}}$ holding uniformly over
$\pi\in\Pi^{\mathrm{pur}}$ and $f_{i,h+1}\in\mathcal F_{i,h+1}$, for a fixed
agent $i$ and stage $h$. Taking a further union bound over the remaining
indices $i\in[m]$, $h\in[H]$, and $k\in\{2,\ldots,K\}$ (i.e., only over the
indices not already ranged over inside Lemma~\ref{lem:phi_operator_approx}),
we obtain that, with probability at least $1-\delta$, simultaneously for all
such choices and all $f_i\in\mathcal F_i$,
\begin{align}
\left\|
\Delta_{i,h}^{f_i,\pi}
\right\|_{1,\bar\mu_h^{k-1}}
\le
c\left(
B_\phi(\sigma_i)
\sqrt{
\frac{
\log\!\left(
\frac{
|\mathcal F_i|
|\mathcal G_i|
|\Pi^{\mathrm{pur}}|
KHm
}{\delta}
\right)
}{
k-1
}
}
+
\bar\varepsilon_{\mathrm{dual}}
\right).
\label{eq:lemma2_applied}
\end{align}
Substituting \eqref{eq:lemma2_applied} into
\eqref{eq:L2_to_L1_reduction} yields
\begin{align}
\mathbb E_{\bar\mu_h^{k-1}}
\left[
\left(
\Delta_{i,h}^{f_i,\pi}
\right)^2
\right]
&\le
c_1B_\phi(\sigma_i)^2
\sqrt{
\frac{
\log\!\left(
\frac{
|\mathcal F_i|
|\mathcal G_i|
|\Pi^{\mathrm{pur}}|
KHm
}{\delta}
\right)
}{
k-1
}
}
+
c_2B_\phi(\sigma_i)
\bar\varepsilon_{\mathrm{dual}}.
\label{eq:Delta_square_bound_correct}
\end{align}

Combining \eqref{eq:Bik_mixture_form} and
\eqref{eq:Delta_square_bound_correct}, and summing over
$h\in[H]$, gives
\begin{align}
B_{i,k}(f_i,\pi)
&\le
c_1H B_\phi(\sigma_i)^2
\sqrt{
(k-1)
\log\!\left(
\frac{
|\mathcal F_i|
|\mathcal G_i|
|\Pi^{\mathrm{pur}}|
KHm
}{\delta}
\right)
}
\nonumber\\
&\quad+
c_2H(k-1)B_\phi(\sigma_i)
\bar\varepsilon_{\mathrm{dual}}.
\label{eq:Bik_preliminary_bound}
\end{align}
Since $k-1\le K$ and
\(
\bar\varepsilon_{\mathrm{dual}}
\le
\frac{
\varepsilon_{\mathrm{dual}}^{\mathrm{approx}}
}{
HK B_\phi(\sigma_i)
},
\)
we conclude that
\begin{align}
B_{i,k}(f_i,\pi)
\le
\mathcal O\left(
H B_\phi(\sigma_i)^2
\sqrt{
K
\log\!\left(
\frac{
|\mathcal F_i|
|\mathcal G_i|
|\Pi^{\mathrm{pur}}|
KHm
}{\delta}
\right)
}
+
\varepsilon_{\mathrm{dual}}^{\mathrm{approx}}
\right).
\label{eq:Bik_correct_bound}
\end{align}

\paragraph{Step 4: Control the plug-in residual term \(A_{i,k}\) using
the aggregate empirical discrepancy.}

For each stage \(h\), function \(f_i\in\mathcal F_i\), and fixed dual
function \(g_{i,h}\in\mathcal G_{i,h}\), define the aggregate empirical
squared loss
\begin{align}
\widehat L_{i,h}^{k-1}(f_i;g_{i,h},\pi)
:=
\sum_{j=1}^{k-1}
\widehat\delta_{i,h}
\!\left(
f_i;g_{i,h};\pi;
s_h^j,\bm a_h^j,s_{h+1}^j
\right)^2,
\label{eq:Lhat_A_def}
\end{align}
and its corresponding conditional population loss
\begin{align}
L_{i,h}^{k-1}(f_i;g_{i,h},\pi)
:=
\sum_{j=1}^{k-1}
\mathbb E\!\left[
\widehat\delta_{i,h}
\!\left(
f_i;g_{i,h};\pi;
s_h^j,\bm a_h^j,s_{h+1}^j
\right)^2
\middle|\mathcal H_{j-1}
\right].
\label{eq:L_A_def}
\end{align}
Here \(g_{i,h}\) is held fixed when taking the conditional expectation.

With the revised definition of the centered empirical robust discrepancy,
\begin{align}
\widehat{\mathcal K}_i^{k-1}
\bigl(
f_i;
\widehat g_i^{f_i,\pi};
\pi
\bigr)
=
\sum_{h=1}^H
\Bigg[
\widehat L_{i,h}^{k-1}
\bigl(
f_i;
\widehat g_{i,h}^{f_i,\pi},
\pi
\bigr)
-
\inf_{f'_i\in\mathcal F_i}
\widehat L_{i,h}^{k-1}
\bigl(
f'_i;
\widehat g_{i,h}^{f_i,\pi},
\pi
\bigr)
\Bigg].
\label{eq:Khat_as_emp_excess}
\end{align}

By Assumption~\ref{ass:multiplier-range-TV}, the plug-in residual is
uniformly bounded:
\begin{align}
\left|
\widehat\delta_{i,h}
\!\left(
f_i;g_{i,h};\pi;s,\bm a,s'
\right)
\right|
\le
C B_\phi(\sigma_i)
\label{eq:plugin_residual_uniform_bound}
\end{align}
for every admissible \(f_i,g_{i,h},\pi\) and transition
\((s,\bm a,s')\).

For every fixed
\((f_i,g_{i,h},\pi)\in
\mathcal F_i\times\mathcal G_{i,h}\times\Pi^{\mathrm{pur}}\),
define
\begin{align}
Z_j(f_i,g_{i,h},\pi)
:={}&
\widehat\delta_{i,h}
\!\left(
f_i;g_{i,h};\pi;
s_h^j,\bm a_h^j,s_{h+1}^j
\right)^2
\nonumber\\
&-
\mathbb E\!\left[
\widehat\delta_{i,h}
\!\left(
f_i;g_{i,h};\pi;
s_h^j,\bm a_h^j,s_{h+1}^j
\right)^2
\middle|\mathcal H_{j-1}
\right].
\label{eq:plugin_squared_mds}
\end{align}
Since the behavior policy used in episode \(j\) is
\(\mathcal H_{j-1}\)-measurable,
\(\{Z_j(f_i,g_{i,h},\pi)\}_{j\ge1}\) is a bounded
martingale-difference sequence. Moreover,
\[
|Z_j(f_i,g_{i,h},\pi)|
\le
C B_\phi(\sigma_i)^2.
\]

Applying Freedman's inequality
(Lemma \ref{lem:Freedman}) and taking a union bound over
\[
i\in[m],\quad h\in[H],\quad k\in[K],\quad
\pi\in\Pi^{\mathrm{pur}},\quad
f_i\in\mathcal F_i,\quad
g_{i,h}\in\mathcal G_{i,h},
\]
we obtain that, with probability at least \(1-\delta\), simultaneously
for all such choices,
\begin{align}
\left|
\widehat L_{i,h}^{k-1}(f_i;g_{i,h},\pi)
-
L_{i,h}^{k-1}(f_i;g_{i,h},\pi)
\right|
\le
\epsilon_{i,k},
\label{eq:uniform_L_concentration_A}
\end{align}
where, writing
\(
\Lambda
:=
\log\!\left(
\frac{
|\mathcal F_i|
|\mathcal G_{i,h}|
|\Pi^{\mathrm{pur}}|
KHm
}{\delta}
\right),
\)
we may take
\begin{align}
\epsilon_{i,k}
=
\mathcal O\!\left(
B_\phi(\sigma_i)^2
\left[
\sqrt{(k-1)\Lambda}+\Lambda
\right]
\right).
\label{eq:epsilon_ik_def}
\end{align}
In particular, since \(k-1\le K\),
\begin{align}
\epsilon_{i,k}
=
\mathcal O\!\left(
B_\phi(\sigma_i)^2
\left[
\sqrt{K\Lambda}+\Lambda
\right]
\right).
\label{eq:epsilon_ik_K}
\end{align}

Because \eqref{eq:uniform_L_concentration_A} holds uniformly over
\(g_{i,h}\in\mathcal G_{i,h}\), we may now substitute the
data-dependent empirical minimizer
\[
g_{i,h}
=
\widehat g_{i,h}^{f_i,\pi}.
\]

We next pass from uniform loss concentration to excess-loss
concentration. For brevity, fix
\[
\widehat g_{i,h}
:=
\widehat g_{i,h}^{f_i,\pi},
\]
and let
\[
f_{i,h}^{\mathrm{pop}}
\in
\arg\min_{f'_i\in\mathcal F_i}
L_{i,h}^{k-1}(f'_i;\widehat g_{i,h},\pi).
\]
Then,
\begin{align}
&
L_{i,h}^{k-1}(f_i;\widehat g_{i,h},\pi)
-
\inf_{f'_i\in\mathcal F_i}
L_{i,h}^{k-1}(f'_i;\widehat g_{i,h},\pi)
\nonumber\\
&\le
\widehat L_{i,h}^{k-1}(f_i;\widehat g_{i,h},\pi)
-
\inf_{f'_i\in\mathcal F_i}
\widehat L_{i,h}^{k-1}(f'_i;\widehat g_{i,h},\pi)
+
2\epsilon_{i,k}.
\label{eq:population_excess_to_empirical_excess_A}
\end{align}

We now relate this population excess loss to the plug-in residual appearing
in \(A_{i,k}\). For a fixed \(g_{i,h}\), the conditional
bias--variance decomposition gives
\begin{align}
L_{i,h}^{k-1}(f_i;g_{i,h},\pi)
={}&
\sum_{j=1}^{k-1}
\mathbb E_{\mu_h^{j,\pi}}
\left[
\left(
f_{i,h}
-
\mathcal T_{g_{i,h},i,h}^{\pi,\phi,\sigma_i}
f_{i,h+1}
\right)^2
\right]
+
\operatorname{Var}_{i,h}^{k-1}
(f_{i,h+1};g_{i,h},\pi),
\label{eq:L_bias_variance_A}
\end{align}
where the variance term depends on the next-stage function
\(f_{i,h+1}\), the fixed dual function \(g_{i,h}\), and the evaluation
policy \(\pi\), but not on the current-stage coordinate \(f_{i,h}\).

For each stage \(h\), define a comparator
\(\bar f_i^{h,f_i,\pi}\in\mathcal F_i\) whose relevant coordinates satisfy
\begin{align}
\bar f_{i,h}^{h,f_i,\pi}
&=
\mathcal T_{
\widehat g_{i,h}^{f_i,\pi},i,h
}^{\pi,\phi,\sigma_i}
f_{i,h+1},
\\
\bar f_{i,h+1}^{h,f_i,\pi}
&=
f_{i,h+1},
\end{align}
with the remaining coordinates chosen arbitrarily from their corresponding
classes. By the product structure of
\(\mathcal F_i=\prod_{h=1}^H\mathcal F_{i,h}\) and
Assumption~\ref{ass:completeness}, such a comparator belongs to
\(\mathcal F_i\).

Because \(\bar f_i^{h,f_i,\pi}\) uses the same next-stage function and the
same plug-in dual function as \(f_i\), the two losses have the same
conditional variance term, while the comparator's conditional-mean
residual is zero. Therefore,
\begin{align}
L_{i,h}^{k-1}
\left(
f_i;
\widehat g_{i,h}^{f_i,\pi},
\pi
\right)
-
L_{i,h}^{k-1}
\left(
\bar f_i^{h,f_i,\pi};
\widehat g_{i,h}^{f_i,\pi},
\pi
\right)
=
\sum_{j=1}^{k-1}
\mathbb E_{\mu_h^{j,\pi}}
\left[
\left(
f_{i,h}
-
\mathcal T_{
\widehat g_{i,h}^{f_i,\pi},i,h
}^{\pi,\phi,\sigma_i}
f_{i,h+1}
\right)^2
\right].
\label{eq:A_population_excess_identity}
\end{align}
Since
\begin{align}
\inf_{f'_i\in\mathcal F_i}
L_{i,h}^{k-1}
\left(
f'_i;
\widehat g_{i,h}^{f_i,\pi},
\pi
\right)
\le
L_{i,h}^{k-1}
\left(
\bar f_i^{h,f_i,\pi};
\widehat g_{i,h}^{f_i,\pi},
\pi
\right),
\end{align}
it follows that
\begin{align}
&
\sum_{j=1}^{k-1}
\mathbb E_{\mu_h^{j,\pi}}
\left[
\left(
f_{i,h}
-
\mathcal T_{
\widehat g_{i,h}^{f_i,\pi},i,h
}^{\pi,\phi,\sigma_i}
f_{i,h+1}
\right)^2
\right]
\nonumber\\
&\le
L_{i,h}^{k-1}
\left(
f_i;
\widehat g_{i,h}^{f_i,\pi},
\pi
\right)
-
\inf_{f'_i\in\mathcal F_i}
L_{i,h}^{k-1}
\left(
f'_i;
\widehat g_{i,h}^{f_i,\pi},
\pi
\right).
\label{eq:A_leq_population_excess_A}
\end{align}

Combining
\eqref{eq:population_excess_to_empirical_excess_A} and
\eqref{eq:A_leq_population_excess_A}, and then summing over
\(h\in[H]\), yields
\begin{align}
A_{i,k}(f_i,\pi)
\le{}&
\widehat{\mathcal K}_i^{k-1}
\bigl(
f_i;
\widehat g_i^{f_i,\pi};
\pi
\bigr)
\nonumber\\
&+
\mathcal O\!\left(
H B_\phi(\sigma_i)^2
\left[
\sqrt{
K
\log\!\left(
\frac{
|\mathcal F_i|
|\mathcal G_i|
|\Pi^{\mathrm{pur}}|
KHm
}{\delta}
\right)
}
+
\log\!\left(
\frac{
|\mathcal F_i|
|\mathcal G_i|
|\Pi^{\mathrm{pur}}|
KHm
}{\delta}
\right)
\right]
\right).
\label{eq:Aik_final_bound}
\end{align}

\paragraph{Step 5: Combine the two bounds.}
Substituting \eqref{eq:Aik_final_bound} and
\eqref{eq:Bik_correct_bound} into
\eqref{eq:A_B_decomposition}, we obtain
\begin{align}
\sum_{s=1}^{k-1}e_{i,1}^s(f_i,\pi)
\le{}&
2\,
\widehat{\mathcal K}_i^{k-1}
\bigl(
f_i;
\widehat g_i^{f_i,\pi};
\pi
\bigr)
+
\mathcal O\!\left(
H B_\phi(\sigma_i)^2
\left[
\sqrt{K\Lambda}
+
\Lambda
\right]
+
\varepsilon_{\mathrm{dual}}^{\mathrm{approx}}
\right),
\label{eq:robust_A1_raw_final}
\end{align}
where
\(\Lambda
:=
\log\!\left(
\frac{
|\mathcal F_i|
|\mathcal G_i|
|\Pi^{\mathrm{pur}}|
KHm
}{\delta}
\right).
\)
Under the standard regime \(K\ge \Lambda\), we have
\[
\Lambda
\le
\sqrt{K\Lambda},
\]
and hence the additive logarithmic term can be absorbed into the leading
square-root term. Therefore,
\begin{align}
\sum_{s=1}^{k-1}e_{i,1}^s(f_i,\pi)
\le{}&
2\,
\widehat{\mathcal K}_i^{k-1}
\bigl(
f_i;
\widehat g_i^{f_i,\pi};
\pi
\bigr)
+
\mathcal O\!\left(
H B_\phi(\sigma_i)^2
\sqrt{
K
\log\!\left(
\frac{
|\mathcal F_i|
|\mathcal G_i|
|\Pi^{\mathrm{pur}}|
KHm
}{\delta}
\right)
}
+
\varepsilon_{\mathrm{dual}}^{\mathrm{approx}}
\right).
\label{eq:robust_A1_final}
\end{align}
This proves the claim.
\qedhere
\end{proof}


\begin{lem}[Optimal concentration for Bellman-consistent robust benchmark]
\label{lem:robust_optimal_concentration}
Assume Assumptions~\ref{ass:completeness}--\ref{ass:multiplier-range-TV} hold, and
define $\Lambda_i:=\log\!\left(\frac{|\mathcal F_i||\mathcal G_i||\Pi^{\mathrm{pur}}|KHm}{\delta}\right)$.
Fix any agent $i \in \mathcal M$, any pure joint policy
$\pi \in \Pi^{\mathrm{pur}}$, and let
$f_i^{\pi,\sigma_i}=\{f_{i,h}^{\pi,\sigma_i}\}_{h=1}^{H+1}\in\mathcal F_i$
be a robust Bellman-consistent sequence satisfying
\begin{align}
f_{i,h}^{\pi,\sigma_i}
=
\mathcal T_{i,h}^{\pi,\phi,\sigma_i} f_{i,h+1}^{\pi,\sigma_i},
\qquad h\in[H],
\qquad \text{with}\qquad 
f_{i,H+1}^{\pi,\sigma_i}\equiv 0.
\label{eq:robust_fixed_point_sequence}
\end{align}
Then, for $K\ge\Lambda_i$, with probability at least $1-\delta$, for every $k\in[K]$,
\begin{align*}
\widehat{\mathcal K}_i^{k-1}(f_i^{\pi,\sigma_i};\widehat{g}_i^{\,f_i^{\pi,\sigma_i},\pi};\pi)
\le
\mathcal O\!\left(
H B_\phi(\sigma_i)^2
\sqrt{
K \log\!\Big(
\frac{|\mathcal F||\mathcal G|\,|\Pi^{\mathrm{pur}}|KHm}{\delta}
\Big)} + \varepsilon^{\mathrm{approx}}_{\mathrm{dual}}\right),
\end{align*}
where $\widehat {\mathcal K}_i^{k-1}$ denotes the cumulative centered empirical robust discrepancy as defined in \eqref{eq:robust_empirical_discrepancy_centered}.
\end{lem}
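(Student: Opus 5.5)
The plan is to bound the centered discrepancy stage by stage, using the structure already set up in the proof of Lemma~\ref{lem:robust_A1_corrected}. Fix $h$ and write $\widehat g_{i,h}:=\widehat g_{i,h}^{\,f_i^{\pi,\sigma_i},\pi}$ and $f^\star:=f_i^{\pi,\sigma_i}$. With $\widehat L_{i,h}^{k-1}$ and $L_{i,h}^{k-1}$ as in \eqref{eq:Lhat_A_def}--\eqref{eq:L_A_def}, the $h$-th summand of $\widehat{\mathcal K}_i^{k-1}$ is the empirical excess loss
\[
\widehat L_{i,h}^{k-1}(f^\star;\widehat g_{i,h},\pi)-\inf_{f'_i\in\mathcal F_i}\widehat L_{i,h}^{k-1}(f'_i;\widehat g_{i,h},\pi).
\]
First I would invoke the uniform martingale concentration \eqref{eq:uniform_L_concentration_A}. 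It holds simultaneously over $(f_i,g_{i,h},\pi,k,h,i)$, so the data-dependent $\widehat g_{i,h}$ may be substituted. This converts the empirical excess into the population excess $L_{i,h}^{k-1}(f^\star;\widehat g_{i,h},\pi)-\inf_{f'}L_{i,h}^{k-1}(f';\widehat g_{i,h},\pi)$, up to an additive $2\epsilon_{i,k}=\mathcal O(B_\phi(\sigma_i)^2[\sqrt{K\Lambda_i}+\Lambda_i])$.

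Next I would use the bias--variance decomposition \eqref{eq:L_bias_variance_A}. For fixed $\widehat g_{i,h}$ and fixed next-stage coordinate, the variance term does not depend on the stage-$h$ coordinate. One caveat is that the infimum is over all of $\mathcal F_i$, so $f'$ may change $f'_{i,h+1}$ as well. I would handle this by reading the centering exactly as in Step 4 of Lemma~\ref{lem:robust_A1_corrected}, that is, comparing against the comparator whose next-stage coordinate equals $f^\star_{i,h+1}$. The infimum over the stage-$h$ coordinate is then at least the variance term, because the bias term is nonnegative. Hence the population excess is at most
\[
\sum_{j=1}^{k-1}\mathbb E_{\mu_h^{j,\pi}}\bigl[(f^\star_{i,h}-\mathcal T_{\widehat g_{i,h},i,h}^{\pi,\phi,\sigma_i}f^\star_{i,h+1})^2\bigr].
\]
By the fixed-point property \eqref{eq:robust_fixed_point_sequence}, $f^\star_{i,h}=\mathcal T_{i,h}^{\pi,\phi,\sigma_i}f^\star_{i,h+1}$. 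The integrand is therefore exactly $(\Delta_{i,h}^{f^\star,\pi})^2$ from \eqref{eq:Delta_def}, and this sum is the per-stage term of $B_{i,k}$. Repeating Step 3 of Lemma~\ref{lem:robust_A1_corrected} bounds it: use the $L^2$-to-$L^1$ reduction \eqref{eq:L2_to_L1_reduction} via the envelope $2B_\phi(\sigma_i)$, then Lemma~\ref{lem:phi_operator_approx} with $n=k-1$ under $\bar\mu_h^{k-1}$. This gives $\mathcal O(B_\phi(\sigma_i)^2\sqrt{K\Lambda_i})$ plus a dual term, which becomes $\varepsilon^{\mathrm{approx}}_{\mathrm{dual}}/H$ after choosing $\bar\varepsilon_{\mathrm{dual}}\le \varepsilon^{\mathrm{approx}}_{\mathrm{dual}}/(HKB_\phi(\sigma_i))$ as before.

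Finally, I would sum over $h\in[H]$, absorb $\Lambda_i\le\sqrt{K\Lambda_i}$ using $K\ge\Lambda_i$, and take the union bound over $i,k,\pi$ that is already built into both concentration events. This yields the claim with probability $1-\delta$ after rescaling $\delta$.

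The main obstacle is the centering step. The infimum ranges over the full product class while $\widehat g_{i,h}$ was fitted for $f^\star_{i,h+1}$, so if $f'$ changes its next-stage coordinate the variance term changes too, and the simple comparator argument breaks. I would first try the stage-local reading used in Lemma~\ref{lem:robust_A1_corrected}. If that reading is not available, I would use the fact that any $f'$ satisfies $L(f')\ge \mathrm{Var}(f'_{i,h+1})$ and bound the variance difference by another uniform concentration term. The risk is that this second route does not obviously stay within $\mathcal O(HB_\phi^2\sqrt{K\Lambda})$.
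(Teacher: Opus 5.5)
Your skeleton matches the paper's. You use uniform Freedman concentration over $(f_i,g_{i,h},\pi)$ to pass from the empirical to the population excess, so that $\widehat g_{i,h}$ may be substituted. Then come the conditional bias--variance split and the fixed-point identity, which turns the bias of $f^\star$ into $(\Delta_{i,h})^2$. Finally you use the $L^2\to L^1$ reduction with Lemma~\ref{lem:phi_operator_approx} at $n=k-1$.

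The one place you differ is the centering step, and there your argument is the correct one while the paper's is not. The paper's Step~4 derives $L^{k-1}_{i,h}(f^\star)-\inf_{f'}L^{k-1}_{i,h}(f')\le L^{k-1}_{i,h}(f^\star)-L^{k-1}_{i,h}(\bar f)$ from $\inf_{f'}L^{k-1}_{i,h}(f')\le L^{k-1}_{i,h}(\bar f)$. That runs the wrong way: an upper bound on the infimum only lower-bounds the excess. This is why the same comparator is legitimate in Step~4 of Lemma~\ref{lem:robust_A1_corrected}, where a lower bound is what is wanted.

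The missing ingredient is exactly your variance floor. With $\widehat g_{i,h}$ and the target $f^\star_{i,h+1}$ fixed, every competitor's population loss is at least the common variance term, since its bias is nonnegative. Hence the excess is at most the bias of $f^\star$. This also removes any need for $\bar f\in\mathcal F_i$, i.e.\ for completeness under the plug-in operator $\mathcal T_{\widehat g}$, which Assumption~\ref{ass:completeness} does not provide. So delete your sentence about ``comparing against the comparator''; it contributes nothing in this direction. Nor does Lemma~\ref{lem:robust_A1_corrected} supply a stage-local reading: its Step~4 is valid with the full-class infimum precisely because a smaller infimum helps there.

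Second, stop hedging on the reading, because your fallback cannot succeed. The difference between the variance terms of $f^\star_{i,h+1}$ and of a competitor's $f'_{i,h+1}$, summed over $j<k$, is a population gap of order $k-1$, not a martingale fluctuation.

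Under the literal definition \eqref{eq:robust_empirical_discrepancy_centered}, $\widehat\delta_{i,h}(f'_i;\cdot)$ depends on $f'_{i,h+1}$ through $V^{\pi,f'}_{i,h+1}$. Take a competitor with $f'_{i,h+1}\equiv 0$, in a rich enough (e.g.\ tabular) class. Its loss does not depend on $s'$, and choosing its stage-$h$ coordinate to cancel it makes its empirical loss zero. Then $\widehat{\mathcal K}_i^{k-1}(f^\star;\cdot)\ge\widehat L_{i,h}^{k-1}(f^\star;\cdot)$. The right-hand side concentrates around at least $\sum_{j<k}\mathbb E_{\mu_h^{\pi^j}}[\mathrm{Var}_{s'}(\mathrm{Loss}_{i,h,\phi}(f^\star;\cdot))]$, which is linear in $k$ whenever that variance is bounded below. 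So the stage-local centering is necessary, not optional: the infimum is over $f'_{i,h}\in\mathcal F_{i,h}$ with the next-stage target of the evaluated function held fixed, as in MAMEX. Adopt it as the definition and your argument goes through.

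One caveat you inherit from the paper: $\bar\varepsilon_{\mathrm{dual}}$ is not a parameter you can choose. Lemma~\ref{lem:phi_operator_approx} delivers $\varepsilon^{\mathrm{approx}}_{\mathrm{dual}}$ itself in the $L^1$ bound. After multiplying by the envelope $2B_\phi(\sigma_i)$ and by $k-1$, and summing over $h$, the honest dual term is $\mathcal O(HKB_\phi(\sigma_i)\varepsilon^{\mathrm{approx}}_{\mathrm{dual}})$. The additive $\varepsilon^{\mathrm{approx}}_{\mathrm{dual}}$ in the statement follows only after rescaling the constant in Assumption~\ref{ass:approx_dual_realizability}.
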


\begin{proof}
We fix an agent $i\in\mathcal M$, a pure joint policy $\pi\in\Pi^{\mathrm{pur}}$, and an index
$k\in[K]$. Throughout the proof, we abbreviate
\[
f_i^\star := f_i^{\pi,\sigma_i},
\qquad
\Lambda := \log\!\left(\frac{|\mathcal F||\mathcal G||\Pi^{\mathrm{pur}}|KHm}{\delta}\right).
\]
By assumption, $f_i^\star$ is Bellman-consistent with respect to the \emph{true} robust Bellman
operator, namely \eqref{eq:robust_fixed_point_sequence} holds, i.e.,
\[
f_{i,h}^\star
=
\mathcal T_{i,h}^{\pi,\phi,\sigma_i}f_{i,h+1}^\star,
\qquad h\in[H].
\]

\paragraph{Step 1: Define empirical and population aggregate losses.}

For each stage \(h\in[H]\), and for any fixed
\((f_i,g_i,\pi)\in\mathcal F_i\times\mathcal G_i\times\Pi^{\mathrm{pur}}\), define
\begin{align}
\widehat L_{i,h}^{k-1}(f_i;g_i)
:=
\sum_{j=1}^{k-1}
\widehat\delta_{i,h}
\!\left(
f_i;g_{i,h};\pi;
s_h^j,\bm a_h^j,s_{h+1}^j
\right)^2 ,
\label{eq:empirical-loss-Lhat}
\end{align}
and
\begin{align}
L_{i,h}^{k-1}(f_i;g_i)
:=
\sum_{j=1}^{k-1}
\mathbb E\!\left[
\widehat\delta_{i,h}
\!\left(
f_i;g_{i,h};\pi;
s_h^j,\bm a_h^j,s_{h+1}^j
\right)^2
\middle|\mathcal H_{j-1}
\right].
\label{eq:population-loss-L}
\end{align}
The concentration argument below is first proved uniformly over fixed
\((f_i,g_i,\pi)\). Since the resulting event holds simultaneously for all
\(g_i\in\mathcal G_i\), we may then substitute the data-dependent empirical dual
minimizer \(g_i=\widehat g_i^{\,f_i^\star,\pi}\). On this uniform event, taking \(g_i=\widehat g_i^{\,f_i^\star,\pi}\), the 
centered empirical robust discrepancy satisfies \eqref{eq:robust_empirical_discrepancy_centered}, as
\begin{align}
\widehat{\mathcal K}_i^{k-1}
(f_i^\star;\widehat g_i^{\,f_i^\star,\pi};\pi)
=
\sum_{h=1}^H
\left[
\widehat L_{i,h}^{k-1}(f_i^\star;\widehat g_i^{\,f_i^\star,\pi})
-
\inf_{f'_i\in\mathcal F_i}
\widehat L_{i,h}^{k-1}(f'_i;\widehat g_i^{\,f_i^\star,\pi})
\right].
\label{eq:Khat_sum_Q}
\end{align}

The proof follows the similar general strategy as in \citep[Lemma A.2]{Xiong2023SampleEfficientMR}:
we compare the empirical discrepancy of the benchmark function to the smallest achievable empirical
discrepancy, introduce a martingale difference sequence, and apply Freedman's inequality.
The new ingredient in the robust case is that the benchmark $f_i^\star$ is a fixed point of the
\emph{true} robust Bellman operator, whereas the empirical discrepancy is built from the
\emph{plug-in} robust Bellman operator induced by the empirical dual minimizer.
Therefore, unlike the non-robust proof, an additional operator-approximation term appears and must
be controlled by Lemma~\ref{lem:phi_operator_approx}.

\paragraph{Step 2: Uniform concentration for the aggregate excess risk.}

We show that the empirical excess risk is controlled by its population counterpart.

\begin{itemize}
\item \textbf{Boundedness of the squared loss.}
By Assumption~\ref{ass:multiplier-range-TV}, the plug-in residual is uniformly bounded: 
there exists a constant \(C_B>0\) such that, for all \(f_i\in\mathcal F_i\),
\[
\left|
\widehat\delta_{i,h}
\!\left(
f_i;\widehat g_{i,h}^{\,f_i^\star,\pi};\pi;s,a,s'
\right)
\right|
\le
C_B B_\phi(\sigma_i).
\]
Hence, the squared loss satisfies
\(
0
\le
\widehat\delta_{i,h}(\cdot)^2
\le
C_B^2 B_\phi(\sigma_i)^2.
\)
This boundedness enables martingale concentration.

\item \textbf{Martingale difference decomposition.}
Fix \(i,h,\pi,k\), \(f_i\in\mathcal F_i\), and \(g_i\in\mathcal G_i\). Define
\[
Z_j(f_i,g_i)
:=
\widehat\delta_{i,h}
\!\left(
f_i;g_{i,h};\pi;s_h^j,\bm a_h^j,s_{h+1}^j
\right)^2
-
\mathbb E\!\left[
\widehat\delta_{i,h}
\!\left(
f_i;g_{i,h};\pi;s_h^j,\bm a_h^j,s_{h+1}^j
\right)^2
\middle|\mathcal H_{j-1}
\right],
\]
where we suppress arguments for brevity. Then \(\{Z_j(f_i,g_i)\}_{j=1}^{k-1}\) is a martingale difference sequence with
\[
|Z_j(f_i,g_i)| \le C B_\phi(\sigma_i)^2.
\]

\item \textbf{Concentration for a fixed function.}
By Freedman's inequality (Lemma \ref{lem:Freedman}), for any fixed \(f_i\), with probability at least \(1-\delta'\),
\[
\left|
\sum_{j=1}^{k-1} Z_j(f_i,g_i)
\right|
\le
C B_\phi(\sigma_i)^2
\left[
\sqrt{
(k-1)\log\frac{1}{\delta'}
}
+
\log\frac{1}{\delta'}
\right].
\]
This is the correct form of Freedman's inequality: unlike Lemma~\ref{lem:phi_operator_approx}'s
Step~5, where the analogous martingale sum is divided by the sample size $n$ (so the linear-in-$\log(1/\delta')$
term becomes negligible relative to the square-root term automatically), here we are bounding a raw
sum, and the additive term must be tracked explicitly.

Since
\[
\sum_{j=1}^{k-1} Z_j(f_i,g_i)
=
\widehat L_{i,h}^{k-1}(f_i,g_i)
-
L_{i,h}^{k-1}(f_i,g_i),
\]
we obtain
\[
\left|
\widehat L_{i,h}^{k-1}(f_i,g_i)
-
L_{i,h}^{k-1}(f_i,g_i)
\right|
\le
C B_\phi(\sigma_i)^2
\left[
\sqrt{
(k-1)\log\frac{1}{\delta'}
}
+
\log\frac{1}{\delta'}
\right].
\]

\item \textbf{Uniform control over the function class.}
We now extend the above bound uniformly over all
\((f_i,g_i)\in\mathcal F_i\times\mathcal G_i\), and over all \(i,h,\pi,k\),
using a union bound. Setting
\(
\delta' =
\frac{\delta}{
|\mathcal F||\mathcal G||\Pi^{\mathrm{pur}}|KHm
},
\)
we obtain that, with probability at least \(1-\delta\), simultaneously for all
\(i,h,\pi,k\) and all \(f_i\in\mathcal F_i\),
\begin{align}
\left|
\widehat L_{i,h}^{k-1}(f_i,g_i)
-
L_{i,h}^{k-1}(f_i,g_i)
\right|
\le
\epsilon_{k,h},
\label{eq:uniform-loss-concentration}
\end{align}
where, using $k-1\le K$,
\[
\epsilon_{k,h}
:=
\mathcal O\!\left(
B_\phi(\sigma_i)^2
\left[
\sqrt{K\Lambda}
+
\Lambda
\right]
\right).
\]

\item \textbf{Passing from uniform loss concentration to excess-risk concentration.}
Fix \(h\in[H]\) and fix a dual sequence \(g_i\in\mathcal G_i\). Let
\[
\widehat f_i
\in
\arg\min_{f'_i\in\mathcal F_i}
\widehat L_{i,h}^{k-1}(f'_i;g_i).
\]
Then, by definition of \(\widehat f_i\),
\[
\widehat L_{i,h}^{k-1}(f_i^\star;g_i)
-
\inf_{f'_i\in\mathcal F_i}
\widehat L_{i,h}^{k-1}(f'_i;g_i)
=
\widehat L_{i,h}^{k-1}(f_i^\star;g_i)
-
\widehat L_{i,h}^{k-1}(\widehat f_i;g_i).
\]
Now add and subtract the corresponding population losses:
\begin{align*}
\widehat L_{i,h}^{k-1}(f_i^\star;g_i)
-
\widehat L_{i,h}^{k-1}(\widehat f_i;g_i) &=
\Big[
L_{i,h}^{k-1}(f_i^\star;g_i)
-
L_{i,h}^{k-1}(\widehat f_i;g_i)
\Big]
\\
&\quad+
\Big[
\widehat L_{i,h}^{k-1}(f_i^\star;g_i)
-
L_{i,h}^{k-1}(f_i^\star;g_i)
\Big]
\\
&\quad-
\Big[
\widehat L_{i,h}^{k-1}(\widehat f_i;g_i)
-
L_{i,h}^{k-1}(\widehat f_i;g_i)
\Big].
\end{align*}
On the uniform concentration event \eqref{eq:uniform-loss-concentration}, applied to
both \(f_i^\star\) and \(\widehat f_i\), we have
\[
\widehat L_{i,h}^{k-1}(f_i^\star;g_i)
-
L_{i,h}^{k-1}(f_i^\star;g_i)
\le
\epsilon_{k,h},
\]
and
\[
-
\Big[
\widehat L_{i,h}^{k-1}(\widehat f_i;g_i)
-
L_{i,h}^{k-1}(\widehat f_i;g_i)
\Big]
\le
\epsilon_{k,h}.
\]
Therefore,
\begin{align*}
\widehat L_{i,h}^{k-1}(f_i^\star;g_i)
-
\inf_{f'_i\in\mathcal F_i}
\widehat L_{i,h}^{k-1}(f'_i;g_i)
&\le
L_{i,h}^{k-1}(f_i^\star;g_i)
-
L_{i,h}^{k-1}(\widehat f_i;g_i)
+
2\epsilon_{k,h}.
\end{align*}
Since \(\widehat f_i\in\mathcal F_i\), we have
\[
L_{i,h}^{k-1}(\widehat f_i;g_i)
\ge
\inf_{f'_i\in\mathcal F_i}
L_{i,h}^{k-1}(f'_i;g_i).
\]
Hence,
\begin{align}
\widehat L_{i,h}^{k-1}(f_i^\star;g_i)
-
\inf_{f'_i\in\mathcal F_i}
\widehat L_{i,h}^{k-1}(f'_i;g_i)
\le
L_{i,h}^{k-1}(f_i^\star;g_i)
-
\inf_{f'_i\in\mathcal F_i}
L_{i,h}^{k-1}(f'_i;g_i)
+
2\epsilon_{k,h}.
\label{eq:replace_inf_by_plugin_backup}
\end{align}

\item \textbf{Specializing to the empirical dual minimizer and summing over stages.}
The event \eqref{eq:uniform-loss-concentration} holds uniformly over
\(g_i\in\mathcal G_i\). Therefore, we may substitute the data-dependent empirical
dual minimizer \(g_i=\widehat g_i^{\,f_i^\star,\pi}\) into
\eqref{eq:replace_inf_by_plugin_backup}. Summing over \(h\in[H]\) and using
\eqref{eq:Khat_sum_Q}, we obtain
\begin{align}
\widehat{\mathcal K}_i^{k-1}
(f_i^\star;\widehat g_i^{\,f_i^\star,\pi};\pi)
\le
\sum_{h=1}^H
\Big[
L_{i,h}^{k-1}(f_i^\star;\widehat g_i^{\,f_i^\star,\pi})
-
\inf_{f'_i\in\mathcal F_i}
L_{i,h}^{k-1}(f'_i;\widehat g_i^{\,f_i^\star,\pi})
\Big]
+
2\sum_{h=1}^H \epsilon_{k,h}.
\label{eq:Khat_upper_by_first_term_pre}
\end{align}
Since $\epsilon_{k,h}=\mathcal O\!\left(B_\phi(\sigma_i)^2[\sqrt{K\Lambda}+\Lambda]\right)$, we get
\begin{align}
\widehat{\mathcal K}_i^{k-1}
(f_i^\star;\widehat g_i^{\,f_i^\star,\pi};\pi)
\le
&\sum_{h=1}^H
\Big[
L_{i,h}^{k-1}(f_i^\star;\widehat g_i^{\,f_i^\star,\pi})
-
\inf_{f'_i\in\mathcal F_i}
L_{i,h}^{k-1}(f'_i;\widehat g_i^{\,f_i^\star,\pi})
\Big]\nonumber\\
&+
\mathcal O\!\left(
H B_\phi(\sigma_i)^2
\left[\sqrt{K\Lambda}+\Lambda\right]
\right).
\label{eq:Khat_upper_by_first_term}
\end{align}
\end{itemize}

\paragraph{Step 3: Conditional bias--variance decomposition.}
Fix \(h\in[H]\), \(j\le k-1\), and a fixed pair
\((f_i,g_i)\in\mathcal F_i\times\mathcal G_i\). Conditional on
\(\mathcal H_{j-1}\), the only remaining randomness is
\(s_{h+1}^j\sim P_h^\star(\cdot|s_h^j,\bm a_h^j)\). Therefore,
\begin{align}
&\mathbb E\!\left[
\widehat\delta_{i,h}
\!\left(
f_i;\widehat g_{i,h}^{\,f_i^\star,\pi};\pi;
s_h^j,\bm a_h^j,s_{h+1}^j
\right)^2
\middle|\mathcal H_{j-1}
\right] =
\mathbb E_{\mu_h^{j,\pi}}
\Big[
\big(
f_{i,h}
-
T_{\widehat g_{i,h}^{\,f_i^\star,\pi},i,h}^{\pi,\phi,\sigma_i}
f_{i,h+1}
\big)^2
\Big]
\nonumber\\
&\qquad \qquad+
\mathbb E_{\mu_h^{j,\pi}}
\Big[
\operatorname{Var}_{s'\sim P_h^\star(\cdot|s,a)}
\Big(
\widehat\delta_{i,h}
(f_i;\widehat g_{i,h}^{\,f_i^\star,\pi};\pi;s,a,s')
\Big)
\Big].
\label{eq:conditional_mean_plugin_residual}
\end{align}
This is the conditional bias--variance identity. The second term is the
irreducible next-state variance; the proof below cancels it using the aggregate
excess-risk comparison.

\paragraph{Step 4: Choose a plug-in Bellman comparator and cancel the variance.}
We first carry out this step for an arbitrary fixed dual sequence
\(g_i\in\mathcal G_i\). This avoids any measurability issue with the
data-dependent empirical minimizer. At the end of the argument, we specialize to
\(g_i=\widehat g_i^{\,f_i^\star,\pi}\), which is valid because the preceding
concentration event holds uniformly over \(g_i\in\mathcal G_i\).

For fixed \(g_i\), define
\begin{align}
\Delta_{i,h}^{k,\pi,g_i}
:=
\mathcal T_{i,h}^{\pi,\phi,\sigma_i}f_{i,h+1}^\star
-
\mathcal T_{g_{i,h},i,h}^{\pi,\phi,\sigma_i}
f_{i,h+1}^\star .
\label{eq:Delta_stage_def}
\end{align}
Since \(f_i^\star\) is Bellman-consistent with respect to the true robust
Bellman operator,
\(
f_{i,h}^\star
=
\mathcal T_{i,h}^{\pi,\phi,\sigma_i}f_{i,h+1}^\star,
\)
we have
\[
f_{i,h}^\star
-
\mathcal T_{g_{i,h},i,h}^{\pi,\phi,\sigma_i}
f_{i,h+1}^\star
=
\Delta_{i,h}^{k,\pi,g_i}.
\]

For each \(h\in[H]\), choose a comparator
\(\bar f_i^{k,h,\pi,g_i}\in\mathcal F_i\) whose \(h\)-th coordinate is
\[
\bar f_{i,h}^{k,h,\pi,g_i}
=
\mathcal T_{g_{i,h},i,h}^{\pi,\phi,\sigma_i}
f_{i,h+1}^\star,
\]
whose \((h+1)\)-th coordinate is \(f_{i,h+1}^\star\), and whose remaining
coordinates are chosen so that
\(\bar f_i^{k,h,\pi,g_i}\in\mathcal F_i\). This is valid by the product
structure of \(\mathcal F_i\) together with robust Bellman completeness for the
plug-in operator. For this comparator,
\[
\bar f_{i,h}^{k,h,\pi,g_i}
-
\mathcal T_{g_{i,h},i,h}^{\pi,\phi,\sigma_i}
\bar f_{i,h+1}^{k,h,\pi,g_i}
=0,
\]
because \(\bar f_{i,h+1}^{k,h,\pi,g_i}=f_{i,h+1}^\star\).

Now apply the conditional bias--variance decomposition from
\eqref{eq:conditional_mean_plugin_residual} twice: once with \(f_i=f_i^\star\)
and once with \(f_i=\bar f_i^{k,h,\pi,g_i}\). Since both residuals use the same
next-stage function \(f_{i,h+1}^\star\), the same fixed dual function \(g_{i,h}\),
and the same next-state sample, the conditional variance terms are identical
and cancel in the difference. Therefore, for every \(j\le k-1\),
\begin{align}
&\mathbb E\!\left[
\widehat\delta_{i,h}
(f_i^\star;g_{i,h};\pi;
s_h^j,\bm a_h^j,s_{h+1}^j)^2
-
\widehat\delta_{i,h}
(\bar f_i^{k,h,\pi,g_i};g_{i,h};\pi;
s_h^j,\bm a_h^j,s_{h+1}^j)^2
\middle|\mathcal H_{j-1}
\right]
\nonumber\\
&\qquad =
\mathbb E_{\mu_h^{j,\pi}}
\Big[
(\Delta_{i,h}^{k,\pi,g_i})^2
\Big].
\label{eq:benchmark_residual_equals_Delta_sq}
\end{align}

Since \(\bar f_i^{k,h,\pi,g_i}\in\mathcal F_i\), we have
\(
\inf_{f'_i\in\mathcal F_i}
L_{i,h}^{k-1}(f'_i;g_i)
\le
L_{i,h}^{k-1}(\bar f_i^{k,h,\pi,g_i};g_i).
\)
Consequently,
\begin{align}
L_{i,h}^{k-1}(f_i^\star;g_i)
-
\inf_{f'_i\in\mathcal F_i}
L_{i,h}^{k-1}(f'_i;g_i) &\le
L_{i,h}^{k-1}(f_i^\star;g_i)
-
L_{i,h}^{k-1}(\bar f_i^{k,h,\pi,g_i};g_i)
\nonumber\\
&=
\sum_{j=1}^{k-1}
\mathbb E_{\mu_h^{j,\pi}}
\Big[
(\Delta_{i,h}^{k,\pi,g_i})^2
\Big].
\label{eq:benchmark_residual_fourth_to_second}
\end{align}

We now specialize to the empirical dual minimizer
\(g_i=\widehat g_i^{\,f_i^\star,\pi}\). For notational simplicity, write
\[
\Delta_{i,h}^{k,\pi}
:=
\Delta_{i,h}^{k,\pi,\widehat g_i^{\,f_i^\star,\pi}}
=
\mathcal T_{i,h}^{\pi,\phi,\sigma_i}f_{i,h+1}^\star
-
\mathcal T_{\widehat g_{i,h}^{\,f_i^\star,\pi},i,h}^{\pi,\phi,\sigma_i}
f_{i,h+1}^\star .
\]
Combining \eqref{eq:Khat_upper_by_first_term} with
\eqref{eq:benchmark_residual_fourth_to_second} gives
\begin{align}
\widehat{\mathcal K}_i^{k-1}
(f_i^\star;\widehat g_i^{\,f_i^\star,\pi};\pi)
\le
\sum_{h=1}^H\sum_{j=1}^{k-1}
\mathbb E_{\mu_h^{j,\pi}}
\Big[
(\Delta_{i,h}^{k,\pi})^2
\Big]
+
\mathcal O\!\left(
H B_\phi(\sigma_i)^2
\left[\sqrt{K\Lambda}+\Lambda\right]
\right).
\label{eq:Khat_reduced_to_Delta}
\end{align}

\paragraph{Step 5: Bound the operator-approximation term.}
Recall the historical occupancy mixture
$\bar\mu_h^{k-1}:=\frac1{k-1}\sum_{j=1}^{k-1}\mu_h^{j,\pi}$ from
Lemma~\ref{lem:robust_A1_corrected}, satisfying the exact identity
$\sum_{j=1}^{k-1}\mathbb E_{\mu_h^{j,\pi}}[\varphi]=(k-1)\mathbb E_{\bar\mu_h^{k-1}}[\varphi]$
for any integrable $\varphi$. Applying this identity with $\varphi=(\Delta_{i,h}^{k,\pi})^2$,
\begin{align}
\sum_{j=1}^{k-1}\mathbb E_{\mu_h^{j,\pi}}\big[(\Delta_{i,h}^{k,\pi})^2\big]
=
(k-1)\,\mathbb E_{\bar\mu_h^{k-1}}\big[(\Delta_{i,h}^{k,\pi})^2\big].
\label{eq:step8_mixture_identity}
\end{align}
By the boundedness of the true robust Bellman operator and the
plug-in robust Bellman operator,
\[
\|\Delta_{i,h}^{k,\pi}\|_\infty
\le
C'B_\phi(\sigma_i).
\]
Therefore, using \(x^2\le \|x\|_\infty |x|\),
\begin{align}
\mathbb E_{\bar\mu_h^{k-1}}
\Big[
(\Delta_{i,h}^{k,\pi})^2
\Big]
&\le
\|\Delta_{i,h}^{k,\pi}\|_\infty
\|\Delta_{i,h}^{k,\pi}\|_{1,\bar\mu_h^{k-1}}.
\label{eq:step8_L2_to_L1}
\end{align}
The dataset $\mathcal D_h^{k-1}$ is generated by executing $\pi^1,\dots,\pi^{k-1}$
over stage $h$, i.e., it is exactly the data that $\bar\mu_h^{k-1}$ is drawn from.
Applying Lemma~\ref{lem:phi_operator_approx} with $\mathcal D_h=\mathcal D_h^{k-1}$
(so $n=k-1$) -- which already holds uniformly over $\pi\in\Pi^{\mathrm{pur}}$ and
the benchmark continuation $f_{i,h+1}^\star\in\mathcal F_{i,h+1}$ -- and taking a
further union bound over the remaining indices $i\in[m]$, $h\in[H]$, $k\in\{2,\dots,K\}$,
we obtain, with probability at least $1-\delta$, simultaneously for all such choices,
\begin{align}
\|\Delta_{i,h}^{k,\pi}\|_{1,\bar\mu_h^{k-1}}
\le
C\left(
B_\phi(\sigma_i)
\sqrt{
\frac{\Lambda}{k-1}
}
+
\bar\epsilon_{\mathrm{dual}}
\right).
\label{eq:step8_L1_bound}
\end{align}
Substituting \eqref{eq:step8_L1_bound} into \eqref{eq:step8_L2_to_L1} and then
into \eqref{eq:step8_mixture_identity} gives
\begin{align}
\sum_{j=1}^{k-1}\mathbb E_{\mu_h^{j,\pi}}\big[(\Delta_{i,h}^{k,\pi})^2\big]
&\le
C'B_\phi(\sigma_i)\,(k-1)\,C\left(
B_\phi(\sigma_i)\sqrt{\frac{\Lambda}{k-1}}
+
\bar\epsilon_{\mathrm{dual}}
\right)
\nonumber\\
&=
CC'B_\phi(\sigma_i)^2\sqrt{(k-1)\Lambda}
+
CC'(k-1)B_\phi(\sigma_i)\bar\epsilon_{\mathrm{dual}}.
\label{eq:step8_pointwise_sq_bound}
\end{align}

Summing \eqref{eq:step8_pointwise_sq_bound} over \(h\in[H]\), and using \(k-1\le K\), we obtain
\begin{align}
\sum_{h=1}^H\sum_{j=1}^{k-1}
\mathbb E_{\mu_h^{j,\pi}}
\Big[
(\Delta_{i,h}^{k,\pi})^2
\Big]
\le
\mathcal O\!\left(
H B_\phi(\sigma_i)^2
\sqrt{K\Lambda}
+
HK B_\phi(\sigma_i)\bar\epsilon_{\mathrm{dual}}
\right).
\label{eq:step8_final_bound}
\end{align}
Choosing
\(
\bar\epsilon_{\mathrm{dual}}
\le
\frac{\varepsilon_{\mathrm{dual}}^{\mathrm{approx}}}
{HK B_\phi(\sigma_i)}
\)
and substituting \eqref{eq:step8_final_bound} into
\eqref{eq:Khat_reduced_to_Delta}, we get
\[
\widehat{\mathcal K}_i^{k-1}
(f_i^\star;\widehat g_i^{\,f_i^\star,\pi};\pi)
\le
\mathcal O\!\left(
H B_\phi(\sigma_i)^2
\left[\sqrt{K\Lambda}+\Lambda\right]
+
\varepsilon_{\mathrm{dual}}^{\mathrm{approx}}
\right).
\]
Under the standing hypothesis $K\ge\Lambda_i$ (with $\Lambda=\Lambda_i$), we have
$\Lambda\le\sqrt{K\Lambda}$, so the additive logarithmic term is absorbed into the
leading square-root term, giving
\[
\widehat{\mathcal K}_i^{k-1}
(f_i^\star;\widehat g_i^{\,f_i^\star,\pi};\pi)
\le
\mathcal O\!\left(
H B_\phi(\sigma_i)^2
\sqrt{K\Lambda}
+
\varepsilon_{\mathrm{dual}}^{\mathrm{approx}}
\right).
\]
This proves the lemma.
\end{proof}

\begin{lem}[Benchmark-to-surrogate comparison]
\label{lem:benchmark-surrogate}
Assume Assumptions~1--3, the robust-to-nominal concentrability condition (Definition~\ref{ass:robust-nominal-concentrability}, constant $C_{\mathrm{cov}}$), and the uniform dual-switch stability event used in the proof hold. Let $\Lambda_i:=\log\!\left(\frac{|\mathcal F_i||\mathcal G_i||\Pi^{\mathrm{pur}}|KHm}{\delta}\right)$, and assume $K\ge\Lambda_i$. Fix any episode \(k\in[K]\), agent \(i\in\mathcal M\), and pure joint policy \(\pi\in\Pi^{\mathrm{pur}}\). Let
\(
f_i^\star:=f_i^{\pi,\sigma_i}:= \{f_{i,h}^{\pi,\sigma_i}\}_{h=1}^{H+1}\in\mathcal F_i
\)
be the robust Bellman-consistent benchmark sequence from Lemma~\ref{lem:robust_optimal_concentration} satisfying
\(
f_{i,h}^\star
=
\mathcal T_{i,h}^{\pi,\phi,\sigma_i}f_{i,h+1}^\star,
\quad h\in[H],
\quad
f_{i,H+1}^\star\equiv 0.
\)
Then, with probability at least \(1-\delta\), uniformly over all \((i,k,\pi)\),
\begin{align}
V_{i,1}^{\pi,\sigma_i}(s_1^k)
-
\widehat U_i^k(\pi)
\le
&\;
\beta
\widehat{\mathcal K}_i^{k-1}
\bigl(
f_i^\star;
\widehat g_i^{f_i^\star,\pi};
\pi
\bigr) +
\mathcal O\!\bigg(
\beta H B_\phi(\sigma_i)^2
\sqrt{K\Lambda_i}
\nonumber\\
&\qquad\qquad
+
C_{\mathrm{cov}} H B_\phi(\sigma_i)
\sqrt{
\frac{\Lambda_i}
{|\mathcal D_h^{k-1}|\vee 1}}
+
\varepsilon_{\mathrm{dual}}^{\mathrm{approx}}
\bigg).
\label{eq:benchmark-surrogate-clean}
\end{align}
\end{lem}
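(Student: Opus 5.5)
The plan is to start from Lemma~\ref{cor:robust-payoff-domination}, which already gives, for the fitted sequence $f_i^{k,\pi}$,
\begin{align*}
V_{i,1}^{\pi,\sigma_i}(s_1^k)-\widehat U_i^k(\pi)
\le{}& \beta\widehat{\mathcal K}_i^{k-1}(f_i^{k,\pi};\widehat g_i^{k,\pi};\pi)+\Delta^{\mathrm{op}}_{i,k}(\pi)\\
&+\sum_{h=1}^H\mathbb E_{\nu_{i,h}^\pi}\Big[\big(\mathcal T_{\widehat g_{i,h}^{k,\pi},i,h}^{\pi,\phi,\sigma_i}f_{i,h+1}^{k,\pi}-f_{i,h}^{k,\pi}\big)_+\Big].
\end{align*}
Three things then remain: compare the regularizer at the fitted $f_i^{k,\pi}$ with the regularizer at the benchmark $f_i^\star$; bound the operator residual; and bound the positive plug-in residual.

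For the first term I use the fact that $f_i^{k,\pi}$ is built stagewise as the empirical minimizer of the plug-in squared residual. For each $h$, the $h$-th summand of $\widehat{\mathcal K}_i^{k-1}(f_i^{k,\pi};\cdot)$ is then the excess of a near-minimizer, and is essentially zero given its own dual. The obstruction is that the two regularizers use different continuations and therefore different duals, $\widehat g^{f^{k,\pi},\pi}$ versus $\widehat g^{f^\star,\pi}$. I would introduce a ``dual-switch'' quantity equal to the difference of centered losses under the two duals with $f_{i,h+1}$ fixed. Via Lemma~\ref{lem:phi_operator_approx}, each dual's plug-in operator is within $O(B_\phi\sqrt{\Lambda_i/(k-1)})+\bar\epsilon^1_{\mathrm{dual}}$ of the true operator in $L^1(\bar\mu_h^{k-1})$. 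Hence, on the uniform event (the ``dual-switch stability event'' in the hypothesis), the centered sums change by at most $O(HB_\phi^2\sqrt{K\Lambda_i})+HKB_\phi\bar\epsilon^1_{\mathrm{dual}}$. Concretely, I expand the squares with $(a^2-b^2)\le 2B_\phi|a-b|$ and use the Freedman and union bounds exactly as in Lemma~\ref{lem:robust_optimal_concentration}. This yields $\beta\widehat{\mathcal K}(f^{k,\pi})\le\beta\widehat{\mathcal K}(f^\star)+O(\beta HB_\phi^2\sqrt{K\Lambda_i})+\varepsilon^{\mathrm{approx}}_{\mathrm{dual}}$ once I set $\bar\epsilon^1_{\mathrm{dual}}\le\varepsilon^{\mathrm{approx}}_{\mathrm{dual}}/(\beta HKB_\phi)$.

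For $\Delta^{\mathrm{op}}_{i,k}(\pi)$, the norm is taken under the worst-case occupancy $\nu_{i,h}^\pi$, while Lemma~\ref{lem:phi_operator_approx} controls it under $\bar\mu_h^{k-1}$. I apply Definition~\ref{ass:robust-nominal-concentrability} to change measure, $\|\cdot\|_{1,\nu}\le C_{\mathrm{cov}}\|\cdot\|_{1,\bar\mu_h^{k-1}}$, and then invoke Lemma~\ref{lem:phi_operator_approx} uniformly over $\pi$, $f_{i,h+1}$, and, after a union bound, over $(i,h,k)$. Summing over $h$ gives $C_{\mathrm{cov}}HB_\phi\sqrt{\Lambda_i/(|\mathcal D_h^{k-1}|\vee1)}+C_{\mathrm{cov}}H\bar\epsilon^2_{\mathrm{dual}}$. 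The positive residual is handled the same way. The plug-in residual $f_{i,h}^{k,\pi}-\mathcal T_{\widehat g}f_{i,h+1}^{k,\pi}$ is small in $L^1(\bar\mu_h^{k-1})$: by completeness, $\mathcal T_{\widehat g}f_{i,h+1}$ is within operator error of an element of $\mathcal F_{i,h}$, and $f_{i,h}^{k,\pi}$ is the least-squares fit. Its $L^2$ excess risk is therefore $O(B_\phi^2\sqrt{\Lambda_i/(k-1)})$ by the bias--variance cancellation of Lemma~\ref{lem:robust_A1_corrected}, Step~4. Passing to $L^1$ by Jensen, then changing measure with $C_{\mathrm{cov}}$, gives the same order. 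Choosing $\bar\epsilon^2_{\mathrm{dual}}\le\varepsilon^{\mathrm{approx}}_{\mathrm{dual}}/(C_{\mathrm{cov}}H)$ and collecting terms yields \eqref{eq:benchmark-surrogate-clean}.

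The main obstacle is the dual-switch step. The regularizer at $f^{k,\pi}$ and at $f^\star$ involve data-dependent duals fitted to different continuations, so the empirical losses are not directly comparable. I would first try to show stability uniformly over $\mathcal G_{i,h}\times\mathcal F_{i,h+1}$ via the finite-class union bound, paying only $\log|\mathcal G|$ in $\Lambda_i$. If that fails, I would fall back on stating this stability as the assumed event, as the lemma's hypotheses permit.
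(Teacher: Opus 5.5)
Your outline matches the paper's proof. You start from Lemma~\ref{cor:robust-payoff-domination}, replace $\widehat{\mathcal K}_i^{k-1}(f_i^{k,\pi};\widehat g_i^{k,\pi};\pi)$ by the benchmark regularizer plus a dual-switch residual on the assumed stability event, and then move both $\nu_{i,h}^\pi$-residuals to $\bar\mu_h^{k-1}$ with $C_{\mathrm{cov}}$ before invoking Lemma~\ref{lem:phi_operator_approx}. The operator residual is handled correctly.

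\textbf{The positive plug-in residual step fails.} What your argument actually gives is $\mathbb E_{\bar\mu_h^{k-1}}\bigl[(f_{i,h}^{k,\pi}-\mathcal T_{\widehat g_{i,h}^{k,\pi},i,h}^{\pi,\phi,\sigma_i}f_{i,h+1}^{k,\pi})^2\bigr]=O\bigl(B_\phi(\sigma_i)^2\sqrt{\Lambda_i/(k-1)}\bigr)$, and the tools you use cannot improve this order. The misspecification term $\mathbb E\bigl[(\mathcal T_{i,h}^{\pi,\phi,\sigma_i}f_{i,h+1}^{k,\pi}-\mathcal T_{\widehat g_{i,h}^{k,\pi},i,h}^{\pi,\phi,\sigma_i}f_{i,h+1}^{k,\pi})^2\bigr]$ is bounded only by $2B_\phi(\sigma_i)$ times the $L^1$ error of Lemma~\ref{lem:phi_operator_approx}. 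The Freedman-based excess-risk bound of Lemma~\ref{lem:robust_A1_corrected} is also a slow rate.

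Jensen then gives an $L^1$ bound of order $B_\phi(\sigma_i)(\Lambda_i/(k-1))^{1/4}$, not $B_\phi(\sigma_i)\sqrt{\Lambda_i/(k-1)}$, so your ``same order'' claim is false. This matters: summed over episodes in Theorem~\ref{thm:main-robust-regret}, it yields $K^{3/4}$ rather than $\sqrt K$. Reaching the stated rate would need a fast-rate ($\Lambda_i/n$) least-squares bound together with $L^2$, not merely $L^1$, control of the dual plug-in bias. The paper does not supply this either. It asserts the positive-residual bound by citing an ``empirical Bellman-fit residual bound'' in the proof of Lemma~\ref{lem:phi_operator_approx}, and that proof contains no such bound. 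So you have found the weak spot, but your patch does not close it.

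\textbf{Your derivation of dual-switch stability would also not go through}, for two reasons. First, Lemma~\ref{lem:phi_operator_approx} certifies $\widehat g_i^{\,f,\pi}$ only on the continuation it was fitted to. The switch term instead evaluates $\widehat g_{i,h}^{k,\pi}$, fitted to $f_{i,h+1}^{k,\pi}$, on $f_{i,h+1}^\star$. Second, the expansion $a^2-b^2\le 2B|a-b|$ leaves per-sample differences of $\mathrm{Loss}_{i,h,\phi}$ at the realised next state, and an $L^1$ bound on conditional means does not control these.

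Falling back on the hypothesised event is exactly what the paper does, but the switch is unnecessary. As you note, $f_{i,h}^{k,\pi}$ is the exact empirical minimizer given $f_{i,h+1}^{k,\pi}$ and $\widehat g_{i,h}^{k,\pi}$. Read the infimum in $\widehat{\mathcal K}$ over the stage-$h$ coordinate, as the algorithm's fitting step and every comparator argument in the appendix do. Then $\widehat{\mathcal K}_i^{k-1}(f_i^{k,\pi};\widehat g_i^{k,\pi};\pi)=0\le\widehat{\mathcal K}_i^{k-1}(f_i^\star;\widehat g_i^{\,f_i^\star,\pi};\pi)$ by nonnegativity, and the $\beta HB_\phi(\sigma_i)^2\sqrt{K\Lambda_i}$ term can be dropped altogether.
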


\begin{proof}
Fix \(k\in[K]\), \(i\in\mathcal M\), and
\(\pi\in\Pi^{\mathrm{pur}}\), and write \(\Lambda:=\Lambda_i\). Let
\(
f_i^\star:=f_i^{\pi,\sigma_i}
=
\{f_{i,h}^\star\}_{h=1}^{H+1}
\)
be the robust Bellman-consistent benchmark sequence:
\begin{equation}
f_{i,h}^\star
=
\mathcal T_{i,h}^{\pi,\phi,\sigma_i}f_{i,h+1}^\star,
\qquad h\in[H],
\qquad
f_{i,H+1}^\star\equiv 0 .
\label{eq:bellman-consistency}
\end{equation}
Since \(\pi\) is pure,
\[
\mathbb E_{\pi}[\widehat U_i^k]
=
\widehat U_i^k(\pi).
\]

\paragraph{Step 1: Apply robust payoff domination.}
By Lemma~\ref{cor:robust-payoff-domination}, applied to the fixed policy
\(\pi\), we have
\begin{align}
V_{i,1}^{\pi,\sigma_i}(s_1^k)
&\le
\widehat U_i^k(\pi)
+
\beta
\widehat{\mathcal K}_i^{k-1}
(f_i^{k,\pi};\widehat g_i^{k,\pi};\pi)
+
\mathcal R_{i,k}^{\nu}(\pi),
\label{eq:lem7-start}
\end{align}
where
\begin{align}
\mathcal R_{i,k}^{\nu}(\pi)
:=
&\sum_{h=1}^H
\left\|
\mathcal T_{i,h}^{\pi,\phi,\sigma_i}
f_{i,h+1}^{k,\pi}
-
\mathcal T_{\widehat g_{i,h}^{k,\pi},i,h}^{\pi,\phi,\sigma_i}
f_{i,h+1}^{k,\pi}
\right\|_{1,\nu_{i,h}^{\pi}}
+
\sum_{h=1}^H
\mathbb E_{\nu_{i,h}^{\pi}}
\left[
\left(
\mathcal T_{\widehat g_{i,h}^{k,\pi},i,h}^{\pi,\phi,\sigma_i}
f_{i,h+1}^{k,\pi}
-
f_{i,h}^{k,\pi}
\right)_+
\right].
\label{eq:lem7-robust-occ-residual}
\end{align}
Subtracting \(\widehat U_i^k(\pi)\) gives
\begin{align}
V_{i,1}^{\pi,\sigma_i}(s_1^k)
-
\widehat U_i^k(\pi)
\le
\beta
\widehat{\mathcal K}_i^{k-1}
(f_i^{k,\pi};\widehat g_i^{k,\pi};\pi)
+
\mathcal R_{i,k}^{\nu}(\pi).
\label{eq:lem7-after-subtract}
\end{align}

\paragraph{Step 2: Compare the fitted discrepancy to the benchmark discrepancy.}
For each \(h\in[H]\), define the aggregate empirical loss
\[
\widehat L_{i,h}^{k-1}(f_i;g_i,\pi)
:=
\sum_{j=1}^{k-1}
\widehat\delta_{i,h}
\left(
f_i;g_{i,h};\pi;s_h^j,\bm a_h^j,s_{h+1}^j
\right)^2 .
\]
By the revised cumulative definition,
\[
\widehat{\mathcal K}_i^{k-1}(f_i;g_i;\pi)
=
\sum_{h=1}^H
\left[
\widehat L_{i,h}^{k-1}(f_i;g_i,\pi)
-
\inf_{f'_i\in\mathcal F_i}
\widehat L_{i,h}^{k-1}(f'_i;g_i,\pi)
\right].
\]

For fixed \(\widehat g_i^{k,\pi}\), the fitted sequence
\(f_i^{k,\pi}\) is the empirical Bellman fit. Hence, by empirical optimality,
\begin{equation}
\widehat{\mathcal K}_i^{k-1}
(f_i^{k,\pi};\widehat g_i^{k,\pi};\pi)
\le
\widehat{\mathcal K}_i^{k-1}
(f_i^\star;\widehat g_i^{k,\pi};\pi).
\label{eq:K-step1}
\end{equation}

The right-hand side still uses the dual minimizer associated with the fitted
sequence. We therefore compare it with the benchmark dual
\(\widehat g_i^{f_i^\star,\pi}\). Define the dual-switch residual
\begin{align}
\mathfrak S_{i,k}^{\mathrm{dual}}(\pi)
:=
\Big[
\widehat{\mathcal K}_i^{k-1}
(f_i^\star;\widehat g_i^{k,\pi};\pi)
-
\widehat{\mathcal K}_i^{k-1}
(f_i^\star;\widehat g_i^{f_i^\star,\pi};\pi)
\Big]_+ .
\label{eq:dual-switch-residual}
\end{align}
Then
\begin{align}
\widehat{\mathcal K}_i^{k-1}
(f_i^\star;\widehat g_i^{k,\pi};\pi)
\le
\widehat{\mathcal K}_i^{k-1}
(f_i^\star;\widehat g_i^{f_i^\star,\pi};\pi)
+
\mathfrak S_{i,k}^{\mathrm{dual}}(\pi).
\label{eq:dual-switch-step}
\end{align}
Combining \eqref{eq:K-step1} and \eqref{eq:dual-switch-step}, we obtain
\begin{align}
\widehat{\mathcal K}_i^{k-1}
(f_i^{k,\pi};\widehat g_i^{k,\pi};\pi)
\le
\widehat{\mathcal K}_i^{k-1}
(f_i^\star;\widehat g_i^{f_i^\star,\pi};\pi)
+
\mathfrak S_{i,k}^{\mathrm{dual}}(\pi).
\label{eq:K-final-bound}
\end{align}

On the uniform dual-switch stability event, under \(K\ge\Lambda\) the residual satisfies
\begin{align}
\mathfrak S_{i,k}^{\mathrm{dual}}(\pi)
\le
\mathcal O\!\left(
H B_\phi(\sigma_i)^2
\sqrt{K\Lambda}
+
HK B_\phi(\sigma_i)\bar\epsilon_{\mathrm{dual}}^{1}
\right).
\label{eq:dual-switch-used}
\end{align}
This is the step that explicitly accounts for the fact that
\(\widehat g_i^{k,\pi}\) is optimized for \(f_i^{k,\pi}\), whereas the benchmark
comparison uses \(f_i^\star\); the \(K\ge\Lambda\) requirement matches the one
needed in Lemma~\ref{lem:robust_optimal_concentration} to absorb the additive
Freedman term into the leading \(\sqrt{K\Lambda}\) rate.

\paragraph{Step 3: Control the robust-occupancy residual.}
Unlike Lemma~\ref{lem:robust_A1_corrected} and
Lemma~\ref{lem:robust_optimal_concentration}, the residual
\(\mathcal R_{i,k}^{\nu}(\pi)\) is evaluated under the \emph{worst-case}
occupancy \(\nu_{i,h}^{\pi}\) of the current evaluation policy \(\pi\), not
under a sum or mixture over the historical policies \(\pi^1,\dots,\pi^{k-1}\).
Since \(\pi\) need not coincide with any of the past behavior policies, there
is no historical data directly sampled from \(\nu_{i,h}^{\pi}\) (or even from
its nominal counterpart \(\mu_h^{\pi}\)); the bridge to the historical dataset
\(\mathcal D_h^{k-1}\) is instead supplied by the robust-to-nominal
concentrability condition (Definition~\ref{ass:robust-nominal-concentrability}):
there exists \(C_{\mathrm{cov}}\ge 1\) such that, for every measurable \(u\),
\begin{align}
\|u\|_{1,\nu_{i,h}^{\pi}}
\le
C_{\mathrm{cov}}
\|u\|_{1,\bar\mu_h^{k-1}},
\qquad
\bar\mu_h^{k-1}
:=
\frac{1}{k-1}
\sum_{j=1}^{k-1}\mu_h^{\pi^j}.
\label{eq:ccov-transfer}
\end{align}
Applying \eqref{eq:ccov-transfer} with
\(u=\mathcal T_{i,h}^{\pi,\phi,\sigma_i}f_{i,h+1}^{k,\pi}-\mathcal T_{\widehat g_{i,h}^{k,\pi},i,h}^{\pi,\phi,\sigma_i}f_{i,h+1}^{k,\pi}\),
\begin{align}
\left\|
\mathcal T_{i,h}^{\pi,\phi,\sigma_i}
f_{i,h+1}^{k,\pi}
-
\mathcal T_{\widehat g_{i,h}^{k,\pi},i,h}^{\pi,\phi,\sigma_i}
f_{i,h+1}^{k,\pi}
\right\|_{1,\nu_{i,h}^{\pi}}
\le
C_{\mathrm{cov}}
\left\|
\mathcal T_{i,h}^{\pi,\phi,\sigma_i}
f_{i,h+1}^{k,\pi}
-
\mathcal T_{\widehat g_{i,h}^{k,\pi},i,h}^{\pi,\phi,\sigma_i}
f_{i,h+1}^{k,\pi}
\right\|_{1,\bar\mu_h^{k-1}}.
\label{eq:coverage-op-residual}
\end{align}
Since \(\bar\mu_h^{k-1}\) is exactly the historical mixture measure that
\(\mathcal D_h^{k-1}\) (of size \(k-1\)) is drawn from, applying
Lemma~\ref{lem:phi_operator_approx} with \(n=k-1\) gives, with probability at
least \(1-\delta\), simultaneously over \(i,h,\pi,k\),
\begin{align}
\left\|
\mathcal T_{i,h}^{\pi,\phi,\sigma_i}
f_{i,h+1}^{k,\pi}
-
\mathcal T_{\widehat g_{i,h}^{k,\pi},i,h}^{\pi,\phi,\sigma_i}
f_{i,h+1}^{k,\pi}
\right\|_{1,\bar\mu_h^{k-1}}
\le
C
\left(
B_\phi(\sigma_i)
\sqrt{
\frac{\Lambda}{|\mathcal D_h^{k-1}|\vee 1}}
+
\bar\epsilon_{\mathrm{dual}}^{2}
\right).
\label{eq:lemma1-at-mixture}
\end{align}
Combining \eqref{eq:coverage-op-residual} and \eqref{eq:lemma1-at-mixture} and
summing over \(h\in[H]\),
\begin{align}
\sum_{h=1}^H
\left\|
\mathcal T_{i,h}^{\pi,\phi,\sigma_i}
f_{i,h+1}^{k,\pi}
-
\mathcal T_{\widehat g_{i,h}^{k,\pi},i,h}^{\pi,\phi,\sigma_i}
f_{i,h+1}^{k,\pi}
\right\|_{1,\nu_{i,h}^{\pi}}
\le
CC_{\mathrm{cov}} H B_\phi(\sigma_i)
\sqrt{
\frac{\Lambda}
{|\mathcal D_h^{k-1}|\vee 1}}
+
CC_{\mathrm{cov}}H\bar\epsilon_{\mathrm{dual}}^{2}.
\label{eq:lem7-op-bound-step3-final}
\end{align}
Similarly, since \((x)_+\le |x|\), the same coverage transfer
\eqref{eq:ccov-transfer} applied to
\(u=\mathcal T_{\widehat g_{i,h}^{k,\pi},i,h}^{\pi,\phi,\sigma_i}f_{i,h+1}^{k,\pi}-f_{i,h}^{k,\pi}\),
together with the empirical Bellman-fit residual bound entering
Lemma~\ref{lem:phi_operator_approx}'s proof, yields
\begin{align}
\sum_{h=1}^H
\mathbb E_{\nu_{i,h}^{\pi}}
\left[
\left(
\mathcal T_{\widehat g_{i,h}^{k,\pi},i,h}^{\pi,\phi,\sigma_i}
f_{i,h+1}^{k,\pi}
-
f_{i,h}^{k,\pi}
\right)_+
\right]
\le
CC_{\mathrm{cov}} H B_\phi(\sigma_i)
\sqrt{
\frac{\Lambda}
{|\mathcal D_h^{k-1}|\vee 1}}
+
CC_{\mathrm{cov}}H\bar\epsilon_{\mathrm{dual}}^{2}.
\label{eq:lem7-positive-residual-bound}
\end{align}
Combining \eqref{eq:lem7-op-bound-step3-final} and
\eqref{eq:lem7-positive-residual-bound}, we get
\begin{align}
\mathcal R_{i,k}^{\nu}(\pi)
\le
CC_{\mathrm{cov}} H B_\phi(\sigma_i)
\sqrt{
\frac{\Lambda}
{|\mathcal D_h^{k-1}|\vee 1}}
+
CC_{\mathrm{cov}}H\bar\epsilon_{\mathrm{dual}}^{2}.
\label{eq:lem7-positive-residual-bound-final}
\end{align}

\paragraph{Step 4: Combine the bounds.}
Substituting \eqref{eq:K-final-bound}, \eqref{eq:dual-switch-used}, and
\eqref{eq:lem7-positive-residual-bound-final} into
\eqref{eq:lem7-after-subtract}, we obtain
\begin{align}
&V_{i,1}^{\pi,\sigma_i}(s_1^k)
-
\widehat U_i^k(\pi)
\le
\;
\beta
\widehat{\mathcal K}_i^{k-1}
(f_i^\star;\widehat g_i^{f_i^\star,\pi};\pi)
+
\mathcal O\!\bigg(
\beta H B_\phi(\sigma_i)^2
\sqrt{K\Lambda}\nonumber\\
&\quad + \beta HK B_\phi(\sigma_i)\bar\epsilon_{\mathrm{dual}}^{1} + C_{\mathrm{cov}}H B_\phi(\sigma_i)
\sqrt{
\frac{\Lambda}
{|\mathcal D_h^{k-1}|\vee 1}}
+ C_{\mathrm{cov}}H\bar\epsilon_{\mathrm{dual}}^{2}
\bigg).
\label{eq:lem7-final}
\end{align}
Finally, choose
\(
\bar\epsilon_{\mathrm{dual}}^{1}
\le
\frac{\varepsilon_{\mathrm{dual}}^{\mathrm{approx}}}
{\beta H K B_\phi(\sigma_i)},
\qquad
\bar\epsilon_{\mathrm{dual}}^{2}
\le
\frac{\varepsilon_{\mathrm{dual}}^{\mathrm{approx}}}
{C_{\mathrm{cov}}H}.
\)
Substituting these choices into \eqref{eq:lem7-final} gives
\eqref{eq:benchmark-surrogate-clean}. Since
\(f_i^\star=f_i^{\pi,\sigma_i}\), the proof is complete.
\end{proof}


\section{Covering and Bracketing Numbers.}
When the function class $\mathcal{F}$ is not finite, its cardinality can no longer be used directly in complexity bounds. In such cases, two standard substitutes are the $\delta$-covering number and the $\delta$-bracketing number. These notions are widely used in statistical learning theory to quantify the size of infinite function classes.

\begin{defi}[$\delta$-Covering Number]
\label{def:covering_number}
For a function class $\mathcal{F}$ equipped with a metric $d$, the $\delta$-covering number of $\mathcal{F}$, denoted by $\mathcal{N}_{\mathcal{F}}(\delta,d)$, is the smallest integer $m$ such that there exists a subset
\(
\mathcal{F}'=\{f^{(1)},\ldots,f^{(m)}\}\subseteq \mathcal{F}
\)
with the property that every $f\in\mathcal{F}$ is within distance $\delta$ of at least one element of $\mathcal{F}'$, i.e.,
\[
\forall f\in\mathcal{F},\quad \exists\, f'\in\mathcal{F}' \text{ such that } d(f,f')\le \delta.
\]
When the metric is the supremum norm, we abbreviate
\(
\mathcal{N}_{\mathcal{F}}(\delta,\|\cdot\|_\infty)
\quad \text{as} \quad
\mathcal{N}_{\mathcal{F}}(\delta).
\)
\end{defi}

\begin{defi}[$\delta$-Bracketing Number]
A $\delta$-bracket for the class $\mathcal{F}$ is a finite collection of pairs
\[
\{(g_1^i,g_2^i)\}_{i=1}^N,
\]
where each $g_1^i$ and $g_2^i$ is a real-valued function of a policy $\pi$ and a trajectory $\tau$, such that the following two conditions hold:

\begin{enumerate}
    \item For every $i\in[N]$ and every $\pi\in\Pi$,
    \(
    \|g_1^i(\pi,\cdot)-g_2^i(\pi,\cdot)\| \le \delta.
    \)
    \item For every $f\in\mathcal{F}$, there exists some index $i\in[N]$ such that
    \(
    g_1^i(\pi,\tau_H)\le \mathbb{P}_f^\pi(\tau_H)\le g_2^i(\pi,\tau_H)
    \)
    for all policies $\pi$ and all admissible trajectories $\tau_H$.
\end{enumerate}

The $\delta$-bracketing number of $\mathcal{F}$, written as $\mathcal{B}_{\mathcal{F}}(\delta)$, is defined as the minimum number $N$ for which such a $\delta$-bracket exists.
\end{defi}

\section{Technical Lemmas}
\label{app:tech_lemma}
\begin{techlem}[Integral interchange principle; cf.\ \citep{rockafellar1998variational}, Theorem 14.60]
\label{lem:rockafellar_Thm14.60}
Let $(\Omega,\mathcal A,\mu)$ be a $\sigma$-finite measure space, and let
$\mathcal X$ be a collection of measurable functions $x:\Omega\to\mathbb R$
that is decomposable. Suppose $f:\Omega\times\mathbb R\to\mathbb R$ is a
finite-valued normal integrand. Then
\[
\inf_{x\in\mathcal X}\int_\Omega f(\omega,x(\omega))\,\mu(d\omega)
=
\int_\Omega \inf_{u\in\mathbb R} f(\omega,u)\,\mu(d\omega).
\]

In addition, provided the common infimum above is greater than $-\infty$, a
function $x^\star\in\mathcal X$ is an optimizer of the integral problem,
that is,
\[
x^\star\in
\arg\min_{x\in\mathcal X}\int_\Omega f(\omega,x(\omega))\,\mu(d\omega),
\]
if and only if it solves the pointwise minimization problem $\mu$-almost
everywhere:
\[
x^\star(\omega)\in \arg\min_{u\in\mathbb R} f(\omega,u)
\qquad \text{for $\mu$-a.e.\ }\omega\in\Omega.
\]
\end{techlem}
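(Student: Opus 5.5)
This is the integral interchange principle of \citep{rockafellar1998variational}. I would prove it by first establishing the inequality ``$\ge$'', then the reverse inequality via a measurable selection combined with decomposability, and finally the optimality characterization.

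\emph{Easy direction.} For any $x\in\mathcal X$ and every $\omega$ we have $f(\omega,x(\omega))\ge m(\omega):=\inf_{u}f(\omega,u)$. Integrating gives
\[
\int f(\omega,x(\omega))\,\mu(d\omega)\ \ge\ \int m\,d\mu .
\]
This requires $m$ to be measurable. Measurability of $m$ holds because, for a normal integrand, the epigraphical mapping is closed-valued and measurable. Consequently the infimum over $u$ can be taken over a countable dense set. Here lower semicontinuity is essential: for an lsc function, the infimum over $\mathbb R$ equals the infimum over $\mathbb Q$ only after working through the epigraph. I would therefore invoke the Castaing representation of the epigraphical mapping.

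\emph{Hard direction (main obstacle).} Fix $\alpha>\int m\,d\mu$. The goal is to find $x\in\mathcal X$ with $\int f(\cdot,x)\,d\mu<\alpha$. Choose a measurable $\epsilon(\omega)>0$ with $\int\epsilon\,d\mu$ small, which is possible by $\sigma$-finiteness. The mapping $\omega\mapsto\{u: f(\omega,u)\le m(\omega)+\epsilon(\omega)\}$ is closed-valued, nonempty where $m>-\infty$, and measurable. By the Kuratowski--Ryll-Nardzewski selection theorem it admits a measurable selection $\bar x$, and this selection need not belong to $\mathcal X$. Decomposability fixes this. Take any $x_0\in\mathcal X$ with $\int f(\cdot,x_0)\,d\mu<\infty$; if no such $x_0$ exists the claim is trivial. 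Exhaust $\Omega$ by sets $A_n$ of finite measure on which $\bar x$ is bounded and the integrand is controlled. Set $x_n=\bar x\,\mathbf 1_{A_n}+x_0\,\mathbf 1_{A_n^c}$, which lies in $\mathcal X$ by decomposability. Monotone or dominated convergence then sends $\int f(\cdot,x_n)\,d\mu$ to at most $\int(m+\epsilon)\,d\mu$. The delicate point is justifying membership of $\bar x\mathbf 1_{A_n}$: this is what decomposability supplies, namely that $\mathcal X$ contains all such patched functions with bounded pieces on finite-measure sets. A second delicate point is handling $\int m=-\infty$, which I would treat with truncation $\max(m,-N)$.

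\emph{Optimality characterization.} Suppose the common value is finite. If $x^\star(\omega)$ attains $m(\omega)$ for almost every $\omega$, then $\int f(\cdot,x^\star)\,d\mu=\int m\,d\mu$, so $x^\star$ is optimal. Conversely, if $x^\star$ is optimal, then $f(\cdot,x^\star)-m\ge0$ and its integral is zero by the interchange equality. Hence it vanishes almost everywhere, which gives the pointwise argmin property.
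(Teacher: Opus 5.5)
\textbf{Overall.} Your outline is the standard Rockafellar--Wets argument; the paper gives no proof of its own and only cites their Theorem 14.60. The main steps are sound: measurability of $m$ via a Castaing representation of the epigraphical mapping, a measurable selection of $\epsilon$-minimizers, splicing into $\mathcal X$ by decomposability, and the a.e.\ characterization from $\int (f(\cdot,x^\star)-m)\,d\mu=0$.

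One phrasing slip: for an lsc function, the infimum over a countable dense set can be strictly larger than over $\mathbb R$. A downward spike at an irrational point is an example. What is true is $m(\omega)=\inf_k\alpha_k(\omega)$ for a Castaing representation $\{(u_k,\alpha_k)\}$ of $\operatorname{epi} f(\omega,\cdot)$, which is where you end up anyway.

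\textbf{The genuine gap.} It is your remark that if no $x_0\in\mathcal X$ has $\int f(\cdot,x_0)\,d\mu<\infty$, then the claim is trivial. Only the case $\int f(\cdot,x_0)\,d\mu=-\infty$ is trivial, by your easy direction. If every $x\in\mathcal X$ gives $+\infty$, the left side is $+\infty$ while the right side can be finite, so the equality is false.

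Here is a counterexample. Take $\Omega=(0,1)$ with Lebesgue measure, let $\mathcal X$ be the bounded measurable functions (decomposable in every sense used here), and take the Carath\'eodory integrand $f(\omega,u)=(u-1/\omega)^2$. Then $m\equiv 0$, so $\int m\,d\mu=0$. But any $x$ with $|x|\le M$ (where $M>0$) satisfies $f(\omega,x(\omega))>1/(4\omega^2)$ for $\omega<1/(2M)$. Hence $\int f(\cdot,x)\,d\mu=+\infty$ for every $x\in\mathcal X$. Your splicing step breaks exactly here: there is no base point $x_0$ into which to splice the selection $\bar x(\omega)=1/\omega$.

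\textbf{The fix.} Add the hypothesis that Rockafellar--Wets impose and that the lemma as transcribed omits: $\int f(\cdot,x)\,d\mu<+\infty$ for at least one $x\in\mathcal X$. You should state it rather than dismiss the case. With it, your argument goes through. It also supplies what your optimality step silently assumes: combined with ``common value $>-\infty$'', the common value is then finite.

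In the paper's application ($\mathcal X=\mathcal L^1(\mu;\mathbb R^2)$ with $\mu$ a probability measure), this hypothesis is easy to check. A suitable constant dual function $(\lambda_0,\nu_0)$ with $\lambda_0>0$ gives a bounded integrand, since the continuation value is bounded. It must nevertheless be verified, not taken as trivial.
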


We recall a standard performance guarantee for empirical risk minimization (ERM), adapted from classical statistical learning results (see, e.g., \citep{shalev2014understanding} and \citep{NeurIPS2022_RobustRLOffline_Panaganti}).

\begin{techlem}[Generalization bound for ERM]
\label{lem:ERM_gen_bound}
Let $(\mathcal{X},P)$ be a probability space, and let $\mathcal{F}$ be a class of real-valued functions defined on $\mathcal{X}$. Consider a loss function $\ell:\mathcal{F}\times\mathcal{X}\to\mathbb{R}$ satisfying the uniform boundedness condition
\[
|\ell(f,x)| \leq B, \quad \forall\, f\in\mathcal{F},\; x\in\mathcal{X},
\]
for some constant $B>0$. Given an i.i.d.\ sample $\mathcal{D}=\{x_1,\dots,x_N\}$ drawn from $P$, define the empirical minimizer
\(
\widehat{f} \in \arg\min_{f\in\mathcal{F}} \frac{1}{N}\sum_{i=1}^N \ell(f,x_i),
\)
and let
\(
f^\star \in \arg\min_{f\in\mathcal{F}} \mathbb{E}_{x\sim P}[\ell(f,x)]
\)
denote a minimizer of the expected loss. Then, for any confidence level $\delta\in(0,1)$, with probability at least $1-\delta$, the excess risk of $\widehat{f}$ satisfies
\begin{align}
\label{eq:ERM_bound_new}
\mathbb{E}_{x\sim P}[\ell(\widehat{f},x)] - \mathbb{E}_{x\sim P}[\ell(f^\star,x)]
\leq
2\,\mathfrak{R}_N(\ell\circ \mathcal{F})
+
5B\sqrt{\frac{2\log(8/\delta)}{N}},
\end{align}
where $\mathfrak{R}_N(\ell\circ \mathcal{F})$ denotes the empirical Rademacher complexity of the function class $\ell\circ\mathcal{F}$, given by
\[
\mathfrak{R}_N(\ell\circ \mathcal{F})
=
\frac{1}{N}
\mathbb{E}_{\{\varepsilon_i\}_{i=1}^N}
\left[
\sup_{g\in \ell\circ\mathcal{F}}
\sum_{i=1}^N \varepsilon_i g(x_i)
\right],
\]
with $\{\varepsilon_i\}_{i=1}^N$ being independent Rademacher variables taking values in $\{\pm1\}$ with equal probability.

\medskip

In addition, if $\mathcal{F}$ is a finite class and the following conditions hold:
\begin{align}
\label{eq:ERM_bound_lipschitz}
|f(x)| \leq M \quad \forall\, f\in\mathcal{F},\; x\in\mathcal{X},
\qquad
\text{and}\qquad
\ell(f,x) \text{ is $L$-Lipschitz in $f$},
\end{align}
then, with probability at least $1-\delta$, the bound simplifies to
\begin{align}
\label{eq:ERM_bound_lipschitz_new}
\mathbb{E}_{x\sim P}[\ell(\widehat{f},x)] - \mathbb{E}_{x\sim P}[\ell(f^\star,x)]
\leq
2LM \sqrt{\frac{2\log|\mathcal{F}|}{N}}
+
5B\sqrt{\frac{2\log(8/\delta)}{N}}.
\end{align}
\end{techlem}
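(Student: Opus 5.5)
The plan is the classical three-step route of \citep{shalev2014understanding}: decompose the excess risk, bound the uniform deviation by symmetrization plus concentration, and, for the finite case, evaluate the Rademacher term via contraction and Massart's lemma. Write $L(f):=\mathbb E_{x\sim P}[\ell(f,x)]$ and $L_N(f):=\frac1N\sum_{i=1}^N\ell(f,x_i)$. The decomposition is
\[
L(\widehat f)-L(f^\star)=\bigl(L(\widehat f)-L_N(\widehat f)\bigr)+\bigl(L_N(\widehat f)-L_N(f^\star)\bigr)+\bigl(L_N(f^\star)-L(f^\star)\bigr).
\]
The middle term is nonpositive by empirical optimality of $\widehat f$. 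The third term involves the single fixed function $f^\star$, so Hoeffding's inequality bounds it by $B\sqrt{2\log(c/\delta)/N}$. The first term involves the data-dependent $\widehat f$, so it must be handled uniformly: I would bound it by $\Phi(\mathcal D):=\sup_{f\in\mathcal F}\bigl(L(f)-L_N(f)\bigr)$.

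For $\Phi$, I would proceed in three moves.
\begin{itemize}
\item Since $|\ell|\le B$, replacing one sample changes $\Phi$ by at most $2B/N$. McDiarmid's inequality therefore gives $\Phi\le\mathbb E\Phi+2B\sqrt{\log(c'/\delta)/(2N)}$.
\item The standard ghost-sample symmetrization gives $\mathbb E\Phi\le 2\,\mathbb E_{\mathcal D}\mathfrak R_N(\ell\circ\mathcal F)$.
\item To pass from the expected to the empirical Rademacher complexity, note that $\mathfrak R_N(\ell\circ\mathcal F)$ also has bounded differences $2B/N$. A second McDiarmid application then replaces the expectation by the empirical value at an additional cost of order $B\sqrt{\log(c''/\delta)/N}$.
\end{itemize}
I would allocate the failure probability across the three concentration events (the two McDiarmid steps and Hoeffding) with a union bound, splitting $\delta$ as in \citep[Thm.~26.5]{shalev2014understanding}. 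Collecting the three deviation terms then yields the constant $5B\sqrt{2\log(8/\delta)/N}$ in \eqref{eq:ERM_bound_new}.

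For the finite-class refinement \eqref{eq:ERM_bound_lipschitz_new}, I read ``$\ell$ is $L$-Lipschitz in $f$'' as $\ell(f,x)=\varphi_x(f(x))$ with each $\varphi_x$ being $L$-Lipschitz. Under that reading the argument has two steps.
\begin{itemize}
\item Talagrand's contraction lemma gives $\mathfrak R_N(\ell\circ\mathcal F)\le L\,\mathfrak R_N(\mathcal F)$.
\item Each vector $(f(x_1),\dots,f(x_N))$ has Euclidean norm at most $M\sqrt N$, so Massart's finite-class lemma gives $\mathfrak R_N(\mathcal F)\le M\sqrt{2\log|\mathcal F|/N}$.
\end{itemize}
Substituting into \eqref{eq:ERM_bound_new} gives the claimed bound.

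The main obstacle is bookkeeping rather than any idea. First, the constants ($2$, $5$, $8/\delta$) must be tracked so that the union bound over the three events matches the stated form. Second, the ``Lipschitz in $f$'' hypothesis needs the composition interpretation above for contraction to apply. If it is instead meant as Lipschitz with respect to $\|f-f'\|_\infty$, I would fall back on a direct union bound over the finite class combined with Hoeffding's inequality. That route gives the same $\sqrt{\log|\mathcal F|/N}$ rate up to constants.
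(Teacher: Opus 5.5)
Your proposal is correct and is the standard argument the paper relies on. The paper gives no proof of its own; it defers to \citep[Thm.~26.5]{shalev2014understanding} for the first bound, which is exactly your route: the three-term decomposition, McDiarmid plus symmetrization, a second McDiarmid step to reach the empirical Rademacher complexity, and Hoeffding for the fixed $f^\star$. For the finite-class bound it defers to the contraction and Massart lemmas as used in \citep{NeurIPS2022_RobustRLOffline_Panaganti}, again matching your plan.

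Your constants close: three one-sided events at level $\delta/3$ give $4B\sqrt{2\log(3/\delta)/N}\le 5B\sqrt{2\log(8/\delta)/N}$. The one thing to tighten is the sup-norm fallback. There, apply the union bound to the centered losses $\ell(f,\cdot)-\ell(f^\star,\cdot)$, which are bounded by $2\min\{LM,B\}$; the uncentered version puts $B$ rather than $LM$ in front of $\sqrt{\log|\mathcal F|/N}$.
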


\begin{techlem}[Freedman’s inequality (e.g., \citep{agarwal2014taming})]
\label{lem:Freedman}
Let $\{M_t\}_{t\leq T}$ be a real-valued martingale difference sequence w.r.t.\ filtration $\{\mathcal G_t\}$ with $|M_t|\le b$ a.s.\ and let $S_T=\sum_{t=1}^T \mathbb E[M_t^2\mid\mathcal G_{t-1}]$. Then for any $\delta\in(0,1)$,
\[
\Pr\Big(\sum_{t=1}^T M_t \ge \sqrt{2 S_T \ln(1/\delta)} + \tfrac{b}{3}\ln(1/\delta)\Big)\le \delta.
\]
\end{techlem}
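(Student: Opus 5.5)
The plan is to use the classical exponential-supermartingale route to Freedman/Bernstein inequalities, run with the random predictable quadratic variation $S_T$ in place of a deterministic variance proxy. Put $X_T:=\sum_{t=1}^T M_t$ and, for $\lambda\in(0,3/b)$,
\[
\psi_b(\lambda):=\frac{e^{\lambda b}-1-\lambda b}{b^2}\;\le\;\frac{\lambda^2}{2(1-\lambda b/3)} .
\]
The first step is the one-step bound: since $|M_t|\le b$ and $\mathbb E[M_t\mid\mathcal G_{t-1}]=0$, the elementary inequality $e^{x}\le 1+x+x^2\psi_b(\lambda)/\lambda^2$ for $x\le \lambda b$ gives
\[
\mathbb E\bigl[e^{\lambda M_t}\mid\mathcal G_{t-1}\bigr]\le \exp\bigl(\psi_b(\lambda)\,\mathbb E[M_t^2\mid\mathcal G_{t-1}]\bigr).
\]
Consequently $Y_t(\lambda):=\exp\bigl(\lambda X_t-\psi_b(\lambda)S_t\bigr)$ is a nonnegative supermartingale with $\mathbb E[Y_T(\lambda)]\le 1$.

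The second step applies Markov's inequality to $Y_T(\lambda)$ for a \emph{fixed} $\lambda$. With probability at least $1-\delta$,
\[
X_T\le \frac{\psi_b(\lambda)}{\lambda}S_T+\frac{\ln(1/\delta)}{\lambda}\le \frac{\lambda S_T}{2(1-\lambda b/3)}+\frac{\ln(1/\delta)}{\lambda}.
\]
Set $L=\ln(1/\delta)$ and $s=S_T$. The choice $1/\lambda=b/3+\sqrt{s/(2L)}$ makes the right-hand side exactly $\sqrt{2sL}+\tfrac b3 L$. This is the Bernstein-form algebra behind the stated constants, and it is routine once $\lambda$ is allowed to be $\lambda^\star(S_T)$.

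The main obstacle is that $\lambda^\star$ depends on the random quantity $S_T$, whereas Markov's inequality in the second step needs $\lambda$ fixed in advance. My first attempt will be a method-of-mixtures (pseudo-maximization) argument in the style of de la Pe\~na. I will integrate $Y_T(\lambda)$ against a prior density on $\lambda\in(0,3/b)$, which preserves $\mathbb E[\int Y_T(\lambda)\,d\rho(\lambda)]\le1$. I will then lower bound the integral by a Laplace-type estimate around $\lambda^\star(S_T)$. This turns the optimized fixed-$\lambda$ bound into a bound valid for the random $S_T$, with $L$ replaced by $L$ plus a prior-dependent penalty.

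The difficulty is to choose $\rho$ so that this penalty disappears and the constants $\sqrt{2\,\cdot}$ and $b/3$ stay exactly as in the statement. If that fails, the fallback is to peel over a geometric grid of $\lambda$ values and take a union bound over the grid. That reproduces the stated shape of the bound, but only with $\ln(1/\delta)$ inflated by a $\log T$ term. In the applications in Lemmas~\ref{lem:phi_operator_approx}, \ref{lem:robust_A1_corrected} and~\ref{lem:robust_optimal_concentration}, such a factor is absorbed into $\Lambda$, because there $S_T$ is only ever replaced by its deterministic upper bound $4nB_\phi(\sigma_i)^2$ (respectively $C(k-1)B_\phi(\sigma_i)^4$).

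I would therefore expect the exact-constant, random-$S_T$ version to hinge entirely on whether the mixture penalty can be removed. I would check this step most carefully, including whether the statement needs an additional condition such as $S_T$ bounded away from zero or a deterministic cap on $S_T$.
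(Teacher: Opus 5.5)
Your Steps 1 and 2 are correct: the one-step bound from monotonicity of $(e^u-1-u)/u^2$, the supermartingale $Y_t(\lambda)$, and the choice $1/\lambda=b/3+\sqrt{s/(2L)}$, which gives exactly $\sqrt{2sL}+\tfrac{b}{3}L$. The gap is the step you flag as the main obstacle, and it is not merely technical. No prior in the method of mixtures can remove the penalty, because the lemma as stated, with the random $S_T$ under the square root and no further condition, is false.

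Here is a counterexample. Take $b=1$, i.i.d.\ Rademacher signs $\epsilon_t$, $\mathcal G_t=\sigma(\epsilon_1,\dots,\epsilon_t)$ and $W_t=\sum_{s\le t}\epsilon_s$. Fix $L=\ln(1/\delta)$, let $\tau=\inf\{t\ge 1: W_t\ge\sqrt{2tL}+L/3\}$, and set $M_t=\epsilon_t\mathbf{1}\{\tau\ge t\}$. Since $\{\tau\ge t\}\in\mathcal G_{t-1}$, this is a martingale difference sequence with $|M_t|\le 1$, $S_T=\tau\wedge T$ and $X_T=W_{\tau\wedge T}$. So on $\{\tau\le T\}$ we have $X_T\ge\sqrt{2S_TL}+\tfrac{b}{3}L$.

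By the law of the iterated logarithm, $W_t\ge\sqrt{t\ln\ln t}$ infinitely often almost surely, and $\sqrt{t\ln\ln t}$ eventually exceeds $\sqrt{2tL}+L/3$. Hence $\Pr(\tau\le T)\to 1$ as $T\to\infty$, which violates the claimed bound for any fixed $\delta\in(0,1)$ once $T$ is large. So some penalty growing with $S_T$ (at least of order $\ln\ln S_T$) is unavoidable. Your peeling fallback is the right kind of statement for random $S_T$: it replaces $\ln(1/\delta)$ by roughly $\ln(\lceil\log_2 T\rceil/\delta)$ and loses a constant factor from the grid. The question you leave open at the end therefore has a definite answer: an extra condition is required.

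What is true, and what your Steps 1--2 already prove, is the version with a deterministic variance cap. For fixed $\sigma^2>0$, take the non-random $\lambda$ given by $1/\lambda=b/3+\sqrt{\sigma^2/(2L)}$ and use $S_T\le\sigma^2$ on the relevant event. This gives
\[
\Pr\Bigl(X_T\ge\sqrt{2\sigma^2\ln(1/\delta)}+\tfrac{b}{3}\ln(1/\delta),\ S_T\le\sigma^2\Bigr)\le\delta .
\]
The paper offers no proof of the lemma. It only cites \citep{agarwal2014taming}, whose Freedman-style inequality is the fixed-$\lambda$ bound $X_T\le(e-2)\lambda S_T+\ln(1/\delta)/\lambda$ for $\lambda\in[0,1/b]$, which is essentially your Step 2.

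Every application in the paper (Lemmas~\ref{lem:phi_operator_approx}, \ref{lem:robust_A1_corrected} and~\ref{lem:robust_optimal_concentration}) first bounds $S_T$ by a deterministic quantity: $4nB_\phi(\sigma_i)^2$, respectively a quantity of order $(k-1)B_\phi(\sigma_i)^4$. So the capped version is all that is needed there, applied to $\pm M_t$ with $\delta/2$ for the two-sided bounds. You should therefore drop the mixture attempt, prove the capped statement, and note that the lemma must be restated with a deterministic cap on $S_T$.
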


\begin{techlem}[\citep{Xiong2023SampleEfficientMR}[Lemma B.1]]
\label{lem:lemmaB1_numeric}
For any positive sequence $(x_1,\dots,x_m)$, we have
\begin{align*}
\frac{\sum_{j=1}^m x_j}{\sqrt{\sum_{j=1}^m j x_j^2}}
~\le~
\sqrt{1+\log m}.
\end{align*}
\end{techlem}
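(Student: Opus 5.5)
The plan is a single weighted Cauchy--Schwarz inequality. I will split each summand $x_j$ into a factor matching the denominator and a harmonic factor, and then control the harmonic sum by comparing it with an integral. No earlier results from the paper are needed; the statement is purely numerical.

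\emph{Step 1 (factorization).} For each $j\in[m]$ I write $x_j=(\sqrt{j}\,x_j)\cdot(1/\sqrt{j})$. This pairs the weight $j$ appearing in $\sum_j j x_j^2$ with a compensating factor $1/\sqrt{j}$.

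\emph{Step 2 (Cauchy--Schwarz).} Applying the Cauchy--Schwarz inequality in $\mathbb R^m$ to the vectors $(\sqrt{j}\,x_j)_{j}$ and $(1/\sqrt{j})_{j}$ gives
\[
\sum_{j=1}^m x_j \le \Big(\sum_{j=1}^m j x_j^2\Big)^{1/2}\Big(\sum_{j=1}^m \tfrac1j\Big)^{1/2}.
\]
Since the sequence is positive, $\sum_j j x_j^2>0$. Dividing by its square root then reduces the claim to showing $\sum_{j=1}^m 1/j\le 1+\log m$.

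\emph{Step 3 (harmonic bound).} I keep the $j=1$ term, which equals $1$. For $j\ge 2$ I use $1/j\le\int_{j-1}^{j}dt/t$, which holds because $1/t$ is decreasing. Summing over $j=2,\dots,m$ gives $\sum_{j=2}^m 1/j\le\int_1^m dt/t=\log m$. Taking square roots of both sides finishes the proof.

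There is no real obstacle in this argument. The only point needing care is the choice of weights in Step 1: they must be chosen so that the leftover sum is the harmonic series, which is what produces the logarithmic factor.
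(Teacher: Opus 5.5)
Your proof is correct. It is the standard argument for this lemma: Cauchy--Schwarz with the factors $\sqrt{j}\,x_j$ and $1/\sqrt{j}$, followed by the integral bound $\sum_{j=1}^m 1/j \le 1+\log m$, where $\log$ is the natural logarithm. The paper gives no proof of its own and simply imports the statement from \citep{Xiong2023SampleEfficientMR}, where it is proved in the same way.
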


\begin{techlem}[Elliptical Potential Inequality \citep{Xiong2023SampleEfficientMR}]
\label{lem:elliptical_potential}
Consider a sequence of vectors $\{x_k\}_{k=1}^K \subset \mathbb{R}^d$. 
Let $\epsilon > 0$ be a regularization parameter, and define the matrix sequence
\[
\Sigma_k \;:=\; \epsilon I + \sum_{s=1}^{k-1} x_s x_s^\top,
\qquad k = 1, \dots, K+1.
\]
Then, the cumulative self-normalized squared norms satisfy
\[
\sum_{k=1}^K \min\!\left\{1,\; \|x_k\|_{\Sigma_k^{-1}}^2 \right\}
\;\le\;
2 \log \!\left( \frac{\det(\Sigma_{K+1})}{\det(\Sigma_1)} \right)
\;\le\;
2 \log \det\!\left( I + \frac{1}{\epsilon} \sum_{k=1}^K x_k x_k^\top \right).
\]
\end{techlem}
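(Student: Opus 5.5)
The plan is the standard determinant-telescoping argument; the lemma involves no robustness or multi-agent structure, so it only needs linear algebra. Write $u_k:=\|x_k\|_{\Sigma_k^{-1}}^2\ge 0$, which is well defined because $\Sigma_k\succeq \epsilon I\succ 0$.

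\emph{Step 1 (one-step determinant identity).} Since $\Sigma_{k+1}=\Sigma_k+x_kx_k^\top$, factor
\[
\Sigma_{k+1}=\Sigma_k^{1/2}\bigl(I+\Sigma_k^{-1/2}x_kx_k^\top\Sigma_k^{-1/2}\bigr)\Sigma_k^{1/2}.
\]
The middle matrix is a rank-one perturbation of the identity. Its eigenvalues are $1+u_k$, with eigenvector $\Sigma_k^{-1/2}x_k$, and $1$ with multiplicity $d-1$. Hence
\[
\det\Sigma_{k+1}=\det\Sigma_k\,(1+u_k).
\]
This is the matrix determinant lemma, and I would cite it rather than re-derive it.

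\emph{Step 2 (scalar comparison and telescoping).} Next show $\min\{1,u\}\le 2\log(1+u)$ for all $u\ge 0$, in two cases. For $u\in[0,1]$, concavity of $\log(1+\cdot)$ on $[0,1]$ gives $\log(1+u)\ge u\log 2\ge u/2$. For $u>1$, monotonicity gives $2\log(1+u)\ge 2\log 2>1$. Applying this with $u=u_k$ and summing over $k=1,\dots,K$, Step 1 telescopes:
\[
\sum_{k=1}^K\log(1+u_k)=\log\frac{\det\Sigma_{K+1}}{\det\Sigma_1}.
\]
This yields the first inequality.

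\emph{Step 3 (closed form of the ratio).} We have $\Sigma_1=\epsilon I$, so $\det\Sigma_1=\epsilon^d$. Also $\Sigma_{K+1}=\epsilon\bigl(I+\tfrac1\epsilon\sum_{k=1}^K x_kx_k^\top\bigr)$, so
\[
\frac{\det\Sigma_{K+1}}{\det\Sigma_1}=\det\Bigl(I+\frac1\epsilon\sum_{k=1}^K x_kx_k^\top\Bigr).
\]
The second inequality therefore holds with equality.

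The only step requiring any care is the scalar inequality in Step 2, specifically getting the constant $2$ uniformly across the two regimes of $u$; the rest is routine. If $\mathbb R^d$ is replaced by a general Hilbert space, as in the bilinear application in Theorem~\ref{thm:robust-bilinear-implies-low-madc}, Steps 1 and 3 go through verbatim. In that setting one reads $\det$ as a Fredholm determinant and restricts attention to the finite-dimensional span of $\{x_k\}$.
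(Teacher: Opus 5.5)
Your proof is correct and is essentially the argument the paper relies on; the paper simply cites Lemma~11 of Abbasi-Yadkori et al., which proceeds the same way. The matrix determinant lemma gives $\det\Sigma_{k+1}=(1+\|x_k\|_{\Sigma_k^{-1}}^2)\det\Sigma_k$, the scalar bound $\min\{1,u\}\le 2\log(1+u)$ plus telescoping gives the first inequality, and, as you note, the second inequality is in fact an equality because $\Sigma_1=\epsilon I$.
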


\begin{proof}
This result follows from standard properties of self-normalized processes and the monotonicity of the log-determinant function applied to positive definite matrices. A complete argument can be found in classical analyses of regularized least-squares estimators and linear bandit methods. A detailed proof is given in \citep[Lemma 11]{abbasi2011improved}.\qedhere
\end{proof}




\end{document}